\def\tv{\text{TV}}
\def\vareps{\varepsilon}
\def\eps{\epsilon}
\def\mis{\textsf{mis}}
\def\vf{\textsf{vf}}
\def\dr{\textsf{dr}}
\def\ci{\textsf{conf}}
\def\subopt{\textsf{SubOpt}}
\newcommand{\indp}{\perp\!\!\!\!\perp} 
\def\pdim{\mathfrak{C}}
\def\stat{\text{stat}}
\title{Offline Reinforcement Learning with Instrumental Variables in Confounded Markov Decision Processes}
\author[1]{Zuyue Fu}
\author[2]{Zhengling Qi}
\author[1]{Zhaoran Wang}
\author[3]{Zhuoran Yang}
\author[4]{Yanxun Xu}
\author[5]{Michael R. Kosorok}
\affil[1]{Northwestern University}
\affil[2]{George Washington University}
\affil[3]{Yale University}
\affil[4]{Johns Hopkins University}
\affil[5]{University of North Carolina at Chapel Hill}
\date{}
\begin{document}

\maketitle

\begin{abstract}
    We study the offline reinforcement learning (RL) in the face of unmeasured confounders. Due to the lack of online interaction with the environment, offline RL is facing the following two significant challenges: (i) the agent may be confounded by the unobserved state variables; (ii) the offline data collected a prior does not provide sufficient coverage for the environment. To tackle the above challenges, we study the policy learning in the confounded MDPs with the aid of instrumental variables. Specifically, we first establish value function (VF)-based and marginalized importance sampling (MIS)-based identification results for the expected total reward in the confounded MDPs. Then by leveraging pessimism and our identification results, we propose various policy learning methods with the finite-sample suboptimality guarantee of finding the optimal in-class policy under minimal data coverage and modeling assumptions. Lastly, our extensive theoretical investigations and one numerical study motivated by the kidney transplantation demonstrate the promising performance of the proposed methods. 

\end{abstract}

\section{Introduction}

Reinforcement learning (RL, \cite{sutton2018reinforcement}) with deep neural networks gains tremendous successes in practice, e.g., games \citep{silver2016mastering, OpenAI_dota}, robotics \citep{kalashnikov2018scalable}, precision medicine \citep{kosorok2019precision,cho2022}.  In many application domains, actively collecting data through interacting with the environment in an online fashion is usually either expensive or unethical, e.g.,  healthcare \citep{raghu2017continuous, komorowski2018artificial,gottesman2019guidelines} and autonomous driving \citep{shalev2016safe}). Therefore, % Most RL methods heavily rely on an efficient data generator, e.g., game engines \citep{bellemare2013arcade} and physics simulators \citep{todorov2012mujoco}, which serve as an environment to be interacted with the agent. 
% In reinforcement learning (RL, \cite{sutton2018reinforcement}), the agent aims to make sequential decisions that maximize its expected total reward through interacting with the environment and/or learning from the past experiences. Such an environment is usually modeled as a Markov decision process (MDP, \cite{puterman2014markov}). 
% With the strong representation power of neural networks, deep reinforcement learning (DRL) gains tremendous successes in practice, e.g., games \citep{silver2016mastering, OpenAI_dota, vinyals2019alphastar}, robotics \citep{kalashnikov2018scalable}, etc. 
 %{Since interacting with such an environment is usually either expensive or unethical (e.g., in healthcare \citep{raghu2017continuous, komorowski2018artificial,gottesman2019guidelines}, autonomous driving \citep{shalev2016safe}), RL methods requiring actively collecting data in an online fashion is impractical under such scenario. Thus, }
a growing body of literature focus on designing RL methods in the offline setting, where the agent aims to learn an optimal policy $\pi^*$ in the infinite-horizon Markov decision process (MDP, \cite{puterman2014markov}) only through observational data, which consists of $N$ trajectories generated by a behavior policy $b$ with a finite horizon $T$. 

However, applying RL methods in the offline setting still possesses the following challenges: 
(i) The agent may be confounded by unmeasured variables (confounders) in the observational data. We refer to the MDP with such unmeasured confounders as confounded MDP. Such confounders usually come from private data or heuristic information not recorded \citep{brookhart2010confounding}. In the confounded MDP, the causal effects of actions on the transitions and rewards are not identifiable from the observational data, leaving most offline RL methods assuming unconfoundedness not applicable in our setting.  
(ii) To learn an optimal policy from observational data, many prior methods \citep{precup2000eligibility, antos2008learning, chen2019information} require a data coverage assumption for any policy $\pi$, i.e., 
% the observational data contains any possible state-action pairs with a lower bounded probability. Such an assumption is usually imposed as requiring a uniform lower bound on 
the density ratio between the state-action visitation measure induced by $\pi$ and that induced by the behavior policy $b$ is uniformly upper bounded for any $\pi$. However, such a data coverage assumption is hardly satisfied in practice, especially when the state or action spaces are large. 
Further, many existing methods developed under this assumption are not stable or even do not converge when the assumption is violated \citep{wang2020statistical, wang2021instabilities}. 

To tackle the above challenge (i), we study the confounded MDP via instrumental variables (IVs, \cite{angrist1996identification}). 
Informally, IVs are variables that affect the transitions and rewards only through actions. 
With the aid of IVs, we introduce two types of identification results: value function (VF)-based and marginalized importance sampling (MIS)-based. 
Specifically, with only finite-horizon data, VF-based identification result helps identify the state-value function in the infinite-horizon confounded MDP and establish a new Bellman equation by leveraging IVs. On the other hand, we establish the MIS-based identification result for directly estimating the expected total reward $J(\pi)$ to be maximized. Both identification results rely on the memoryless assumption on the unmeasured confounders. The memoryless assumption rules out the existence of unmeasured confounders that can affect future rewards and unmeasured confounders but only the immediate reward. 
%On the other hand, we establish the MIS-based identification result for estimating the expected total reward $J(\pi)$ to be maximized. Interestingly, our MIS-based identification result does not require the memoryless assumption and allow the existence of unmeasured confounders that affects future rewards and dynamics. Therefore our identification result via MIS can be applied in a more general confounded MDP.

To tackle the above challenge (ii), we employ pessimism to achieve policy learning and systematically study its theoretical properties. 
Specifically, when using VF-based identification, we first formulate a min-max estimator of the state-value function via the newly established estimating equation. Then, we construct a confidence set of such a min-max estimator based on its loss function, so that the true state-value function lies within the confidence set with a high probability. Note that the construction of our confidence set does not require to develop uniform bands for related estimators, which is different from many existing pessimistic algorithms \citep[e.g.,][]{jin2021pessimism,rashidinejad2021bridging,yan2022efficacy}. Finally, we search for the best policy that maximizes the most conservative expected total reward associated with the estimated state-value function within the confidence set. 
As a theoretical contribution, under the data coverage assumption only for an optimal (in-class) policy $\pi^*$ and realizability assumption for all policies, we show that the suboptimality of the learned best policy is upper bounded by $O(\log(NT)(NT)^{-1/2})$, i.e., the regret of our algorithm in finding the optimal policy converges to $0$ as long as either the number of trajectories $N$ or the number of decision points at each trajectory $T$ goes to infinity.  It is worth noting that our theoretical analysis does not assume that the observational data are generated from stationary distribution or even independent, which has been widely imposed in related literature \citep{farahmand2016regularized, nachum2019dualdice, tang2019doubly, xie2021bellman, kallus2022efficiently}, and thus is more general than the aforementioned literature. Without imposing such a restrictive assumption, inspired by \cite{wang2021projected}, our convergence analysis relies on  novel concentration inequalities for geometrically ergodic sequences, which significantly increases the applicability of our results in practice. 
In the meanwhile, pessimism with MIS-based identification achieves a similar result by imposing a realizability assumption only for an optimal policy $\pi^*$ and data coverage assumption for all policies. 
Furthermore, by combining VF- and MIS-based identification results, we propose a doubly robust (DR) estimator for learning the optimal policy $\pi^*$. Theoretically, for such a DR estimator, we show a similar suboptimality at a rate of $O(\log(NT)(NT)^{-1/2})$, but only requiring that either the assumptions imposed in VF-based method or those imposed in MIS-based one hold.  Lastly, our proposed algorithms have different requirements on the identifiability of the expected total rewards $J(\pi)$. Specifically, for the VF-based algorithm we propose, we only require the offline data distribution can identify $J(\pi^*)$ uniquely rather than other policies. On the contrary, the MIS-based pessimistic algorithm requires all $J(\pi)$ be uniquely identified by our offline data distribution, which is thus stronger. Interestingly both approaches do not require the identifiability of all associated nuisance parameters. See more detailed discussion in Section \ref{sec: identification}.  In Table \ref{table:summary-assumptions} below,  we summarize our main assumptions/requirements for the theoretical guarantees of the proposed algorithms.

\begin{table}[h!]
\centering
\begin{tabular}{| c | c | c | c | } 
    \hline
    Methods & Data Coverage & Realizability & Identifiability \\ 
    \hline
    VF-based & $\|w^{\pi^*}\|_\infty \leq C_*$ & $w^{\pi^*}\in \cW$, $V^\pi\in \cV~\forall \pi\in \Pi$ & $J(\pi^*)$ is identifiable \\
    \hline
    MIS-based & $\|w^\pi\|_\infty \leq C_*~\forall\pi\in \Pi$ & $V^{\pi^*}\in \cV$, $w^\pi\in \cW~\forall \pi\in \Pi$ & $J(\pi)$ is identifiable $\forall\pi\in \Pi$ \\
    \hline
    DR-based & \multicolumn{3}{c|}{\makecell{$\|w^{\pi^*}\|_\infty \leq C_*$, $w^{\pi^*}\in \cW$, $V^\pi\in \cV~\forall\pi\in \Pi$, and $J(\pi^*)$ is identifiable; \\ \textit{\textbf{OR}} $\|w^\pi\|_\infty \leq C_*~\forall\pi\in \Pi$, $V^{\pi^*}\in \cV$, $w^\pi\in \cW~\forall \pi\in \Pi$, and $J(\pi)$ is identifiable $\forall\pi\in \Pi$}} \\
    \hline
\end{tabular}
\caption{Assumptions required by our VF-, MIS-, and DR-based methods, where $w^\pi$ is the density ratio between visitation measures induced by the policy $\pi$ and the behavior policy $b$ (see \eqref{eq:iv-ratio-def} for a detailed definition), and $V^\pi$ is the state-value function of the policy $\pi$. Here $\cV, \cW$ and $\Pi$ are function classes, and $C_*$ is some generic constant.}
\label{table:summary-assumptions}
\end{table}

\vskip5pt
\noindent\textbf{Contribution.}
As a summary, our contribution is three-fold. First, by leveraging IVs, we provide VF- and MIS-based identification results under the confounded MDP. Second, by employing pessimism based on the loss functions used for estimating nuisance parameters, we innovatively construct estimators of the optimal policy $\pi^*$ via VF- and MIS-based identification. Further, by combining VF- and MIS-based identification, we propose a DR-based algorithm for estimating $\pi^*$. Third, under mild conditions on data coverage and realizability, we show that the suboptimalities of the proposed algorithms in finding the optimal in-class policy are upper bounded by $O(\log(NT)(NT)^{-1/2})$, without requiring that the observational data are generated from stationary distribution or even independent. The success of our algorithm relies on novel constructions of confidence sets for related nuisance parameters and maximal inequalities of geometrically ergodic sequences over function classes, which may be of independent interest.

\vskip5pt
\noindent\textbf{Related Work.}
~Our work is related to the line of works that study RL under the presence of unmeasured confounders. \cite{zhang2019near} propose an online RL method to solve dynamic treatment regimes in a finite-horizon setting with the presence of confounded observational data. Their method relies on sensitivity analysis, which constructs a set of possible models based on the confounded observational data to obtain partial identification. Also, to incorporate the observational data into the finite-horizon RL, \cite{wang2021provably} propose deconfounded optimistic value iteration, which is an online algorithm with a provable regret guarantee. To ensure the identifiability through the observational data, they impose the backdoor criterion \citep{pearl2009causality, peters2017elements} when confounders are partially observed, and also the frontdoor criterion when confounders are unmeasured. Our work is also closely related to \cite{liao2021instrumental} and \cite{chen2021estimating}. Specifically, \cite{liao2021instrumental} propose an IV-aided value iteration algorithm to study confounded MDPs in the offline setting. It is worth mentioning that they only consider the finite-horizon MDP, where the transition dynamics is a linear function of some known feature map. To ensure identifiability, they assume that the unmeasured confounders are Gaussian noise, which does not affect the immediate reward and only affect the transition dynamics in an additive manner. In contrast, we consider an infinite-horizon confounded MDP without such restrictive assumptions on the structure of the model, which brings significant technical challenges. On the other hand, \cite{chen2021estimating} study the partial identification using IVs for improving dynamic treatment regimes and require the data coverage assumption on all the policies. In contrast, we establish point identification results with the help of IVs and our proposed algorithm is valid under only partial coverage, which is thus more appealing. In addition,  with unmeasured confounders, \cite{kallus2020confounding} study off-policy evaluation in the infinite-horizon setting based on sensitivity analysis, which imposes additional assumptions on how strong the unmeasured confounding can possibly be. In the meanwhile, to ensure identifiability, \cite{namkoong2020off} consider the case where the unmeasured confounders affect only one of the decisions made. Very recently, there is a stream of research focused on using proximal causal inference \citep{tchetgen2020introduction} for off-policy evaluation and learning in the partially observed MDP \citep{bennett2021off,shi2021minimax,lu2022pessimism}.

Our work is also related to the line of research on policy evaluation and policy learning in the offline setting assuming unconfoundedness. In terms of off-policy evaluation, most works either employs a VF-based method \citep{ernst2005tree,ertefaie2018constructing, shi2020statistical, liao2021off, uehara2021finite,zhou2021estimating}, or an MIS-based method \citep{liu2018breaking, nachum2019dualdice, zhang2020gendice, wang2021projected, uehara2021finite}. Our work is also related to those that propose DR estimators in off-policy evaluation. See \cite{jiang2016doubly, thomas2016data, tang2019doubly, kallus2020double, uehara2020minimax, kallus2022efficiently} for this line of research. As for policy learning in the offline setting, \cite{munos2008finite,antos2008learning} and \cite{luckett2019estimating} prove that fitted value and policy iterations converge to an optimal policy under the data coverage assumption and realizability assumption for all policies. 
By employing pessimism, \cite{xie2021bellman} guarantee a near-optimal policy under the realizability and the completeness assumptions for all policies, while \cite{jiang2020minimax} provide a similar guarantee under the data coverage assumption for the optimal policy and the realizability assumption for all policies. 
More recently, \cite{zhan2022offline} claims that they can learn a near-optimal policy under the data coverage and realizability assumptions for the optimal policy. Their method is built upon a regularized version of the LP formulation of MDPs and thus working on a class of regularized policies. However, due to regularization, the policy learned by \cite{zhan2022offline} is typically suboptimal even given infinite data. Moreover, their realizability assumption is imposed on the regularized value function, making it difficult to interpret and compare with other works. In contrast, our method is proposed in a non-regularized setting and is valid given standard realizability and minimal data coverage assumptions, even under the presence of unmeasured confounders and dependent observational data.

\vskip5pt
\noindent\textbf{Roadmap.} 
In Section \ref{sec:iv-background}, we introduce the background of confounded MDPs and their assumptions. %on such confounded MDPs. 
In Section \ref{sec:iv-id}, we introduce VF- and MIS-based identification results. In Section \ref{sec:iv-pess}, by employing pessimism, we first introduce estimators of the optimal policy via VF- and MIS-based identification, then by combining both results, we introduce a DR estimator. Theoretical results upper bounding the suboptimalities of the proposed estimators are presented in Section \ref{sec:iv-theory}. In Section \ref{sec: dual form}, we provide dual formulations for our proposed algorithms under one additional assumption so that all estimated policy can be efficiently computed. Lastly, in Section \ref{sec: identification}, we discuss the implications of our algorithms on the identifiability related to total rewards and associated nuisance parameters. All technical proofs are provided in the Supplementary Material. In addition, in the Supplementary Material, we demonstrate the usefulness of the proposed methods by conducting a simulation study, in which we simulate a synthetic dataset that  mimics a real-world electronic medical record dataset for kidney transplantation patients \citep{hua2020personalized}.

\section{Confounded Markov Decision Processes}
\label{sec:iv-background}
In this section, we introduce the framework of confounded Markov decision processes with discrete instrumental variables. We aim to leverage the batch data to find an optimal in-class policy that maximizes the expected total rewards. %In the follows, we first characterize the offline data generated in the confounded MDP, which is used to learn such an optimal policy, then we introduce the value function and performance metric, which characterize the performance of a policy. 

\vskip5pt
\noindent\textbf{Confounded MDPs}
In a confounded MDP, we observe $\{S_t, A_t, R_t\}_{t\geq 0}$ for each trajectory, where $S_t$ is the observed state, $A_t$ is the action taken after observing $S_t$, and $R_t$ is the immediate reward received after making an action $A_t$ for $t\geq 0$. We denote by $\cS$ and $\cA$ the state and action spaces, respectively. 
Furthermore, we assume that at each decision point $t\geq 0$, there exist some unmeasured state variables $U_t \in \cU$, which may confound the effect of action $A_t$ on the rewards and future transitions. Due to such unobserved confounders, the (causal) effect of the action on the immediate and future rewards may not be non-parametrically identified and directly applying standard RL algorithms for MDPs will produce sub-optimal policies. 

To address this concern, we study the confounded MDP via the instrumental variable (IV) method \citep{angrist1995identification}, which has been widely used in the literature of causal inference (e.g., \cite{pearl2009causality,hernan2010causal}) to identify the causal effect of a treatment under unmeasured confounding. 
Specifically, at each decision point $t$, we further assume that we also observe a time-varying IV $Z_t\in \cZ$, which is independent of $U_t$ and does not have a direct effect on the immediate reward $R_t$ and all future states, actions, and rewards. 
With such an IV, we observe $\{S_t, Z_t, A_t, R_t\}_{t\geq 0}$ for each trajectory in the confounded MDP.

% {\red We further assume that the actions in such a trajectory are taken following a time-homogeneous behavior policy $b$, which induces a stationary distribution $\zeta$ over $\cS$.}

In this work, we consider finite action and IV spaces, i.e., $\cA = \{a_j\}_{j\in [K]}$ and $\cZ = \{z_j\}_{j\in [K]}$, where $K\geq 2$ is an integer. Furthermore, we consider a simplex encoding for both actions and IVs which enjoy a nice interpretation \citep{zhang2014multicategory}. Specifically, for any $j\in [K]$, we let
\#\label{eq:iv-simplex-encoding}
a_j = z_j = 
\begin{cases}
    (K-1)^{-1/2}\mathbf{1}_{K-1} & \text{if $j = 1$,}  \\
    \frac{(1 + \sqrt{K} \mathbf{1}_{K-1} )}{(K-1)^{3/2}} + \sqrt{\frac{K}{K-1}} e_{j-1} & \text{if $2\leq j \leq K$,}
\end{cases}
\#
where $\mathbf{1}_{K-1}\in \RR^{K-1}$ is an all-one vector and $e_j\in \RR^{K-1}$ is a vector with all elements $0$ except $1$ for $j$-th position. By the simplex encoding in \eqref{eq:iv-simplex-encoding}, one can see that $\sum_{j \in [K]}a_j = \sum_{j \in [K]}z_j = 0$ and $a_i^\top a_j = z_i^\top z_j = -\ind\{i\neq j\}/(K-1) + \ind\{i= j\}$ for any $i,j\in [K]$, where $\ind\{\cdot \}$ is an indicator function.  We remark that any other reasonable encoding mechanisms can be adopted here and our results still hold. 

\vskip5pt
\noindent\textbf{Value Function and Performance Metric.}
In the confounded MDP, we aim to find an optimal in-class policy $\pi^*\in \Pi$ such that $\pi^*$ maximizes the expected total rewards, where $\Pi$ is a class of time-homogeneous policies mapping from the observed state space $\cS$ into the probability distribution over the action space $\cA$. In particular, $\pi(a \given s)$ refers to the probability of choosing action $a \in \cA$ given the state value $s \in \cS$. Formally, for any $\pi\in \Pi$, we define the value function $V^\pi$ and the expected total reward $J(\pi)$ as follows, 
\#\label{eq:iv-val-func}
& V^\pi(s) = \EE_{\pi} \left[ \sum_{t = 0}^\infty \gamma^t R_t \Biggiven S_0 = s \right ], \qquad J(\pi) = (1-\gamma ) \cdot \EE_{S_0 \sim \nu} \left [ V^\pi(S_0) \right ], 
\#
where the expectation $\EE_\pi[\cdot]$ is taken with respect to the distribution such that the action $A_t\sim \pi(\cdot \given S_t)$ for any $t\geq 0$, and $\nu$ is a known reference distribution over $\cS$. Given the definition of $J(\pi)$ in \eqref{eq:iv-val-func}, our goal is to leverage the batch data to estimate $\pi^*$, where
$$
\pi^* \in \argmax_{\pi \in \Pi} J(\pi).
$$ 
Suppose the batch data we have collected consist of $N$ independent and identically distributed copies of $\{S_t, Z_t, A_t, R_t\}_{t \geq 0}$ with a total number $T$ of decision points for each trajectory. Then we can summarize our batch data as $\cD = \{\{S_t^i, Z_t^i, A_t^i, R_t^i, S_{t+1}^i\}_{t = 0}^{T-1}\}_{i\in [N]}$. Assuming that the number of decision points is the same at each trajectory is for simplicity. Indeed the proposed method and theoretical results below remain valid as long as the number of decision points at each trajectory stays within a suitable range and at the same order asymptotically. 
We define the performance metric as
\$
\subopt(\pi) = J(\pi^*) - J(\pi), 
\$
which characterizes the suboptimality of a policy $\pi$ compared with the optimal in-class policy $\pi^*$.

\vskip5pt
\noindent\textbf{Why is Confounded MDP Challenging?} In the standard MDP \citep{sutton2018reinforcement}, all states are assumed fully observed and the trajectory $\{S_t, A_t, R_t\}_{t\geq 0}$ satisfies the Markovian property. By leveraging the celebrated Bellman equation, under some mild conditions, one can non-parametrically identify $J(\pi)$ for any $\pi\in \Pi$, which serves as a foundation for many existing RL algorithms. However, in the confounded MDP, due to the unobserved states, the effect of actions on the rewards and future states cannot be identified even if we include all past history information at each decision point $t \geq 0$. Therefore, additional assumptions are needed to identify $J(\pi)$ and in this work, we rely on the IV to deal with such challenges.

\vskip5pt
\noindent\textbf{Notation.} 
Throughout the paper, we denote by $c$ a positive absolute constant, which may vary from lines to lines. 
% For any space $\cH$, we denote by $\pdim_\cH$ the pseudo-dimension of $\cH$. In addition, for any set of spaces $\{\cH_1, \cH_2, \ldots, \cH_k\}$, we denote by $\pdim_{\cH_1, \cH_2, \ldots, \cH_k} = \prod_{j\in [k]}\pdim_{\cH_j}$ the product of the pseudo-dimensions.
Without further explanation, we denote by $\EE_\pi[\cdot]$ the expectation taken with respect to the trajectory generated by the policy $\pi$, $\EE[\cdot]$ the expectation taken with respect to the trajectory generated by the behavior policy, and $\hat \EE[\cdot]$ the empirical average across all $N$ trajectories.

\section{Assumptions and Identification Results}
In this section, we introduce several assumptions to help us identify $J(\pi)$ for any $\pi \in \Pi$ by using an IV. The first assumption is related to the trajectory $\{S_t, U_t, A_t, R_t\}_{t\geq 0}$, where we model it by a time-homogenous MDP.
\begin{assumption}\label{ass:MDP}
    The following statements hold. 
    \begin{enumerate}[label=(\alph*)]
        \item For any $t \geq 1$, we have $(S_{t+1}, U_{t+1})\indp \{S_j, U_j, A_j\}_{0\leq j<t} \given (S_t, U_t, A_t)$ and the transition probability is time-homogeneous;  
        \item \label{ass:iv-reward} For any $t\geq 0$, we have $R_t = R(U_t, S_t, A_t, S_{t+1}, U_{t+1})$ for some deterministic function $R\colon \cU\times \cS\times\cA\times\cS\times\cU\to \RR$. Also, we assume $|R_t| \leq 1$ almost surely for any $t\geq 0$; 
        \item The offline dataset $\cD$  is generated by an unknown initial distribution $\zeta$ over $\cS$ and a stationary policy $b$, which is a function mapping from $\cS \times \cU \times \cZ$ into a probability distribution over $\cA$. 
    \end{enumerate}
\end{assumption}
Here $b$ is often called behavior policy in RL literature. Assumption \ref{ass:MDP} is standard in the literature of RL, which is mild as $\{U_t\}_{t\geq0}$ is unobserved. The known reward structure in Assumption \ref{ass:MDP}~\ref{ass:iv-reward} is always satisfied as one can put the observed reward $R_t$ as a part of information in the next state. The uniformly bounded assumption on the reward $R_t$ is used to simplify the technical analysis and can be relaxed by imposing some high-order moment condition on $R_t$ instead. Due to the unobserved state variables $U_t$, we make the following IV assumptions.
\begin{assumption}\label{ass:iv-common}
    The following statements hold. 
    \begin{enumerate}[label=(\alph*)]
        %\item \label{ass:iv-gen} The offline data $\cD$ is generated by an unknown initial distribution $\zeta$ over $\cS$ and a fixed time-homogeneous behavior policy $b$, which may depend on unobserved confounders; 
        %\item \label{ass:1} For any $t \geq 1$, we have $(S_{t+1}, U_{t+1})\indp \{S_j, U_j, A_j\}_{0\leq j<t} \given (S_t, U_t, A_t)$;  
        %\item \label{ass:iv-reward} For any $t\geq 0$, we have $R_t = R(U_t, S_t, A_t, S_{t+1}, U_{t+1})$ for some deterministic function $R\colon \cU\times \cS\times\cA\times\cS\times\cU\to \RR$. Also, we have $|R_t| \leq 1$ almost surely for any $t\geq 0$; 
        \item \label{ass:4}
        For any $t\geq 0$, we have $(S_{t+1}, U_{t+1})\indp Z_t\given (S_t, U_t, A_t)$; 
        \item \label{ass:5}
        For any $a\in \cA$ and $t\geq 0$, we have $\PP(A_t = a\given S_t, Z_t)\neq \PP(A_t = a\given S_t)$; 
        \item \label{ass:iv-zu-ind}
        For any $t \geq 0$, we have $Z_t \indp U_t \given S_t$ and the probability distribution of $Z_t$ given $S_t$ is time-homogeneous. %In addition, we define $\Theta^*(s,z) = \PP(Z_t = z\given S_t=s)$;
        %\item \label{ass:iv-homo-iv-action} For any $t\geq 0$, it holds that $Z_t\given S_t$ and $A_t\given (S_t, U_t, Z_t)$ are time-homogeneous;  
        \item \label{ass:iv-compliance}
        For any $t\geq 0$, we have the behavior policy satisfy that
        \$
        & b(A_t = a\given S_t, U_t, Z_t = a) - \frac{1}{K-1}\sum_{z\in \cZ, z\neq a} b(A_t = a\given S_t, U_t, Z_t = z) \\
        & \qquad = b(A_t = a\given S_t, Z_t = a) - \frac{1}{K-1}\sum_{z\in \cZ, z\neq a} b(A_t = a\given S_t, Z_t = z) = \Delta^*(S_t, a), 
        \$
        i.e., the compliance $\Delta^*(S_t,a)$ defined above is independent of the unobserved confounder $U_t$ almost surely. 
        %\item \label{ass:upper-bound-delta}
        %There exist absolute positive constants $C_{\Delta^*}$ and $C_{\Theta^*}$ such that $|\Delta^*(s, a)| \geq C_{\Delta^*}^{-1}$ and $\Theta^*(s, z) \geq C_{\Theta^*}^{-1}$ for any $(s,z,a)\in \cS\times \cZ\times \cA$, where $\Theta^*(s,z) = \PP(Z_t = z\given S_t)$ and $\Delta^*(s,a)$ is defined in \ref{ass:iv-compliance}. 
    \end{enumerate}
\end{assumption}

%In Assumption \ref{ass:iv-common}, \ref{ass:iv-gen}--\ref{ass:iv-reward} are standard assumptions imposed in the RL literature, which are foundation for many offline RL algorithms \citep{Bertsekas2012dynamic, sutton2018reinforcement}. Here we generalize these assumptions to the setting of confounded MDP. 
%Define $\Theta^*(s,z) = \PP(Z_t = z\given S_t=s)$ for $t \geq 0$.
Assumption \ref{ass:iv-common}\ref{ass:4} states that there is no direct effect of the IV $Z_t$ on the future states and rewards except through the action $A_t$, which is a typical assumption in the literature of causal inference with IVs \citep{angrist1995identification, angrist1996identification}. Note that by Assumption \ref{ass:MDP}~\ref{ass:iv-reward}, we have implicitly restricted the effect of $Z_t$ on the reward $R_t$ only through $A_t$ in this assumption. Assumption \ref{ass:iv-common}~\ref{ass:5} requires that the IV $Z_t$ will influence the action $A_t$, which is called IV relevance in the causal inference.  Assumption \ref{ass:iv-common}\ref{ass:iv-zu-ind}, corresponding to IV independence, ensures that the effect of $Z_t$ on futures states and rewards is unconfounded by adjusting the current state $S_t$. The homogeneous assumption on the conditional distribution of $Z_t$ given $S_t$ is imposed here as our target parameter $J(\pi)$ is defined over the infinite horizon. Define a function 
$$
\Theta^*(s,z) = \PP(Z_t = z\given S_t=s)
$$
for every $(s, z) \in \cS \times \cZ$, which is independent of the decision point due to such time-homogeneity. 
In addition, Assumption \ref{ass:iv-common}\ref{ass:iv-compliance} essentially indicates that there is no interaction between $U_t$ and $Z_t$ in affecting whether the action $A_t$ will comply with $Z_t$ or not. This so-called independent compliance assumption has been widely adopted in identifying the average treatment effect of binary treatments in causal inference \citep{wang2018bounded,cui2021semiparametric}. Here we generalize it to the setting of multiple treatments and instrumental variables, which may be of independent interest. A graphical illustration of Assumptions \ref{ass:MDP} and \ref{ass:iv-common} is presented in Figure \ref{fig:mdp}, which also illustrates how the offline data in the confounded MDP are generated. Moreover, we provide a numerical example in \S\ref{sec:exppppp} of the Supplementary Material to illustrate the IV structure of our data generating process.  In the following section, we introduce value function (VF)-based identification and marginalized importance sampling (MIS)-based identification, respectively. 
 %Finally, Assumption \ref{ass:iv-common}\ref{ass:upper-bound-delta} is imposed for the estimation of $\Delta^*$ and $\Theta^*$. We remark that $\Delta^*$ and $\Theta^*$ are well-defined since $Z_t\given S_t$ and $A_t\given (S_t, U_t, Z_t)$ are time-homogeneous according to \ref{ass:iv-zu-ind}. 

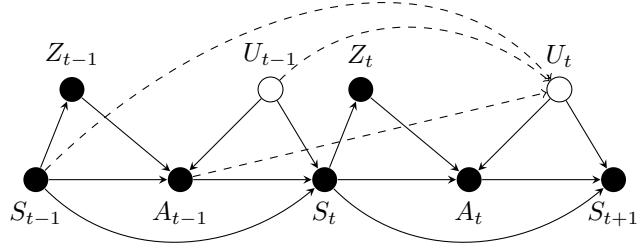
\begin{figure}[htbp]
    \centering
\begin{tikzpicture}[scale=1.2]
        \node (so) at (-4.8,0) [label=below:{$S_{t-1}$}, circle, fill=black]{};
        \node (zo) at (-4.4,1) [label=above:{$Z_{t-1}$}, circle, fill=black]{};
        \node (uo) at (-2.2,1) [label=above:{$U_{t-1}$}, circle, draw]{};
        \node (ao) at (-3.2,0) [label=below:{$A_{t-1}$}, circle, fill=black]{};
        \node (s) at (-1.6,0) [label=below:{$S_t$}, circle, fill=black]{};
        \node (z) at (-1.2,1) [label=above:{$Z_t$}, circle, fill=black]{};
        \node (sp) at (1.6,0) [label=below:{$S_{t+1}$}, circle, fill=black]{};
        \node (u) at (1,1) [label=above:{$U_t$}, circle, draw]{};
        \node (a) at (0,0) [label=below:{$A_t$}, circle, fill=black]{};
    \draw[-stealth] (ao) edge (s);
    \draw[-stealth] (so) edge (ao);
    \draw[-stealth] (zo) edge (ao);
    \draw[-stealth] (uo) edge (ao);
    \draw[dashed,->] (so) [out=-315,in=130] edge (u);
    \draw[dashed,->] (ao) edge (u);
    \draw[-stealth] (uo) edge (s);
    \draw[-stealth] (so) [out=-45,in=220] edge (s);
    \draw[-stealth] (so) edge (zo);
    \draw[-stealth] (s) edge (z);
    \draw[dashed,->] (uo) [out=45,in=140] edge (u);

    \draw[-stealth] (a) edge (sp);
    \draw[-stealth] (s) edge (a);
    \draw[-stealth] (z) edge (a);
    \draw[-stealth] (s) [out=-45,in=220] edge (sp);
    \draw[-stealth] (u) edge (a)
                        edge (sp);
\end{tikzpicture}
\caption{A graphical illustration of the confounded MDP satisfying Assumptions \ref{ass:MDP} and \ref{ass:iv-common}. Here $\{Z_t\}_{t\geq 0}$ are IVs, which satisfy IV independence, i.e., $Z_t \indp U_t \given S_t$. Also the trajectory $\{S_t, U_t, A_t\}_{t\geq 0}$ satisfies the Markov property, i.e., $(S_{t+1}, U_{t+1})\indp \{S_j, U_j, A_j\}_{0\leq j<t} \given (S_t, U_t, A_t)$. In the meanwhile, the dependence of $U_t$ on $U_{t-1}$ and other variables given $S_t$ (dashed arrows) is prohibited by Assumption \ref{ass:10}. }
\label{fig:mdp}
\end{figure}

\label{sec:iv-id}

\subsection{Value Function-based Identification}
\label{sec:iv-vf-id}
In the unconfounded MDP, value function defined in \eqref{eq:iv-val-func} can be used to identify $J(\pi)$ and itself can be identified via the Bellman equation. However, due to the existence of unobserved confounders, the regular Bellman equation, which relies on the Markovian assumption, does not hold in general and the effect of actions on the reward cannot be identified either. Fortunately, by leveraging the IV, we are able to provide a way to identify $J(\pi)$ via the state-value function $V^\pi$, which can be identified by an IV-aided Bellman equation. Before stating our result, we make one additional assumption.

\begin{assumption}\label{ass:10}
We have $(Z_t, U_t) \indp (\{S_j, U_j, A_j\}_{j < t})\given S_t$ for $t \geq 1$.
\end{assumption}
Assumption \ref{ass:10} ensures that $(Z_t, U_t)$ is ``memoryless", which does not dependent on past observations. This essentially ensures that the stochastic process $\{S_t, Z_t, A_t\}_{t\geq0}$ satisfies Markov property. The memoryless assumption on the unobserved confounders has been commonly used in the confounded MDP. See \cite{kallus2020confounding,shi2022off} for more details.

\begin{lemma}
\label{lemma:iv-vf-id}
Under Assumptions \ref{ass:MDP} and \ref{ass:iv-common}, for any $s \in \cS$ and $\pi \in \Pi$, we have
\$
V^\pi(s) = \EE\left[ \sum_{t = 0}^\infty \gamma^t R_t \left( \prod_{j = 0}^t \frac{Z_j^\top A_j \pi(A_j \given S_j)}{\Delta^*(S_j, A_j) \Theta^*(S_j, Z_j)  } \right ) \bigggiven S_0 = s \right]. 
\$
If additionally Assumption \ref{ass:10} is satisfied, it holds for any $t\geq 0$ that, 
\$
V^\pi(s) = \EE \left[ \frac{Z_t^\top A_t \pi(A_t \given S_t)}{\Delta^*(S_t, A_t) \Theta^*(S_t, Z_t) } \cdot \left(R_t + \gamma V^\pi(S_{t+1})\right ) \Biggiven S_t = s \right].
\$
Then the policy value $J(\pi)$ for $\pi \in \Pi$ can be identified via
\$
J(\pi) = (1-\gamma) \cdot \EE_{S_0 \sim \nu}\left[V^\pi(S_0)\right].
\$
\end{lemma}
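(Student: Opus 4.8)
The plan is to reduce all three displays to a single algebraic ``peeling'' identity showing that the instrumental-variable weight $w_t := Z_t^\top A_t \pi(A_t \given S_t)/(\Delta^*(S_t,A_t)\Theta^*(S_t,Z_t))$ converts, in one conditional expectation, the confounded behavior action-selection into the target selection $A_t\sim\pi(\cdot\given S_t)$, while leaving the true (causal) transition dynamics untouched. Writing $\mathcal{H}_t=(S_0,U_0,Z_0,A_0,\dots,S_t,U_t)$ for the history up to and including $(S_t,U_t)$, I would first prove the following peeling lemma: for any integrable $g$ that is, conditional on $(S_t,U_t,A_t)$, independent of $Z_t$ and of $\mathcal{H}_t$,
\[
\EE[w_t\, g \given \mathcal{H}_t] = \sum_{a\in\cA} \pi(a \given S_t)\,\EE[g \given S_t, U_t, A_t = a].
\]
Everything else follows by applying this identity repeatedly (the weights are well defined since $\Theta^*>0$ and, by IV relevance in Assumption \ref{ass:iv-common}\ref{ass:5}, $\Delta^*\neq 0$).

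The crux, and the main obstacle, is establishing this identity, which is exactly where the IV assumptions are consumed. Refining $\mathcal{H}_t$ by additionally conditioning on $(Z_t,A_t)$, using Assumption \ref{ass:iv-common}\ref{ass:4} so that the conditional mean of $g$ collapses to $h(S_t,U_t,a):=\EE[g\given S_t,U_t,A_t=a]$ (free of $Z_t$), and using that the behavior policy is Markov so $\PP(A_t=a\given \mathcal{H}_t,Z_t=z)=b(a\given S_t,U_t,z)$, one obtains
\[
\EE[w_t\,g \given \mathcal{H}_t] = \sum_{a} \frac{\pi(a\given S_t)}{\Delta^*(S_t,a)}\,h(S_t,U_t,a)\sum_{z} \frac{\PP(Z_t=z\given \mathcal{H}_t)}{\Theta^*(S_t,z)}\, (z^\top a)\, b(a \given S_t, U_t, z).
\]
Here I would invoke IV independence (Assumption \ref{ass:iv-common}\ref{ass:iv-zu-ind}, so that $Z_t$'s law given the whole history depends only on $S_t$ and $\PP(Z_t=z\given\mathcal{H}_t)=\Theta^*(S_t,z)$) to cancel the $\Theta^*$ factors, and then the simplex identity $z^\top a = \ind\{z=a\}-\ind\{z\neq a\}/(K-1)$ together with the independent-compliance Assumption \ref{ass:iv-common}\ref{ass:iv-compliance} to collapse $\sum_z (z^\top a)\,b(a\given S_t,U_t,z)=\Delta^*(S_t,a)$; the $\Delta^*$ factors then cancel, leaving $\sum_a \pi(a\given S_t)\,h(S_t,U_t,a)$. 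The delicate point is that $\Delta^*(S_t,a)$ is \emph{defined} to be the $U_t$-free quantity in Assumption \ref{ass:iv-common}\ref{ass:iv-compliance}, so the unobserved confounder cancels exactly; without independent compliance this collapse would retain a residual $U_t$-dependent factor and the identity would fail.

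For the first display I would fix $t$ and prove $\EE\big[R_t \prod_{j=0}^t w_j \given S_0=s\big]=\EE_\pi[R_t\given S_0=s]$ by applying the peeling lemma from $j=t$ down to $j=0$ via the tower property: at step $\ell$ the factor $\prod_{j<\ell} w_j$ is $\mathcal{H}_\ell$-measurable and pulls out, while the remaining inner quantity is a function of $(S_{\ell+1},U_{\ell+1})$, which satisfies the hypothesis of the lemma by the Markov property in Assumption \ref{ass:MDP} and by Assumption \ref{ass:iv-common}\ref{ass:4}. Composing the nested conditional expectations reconstructs precisely the trajectory law in which $A_j\sim\pi(\cdot\given S_j)$ and $(S_{j+1},U_{j+1})$ follows the true, policy-independent transition, i.e.\ $\EE_\pi[R_t\given S_0=s]$; the initial law $U_0\given S_0=s$ agrees between the data and the $\pi$-trajectory because it is fixed by the common environment. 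Multiplying by $\gamma^t$, summing over $t$, and interchanging the sum with the expectation (legitimate since each term is bounded by $\gamma^t$ after taking expectations, as $|R_t|\le1$) yields $V^\pi(s)$. The third display is then immediate from the definition of $J(\pi)$ in \eqref{eq:iv-val-func}.

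For the second (Bellman) display, the extra Assumption \ref{ass:10} enters twice. First, memorylessness makes $\{S_t\}$ Markov under $\pi$ and, crucially, renders the conditional law $\PP(U_t=u\given S_t=s)$ policy-independent, which gives the ordinary one-step identity $V^\pi(s)=\EE_\pi[R_t+\gamma V^\pi(S_{t+1})\given S_t=s]$ for $V^\pi$ as in \eqref{eq:iv-val-func}. Second, applying the peeling lemma a single time with $g=R_t+\gamma V^\pi(S_{t+1})$ and then averaging over $U_t\sim\PP(\cdot\given S_t=s)$ converts this $\pi$-expectation into $\EE[w_t(R_t+\gamma V^\pi(S_{t+1}))\given S_t=s]$, which is the claimed right-hand side. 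I expect the genuinely hard part to be the peeling identity itself, namely the simultaneous cancellation of $\Theta^*$ through IV independence and of the $U_t$-dependence through independent compliance; the telescoping in the first display and the Markov reduction in the second are routine once that identity is in hand.
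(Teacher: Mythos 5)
Your proposal is correct and, at its core, it is the same argument as the paper's: your ``peeling lemma'' is exactly the one-step cancellation carried out in the proof of Lemma \ref{lemma:iv-mis-j} (which the paper's proof of this lemma explicitly reuses), your repeated application of it from $j=t$ down to $j=0$ is the paper's induction, and the summation over $t$ and the $J(\pi)$ display are handled identically. The one place you genuinely diverge is the Bellman display: the paper proves it by re-expanding $V^\pi(S_1)$ via the time-shifted weighted infinite-series representation, using Assumption \ref{ass:10} to exchange conditioning on $S_1$ with conditioning on $(S_1,U_1)$, and re-telescoping into the series for $V^\pi(s)$; you instead first establish the ordinary Bellman identity $V^\pi(s)=\EE_\pi[R_t+\gamma V^\pi(S_{t+1})\given S_t=s]$ and then convert it with a single peeling step. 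Your route is more modular, but it carries an extra proof obligation that the paper's route sidesteps: you must show the conditional law of $U_t$ given $S_t$ is policy-invariant, which requires extracting from Assumption \ref{ass:10} a kernel factorization $P(s',u'\given s,u,a)=P(s'\given s,u,a)\,q(u'\given s')$ (conditional independence under the behavior law pins down the kernel's conditional of $U_{t+1}$ given $(S_{t+1},S_t,U_t,A_t)$ on the support of the data), after which $\PP_\pi(U_t\given S_t)=q$ under any $\pi$; your proposal asserts this invariance but does not prove it. Finally, one attribution caveat: the step $\PP(Z_t=z\given\mathcal{H}_t)=\Theta^*(S_t,z)$ for $t\geq1$ does \emph{not} follow from Assumption \ref{ass:iv-common}\ref{ass:iv-zu-ind} alone (which only gives $Z_t\indp U_t\given S_t$); it needs the memorylessness of Assumption \ref{ass:10}. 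This is not a defect relative to the paper --- the paper's own induction for the first display likewise invokes Assumption \ref{ass:10}, notwithstanding the lemma statement --- but you should cite Assumption \ref{ass:10} rather than Assumption \ref{ass:iv-common}\ref{ass:iv-zu-ind} at that step.
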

\begin{proof}
See \S\ref{prf:lemma:iv-vf-id} 
of the Supplementary Material for a detailed proof. 
\end{proof}

We remark that the Bellman equation in the unconfounded MDP takes the following form,
\$
V^\pi_{\textsf{unconf}}(s) = \EE \left[ \frac{\pi(A_t \given S_t)}{\PP(A_t\given S_t)} \cdot \left(R_t + \gamma V_{\textsf{unconf}}^\pi(S_{t+1})\right ) \Biggiven S_t = s \right],
\$
where $V^\pi_{\textsf{unconf}}$ is the corresponding state-value function in the unconfounded MDP. In comparison, to deal with the unobserved confounders, our identification result in Lemma \ref{lemma:iv-vf-id} incorporates the IVs into the action density ratio. It is also interesting to see that if one can observe the trajectory $\{S_t, Z_t, A_t, R_t\}_{t \geq 0}$ to the infinity, then Assumptions \ref{ass:MDP} and \ref{ass:iv-common} are sufficient to identify $V^\pi(s)$ and $J(\pi)$ based on the first statement of Lemma \ref{lemma:iv-vf-id}. However, due to the limitation of only observing trajectories up to a finite horizon, we impose Assumption \ref{ass:10} so that Bellman equation is satisfied and used to break the curse of infinite-horizon. Based on Lemma \ref{lemma:iv-vf-id}, we introduce the following VF-based estimating equation, which will be used later in \S\ref{sec:iv-vf} to construct an estimator of the value function $V^\pi$. 

\begin{corollary}[VF-based Estimating Equation]
\label{cor:iv-vf-phi-0}
Under Assumptions \ref{ass:MDP}, \ref{ass:iv-common}, and \ref{ass:10}, it holds for any function $g\colon \cS\to \RR$ that
\$
\EE\left[ \frac{1}{T} \sum_{t = 0}^{T-1} g(S_t) \frac{Z_t^\top A_t \pi(A_t\given S_t)}{\Delta^*(S_t,A_t) \Theta^*(S_t,Z_t)} \cdot \left(R_t + \gamma V^\pi(S_{t+1})\right) \right] = \EE\left[\frac{1}{T}\sum_{t=0}^{T-1} g(S_t) V^\pi(S_t)\right]. 
\$
\end{corollary}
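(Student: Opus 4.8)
The plan is to reduce the claim, term by term in the index $t$, to the IV-aided Bellman equation already established in the second display of Lemma \ref{lemma:iv-vf-id}. Since both sides of the desired identity are averages $\frac{1}{T}\sum_{t=0}^{T-1}(\cdots)$ over the same index set, linearity of expectation reduces the statement to proving, for each fixed $t\in\{0,\dots,T-1\}$, the single-term identity
\[
\EE\left[ g(S_t)\,\frac{Z_t^\top A_t\,\pi(A_t\given S_t)}{\Delta^*(S_t,A_t)\,\Theta^*(S_t,Z_t)}\bigl(R_t+\gamma V^\pi(S_{t+1})\bigr)\right]=\EE\left[g(S_t)\,V^\pi(S_t)\right].
\]

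First I would condition on $S_t$ and apply the tower property, rewriting the left-hand side as
\[
\EE\left[ g(S_t)\,\EE\left[\frac{Z_t^\top A_t\,\pi(A_t\given S_t)}{\Delta^*(S_t,A_t)\,\Theta^*(S_t,Z_t)}\bigl(R_t+\gamma V^\pi(S_{t+1})\bigr)\Biggiven S_t\right]\right].
\]
Here $g(S_t)$ is $\sigma(S_t)$-measurable and therefore factors out of the inner conditional expectation, whereas the remaining factors stay inside because they depend on $A_t$, $Z_t$, or $S_{t+1}$. By Lemma \ref{lemma:iv-vf-id}, which holds under Assumptions \ref{ass:MDP}, \ref{ass:iv-common}, and \ref{ass:10}, the inner conditional expectation equals $V^\pi$ evaluated at the random state $S_t$, i.e.\ $V^\pi(S_t)$. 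Substituting this yields $\EE[g(S_t)\,V^\pi(S_t)]$, which is exactly the single-term identity.

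Finally I would sum the single-term identities over $t=0,\dots,T-1$ and divide by $T$ to recover the corollary. The one point meriting care is that the conditional Bellman equation in Lemma \ref{lemma:iv-vf-id} must hold simultaneously at \emph{every} decision point $t$, not merely at $t=0$; this is guaranteed by the memoryless Assumption \ref{ass:10} together with the time-homogeneity built into Assumptions \ref{ass:MDP} and \ref{ass:iv-common}, under which $\{S_t,Z_t,A_t\}_{t\ge 0}$ is Markov and the conditionals $\Delta^*$ and $\Theta^*$ are stationary. I do not anticipate any genuine obstacle: once the Bellman equation is available at each $t$, the result follows purely from the tower property and linearity, and no concentration or additional regularity argument is needed.
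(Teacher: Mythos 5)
Your proposal is correct and follows essentially the same route as the paper's proof: condition on $S_t$, pull out $g(S_t)$ by the tower property, invoke the IV-aided Bellman equation of Lemma \ref{lemma:iv-vf-id} (which holds at every decision point $t$) to identify the inner conditional expectation with $V^\pi(S_t)$, and then sum over $t \in \{0,\dots,T-1\}$. Your added remark that the Bellman equation must hold at every $t$, not just $t=0$, is exactly the point the paper's statement of Lemma \ref{lemma:iv-vf-id} already covers, so there is no gap.
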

\begin{proof}
See \S\ref{prf:cor:iv-vf-phi-0} 
of the Supplementary Material for a detailed proof. 
\end{proof}

\subsection{Marginalized Importance Sampling-based Identification}
\label{sec:iv-mis-id}

In this subsection, we propose another way to identify $J(\pi)$ via the marginal importance sampling. We first introduce the following notations. 
For any $t\geq 0$, we denote by $p_t^\pi(\cdot)$ the marginal distribution of $S_t$ under the known initial observed state distribution $\nu$ following the policy $\pi$. 
In the meanwhile, with a slight abuse of notations, we denote by $p_t^b(\cdot)$ the marginal distribution of $S_t$ under the unknown offline data generating distribution $\zeta$ following the behavior policy $b$. In addition, we denote by for every $s \in \cS$, 
\#\label{eq:iv-ratio-def}
d^\pi(s) = (1-\gamma) \sum_{t = 0}^\infty \gamma^t p_t^\pi(s), \qquad d^b(s) = \frac{1}{T} \sum_{t = 0}^{T-1} p_t^b(s), \qquad w^\pi(s) = \frac{d^\pi(s)}{d^b(s)}
\# 
the discounted state visitation measure under the policy $\pi$, the average state visitation measure under the behavior policy $b$, and their density ratio, respectively. As $T$ is the same for each trajectory, they all share the same $d^b$. In general, if $T$ is different from subjects to subjects, then one can treat $T$ as a random variable and define ratio function as a mixture of different ratio functions.
%\begin{comment}
%In order to identify $J(\pi)$ via the marginal importance sampling, i.e., $w^\pi$, we make the following assumption.
%\begin{assumption}\label{ass:iv-time-homo}
%For any $t \geq 0$, $U_t\given S_t$ is time-homogeneous. 
%\end{assumption}
%Assumption \ref{ass:iv-time-homo} is weaker than assuming the unmeasured confounder $U_t$ is  memoryless, and \textit{does not} imply that $\{S_t, A_t\}_{t\geq 0}$ is a Markov chain. Such an assumption will be satisfied for example when the trajectory comes from a stationary sequence or unmeasured confounders are independent of observed states and actions. 
%\end{comment}
Motivated by the idea of marginalized importance sampling for off-policy evaluation in the standard MDP \citep{liu2018breaking}, we establish the following novel identification result for the expected total reward $J(\pi)$ in the confounded MDP.

\begin{lemma}
\label{lemma:iv-mis-j}
Under Assumptions \ref{ass:MDP}--\ref{ass:10}, for any $\pi \in \Pi$, we have
\$
J(\pi) = \EE\left[ \frac{1}{T} \sum_{t = 0}^{T-1} \frac{Z_t^\top A_t \pi(A_t\given S_t)}{\Delta^*(S_t,A_t) \Theta^*(S_t,Z_t)} \cdot w^\pi(S_t) R_t \right], 
\$
where $w^\pi$ is defined in \eqref{eq:iv-ratio-def}.
\end{lemma}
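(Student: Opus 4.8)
The plan is to combine a one-step instrumental-variable identification of the expected immediate reward with the marginalized-importance-sampling (MIS) decomposition of $J(\pi)$ through the discounted occupancy measure, and then to change measure from $d^\pi$ to $d^b$. Throughout, $\EE[\cdot]$ denotes the expectation under the behavior distribution.

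\medskip
\noindent\textbf{Step 1 (single-step reward identification).} I would first establish that for every $s \in \cS$ and $\pi \in \Pi$,
\[
r^\pi(s) := \EE_\pi[R_t \given S_t = s] = \EE\left[\frac{Z_t^\top A_t \pi(A_t \given S_t)}{\Delta^*(S_t, A_t)\Theta^*(S_t, Z_t)} R_t \Biggiven S_t = s\right],
\]
where $\EE_\pi[R_t \given S_t = s]$ is the interventional expected reward obtained by applying $\pi$ at state $s$. This is precisely the one-step computation underlying Lemma \ref{lemma:iv-vf-id}, and I would reproduce it as follows: condition the right-hand side on $U_t$, invoke IV independence $Z_t \indp U_t \given S_t$ (Assumption \ref{ass:iv-common}\ref{ass:iv-zu-ind}) so that $Z_t \sim \Theta^*(s, \cdot)$ integrates out the $\Theta^*$ factor, and then apply the independent-compliance identity (Assumption \ref{ass:iv-common}\ref{ass:iv-compliance}), which with the simplex encoding \eqref{eq:iv-simplex-encoding} gives $\sum_z (z^\top a)\, b(a \given s, u, z) = \Delta^*(s, a)$ and cancels the $\Delta^*$ factor. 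What remains is $\sum_a \pi(a \given s)\,\EE[R_t \given S_t = s, U_t = u, A_t = a]$; averaging over $U_t \given S_t = s$ then recovers $r^\pi(s)$.

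\medskip
\noindent\textbf{Step 2 (MIS decomposition and change of measure).} Writing $J(\pi) = (1-\gamma)\sum_{t \geq 0}\gamma^t \EE_\pi[R_t]$ and iterating expectations over $S_t \sim p_t^\pi$ gives $J(\pi) = \sum_s d^\pi(s)\, r^\pi(s)$, using the definition of $d^\pi$ in \eqref{eq:iv-ratio-def}; here $r^\pi(s)$ can be pulled out of the $t$-sum because time-homogeneity of the transitions (Assumption \ref{ass:MDP}) together with memorylessness of $(Z_t, U_t)$ (Assumption \ref{ass:10}) makes $r^\pi(s)$ independent of $t$. I then change measure via $w^\pi = d^\pi / d^b$ and expand $d^b(s) = \frac1T\sum_{t=0}^{T-1} p_t^b(s)$ to obtain $J(\pi) = \frac1T\sum_{t=0}^{T-1}\EE[w^\pi(S_t)\, r^\pi(S_t)]$. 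Substituting the identity from Step 1 and using the tower property (noting $w^\pi(S_t)$ is $S_t$-measurable) yields the claimed formula.

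\medskip
\noindent\textbf{Main obstacle.} The delicate point is Step 1, specifically justifying that the IV-weighted \emph{behavioral} conditional expectation equals the \emph{interventional} reward $r^\pi(s)$ under $\pi$. This requires that the conditional law of $U_t \given S_t = s$ be the same whether the trajectory is generated by $b$ or by $\pi$, which is exactly what the memoryless assumption (Assumption \ref{ass:10}) guarantees, since $U_t$ then depends on the past only through $S_t$; this same fact also makes $r^\pi(s)$ independent of $t$, which is what legitimizes factoring it out of the occupancy sum. The independent-compliance cancellation is the other sensitive ingredient and is where the inner-product structure $z^\top a$ reduces the IV weighting to the compliance function $\Delta^*$. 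The remaining occupancy-measure bookkeeping and the change of measure are routine once $r^\pi(s)$ is both identified and shown to be $t$-independent.
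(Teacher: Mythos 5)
Your proposal is correct and takes essentially the same route as the paper's own proof: the same chain of IV independence, the simplex-encoding compliance cancellation, and memorylessness (Assumption \ref{ass:10}) to identify the per-state expected reward, followed by the occupancy-measure bookkeeping and the change of measure $d^\pi = w^\pi d^b$; your factoring out of the $S_t$-measurable weight $w^\pi(S_t)$ before the identification step is only a cosmetic reorganization of the paper's single computation. One small point to make explicit when writing it up: collapsing the $z$-sum onto the compliance identity in your Step 1 also requires the exclusion restriction (Assumption \ref{ass:iv-common}\ref{ass:4}) so that $\EE[R_t \given S_t, U_t, A_t, Z_t]$ does not depend on $Z_t$, which the paper invokes explicitly in its second equality.
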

\begin{proof}
See \S\ref{prf:lemma:iv-mis-j} 
of the Supplementary Material for a detailed proof. 
\end{proof}

We remark that the expected total reward in the unconfounded MDP takes the following form,
\$
J_{\textsf{unconf}}(\pi) = \EE \left[ \frac{1}{T} \sum_{t = 0}^{T-1} \frac{\pi(A_t \given S_t)}{\PP(A_t\given S_t)}\cdot w^\pi(S_t) R_t \right],
\$
where $J_{\textsf{unconf}}(\pi)$ is the corresponding expected total reward in the unconfounded MDP, and the expectation $\EE[\cdot]$ is taken with respect to the trajectory generated by the behavior policy. In comparison, to deal with the unobserved confounders, our identification result in Lemma \ref{lemma:iv-mis-j} incorporates the IVs into the action density ratio. 
Based on Lemma \ref{lemma:iv-mis-j}, we introduce the following MIS-based estimating equation, which will be used later in \S\ref{sec:iv-mis} to construct an estimator of the density ratio $w^\pi$.

\begin{lemma}[MIS-based Estimating Equation]
\label{lemma:iv-est-eq}
Under Assumptions \ref{ass:MDP}-\ref{ass:10}, for any $\pi \in \Pi$, it holds for any function $f\colon \cS \to \RR$ that 
\$
(1-\gamma) \EE_{S_0\sim \nu} \left [ f(S_0) \right] = \EE\left[ \frac{1}{T} \sum_{t = 0}^{T-1} \frac{Z_t^\top A_t \pi(A_t\given S_t)}{\Delta^*(S_t,A_t) \Theta^*(S_t,Z_t)} \cdot w^\pi(S_t) \left ( f(S_t) -  \gamma  f(S_{t+1})  \right ) \right]. 
\$
\end{lemma}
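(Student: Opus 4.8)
The plan is to expand the right-hand side by linearity into a ``current-state'' term and a ``next-state'' term, evaluate each through a single one-step IV reweighting identity, and then collapse the resulting time-averages using the definitions of $d^b$ and $w^\pi$ together with the flow equation for the discounted occupancy $d^\pi$. Throughout I would write $\omega_t = Z_t^\top A_t \pi(A_t\given S_t)/(\Delta^*(S_t,A_t)\Theta^*(S_t,Z_t))$ for the reweighting factor, so that the right-hand side reads $\frac{1}{T}\sum_{t=0}^{T-1}\EE[\omega_t w^\pi(S_t) f(S_t)] - \gamma \frac{1}{T}\sum_{t=0}^{T-1}\EE[\omega_t w^\pi(S_t) f(S_{t+1})]$.

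The heart of the argument is a single-step reweighting identity, already implicit in the proof of Lemma \ref{lemma:iv-vf-id}: conditioning on $S_t = s$ and integrating out $(U_t, Z_t, A_t, S_{t+1})$, one has $\EE[\omega_t \given S_t = s] = 1$ and, for any $h\colon \cS\to\RR$, $\EE[\omega_t h(S_{t+1})\given S_t = s] = (P^\pi h)(s)$, where $P^\pi$ is the one-step transition operator obtained by drawing $A_t\sim\pi(\cdot\given s)$ while $U_t$ follows its conditional law given $S_t = s$. Both facts reduce to the same computation: by IV independence ($Z_t\indp U_t\given S_t$) the factor $\Theta^*(S_t,Z_t)$ cancels against the conditional law of $Z_t$, and summing the simplex inner products $z^\top a$ against the behavior law of $A_t$ reproduces exactly the compliance, $\sum_z b(a\given s,u,z)\, z^\top a = \Delta^*(s,a)$, which by Assumption \ref{ass:iv-common}\ref{ass:iv-compliance} does not depend on $u$; dividing by $\Delta^*(s,a)$ then leaves $\sum_a \pi(a\given s)$, an honest average under $\pi$. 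Crucially, Assumption \ref{ass:10} guarantees that the conditional law of $U_t$ given $S_t = s$ is policy- and past-independent, so this $P^\pi$ really is the interventional transition of $\pi$ in the confounded MDP. I expect verifying this identity for a \emph{general} test function at the next state --- carefully tracking the confounder integration and invoking independent compliance to strip off the $u$-dependence --- to be the main obstacle; everything after is bookkeeping.

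Granting the identity, the two terms collapse cleanly. For the current-state term, since $w^\pi(S_t)f(S_t)$ is $\sigma(S_t)$-measurable, $\EE[\omega_t w^\pi(S_t)f(S_t)] = \EE[w^\pi(S_t)f(S_t)]$; averaging over $t$ and using $d^b = \frac{1}{T}\sum_t p_t^b$ together with $w^\pi = d^\pi/d^b$ turns this into $\int w^\pi f\, d^b = \int f\, d^\pi = \EE_{s\sim d^\pi}[f(s)]$. For the next-state term, pulling out the $\sigma(S_t)$-measurable factor $w^\pi(S_t)$ and applying the identity gives $\EE[\omega_t w^\pi(S_t)f(S_{t+1})] = \EE[w^\pi(S_t)(P^\pi f)(S_t)]$, and the same averaging yields $\EE_{s\sim d^\pi}[(P^\pi f)(s)] = \int f\,(P^\pi)^* d^\pi$. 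Finally I would invoke the flow equation for the discounted occupancy, $d^\pi = (1-\gamma)\nu + \gamma (P^\pi)^* d^\pi$, which is immediate from $p_0^\pi = \nu$ and $p_{t+1}^\pi = (P^\pi)^* p_t^\pi$, to rewrite $(P^\pi)^* d^\pi = \gamma^{-1}(d^\pi - (1-\gamma)\nu)$. Substituting, the next-state term becomes $\EE_{s\sim d^\pi}[f(s)] - (1-\gamma)\EE_{S_0\sim\nu}[f(S_0)]$, the two copies of $\EE_{s\sim d^\pi}[f]$ cancel, and the right-hand side reduces to $(1-\gamma)\EE_{S_0\sim\nu}[f(S_0)]$, which is precisely the left-hand side.
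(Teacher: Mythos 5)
Your proposal is correct and follows essentially the same route as the paper's own proof: the same split into a current-state and a next-state term, the same one-step IV reweighting identity (IV independence cancels $\Theta^*$, the simplex inner products plus independent compliance recover $\Delta^*$, exclusion handles $S_{t+1}$, and Assumption \ref{ass:10} makes the confounder-averaged transition well-defined), and the same flow equation $d^\pi = (1-\gamma)\nu + \gamma (P^\pi)^* d^\pi$ to produce the cancellation. The only difference is presentational — you package the argument in operator notation where the paper writes out the integrals explicitly.
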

\begin{proof}
See \S\ref{prf:lemma:iv-est-eq} 
for a detailed proof. 
\end{proof}
Lemma \ref{lemma:iv-est-eq} shows that with the help of IVs and Assumption \ref{ass:10}, the estimating equation for the density ratio $w^\pi$ holds under the confounded MDP. This is different from the existing approaches such as \cite{liu2018breaking,zhang2020gendice} for estimating ratio functions in the standard MDP setting, which relies crucially on the no unmeasured confounding assumption. %Compared with the value function-based identification, marginalized importance sampling-based identification on the $J(\pi)$ requires fewer conditions and can be applied into more general confounded MDP problems.

% The estimating equation in Lemma \ref{lemma:iv-est-eq} may still be under-identified. Therefore, we need to first estimate $\Delta^*(s,a)$ and $\Theta^*(s,z)$ and then estimate the ratio $w^\pi(s)$. Specifically speaking, we denote by 
% \#\label{eq:iv-true-phi-mis}
% & \Phi^\pi_\mis(w, f; \Delta, \Theta) =  (1-\gamma) \EE_{S\sim \nu}\left[ f(S, \pi)\right] - \EE \left[  \frac{Z^\top A \pi(A\given S)}{\Delta(S,A) \Theta(S,Z)}w(S) \left (f(S, A) - \gamma f(S', \pi) \right ) \right].
% \#
% Then by Lemma \ref{lemma:iv-est-eq}, for any test function $f$, we have $\Phi^\pi_\mis(w^\pi,f; \Delta^*, \Theta^*) = 0$.

\section{Instrumental-Variable-Assisted RL with Pessimism}\label{sec:iv-pess}
In this section, we introduce three pessimistic RL methods to estimate $\pi^*$ in our confounded MDP. Generally, pessimistic RL first employs the offline data to construct a conservative estimate of the values for any policy, then select the policy with the highest conservative estimate of its value. Though the recently proposed pessimistic RL shows promising performance in practice \citep{kumarConservativeQlearningOffline2020,yuMopoModelbasedOffline2020,kidambiMorelModelbasedOffline2020,deng2021score}, its theoretical understanding is far from complete and is only limited to fully observable MDP \citep{levine2020offline,shi2022pessimistic}. In this section, we adapt the idea of pessimism in our case and present related theoretical results in Section \ref{sec:iv-theory}. 

% \textcolor{red}{Give a brief introduction about the idea of pessimistic in RL with some references.}

Since both identification results in \S\ref{sec:iv-vf-id} and \S\ref{sec:iv-mis-id} require estimating the quantities $\Delta^*(s,a)$ and $\Theta^*(s,z)$, we first introduce the estimating procedure for such quantities. We assume that there exists an oracle that gives estimators of $\Delta^*(s,a)$ and $\Theta^*(s,z)$ via two loss functions $\hat L_0(\Delta)$ and $\hat L_1(\Theta)$ as follows, 
\$
\hat \Delta\in \argmin_{\Delta\in \cF_0} \hat L_0(\Delta),\qquad \hat \Theta \in \argmin_{\Theta \in \cF_1} \hat L_1(\Theta), 
\$
where $\cF_0$ and $\cF_1$ are two function classes. 
We remark that we can use the negative likelihood functions for $\hat L_0$ and $\hat L_1$ (see \S\ref{sec:iv-theory} for details). 
% \textcolor{red}{Give examples about these two loss function}
In the meanwhile, we construct two confidence sets for $\Delta$ and $\Theta$, respectively, as follows, 
\#\label{eq:mle-conf-set}
& \ci^0_{\alpha_0} = \left \{\Delta \in \cF_0 \colon \hat L_0(\Delta) - \hat L_0(\hat \Delta) \leq \alpha_0 \right\},\\
& \ci^1_{\alpha_1} = \left\{\Theta \in \cF_1  \colon \hat L_1(\Theta) - \hat L_1(\hat \Theta) \leq \alpha_1 \right\},
\#
where $(\alpha_0,\alpha_1)$ are some constants that will be specified later. These two confidence sets are used to construct conservative estimators for $J(\pi)$ via either VF-, MIS-, or doubly robust (DR)-based estimation.

\subsection{VF-based Pessimistic Method}\label{sec:iv-vf}
We introduce VF-based pessimistic RL in this section. 
We first define the following quantity, 
\$ %#\label{eq:iv-vf-phi}
\hat \Phi^\pi_\vf(v,g; \Delta, \Theta) = \hat \EE\left[ \frac{1}{T} \sum_{t = 0}^{T-1} g(S_t) \left ( \frac{Z_t^\top A_t \pi(A_t\given S_t)}{\Delta(S_t,A_t) \Theta(S_t,Z_t)} \left(R_t + \gamma v(S_{t+1})\right) - v(S_t) \right ) \right], 
\$ %#
where $\hat \EE[\cdot]$ is the empirical measure defined by the offline data $\cD$. In the meanwhile, we define its population counterpart as 
\$
\Phi^\pi_\vf(v,g;\Delta, \Theta) = \EE\left[ \frac{1}{T} \sum_{t = 0}^{T-1} g(S_t) \left ( \frac{Z_t^\top A_t \pi(A_t\given S_t)}{\Delta(S_t,A_t) \Theta(S_t,Z_t)} \left(R_t + \gamma v(S_{t+1})\right) - v(S_t) \right ) \right]
\$ 
for any $(v, g, \Delta, \Theta)$, where the expectation $\EE[\cdot]$ is taken with respect to the trajectory generated by the behavior policy. Then by the VF-based estimating equation specified in Corollary \ref{cor:iv-vf-phi-0}, it is easy to see that $\Phi^\pi_\vf(V^\pi,g;\Delta^*, \Theta^*) = 0$ for any function $g\colon \cS \to \RR$.

With the aforementioned notions, for any $(\Delta, \Theta)$, we construct an estimator of $V^\pi$ via solving the following min-max optimization problem, 
\#\label{eq:v-pi-def}
\hat v_{\Delta, \Theta}^\pi \in \argmin_{v\in \cV} \max_{g\in \cW} \hat \Phi_\vf^\pi (v, g; \Delta, \Theta),
\#
where $\cV$ and $\cW$ are two sets to be specified later. To obtain an estimation $\hat \pi_\vf$ for an optimal in-class policy $\pi^\ast$ that maximizes the expected total reward $J(\pi)$ defined in \eqref{eq:iv-val-func}, we formulate the following optimization problem,
\#\label{eq:iv-vf-hat-pi}
& \hat \pi_\vf = \argmax_{\pi\in \Pi} \min_{ (\Delta, \Theta) \in \ci^0_{\alpha_0}\times \ci^1_{\alpha_1} } \min_{v \in \ci_{\alpha_\vf}^\vf(\Delta, \Theta, \pi)} (1-\gamma)\EE_{S \sim \nu}[v(S)], \\
& \text{with } \ci_{\alpha_\vf}^\vf(\Delta, \Theta, \pi) = \left \{v\in \cV \colon \max_{g\in \cW} \hat \Phi_\vf^\pi(v, g; \Delta, \Theta) - \max_{g\in \cW} \hat \Phi_\vf^\pi(\hat v_{\Delta, \Theta}^\pi, g; \Delta, \Theta) \leq \alpha_\vf \right \}, 
\#
where $(\alpha_0, \alpha_1, \alpha_\vf)$ are constants to be specified, and $\ci^0_{\alpha_0}$ and $\ci^1_{\alpha_1}$ are confidence sets defined in \eqref{eq:mle-conf-set}. Intuitively, the policy $\hat \pi_\vf$ defined in \eqref{eq:iv-vf-hat-pi} aims to maximize the most pessimistic estimator of the expected total reward. As we will see in Theorem \ref{thm:iv-vf}, such a pessimistic method provably converges to an optimal policy with data coverage assumption only for the optimal policy and some other mild conditions.

\subsection{MIS-based Pessimistic Method}\label{sec:iv-mis}

We introduce MIS-based pessimistic RL in this section. We first define the following quantity,
\$
\hat \Phi^\pi_\mis(w, f; \Delta, \Theta) =  & \EE_{S_0 \sim \nu}\left[ (1-\gamma) f(S_0) \right] \\
& - \hat \EE \left[ \frac{1}{T} \sum_{t=0}^{T-1} \frac{Z_t^\top A_t \pi(A_t\given S_t)}{\Delta(S_t,A_t) \Theta(S_t,Z_t)}w(S_t) \left (f(S_t) - \gamma f(S_{t+1}) \right ) \right].
\$
We define its population counterpart as
\$
\Phi^\pi_\mis(w,f;\Delta, \Theta) = & \EE_{S_0 \sim \nu}\left[ (1-\gamma) f(S_0) \right] \\
& - \EE \left[ \frac{1}{T} \sum_{t=0}^{T-1} \frac{Z_t^\top A_t \pi(A_t\given S_t)}{\Delta(S_t,A_t) \Theta(S_t,Z_t)}w(S_t) \left (f(S_t) - \gamma f(S_{t+1}) \right ) \right]
\$ 
for any $(w, f, \Delta, \Theta)$. Then by the MIS-based estimating equation specified in Lemma \ref{lemma:iv-est-eq}, it can be seen that $\Phi^\pi_\mis(w^\pi,f;\Delta^*, \Theta^*) = 0$ for any function $f\colon \cS \to \RR$. With the aforementioned notions, for any $(\Delta, \Theta)$, we construct an estimator of $w^\pi$ via solving the following minimax optimization problem, 
\#\label{eq:w-pi-def}
& \hat w^\pi_{\Delta, \Theta} \in \argmin_{w\in \cW} \max_{f\in \cV} \hat \Phi^\pi_\mis(w, f; \Delta, \Theta). 
\#
With a slight abuse of notations, here $\cW$ and $\cV$ are again two sets to be specified later. 
We aim to obtain an optimal policy that maximizes the expected total reward $J(\pi)$ by utilizing the estimators constructed in \eqref{eq:w-pi-def}. 
For this, we further define the following estimator of $J(\pi)$ via Lemma \ref{lemma:iv-mis-j}, 
\$ %#\label{eq:iv-l}
\hat L_\mis (w, \pi; \Delta, \Theta) = \hat \EE \left[ \frac{1}{T} \sum_{t = 0}^{T-1} \frac{Z_t^\top A_t \pi(A_t\given S_t)}{\Delta(S_t,A_t) \Theta(S_t,Z_t)} w(S_t) R_t \right].
\$ %#
Then we aim to solve the following optimization problem, 
\#\label{eq:iv-mis-hat-pi}
& \hat \pi_\mis \in \argmax_{\pi\in \Pi} \min_{ (\Delta, \Theta) \in \ci^0_{\alpha_0}\times \ci^1_{\alpha_1} } \min_{w \in \ci^\mis_{\alpha_\mis}(\Delta, \Theta, \pi) } \hat L_\mis(w, \pi; \Delta, \Theta), \\
& \text{with } \ci^\mis_{\alpha_\mis}(\Delta, \Theta, \pi) = \Biggl\{  w\in \cW  \colon \max_{f\in \cV} \hat \Phi_\mis^\pi(w, f; \Delta, \Theta)  - \max_{f\in \cV} \hat \Phi_\mis^\pi(\hat w^\pi_{\Delta, \Theta}, f; \Delta, \Theta) < \alpha_\mis \Biggr\}, 
\#
where $(\alpha_0, \alpha_1, \alpha_\mis)$ are constants to be specified, and $\ci^0_{\alpha_0}$ and $\ci^1_{\alpha_1}$ are confidence sets defined in \eqref{eq:mle-conf-set}. Similarly as in \eqref{eq:iv-vf-hat-pi}, the policy $\hat \pi_\mis$ defined in \eqref{eq:iv-mis-hat-pi} aims to maximize the most pessimistic estimator of the expected total reward. As we will see in Theorem \ref{thm:iv-mis}, such a pessimistic method provably converges to an optimal policy with realizability assumption only for the optimal policy.

\subsection{DR-based Pessimistic Method}\label{sec:iv-dr}

As a combination of VF-based and MIS-based policy optimization methods, we introduce a doubly robust (DR)-based pessimistic RL algorithm in this section. 
We define the following DR estimator with its population counterpart, 
\$ %#\label{eq:dr-L}
& \hat L_\dr(w, v, \pi; \Delta, \Theta) = \hat \EE \left[\frac{1}{T} \sum_{t = 0}^{T-1} \frac{Z_t^\top A_t \pi(A_t\given S_t)}{\Delta(S_t,A_t) \Theta(S_t,Z_t)} w(S_t) \left( R_t + \gamma v(S_{t+1}) - v(S_t) \right) \right] \\
& \qquad \qquad \qquad \qquad \quad + (1-\gamma) \EE_{S_0\sim \nu} \left[v(S_0)\right], \\
& L_\dr(w, v, \pi; \Delta, \Theta) = \EE \left[\frac{1}{T} \sum_{t = 0}^{T-1} \frac{Z_t^\top A_t \pi(A_t\given S_t)}{\Delta(S_t,A_t) \Theta(S_t,Z_t)} w(S_t) \left( R_t + \gamma v(S_{t+1}) - v(S_t) \right) \right] \\
& \qquad \qquad \qquad \qquad \quad + (1-\gamma) \EE_{S_0\sim \nu} \left[v(S_0)\right]. 
\$ %#
Note that $L_\dr(w^\pi, v, \pi; \Delta^*, \Theta^*) = L_\dr(w, V^\pi, \pi; \Delta^*, \Theta^*) = J(\pi)$ for any $(\pi, w, v)\in \Pi \times \cW\times \cV$. Thus, the quantity $\hat L_\dr$ serves as a valid DR estimator of $J(\pi)$. 
In the follows, based on such a DR estimator of the expected total reward, we formulate the following optimization problem for estimating the optimal in-class policy,
\#
& \hat \pi_\dr \in \argmax_{\pi\in \Pi} \min_{(\Delta, \Theta) \in \ci^0_{\alpha_0}\times \ci^1_{\alpha_1}} \min_{(w,v) \in \ci_{\alpha_\mis, \alpha_\vf}(\Delta, \Theta, \pi)} \hat L_\dr(w,v,\pi; \Delta, \Theta),  \\
& \text{with } \ci_{\alpha_\mis, \alpha_\vf}(\Delta, \Theta, \pi) = \ci^\mis_{\alpha_\mis}(\Delta, \Theta, \pi)\times \ci_{\alpha_\vf}^\vf(\Delta, \Theta, \pi), \label{eq:iv-dr}
\#
where $(\alpha_0, \alpha_1, \alpha_\vf, \alpha_\mis)$ are constants to be specified, and $\ci_{\alpha_\vf}^\vf(\Delta, \Theta, \pi)$ and $\ci^\mis_{\alpha_\mis}(\Delta, \Theta, \pi)$ are defined in \eqref{eq:iv-vf-hat-pi} and \eqref{eq:iv-mis-hat-pi}, respectively. As we will see in Theorem \ref{thm:iv-dr}, such a DR-based pessimistic method provably converges to an optimal policy with realizability assumption only for the optimal policy.

\section{Theoretical Results}
\label{sec:iv-theory}

In this section, we investigate theoretical properties of the aforementioned three methods. We aim to derive the finite-sample upper bounds for the sub-optimality of our estimated policies, i.e., $\subopt{(\widehat \pi)}$, where $\hat \pi$ is either $\hat \pi_\vf, \hat \pi_\mis$, or $\hat \pi_\dr$. To begin with, we first introduce the following definition of covering number, and then impose metric entropy conditions on related classes of functions used in the proposed algorithms. 

\begin{definition}[Covering Number]
Let $(\cC, \|\cdot \|_\infty)$ be a normed space, and $\cH \subseteq \cC$. The set $\{x_1, x_2, \ldots, x_{n}\}$ is a $\vareps$-covering over $\cH$ if $\cH\subseteq \cup_{i = 1}^n B(x_i, \vareps)$, where $B(x_i, \vareps)$ is the sup-norm-ball centered at $x_i$ with radius $\vareps$. Then the covering number of $\cH$ is defined as $N(\vareps, \cH, \|\cdot\|_\infty) = \min\{n\colon \exists \text{ $\vareps$-covering over $\cH$ of size $n$}\}$. 
\end{definition}

\begin{assumption}\label{ass:spaces}
The following statements hold. 
\begin{enumerate}[label=(\alph*)]
\item \label{ass:bounded-covering} For any set $\cH \in \{\cF_0, \cF_1, \cV, \cW, \Pi\}$, there exists a constant $\pdim_{\cH}$ such that 
\$
N(\vareps, \cH, \|\cdot\|_\infty) \leq c\cdot (1/\vareps)^{\pdim_{\cH}}, 
\$
where $c > 0$ is a constant. Further, we denote by $\pdim_{\cH_1, \cH_2, \ldots, \cH_k} = \sum_{j\in [k]}\pdim_{\cH_j}$ for any class of functions $\{\cH_1, \cH_2, \ldots, \cH_k\}$. 
\item \label{ass:upper-bound-delta} There exist positive constants $C_{\Delta^*}$ and $C_{\Theta^*}$ such that $|\Delta^*(s, a)| \geq C_{\Delta^*}^{-1}$ and $\Theta^*(s, z) \geq C_{\Theta^*}^{-1}$ for any $(s,z,a)\in \cS\times \cZ\times \cA$, where $\Theta^*(s,z)$ and $\Delta^*(s,a)$ are defined in Assumption \ref{ass:iv-common}.
\item \label{ass:bounded-mle} We have $|\Delta(s,a)| \geq C_{\Delta^*}^{-1}$ and $\Theta(s,z) \geq C_{\Theta^*}^{-1}$ for any $(\Delta, \Theta, s, a, z)\in \cF_0\times \cF_1 \times \cS \times \cA \times \cZ$. 
\item \label{ass:lip}
We have $\sup_{s\in \cS}|V^{\pi_1}(s) - V^{\pi_2}(s)| \leq L_\Pi \cdot \sup_{(s,a)\in \cS\times \cA} |\pi_1(a\given s) - \pi_2(a\given s)|$ for any $\pi_1,\pi_2 \in \Pi$, where $L_\Pi$ is a positive constant. 
\item \label{ass:function-bound}
We have $\|v\|_\infty \leq 1/(1-\gamma)$ and $\|w\|_\infty \leq C_*$ for any $(v,w)\in \cV\times \cW$, where $C_* > 0$ is a constant. 
\end{enumerate}
\end{assumption}

Assumption \ref{ass:spaces}\ref{ass:bounded-covering} states that the function spaces have finite-log covering numbers, which has been widely used in the existing literature \citep[e.g.,][]{antos2008learning}.
Assumption \ref{ass:spaces}\ref{ass:upper-bound-delta} states that the conditional probability $\Theta^*$ and the compliance $\Delta^*$ are uniformly lower bounded. Basically we require to have a data coverage on all the IVs and a non-negligible gap in terms of compliance. But we do not require a coverage assumption on all the actions. In practice, it seems that the coverage of all IVs is more plausible than that of all actions due to the noncompliance.
% Such an assumption ensures the identifiability of the expected total reward. 
With Assumption \ref{ass:spaces}\ref{ass:upper-bound-delta}, we only need to consider a lower bounded function class to recover $\Theta^*$ and $\Delta^*$, which is imposed in Assumption \ref{ass:spaces}\ref{ass:bounded-mle}. In the meanwhile, the Lipschitz condition imposed in Assumption \ref{ass:spaces}\ref{ass:lip} aims to control the complexity of the value function class induced by $\Pi$, i.e., the class $\{V^\pi(\cdot)\colon \pi\in \Pi\}$. Such an assumption is commonly imposed in related literature \citep{zhou2017residual,liao2020batch}. 
Finally, Assumption \ref{ass:spaces}\ref{ass:function-bound} states that the sets $\cV$ and $\cW$ are uniformly bounded for deriving the exponential inequalities.

% \textcolor{red}{(1) Define the covering number in a formal way; (2) Write an assumption on all the covering numbers such as
% $$
% \cN(\epsilon, \cF_0, \|\bullet \|_\infty) \lesssim (1/\epsilon)^{\pdim_{\cF_0}}
% $$
% (3) Group all related assumptions like before;(4) Add comments on all assumptions you made}

\begin{assumption}\label{ass:ergodic}
The sequence $\{S_t, Z_t, U_t, A_t\}_{t\geq 0}$ admits a unique stationary distribution $G_\stat$ over $\cS\times \cZ \times \cU \times \cA$ and is geometrically ergodic, i.e., there exists a function $\varphi\colon \cS\times \cZ \times \cU \times \cA \to \RR^+$ and a constant $\kappa > 0$ such that 
\$
\left\| G_\stat(\cdot) - G_t(\cdot \given s_0, z_0, u_0, a_0 ) \right\|_\tv  \leq \varphi(s_0, z_0, u_0, a_0) \cdot \exp\left(-2\kappa t\right), 
\$
where $G_t(\cdot \given s_0, z_0, u_0, a_0)$ is the marginal distribution of $(S_t, Z_t, U_t, A_t)$ given $(S_0, Z_0, U_0, A_0) = (s_0, z_0, u_0, a_0)$ under the behavior policy $b$. Further, we have $\int \varphi(s,z,u,a) \ud \nu(s, z, u, a) \leq c$ and $\int \varphi(s,z,u,a) \ud G_\stat(s, z, u, a) \leq c$ for some positive absolute constant $c$. 
\end{assumption}

Assumption \ref{ass:ergodic} states that the the Markov chain $\{S_t, Z_t, U_t, A_t\}_{t\geq 0}$ mixes geometrically. Such an assumption is widely adopted in the related literature \citep{van1998learning,wang2021projected} to deal with dependent data.

% \textcolor{red}{Need to have complexity assumption on the functional classes}

To establish the upper bounds for the sub-optimality of the resulting policies, we need to first show that in our proposed algorithms, there exists at least one feasible solution that satisfies the constraints with properly chosen constants. In the following, we focus on $\ci^0_{\alpha_0}$ and $\ci^1_{\alpha_1}$ for $\Delta^*$ and  $\Theta^*$ respectively. Since $\ci^0_{\alpha_0}$ and $\ci^1_{\alpha_1}$  can be constructed by many methods, to keep our theoretical results general, we assume that there exists a proper choice of $(\alpha_0,\alpha_1)$ that ensures $\Delta^* \in \ci^0_{\alpha_0}$ and $\Theta^* \in \ci^1_{\alpha_1}$ and then give a valid example that justifies this assumption. 

\begin{assumption}\label{ass:iv-sl-res}
There exists $(\alpha_0, \alpha_1)$ such that with probability at least $1 - \delta$, we have 
\$
\Delta^* \in \ci^0_{\alpha_0}, \qquad \Theta^* \in \ci^1_{\alpha_1}.
\$
Further, with probability at least $1 - \delta$, for any $(\Delta, \Theta) \in \ci^0_{\alpha_0} \times \ci^1_{\alpha_1}$, we have
\$
& \EE\left[ \frac{1}{T} \sum_{t = 0}^{T-1} \left \| \Delta^*(S_t,\cdot) - \Delta(S_t,\cdot) \right \|_1^2  \right ] \leq \xi_0^2 \frac{C_{\Delta^*}}{NT \kappa} \cdot \pdim_{\cF_0} \log\frac{2}{\delta},  \\
& \EE\left[ \frac{1}{T} \sum_{t = 0}^{T-1} \left \| \Theta^*(S_t,\cdot) - \Theta(S_t,\cdot) \right \|_1^2  \right ] \leq \xi_1^2 \frac{C_{\Theta^*}}{NT \kappa} \cdot \pdim_{\cF_1} \log\frac{2}{\delta}. 
\$
\end{assumption}

We now illustrate that Assumption \ref{ass:iv-sl-res} can be realized via maximum likelihood estimation (MLE) by replacing $\xi_0$ and $\xi_1$ with proper quantities. 
Note that the estimation of $\Delta^*$ can be decomposed into the estimation of $\PP(A = a\given S=s,Z=z)$ for all $z\in \cZ$, which can also be obtained via MLE. This implies that estimating $\Delta^*$ is similar to estimating $\Theta^*$. Therefore, we only show how to estimate $\Theta^*$ so that Assumption \ref{ass:iv-sl-res} holds for the simplicity of presentation. 
By maximum likelihood, we construct the loss function $\hat L_1$ and the estimator $\hat \Theta$ as follows, 
\$
\hat L_1(\Theta) = - \hat \EE\left[ \frac{1}{T} \sum_{t = 0}^{T-1} \log \Theta(S_t, Z_t) \right] = - \frac{1}{NT} \sum_{i\in[N]} \sum_{t=0}^{T-1} \log \Theta(S_t^i, Z_t^i), ~~~ \hat \Theta \in \argmin_{\Theta\in \cF_1} \hat L_1(\Theta), 
\$
where for the ease of notations, we denote by $\hat \EE[\cdot]$ the empirical measure generated by the offline data $\cD$ hereafter. 
In addition, we assume that $\cF_1$ is a parametric class such that $\cF_1 = \{\Theta_\theta\colon \theta\in \RR^d, \|\theta\|_2 \leq \theta_{\max}\}$. We introduce the following results. 

\begin{theorem}\label{thm:iv-param-theta}
Suppose $\cF_1 = \{\Theta_\theta\colon \theta\in \RR^d \text{ and } \|\theta\|_2 \leq \theta_{\max}\}$, and 
\$
\alpha_1 = c \cdot \frac{C_{\Theta^*}}{NT \kappa} \cdot d \log \frac{\theta_{\max}}{\delta} \log(NT),
\$
where $c / (N^2 T^2) \cdot \log (NT) \leq \delta \leq 1$. 
Then under Assumptions \ref{ass:iv-common}, \ref{ass:spaces}\ref{ass:upper-bound-delta}, \ref{ass:spaces}\ref{ass:bounded-mle}, and \ref{ass:ergodic}, it holds that with probability at least $1-\delta$ that $\Theta^* \in \ci^1_{\alpha_1}$. Further, with probability at least $1 - \delta$, it holds for any $\Theta \in \ci^1_{\alpha_1}$ that 
\$
\sqrt{\EE\left[ \|\Theta(S,\cdot) - \Theta^*(S,\cdot)\|_1^2 \right]} \leq c\sqrt{ \frac{C_{\Theta^*}}{NT \kappa} \cdot d\log \frac{\theta_{\max}}{\delta}}. 
\$
\end{theorem}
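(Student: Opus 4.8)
The plan is to read $\hat L_1$ as the empirical negative log-likelihood of the conditional law of $Z_t$ given $S_t$, whose population minimizer over conditional distributions is exactly $\Theta^*$. Writing $\ell_\Theta(s,z)=\log\{\Theta^*(s,z)/\Theta(s,z)\}$, the population excess risk is $L_1(\Theta)-L_1(\Theta^*)=\EE[T^{-1}\sum_{t=0}^{T-1}\mathrm{KL}(\Theta^*(S_t,\cdot)\,\|\,\Theta(S_t,\cdot))]=:\overline D(\Theta)\ge 0$. Two structural facts drive the argument. First, by Assumptions \ref{ass:spaces}\ref{ass:upper-bound-delta} and \ref{ass:spaces}\ref{ass:bounded-mle} the ratio $\Theta^*/\Theta$ lies in $[C_{\Theta^*}^{-1},C_{\Theta^*}]$, so $|\ell_\Theta|\le \log C_{\Theta^*}$ is uniformly bounded; moreover the elementary inequality $r-1+e^{-r}\ge c\,r^2$ valid for $|r|\le \log C_{\Theta^*}$ yields the self-bounding (Bernstein) relation $\EE_{Z\mid S}[\ell_\Theta(S,Z)^2]\le c\,C_{\Theta^*}\,\mathrm{KL}(\Theta^*(S,\cdot)\,\|\,\Theta(S,\cdot))$, i.e.\ the conditional variance of the log-likelihood ratio is controlled by its conditional mean. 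Second, Pinsker's inequality gives $\|\Theta(S,\cdot)-\Theta^*(S,\cdot)\|_1^2\le 2\,\mathrm{KL}(\Theta^*(S,\cdot)\,\|\,\Theta(S,\cdot))$, so it suffices to bound $\overline D(\Theta)$ for every $\Theta\in\ci^1_{\alpha_1}$ and then take expectations over the average visitation $d^b$.

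The engine of the proof is a variance-sensitive maximal inequality for the centered empirical process $(\hat\EE-\EE)[T^{-1}\sum_t \ell_\Theta(S_t,Z_t)]$ over $\Theta\in\cF_1$. Since the data are not i.i.d.\ within a trajectory, I would invoke the concentration/maximal inequalities for geometrically ergodic sequences developed in the Supplementary Material (inspired by \cite{wang2021projected}): Assumption \ref{ass:ergodic} supplies the geometric mixing that replaces independence, so the effective sample size is of order $NT\kappa$, while Assumption \ref{ass:spaces}\ref{ass:bounded-covering} supplies the metric-entropy bound $\log N(\vareps,\cF_1,\|\cdot\|_\infty)\le c\,d\log(\theta_{\max}/\vareps)$. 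Combining this entropy bound with the uniform boundedness of $\ell_\Theta$ and the Bernstein relation, a localized (peeling) argument should give, with probability at least $1-\delta$ and simultaneously for all $\Theta\in\cF_1$, a bound of the form $|(\hat\EE-\EE)[T^{-1}\sum_t\ell_\Theta]|\le \frac12\overline D(\Theta)+c\,\frac{C_{\Theta^*}}{NT\kappa}d\log(\theta_{\max}/\delta)$, where the extra $\log(NT)$ factor is absorbed into the high-probability tail (the hypothesis $c/(N^2T^2)\cdot\log(NT)\le\delta$ keeps these tail terms of the stated order).

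With this in hand the two conclusions follow. For $\Theta^*\in\ci^1_{\alpha_1}$: since $\hat\Theta$ minimizes $\hat L_1$ and $\Theta^*$ minimizes $L_1$, the basic inequality gives $0\le \hat L_1(\Theta^*)-\hat L_1(\hat\Theta)\le (\hat\EE-\EE)[T^{-1}\sum_t\ell_{\hat\Theta}]$; applying the maximal inequality at $\hat\Theta$ and using $\overline D(\hat\Theta)\ge 0$ bounds the right-hand side by $\alpha_1=c\,\frac{C_{\Theta^*}}{NT\kappa}d\log(\theta_{\max}/\delta)\log(NT)$, which proves membership once $c$ is taken large enough. For the localization: any $\Theta\in\ci^1_{\alpha_1}$ satisfies $\hat L_1(\Theta)-\hat L_1(\Theta^*)\le \hat L_1(\Theta)-\hat L_1(\hat\Theta)\le\alpha_1$, and decomposing $\overline D(\Theta)=[\hat L_1(\Theta)-\hat L_1(\Theta^*)]-(\hat\EE-\EE)[T^{-1}\sum_t\ell_\Theta]$ and inserting the maximal inequality gives $\overline D(\Theta)\le\alpha_1+\frac12\overline D(\Theta)+c\,\frac{C_{\Theta^*}}{NT\kappa}d\log(\theta_{\max}/\delta)$. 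Absorbing the $\frac12\overline D(\Theta)$ term yields $\overline D(\Theta)\le c\,\frac{C_{\Theta^*}}{NT\kappa}d\log(\theta_{\max}/\delta)$; feeding this through Pinsker and taking $\EE[\cdot]$ over $d^b$ produces $\sqrt{\EE[\|\Theta(S,\cdot)-\Theta^*(S,\cdot)\|_1^2]}\le c\sqrt{\frac{C_{\Theta^*}}{NT\kappa}d\log(\theta_{\max}/\delta)}$.

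I expect the main obstacle to be the maximal inequality itself: establishing a Bernstein/Talagrand-type, variance-sensitive concentration bound holding uniformly over $\cF_1$ for sums along a geometrically ergodic chain rather than i.i.d.\ data, correctly combining the $N$ independent trajectories with the $T$ dependent within-trajectory observations so that the mixing constant $\kappa$ enters as the effective-sample-size factor $NT\kappa$. Getting the variance term to localize, so that the first-moment $\overline D(\Theta)$ appears on the right and can be absorbed, is precisely what upgrades a crude $1/\sqrt{NT\kappa}$ rate to the sharp $d\log(\theta_{\max}/\delta)/(NT\kappa)$ rate for the squared distance; the self-bounding relation between the log-loss and the KL divergence under the boundedness assumptions is the ingredient that makes this localization possible.
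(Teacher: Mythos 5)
Your proposal is correct in substance and rests on the same three structural ingredients as the paper's proof --- uniform boundedness of the log-ratio $\ell_\Theta$ under Assumptions \ref{ass:spaces}\ref{ass:upper-bound-delta} and \ref{ass:spaces}\ref{ass:bounded-mle}, a Bernstein-type relation bounding the variance of $\ell_\Theta$ by $C_{\Theta^*}$ times a first-moment quantity, and a final conversion to the $\|\cdot\|_1$ norm --- but it packages them differently. The paper is Hellinger-centric: it first invokes Corollary \ref{cor:vdg-param} (an adaptation of van de Geer's MLE convergence theorem, Theorem \ref{thm:iv-vdg-bound}, proved by peeling over Hellinger shells using the fixed-radius uniform inequality of Theorem \ref{thm:511-adapt}) to obtain $H^2(\Theta^*,\hat\Theta)\le c\, d\log(\theta_{\max}/\delta)/(NT\kappa)$; it then applies a separate two-point Bernstein inequality (Lemma \ref{lemma:iv-mle-1}, whose variance term is controlled by $C_{\Theta^*}H^2$ rather than your $C_{\Theta^*}\mathrm{KL}$ --- equivalent under the ratio bound), makes it uniform over $\ci^1_{\alpha_1}$ by a covering argument, and closes the localization by solving the quadratic inequality $H^2\le a+b\sqrt{H^2}$; finally it converts Hellinger to $\|\cdot\|_1$ via Lemma \ref{lemma:vi-vdg-lemma} instead of Pinsker. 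Your KL-centric organization, built around a single ``offset'' maximal inequality $|(\hat\EE-\EE)[T^{-1}\sum_t\ell_\Theta]|\le \tfrac12\overline{D}(\Theta)+\mathrm{rate}$ holding uniformly over $\cF_1$, is cleaner and subsumes both of the paper's probabilistic steps at once; your absorption of $\tfrac12\overline{D}(\Theta)$ is mathematically the same localization as the paper's quadratic-inequality step.

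The one point to flag: your engine is not available off the shelf in the form you assume. Theorem \ref{thm:bernstein-mixing} in the Supplement is a fixed-function (two-point) inequality, and the uniform inequality Theorem \ref{thm:511-adapt} is stated at a fixed variance radius $R$; producing your offset inequality therefore still requires the shell-by-shell peeling that the paper carries out inside Theorem \ref{thm:iv-vdg-bound}, where working with the square-root density (Hellinger) class is precisely what makes the bracketing-entropy bookkeeping go through. If you peel directly on the log-loss class you must use the bounded log-ratio to verify the Bernstein condition on each shell, and the blocking argument for mixing data will reintroduce the $\log(NT)$ factor that your final display quietly drops (it is present in $\alpha_1$ and in the paper's own derivation of the second statement). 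So your proof relocates, rather than avoids, the paper's main technical work. A cosmetic remark: your basic inequality should read $\hat L_1(\Theta^*)-\hat L_1(\hat\Theta)\le(\EE-\hat\EE)[T^{-1}\sum_t\ell_{\hat\Theta}]$ (sign flipped), which is immaterial since you only ever use the absolute value of the centered process.
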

\begin{proof}
See \S\ref{prf:thm:iv-param-theta} 
of the Supplementary Material for a detailed proof. 
\end{proof}

Supported by Theorem \ref{thm:iv-param-theta}, we assume Assumption \ref{ass:iv-sl-res} holds throughout this section.

\subsection{Theoretical Results for VF-based Pessimistic Method}\label{sec:vf-theory}

We first impose the following assumption, which assumes that $V^\pi$ is realizable in $\cV$ for any policy $\pi$, and $w^{\pi^*}$ is realizable in $\cW$ only for the optimal policy $\pi^*$. 

\begin{assumption}
\label{ass:iv-vf-realizable}
We have $V^\pi \in \cV$ for any $\pi\in \Pi$ and $w^{\pi^*} \in \cW$. Further, we have $-w\in \cW$ for any $w \in \cW$. 
\end{assumption}

% We also assume that $\cW$ is symmetric. 
% \begin{assumption}\label{ass:symmetry-w}
% We have $-w\in \cW$ for any $w \in \cW$. 
% \end{assumption}

In the following lemma, we show that with a proper choice of $\alpha_\vf$, we have $V^\pi \in \ci^\vf_{\alpha_\vf}(\Delta^*, \Theta^*, \pi)$ with a high probability. 

\begin{lemma}
\label{lemma:iv-v-pi-in-conf}
Suppose 
\$
\alpha_\vf = c\cdot \frac{C_{\Delta^*} C_{\Theta^*} C_*}{1-\gamma} \sqrt{\frac{\pdim_{\cW,\cV,\Pi}}{NT\kappa} \cdot \log\frac{1}{\delta} \log(NT) }
\$
and $c / (NT)^2 \leq \delta \leq 1$. 
Then under Assumptions \ref{ass:spaces} and \ref{ass:iv-vf-realizable}, with probability at least $1 - \delta$, it holds for any $\pi\in \Pi$ that $V^\pi \in \ci^\vf_{\alpha_\vf}(\Delta^*, \Theta^*, \pi)$. 
\end{lemma}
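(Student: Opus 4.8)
The plan is to reduce the claimed membership to a single uniform deviation bound over the relevant function classes. By the definition of $\ci^\vf_{\alpha_\vf}(\Delta^*, \Theta^*, \pi)$ in \eqref{eq:iv-vf-hat-pi}, the statement $V^\pi \in \ci^\vf_{\alpha_\vf}(\Delta^*, \Theta^*, \pi)$ is exactly
\[
\max_{g\in \cW} \hat \Phi_\vf^\pi(V^\pi, g; \Delta^*, \Theta^*) - \max_{g\in \cW} \hat \Phi_\vf^\pi(\hat v_{\Delta^*, \Theta^*}^\pi, g; \Delta^*, \Theta^*) \leq \alpha_\vf.
\]
Since $V^\pi \in \cV$ by the realizability Assumption \ref{ass:iv-vf-realizable} and $\hat v_{\Delta^*, \Theta^*}^\pi$ minimizes $v\mapsto \max_{g\in\cW}\hat\Phi^\pi_\vf(v,g;\Delta^*,\Theta^*)$ over $\cV$, the left-hand side is automatically nonnegative, so it suffices to produce an upper bound of $\alpha_\vf$. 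The first step I would take is to introduce the uniform deviation
\[
\vareps_n := \sup_{v\in\cV,\, g\in\cW,\, \pi\in\Pi} \bigl| \hat \Phi_\vf^\pi(v, g; \Delta^*, \Theta^*) - \Phi_\vf^\pi(v, g; \Delta^*, \Theta^*) \bigr|
\]
and to reduce the whole lemma to controlling $\vareps_n$ simultaneously over $\pi\in\Pi$, which is what produces a single high-probability event valid for all $\pi$.

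Next I would exploit two structural facts about the population functional $\Phi^\pi_\vf(\cdot,\cdot;\Delta^*,\Theta^*)$. By the VF-based estimating equation in Corollary \ref{cor:iv-vf-phi-0} we have $\Phi^\pi_\vf(V^\pi, g; \Delta^*, \Theta^*) = 0$ for every $g$, hence $\max_{g\in\cW}\Phi^\pi_\vf(V^\pi, g; \Delta^*, \Theta^*) = 0$. Moreover $\Phi^\pi_\vf$ is linear in its second argument $g$, and $\cW$ is symmetric ($-w\in\cW$ whenever $w\in\cW$, by Assumption \ref{ass:iv-vf-realizable}), which forces $\max_{g\in\cW}\Phi^\pi_\vf(\hat v_{\Delta^*,\Theta^*}^\pi, g; \Delta^*, \Theta^*)\geq 0$. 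Combining these identities with the definition of $\vareps_n$ yields $\max_{g\in\cW}\hat\Phi^\pi_\vf(V^\pi,g;\Delta^*,\Theta^*)\leq \vareps_n$ and $\max_{g\in\cW}\hat\Phi^\pi_\vf(\hat v^\pi_{\Delta^*,\Theta^*},g;\Delta^*,\Theta^*)\geq -\vareps_n$, so the displayed gap is at most $2\vareps_n$. Choosing $\alpha_\vf = 2\vareps_n$ then closes the argument, and this matches the stated value of $\alpha_\vf$ up to the absolute constant $c$.

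It remains to establish the uniform concentration
\[
\vareps_n \leq c\cdot \frac{C_{\Delta^*}C_{\Theta^*}C_*}{1-\gamma}\sqrt{\frac{\pdim_{\cW,\cV,\Pi}}{NT\kappa}\,\log\tfrac1\delta\,\log(NT)}
\]
with probability at least $1-\delta$, which is the crux of the proof. The per-step summand inside $\hat\Phi^\pi_\vf-\Phi^\pi_\vf$ is
\[
g(S_t)\Bigl(\tfrac{Z_t^\top A_t\,\pi(A_t\given S_t)}{\Delta^*(S_t,A_t)\Theta^*(S_t,Z_t)}\bigl(R_t+\gamma v(S_{t+1})\bigr) - v(S_t)\Bigr),
\]
and using $|Z_t^\top A_t|\leq 1$ from the simplex encoding \eqref{eq:iv-simplex-encoding}, $\pi(A_t\given S_t)\leq 1$, $|R_t|\leq 1$, together with the lower bounds on $\Delta^*,\Theta^*$ in Assumption \ref{ass:spaces}\ref{ass:upper-bound-delta} and the sup-norm bounds on $v,g$ in Assumption \ref{ass:spaces}\ref{ass:function-bound}, this summand is bounded in magnitude by a multiple of $C_{\Delta^*}C_{\Theta^*}C_*/(1-\gamma)$, which accounts for the prefactor. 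The main obstacle is that the observations $\{S_t, Z_t, A_t, R_t\}_{t\ge 0}$ within a trajectory are dependent rather than i.i.d., so a naive Hoeffding/union bound is unavailable. I would therefore invoke a maximal inequality for geometrically ergodic sequences in the spirit of \cite{wang2021projected}, using Assumption \ref{ass:ergodic} to convert the $NT$ dependent per-step terms into an effective sample size of order $NT\kappa$ (the $\log(NT)$ factor arising from the mixing-time truncation intrinsic to the blocking argument), and then make the bound uniform over the product class $\cV\times\cW\times\Pi$ via a covering argument: Assumption \ref{ass:spaces}\ref{ass:bounded-covering} supplies metric-entropy exponents summing to $\pdim_{\cW,\cV,\Pi}=\pdim_\cV+\pdim_\cW+\pdim_\Pi$, while the Lipschitz dependence of the summand on $(v,g,\pi)$, with the $\pi$-direction controlled through Assumption \ref{ass:spaces}\ref{ass:lip}, bounds the discretization error. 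Balancing the net cardinality against the exponential tail then yields the stated rate. This coupling of Markovian dependence (through geometric ergodicity) with the complexity of three function classes is precisely the step where the novel concentration tool of the paper is required and is the hardest part of the argument.
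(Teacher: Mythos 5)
Your proposal is correct and takes essentially the same approach as the paper's proof: you reduce membership to bounding the empirical gap, use $\max_{g\in\cW}\Phi^\pi_\vf(V^\pi,g;\Delta^*,\Theta^*)=0$ (Corollary \ref{cor:iv-vf-phi-0}) together with nonnegativity of $\max_{g\in\cW}\Phi^\pi_\vf(v,g;\Delta^*,\Theta^*)$ (via symmetry of $\cW$) to discard the population-level difference, and then bound what remains by twice the uniform deviation $\sup_{v,g,\pi}|\hat\Phi^\pi_\vf-\Phi^\pi_\vf|$, controlled by the Hoeffding-type inequality for geometrically ergodic sequences plus a covering argument over $\cV\times\cW\times\Pi$. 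The paper organizes the same steps as a telescoping decomposition (its Theorem \ref{thm:hoeffding-mixing} playing the role of your maximal inequality), so the two arguments coincide up to presentation.
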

\begin{proof}
See \S\ref{prf:lemma:iv-v-pi-in-conf} 
of the Supplementary Material for a detailed proof. 
\end{proof}

In the following lemma, we show that for any $v\in \cup_{(\Delta, \Theta) \in \ci^0_{\alpha_0} \times \ci^1_{\alpha_1}} \ci^\vf_{\alpha_\vf}(\Delta, \Theta, \pi)$, we can upper bound the risk of $\max_{g\in \cW} \Phi^\pi_\vf(v,g;\Delta^*, \Theta^*)$, which in turn bounds the suboptimality of the estimated policy.

\begin{lemma}
\label{lemma:iv-v-in-conf-good}
Let $(\alpha_0, \alpha_1, \alpha_\vf)$ be those defined in Assumption \ref{ass:iv-sl-res} and Lemma \ref{lemma:iv-v-pi-in-conf} and $c / (NT)^2 \leq \delta \leq 1$.
Then under Assumptions \ref{ass:iv-common}--\ref{ass:iv-vf-realizable}, with probability at least $1 - \delta$, it holds for any policy $\pi\in \Pi$ and $v\in \cup_{(\Delta, \Theta) \in \ci^0_{\alpha_0} \times \ci^1_{\alpha_1}} \ci^\vf_{\alpha_\vf}(\Delta, \Theta, \pi)$ that
\$
\max_{g\in \cW} \Phi^\pi_\vf(v,g;\Delta^*, \Theta^*) \leq c\cdot \frac{C_{\Delta^*}^2 C_{\Theta^*}^2 C_*}{1-\gamma} (\xi_0 + \xi_1)  L_\Pi \sqrt{\frac{1}{NT\kappa}\cdot \pdim_{\cF_0,\cF_1,\cW,\cV,\Pi} \cdot \log\frac{1}{\delta} \log(NT)}. 
\$
\end{lemma}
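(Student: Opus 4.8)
The plan is to control the population projected residual $\max_{g\in\cW}\Phi^\pi_\vf(v,g;\Delta^*,\Theta^*)$ for an arbitrary $v$ in the data-dependent confidence set by telescoping through the very definition of that set, and then to split the resulting slack into a statistical concentration part and a nuisance plug-in part. Two structural facts drive the argument. First, since $-w\in\cW$ for every $w\in\cW$ (Assumption \ref{ass:iv-vf-realizable}), the map $v\mapsto\max_{g\in\cW}\Phi^\pi_\vf(v,g;\cdot)$ is a nonnegative, norm-like residual, so the target is automatically $\geq 0$ and behaves like a discrepancy. Second, by Corollary \ref{cor:iv-vf-phi-0} the true value function zeroes it out, $\Phi^\pi_\vf(V^\pi,g;\Delta^*,\Theta^*)=0$ for every $g$, and by Assumption \ref{ass:iv-vf-realizable} we have $V^\pi\in\cV$; these let me benchmark any confidence-set element against $V^\pi$.

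Concretely, I fix $\pi$, a pair $(\Delta,\Theta)\in\ci^0_{\alpha_0}\times\ci^1_{\alpha_1}$, and $v\in\ci^\vf_{\alpha_\vf}(\Delta,\Theta,\pi)$, let $g^\star$ attain the outer maximum, and write
\[
\Phi^\pi_\vf(v,g^\star;\Delta^*,\Theta^*) = \mathrm{App} + \mathrm{Conc} + \hat\Phi^\pi_\vf(v,g^\star;\Delta,\Theta),
\]
where $\mathrm{App}=\Phi^\pi_\vf(v,g^\star;\Delta^*,\Theta^*)-\Phi^\pi_\vf(v,g^\star;\Delta,\Theta)$ is a nuisance-swap error and $\mathrm{Conc}=\Phi^\pi_\vf(v,g^\star;\Delta,\Theta)-\hat\Phi^\pi_\vf(v,g^\star;\Delta,\Theta)$ is an empirical-to-population deviation. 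I then bound the last term by $\max_{g\in\cW}\hat\Phi^\pi_\vf(v,g;\Delta,\Theta)$, invoke the constraint defining $\ci^\vf_{\alpha_\vf}(\Delta,\Theta,\pi)$ to pass to $\max_g\hat\Phi^\pi_\vf(\hat v^\pi_{\Delta,\Theta},g;\Delta,\Theta)+\alpha_\vf$, use that $\hat v^\pi_{\Delta,\Theta}$ is the inner $\min$-$\max$ minimizer over $\cV$ together with $V^\pi\in\cV$ to replace $\hat v^\pi_{\Delta,\Theta}$ by $V^\pi$, and finally undo the empirical-to-population and the $(\Delta,\Theta)$-to-$(\Delta^*,\Theta^*)$ swaps — the latter landing on $\max_g\Phi^\pi_\vf(V^\pi,g;\Delta^*,\Theta^*)=0$. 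Collecting the chain gives, uniformly over the classes,
\[
\max_{g\in\cW}\Phi^\pi_\vf(v,g;\Delta^*,\Theta^*) \leq \alpha_\vf + 2\,\mathrm{Conc} + 2\,\mathrm{App}.
\]

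For $\mathrm{App}$ I bound $|\Delta^{-1}\Theta^{-1}-\Delta^{*-1}\Theta^{*-1}|\leq C_{\Delta^*}^2 C_{\Theta^*}^2(|\Delta-\Delta^*|+|\Theta-\Theta^*|)$ using the uniform lower bounds of Assumption \ref{ass:spaces}\ref{ass:upper-bound-delta},\ref{ass:bounded-mle}, together with $|g|\leq C_*$, $|Z_t^\top A_t\,\pi|\leq 1$, and $|R_t+\gamma v|\leq 1/(1-\gamma)$ from Assumption \ref{ass:spaces}\ref{ass:function-bound}; this yields the prefactor $C_{\Delta^*}^2C_{\Theta^*}^2 C_*/(1-\gamma)$ times $\EE[\frac1T\sum_t(\|\Delta-\Delta^*\|_1+\|\Theta-\Theta^*\|_1)(S_t)]$, and passing from the squared-$L^1$ bounds of Assumption \ref{ass:iv-sl-res} to $L^1$ by Jensen/Cauchy--Schwarz converts this into the $(\xi_0+\xi_1)\sqrt{\pdim_{\cF_0,\cF_1}/(NT\kappa)\cdot\log(1/\delta)}$ rate. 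For $\mathrm{Conc}$ the summand is uniformly bounded by $c\,C_*C_{\Delta^*}C_{\Theta^*}/(1-\gamma)$, and I invoke the maximal inequality for geometrically ergodic sequences (Assumption \ref{ass:ergodic}, in the spirit of \cite{wang2021projected}) over the product class $\cF_0\times\cF_1\times\cW\times\cV\times\Pi$; the Lipschitz continuity of $V^\pi$ in $\pi$ (Assumption \ref{ass:spaces}\ref{ass:lip}) is used to discretize $\Pi$ and transfer the deviation to all policies, which is where $L_\Pi$ and $\pdim_\Pi$ surface, and the mixing produces the $\sqrt{1/(NT\kappa)\cdot\pdim_{\cW,\cV,\Pi}\log(1/\delta)\log(NT)}$ rate matching $\alpha_\vf$ from Lemma \ref{lemma:iv-v-pi-in-conf}. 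Combining the pieces and absorbing constants into the displayed prefactor gives the claim.

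The main obstacle is this uniform concentration step: because $v,g,\Delta,\Theta$ all range over data-dependent confidence sets, the deviation $\hat\Phi^\pi_\vf-\Phi^\pi_\vf$ must be controlled simultaneously over the entire product function class, while the data are neither independent nor stationary — only geometrically ergodic. Establishing a maximal inequality of the correct rate under these conditions, rather than the usual i.i.d.\ chaining, is the technical crux and is precisely where the extra $\log(NT)$ factor, the covering exponents $\pdim$, and the transfer factor $L_\Pi$ originate; a secondary point of care is that the plug-in error $\mathrm{App}$ must be bounded uniformly over all $(\Delta,\Theta)$ in the confidence sets, which is exactly what the second display of Assumption \ref{ass:iv-sl-res} supplies.
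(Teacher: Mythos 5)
Your proposal is correct and follows essentially the same route as the paper's proof: choose the pair $(\Delta,\Theta)\in\ci^0_{\alpha_0}\times\ci^1_{\alpha_1}$ witnessing membership of $v$ in the union, telescope through the confidence-set constraint and the min-max optimality of $\hat v^\pi_{\Delta,\Theta}$, benchmark against $V^\pi$ (realizability from Assumption \ref{ass:iv-vf-realizable} plus the zero population residual from Corollary \ref{cor:iv-vf-phi-0}), and bound the concentration and nuisance-swap pieces by the mixing-based maximal inequality (Theorem \ref{thm:hoeffding-mixing}) and by Cauchy--Schwarz with Assumption \ref{ass:iv-sl-res}, respectively. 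The only difference is organizational: the paper routes the benchmarking step through a separate auxiliary lemma (Lemma \ref{lemma:iv-any-v-hat-small-loss}, its Term (II)) and performs the nuisance swap at the empirical level before re-decomposing, whereas you inline that step and swap at the population level first, arriving at the same bound $\alpha_\vf + 2\,\mathrm{Conc} + 2\,\mathrm{App}$ up to constants.
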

\begin{proof}
See \S\ref{prf:lemma:iv-v-in-conf-good} 
of the Supplementary Material for a detailed proof. 
\end{proof}

Equipped with the above results, we introduce the following theorem, which characterizes the suboptimality of the learned policy $\hat \pi_\vf$ constructed in \eqref{eq:iv-vf-hat-pi}. 

\begin{theorem}
\label{thm:iv-vf}
Suppose $c / (NT)^2 \leq \delta \leq 1$. 
Under Assumptions \ref{ass:iv-common}--\ref{ass:iv-vf-realizable}, it holds with probability at least $1 - \delta$ that
\$
\subopt(\hat \pi_\vf) \leq c\cdot \frac{ C_{\Delta^*}^2 C_{\Theta^*}^2 C_*}{1-\gamma} (\xi_0 + \xi_1) L_\Pi \sqrt{\frac{1}{NT \kappa}\cdot \pdim_{\cF_0,\cF_1,\cW,\cV,\Pi} \cdot \log\frac{1}{\delta} \log(NT)}. 
\$
\end{theorem}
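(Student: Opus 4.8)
The plan is to run the standard pessimism decomposition, arranging the argument so that all statistical error is funneled into a single term already controlled by Lemma~\ref{lemma:iv-v-in-conf-good}. Write the pessimistic objective as
\[
\hat J(\pi) = \min_{(\Delta,\Theta)\in\ci^0_{\alpha_0}\times\ci^1_{\alpha_1}}\ \min_{v\in\ci^\vf_{\alpha_\vf}(\Delta,\Theta,\pi)} (1-\gamma)\EE_{S\sim\nu}[v(S)],
\]
so that $\hat\pi_\vf=\argmax_{\pi\in\Pi}\hat J(\pi)$. I would then split
\[
\subopt(\hat\pi_\vf) = \underbrace{\big(J(\pi^*)-\hat J(\pi^*)\big)}_{(\mathrm{I})} + \underbrace{\big(\hat J(\pi^*)-\hat J(\hat\pi_\vf)\big)}_{(\mathrm{II})} + \underbrace{\big(\hat J(\hat\pi_\vf)-J(\hat\pi_\vf)\big)}_{(\mathrm{III})},
\]
and show $(\mathrm{II})\le 0$, $(\mathrm{III})\le 0$, while $(\mathrm{I})$ carries the advertised rate.

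Term $(\mathrm{II})$ is nonpositive immediately because $\hat\pi_\vf$ maximizes $\hat J$. For term $(\mathrm{III})$, I would work on the high-probability event where Assumption~\ref{ass:iv-sl-res} gives $(\Delta^*,\Theta^*)\in\ci^0_{\alpha_0}\times\ci^1_{\alpha_1}$ and Lemma~\ref{lemma:iv-v-pi-in-conf} gives $V^{\hat\pi_\vf}\in\ci^\vf_{\alpha_\vf}(\Delta^*,\Theta^*,\hat\pi_\vf)$. The triple $(\Delta^*,\Theta^*,V^{\hat\pi_\vf})$ is then feasible in the double minimization defining $\hat J(\hat\pi_\vf)$, so $\hat J(\hat\pi_\vf)\le(1-\gamma)\EE_{S\sim\nu}[V^{\hat\pi_\vf}(S)]=J(\hat\pi_\vf)$ by the definition of $J$ in Lemma~\ref{lemma:iv-vf-id}, giving $(\mathrm{III})\le 0$.

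The crux is term $(\mathrm{I})$. Let $(\Delta_0,\Theta_0,v_0)$ attain the minimum defining $\hat J(\pi^*)$, so $v_0$ lies in $\cup_{(\Delta,\Theta)\in\ci^0_{\alpha_0}\times\ci^1_{\alpha_1}}\ci^\vf_{\alpha_\vf}(\Delta,\Theta,\pi^*)$. I would first establish the exact value-difference identity
\[
(1-\gamma)\EE_{S\sim\nu}[v(S)]-J(\pi^*) = -\,\Phi^{\pi^*}_\vf(v,w^{\pi^*};\Delta^*,\Theta^*),
\]
valid for every $v$, by combining the MIS estimating equation of Lemma~\ref{lemma:iv-est-eq} (with $f=v$, $\pi=\pi^*$) and the reward identity of Lemma~\ref{lemma:iv-mis-j}, together with the fact that the IV-weighted ratio $Z_t^\top A_t\pi^*(A_t\given S_t)/(\Delta^*(S_t,A_t)\Theta^*(S_t,Z_t))$ has conditional mean one given $S_t$ (which follows from the compliance definition in Assumption~\ref{ass:iv-common}\ref{ass:iv-compliance} and the simplex encoding). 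This conditional-mean-one fact is what cancels the cross term separating $\Phi^{\pi^*}_\vf$ from the doubly-robust-style residual. Since $w^{\pi^*}\in\cW$ and $\cW$ is symmetric ($-w\in\cW$) by Assumption~\ref{ass:iv-vf-realizable}, the identity yields
\[
J(\pi^*)-(1-\gamma)\EE_{S\sim\nu}[v_0(S)]=\Phi^{\pi^*}_\vf(v_0,w^{\pi^*};\Delta^*,\Theta^*)\le \max_{g\in\cW}\Phi^{\pi^*}_\vf(v_0,g;\Delta^*,\Theta^*),
\]
and Lemma~\ref{lemma:iv-v-in-conf-good} applied to $v_0$ (with $\pi=\pi^*$) bounds the right-hand side by the target rate, controlling $(\mathrm{I})$.

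Finally I would intersect the high-probability events from Assumption~\ref{ass:iv-sl-res}, Lemma~\ref{lemma:iv-v-pi-in-conf}, and Lemma~\ref{lemma:iv-v-in-conf-good} by a union bound, absorbing the constant loss in probability into $\delta$. I expect the main obstacle to be the value-difference identity for $(\mathrm{I})$: checking that the ratio integrates to one so the cross term vanishes, and — more importantly — recognizing that this identity invokes $w$-realizability \emph{only} at $\pi^*$, whereas the feasibility argument for $(\mathrm{III})$ instead invokes $V^\pi$-realizability at the selected policy. It is precisely this asymmetry that lets the VF-based method succeed under coverage for $\pi^*$ alone; the genuinely heavy analysis — maximal and concentration inequalities over the function classes under geometrically ergodic data — is already encapsulated in Lemma~\ref{lemma:iv-v-in-conf-good}.
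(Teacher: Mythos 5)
Your proposal is correct and follows essentially the same route as the paper's proof: the three-term split $(\mathrm{I})+(\mathrm{II})+(\mathrm{III})$ is just an explicit bookkeeping of the paper's chain of inequalities, which likewise uses Lemma~\ref{lemma:iv-v-pi-in-conf} together with $(\Delta^*,\Theta^*)\in\ci^0_{\alpha_0}\times\ci^1_{\alpha_1}$ for feasibility, the optimality of $\hat\pi_\vf$, the value-difference identity obtained by subtracting Lemma~\ref{lemma:iv-est-eq} (with $f=v$) from Lemma~\ref{lemma:iv-mis-j}, the realizability $w^{\pi^*}\in\cW$ with symmetry of $\cW$ to pass to $\max_{g\in\cW}\Phi^{\pi^*}_\vf$, and finally Lemma~\ref{lemma:iv-v-in-conf-good}. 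The only cosmetic difference is your extra appeal to a conditional-mean-one property of the IV-weighted ratio, which is unnecessary since that fact is already internal to the proofs of the two identification lemmas.
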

\begin{proof}[Proof Sketch]
In the proof sketch, we assume that we have full knowledge on $\Delta^*$ and $\Theta^*$.
By the definition of $J(\pi)$ in \eqref{eq:iv-val-func}, we have
\$
J(\pi^*) - J(\hat \pi_\vf) &  = (1-\gamma) \EE_{S_0\sim \nu}\left [V^{\pi^*}(S_0) - V^{\hat \pi_\vf}(S_0)\right] \\
& \leq (1-\gamma) \EE_{S_0\sim \nu} \left [V^{\pi^*}(S_0) \right] - \min_{v \in \ci_{\alpha_\vf}^\vf(\Delta^*, \Theta^*, \hat \pi_\vf)} (1-\gamma) \EE_{S_0\sim \nu} \left [v(S_0) \right] \\
& \leq (1-\gamma) \EE_{S_0\sim \nu} \left [V^{\pi^*}(S_0) \right] -  \min_{v \in \ci_{\alpha_\vf}^\vf(\Delta^*, \Theta^*, \pi^*)} (1-\gamma) \EE_{S_0\sim \nu} \left [v(S_0) \right]  \\
& \leq (1-\gamma) \cdot \max_{v \in \ci_{\alpha_\vf}^\vf(\Delta^*, \Theta^*, \pi^*)} \left |  \EE_{S_0\sim \nu} \left [V^{\pi^*}(S_0) - v(S_0) \right] \right |, 
\$
where in the first inequality, we use Lemma \ref{lemma:iv-v-pi-in-conf} that $V^{\hat \pi_\vf} \in \ci_{\alpha_\vf}^\vf(\Delta^*, \Theta^*, \hat \pi_\vf)$ with a high probability; while in the second inequality, we use the optimality of $\hat \pi_\vf$. 
In the meanwhile, by Lemmas \ref{lemma:iv-mis-j} and \ref{lemma:iv-est-eq}, we have the following decomposition, 
\#\label{eq:iv-vf-pp2-scketch}
& (1-\gamma) \EE_{S_0\sim \nu} \left [ V^{\pi^*}(S_0) \right] = J(\pi^*) = \EE\left[ \frac{1}{T} \sum_{t = 0}^{T-1}  w^{\pi^*}(S_t) \frac{Z_t^\top A_t \pi^*(A_t\given S_t)}{\Delta^*(S_t,A_t) \Theta^*(S_t,Z_t)} R_t \right], \\
& (1-\gamma) \EE_{S_0\sim \nu} \left [ v(S_0) \right] = \EE\left[ \frac{1}{T} \sum_{t = 0}^{T-1} w^{\pi^*}(S_t) \frac{Z_t^\top A_t \pi^*(A_t\given S_t)}{\Delta^*(S_t,A_t) \Theta^*(S_t,Z_t)} \left ( v(S_t) - \gamma  v(S_{t+1})  \right ) \right]. 
\#
Now, by plugging \eqref{eq:iv-vf-pp2-scketch}, we have
\#\label{eq:OPE error}
& J(\pi^*) - J(\hat \pi_\vf) \leq  \max_{v \in \ci_{\alpha_\vf}^\vf(\Delta^*, \Theta^*, \pi^*)} \left | \Phi_\vf^{\pi^*}(v, w^{\pi^*}; \Delta^*, \Theta^*) \right |.
\#
We then can upper bound the above suboptimality by Lemma \ref{lemma:iv-v-in-conf-good}, which concludes the proof of the theorem. 
See \S\ref{prf:thm:iv-vf} 
of the Supplementary Material for a detailed proof. 
\end{proof}

% \textcolor{red}{ZY: can you replace sentences by a sketch of proof? Just like what you present today.}

% \textcolor{red}{Give an outline of the proof}

In Theorem \ref{thm:iv-vf}, we impose data coverage and realizability assumptions as in Assumption \ref{ass:iv-vf-realizable}, which only requires that the offline data covers the trajectory generated by the optimal policy $\pi^*$ and $V^\pi$ is realizable in $\cV$ for any $\pi$. Our upper bound on the suboptimality of the estimated policy indicates that the regret of finding an optimal policy converges to $0$ as long as the number of trajectories or that of decision points on each trajectory goes to infinite.
% We remark that if such assumptions are violated, it is not difficult to check that we only need to add an additional term $\vareps^\bias_\vf$ in the upper bound of the suboptimality, where $\vareps^\bias_\vf$ measures the violation of Assumption \ref{ass:iv-vf-realizable}. 

% We remark that if such assumptions are violated, it is not difficult to check that we only need to add an additional term $\min\{\vareps^\bias_\vf, \vareps^\bias_\mis\}$ in the upper bound of the suboptimality, where $\vareps^\bias_\vf$ and $\vareps^\bias_\mis$ measure the violation of Assumptions \ref{ass:iv-vf-realizable} and \ref{ass:iv-mis-realizable}, respectively. 

\subsection{Theoretical Results for MIS-based Pessimistic Method}\label{sec:mis-theory}

We first impose the following assumption, which assumes that $w^\pi$ is realizable in $\cW$ for any policy $\pi$, and $V^{\pi^*}$ is realizable in $\cV$ only for the optimal policy $\pi^*$. 

\begin{assumption}
\label{ass:iv-mis-realizable}
We have $w^\pi \in \cW$ for any $\pi\in \Pi$ and $V^{\pi^*} \in \cV$. Further, we have $-v\in \cV$ for any $v \in \cV$. 
\end{assumption}

% We also assume that $\cV$ is symmetric. 
% \begin{assumption} \label{ass:symmetry-v}
% We have $-v\in \cV$ for any $v \in \cV$. 
% \end{assumption}

In the following lemma, we show that with a proper choice of $\alpha_\mis$, we have $w^\pi \in \ci^\mis_{\alpha_\mis}(\Delta^*, \Theta^*, \pi)$ with high probability.

\begin{lemma}
\label{lemma:iv-w-pi-in-conf}
Suppose
\$
\alpha_\mis = c\cdot \frac{C_{\Delta^*} C_{\Theta^*} C_*}{1-\gamma} \sqrt{\frac{1}{NT \kappa} \pdim_{\cV,\cW,\Pi}\log\frac{1}{\delta} \log(NT)}
\$
and $c / (NT)^2 \leq \delta \leq 1$. 
Then under Assumptions \ref{ass:spaces} and \ref{ass:iv-mis-realizable}, with probability at least $1 - \delta$, it holds for any $\pi \in \Pi$ that $w^\pi \in \ci^\mis_{\alpha_\mis}(\Delta^*, \Theta^*, \pi)$. 
\end{lemma}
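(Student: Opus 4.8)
The plan is to verify directly the single inequality that defines membership in $\ci^\mis_{\alpha_\mis}(\Delta^*, \Theta^*, \pi)$, namely
\[
\max_{f\in\cV}\hat\Phi^\pi_\mis(w^\pi, f; \Delta^*, \Theta^*) - \max_{f\in\cV}\hat\Phi^\pi_\mis(\hat w^\pi_{\Delta^*,\Theta^*}, f; \Delta^*, \Theta^*) < \alpha_\mis,
\]
uniformly over all $\pi\in\Pi$ with probability at least $1-\delta$. First I would discard the subtracted term. By Assumption \ref{ass:iv-mis-realizable} the class $\cV$ is symmetric ($-f\in\cV$ whenever $f\in\cV$), and since $\hat\Phi^\pi_\mis$ is linear in its second argument we have $\hat\Phi^\pi_\mis(w,-f;\Delta^*,\Theta^*)=-\hat\Phi^\pi_\mis(w,f;\Delta^*,\Theta^*)$; hence $\max_{f\in\cV}\hat\Phi^\pi_\mis(w,f;\Delta^*,\Theta^*)=\max_{f\in\cV}|\hat\Phi^\pi_\mis(w,f;\Delta^*,\Theta^*)|\ge 0$ for every $w\in\cW$. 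In particular the subtracted term is nonnegative, so the left-hand side above is bounded by $\max_{f\in\cV}\hat\Phi^\pi_\mis(w^\pi, f;\Delta^*,\Theta^*)$, and it suffices to control this single quantity (realizability guarantees $w^\pi\in\cW$, so it is a legitimate feasible point).

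Next I would pass from the empirical to the population object. By the MIS-based estimating equation (Lemma \ref{lemma:iv-est-eq}) we have $\Phi^\pi_\mis(w^\pi, f; \Delta^*, \Theta^*) = 0$ for every $f\in\cV$ and every $\pi\in\Pi$, so $\max_{f\in\cV}\Phi^\pi_\mis(w^\pi, f; \Delta^*, \Theta^*)=0$. Because the leading terms $(1-\gamma)\EE_{S_0\sim\nu}[f(S_0)]$ in $\hat\Phi$ and $\Phi$ are identical and deterministic, their difference is purely the empirical-process term; using $\max(\cdot)\le\sup|\cdot|$ together with $w^\pi\in\cW$ I would bound
\[
\max_{f\in\cV}\hat\Phi^\pi_\mis(w^\pi, f; \Delta^*, \Theta^*) \leq \sup_{w\in\cW,\, f\in\cV,\, \pi\in\Pi}\Bigl|(\hat\EE-\EE)\Bigl[\frac1T \sum_{t=0}^{T-1}\frac{Z_t^\top A_t\,\pi(A_t\given S_t)}{\Delta^*(S_t,A_t)\,\Theta^*(S_t,Z_t)}\,w(S_t)\bigl(f(S_t)-\gamma f(S_{t+1})\bigr)\Bigr]\Bigr|.
\]
Replacing $w^\pi$ by a free $w\in\cW$ and taking the supremum over $\pi\in\Pi$ renders the bound uniform in $\pi$, at the price of carrying the complexity of all three classes $\cV$, $\cW$, and $\Pi$, which is exactly what produces $\pdim_{\cV,\cW,\Pi}$ in $\alpha_\mis$.

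The remaining and main step is to bound this uniform deviation. Each summand is bounded in absolute value by $c\,C_{\Delta^*}C_{\Theta^*}C_*/(1-\gamma)$: the simplex encoding gives $|Z_t^\top A_t|\le 1$, a policy satisfies $\pi(\cdot\given\cdot)\le 1$, Assumption \ref{ass:spaces}\ref{ass:upper-bound-delta} gives $1/|\Delta^*|\le C_{\Delta^*}$ and $1/\Theta^*\le C_{\Theta^*}$, and Assumption \ref{ass:spaces}\ref{ass:function-bound} gives $\|w\|_\infty\le C_*$ and $\|f\|_\infty\le 1/(1-\gamma)$. Since the data are only geometrically ergodic (Assumption \ref{ass:ergodic}) rather than i.i.d.\ across $t$, I would invoke a maximal concentration inequality for geometrically ergodic sequences over function classes — the key technical device of the paper, in the spirit of \cite{wang2021projected} — discretizing $\cV,\cW,\Pi$ by their sup-norm $\vareps$-covers (Assumption \ref{ass:spaces}\ref{ass:bounded-covering}), controlling the discretization error through the envelope and the joint Lipschitz dependence of the integrand on $(w,f,\pi)$, and taking a union bound over the product cover whose log-cardinality scales like $\pdim_{\cV,\cW,\Pi}\log(1/\vareps)$. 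The mixing rate $\kappa$ enters the effective sample size as $NT\kappa$ and an extra $\log(NT)$ factor arises from the blocking/truncation step, giving a deviation of order $\frac{C_{\Delta^*}C_{\Theta^*}C_*}{1-\gamma}\sqrt{\pdim_{\cV,\cW,\Pi}\log(1/\delta)\log(NT)/(NT\kappa)}$, i.e.\ precisely $\alpha_\mis$ up to the absolute constant $c$ (chosen so the bound is strict as required). I expect this concentration step to be the main obstacle: establishing a maximal inequality that simultaneously copes with temporal dependence within each trajectory and the threefold supremum over $\cV$, $\cW$, and $\Pi$; the reductions in the first two paragraphs are essentially bookkeeping once this is in hand.
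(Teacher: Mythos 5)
Your proof is correct and follows essentially the same route as the paper's: both reduce membership in $\ci^\mis_{\alpha_\mis}(\Delta^*,\Theta^*,\pi)$ to the uniform deviation $\sup_{w,f,\pi}|\hat\Phi^\pi_\mis(w,f;\Delta^*,\Theta^*)-\Phi^\pi_\mis(w,f;\Delta^*,\Theta^*)|$ by combining realizability ($w^\pi\in\cW$), the population identity $\Phi^\pi_\mis(w^\pi,\cdot\,;\Delta^*,\Theta^*)=0$ from Lemma \ref{lemma:iv-est-eq}, and the Hoeffding-type maximal inequality for geometrically ergodic sequences (Theorem \ref{thm:hoeffding-mixing}) with a covering/union-bound argument over $\cV\times\cW\times\Pi$, which is exactly what produces $\pdim_{\cV,\cW,\Pi}$, the $NT\kappa$ effective sample size, and the $\log(NT)$ factor in $\alpha_\mis$. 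The only difference is bookkeeping: you discard the subtracted term $\max_{f\in\cV}\hat\Phi^\pi_\mis(\hat w^\pi_{\Delta^*,\Theta^*},f;\Delta^*,\Theta^*)$ outright by observing it is nonnegative (symmetry of $\cV$ plus linearity of $\hat\Phi^\pi_\mis$ in $f$), whereas the paper telescopes through the population functional and uses that $w^\pi$ minimizes $\max_{f\in\cV}\Phi^\pi_\mis(\cdot,f;\Delta^*,\Theta^*)$, picking up a factor of $2$ that is absorbed into the constant $c$ anyway.
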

\begin{proof}
See \S\ref{prf:lemma:iv-w-pi-in-conf} 
of the Supplementary Material for a detailed proof.
\end{proof}

In the following lemma, we show that for any $w \in \cup_{(\Delta, \Theta) \in \ci^0_{\alpha_0} \times \ci^1_{\alpha_1}} \ci^\mis_{\alpha_\mis}(\Delta, \Theta, \pi)$, we can upper bound the risk $\max_{f\in \cV} \Phi^\pi_\mis(w,f;\Delta^*, \Theta^*)$.

\begin{lemma}
\label{lemma:iv-w-in-conf-good}
Let $(\alpha_0, \alpha_1, \alpha_\mis)$ be those defined in Assumption \ref{ass:iv-sl-res} and Lemma \ref{lemma:iv-w-pi-in-conf}, and $c / (NT)^2 \leq \delta \leq 1$. 
Then under Assumptions \ref{ass:iv-common}--\ref{ass:iv-sl-res}, and \ref{ass:iv-mis-realizable}, with probability at least $1 - \delta$, it holds for any $\pi\in \Pi$ and $w \in \cup_{(\Delta, \Theta) \in \ci^0_{\alpha_0} \times \ci^1_{\alpha_1}} \ci^\mis_{\alpha_\mis}(\Delta, \Theta, \pi)$ that
\$
\max_{f\in \cV} \Phi^\pi_\mis(w,f;\Delta^*, \Theta^*) \leq c\cdot \frac{ C_{\Delta^*}^2 C_{\Theta^*}^2 C_*}{1-\gamma} (\xi_0 + \xi_1) \sqrt{\frac{1}{NT \kappa}\cdot \pdim_{\cF_0,\cF_1,\cW,\cV,\Pi} \cdot \log\frac{1}{\delta}\log(NT) }. 
\$
\end{lemma}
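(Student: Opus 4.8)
The plan is to mirror the proof of Lemma~\ref{lemma:iv-v-in-conf-good}, bounding the population risk $\max_{f\in\cV}\Phi^\pi_\mis(w,f;\Delta^*,\Theta^*)$ at the \emph{true} nuisances by peeling off three sources of error: the perturbation from replacing $(\Delta^*,\Theta^*)$ by the estimated $(\Delta,\Theta)$, the empirical-to-population deviation of $\hat\Phi^\pi_\mis$, and the slack $\alpha_\mis$ built into the confidence set. Fixing $\pi\in\Pi$ and $w\in\ci^\mis_{\alpha_\mis}(\Delta,\Theta,\pi)$ for some $(\Delta,\Theta)\in\ci^0_{\alpha_0}\times\ci^1_{\alpha_1}$, I would first write, for every $f\in\cV$,
\[
\Phi^\pi_\mis(w,f;\Delta^*,\Theta^*)
= \underbrace{\big[\Phi^\pi_\mis(w,f;\Delta^*,\Theta^*)-\Phi^\pi_\mis(w,f;\Delta,\Theta)\big]}_{(\mathrm{I})}
+ \underbrace{\big[\Phi^\pi_\mis(w,f;\Delta,\Theta)-\hat\Phi^\pi_\mis(w,f;\Delta,\Theta)\big]}_{(\mathrm{II})}
+ \hat\Phi^\pi_\mis(w,f;\Delta,\Theta),
\]
take $\max_{f\in\cV}$, and invoke the definition of $\ci^\mis_{\alpha_\mis}(\Delta,\Theta,\pi)$ together with the fact that $\hat w^\pi_{\Delta,\Theta}$ minimizes $\max_f\hat\Phi^\pi_\mis(\cdot,f;\Delta,\Theta)$ over $\cW$ and that $w^\pi\in\cW$ by Assumption~\ref{ass:iv-mis-realizable}. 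This yields $\max_f\hat\Phi^\pi_\mis(w,f;\Delta,\Theta)\le \max_f\hat\Phi^\pi_\mis(w^\pi,f;\Delta,\Theta)+\alpha_\mis$, after which I would again compare $\hat\Phi^\pi_\mis(w^\pi,\cdot;\Delta,\Theta)$ to its population version and to $\Phi^\pi_\mis(w^\pi,\cdot;\Delta^*,\Theta^*)$, the latter being identically $0$ by the MIS estimating equation of Lemma~\ref{lemma:iv-est-eq}.

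The perturbation term $(\mathrm{I})$ (and its $w^\pi$ analogue) is controlled by a nuisance-robustness argument. Since the leading term $(1-\gamma)\EE_{S_0\sim\nu}[f(S_0)]$ is free of $(\Delta,\Theta)$, the difference reduces to a single expectation of $Z_t^\top A_t\,\pi(A_t\given S_t)\,w(S_t)\,(f(S_t)-\gamma f(S_{t+1}))$ multiplied by $\big(\frac{1}{\Delta^*\Theta^*}-\frac{1}{\Delta\Theta}\big)$. I would bound the last factor by $\big|\frac{1}{\Delta^*\Theta^*}-\frac{1}{\Delta\Theta}\big|\le C_{\Delta^*}^2C_{\Theta^*}^2\big(|\Delta-\Delta^*|+|\Theta-\Theta^*|\big)$ using the lower bounds of Assumption~\ref{ass:spaces}\ref{ass:upper-bound-delta}--\ref{ass:bounded-mle}, bound $|Z_t^\top A_t|\le 1$ (simplex encoding), $\pi\le1$, $\|w\|_\infty\le C_*$, and $|f(S_t)-\gamma f(S_{t+1})|\le 2/(1-\gamma)$ from Assumption~\ref{ass:spaces}\ref{ass:function-bound}, and then replace $|\Delta(S_t,A_t)-\Delta^*(S_t,A_t)|$ and $|\Theta(S_t,Z_t)-\Theta^*(S_t,Z_t)|$ by the $\ell_1$ discrepancies $\|\Delta(S_t,\cdot)-\Delta^*(S_t,\cdot)\|_1$ and $\|\Theta(S_t,\cdot)-\Theta^*(S_t,\cdot)\|_1$. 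A Cauchy--Schwarz step over $t$ and the trajectory index converts these into the $L^2$ rates of Assumption~\ref{ass:iv-sl-res}, producing the $(\xi_0+\xi_1)\sqrt{\pdim_{\cF_0,\cF_1}/(NT\kappa)}$ contribution with the $C_{\Delta^*}^2C_{\Theta^*}^2C_*/(1-\gamma)$ prefactor.

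The main obstacle is the uniform empirical-to-population deviation $(\mathrm{II})$, namely bounding $\sup\big|\Phi^\pi_\mis(w,f;\Delta,\Theta)-\hat\Phi^\pi_\mis(w,f;\Delta,\Theta)\big|$ simultaneously over $(\pi,w,f,\Delta,\Theta)\in\Pi\times\cW\times\cV\times\cF_0\times\cF_1$ for data that are only geometrically ergodic rather than i.i.d. Here I would first verify that the summand $\frac{Z_t^\top A_t\pi}{\Delta\Theta}\,w\,(f-\gamma f')$ is uniformly bounded (again via the lower bounds on $\Delta,\Theta$ and $\|w\|_\infty\le C_*$, $\|f\|_\infty\le1/(1-\gamma)$), so that a Bernstein-type tail is available, and then apply the maximal inequality for geometrically ergodic sequences following \cite{wang2021projected}, combined with a covering-number/chaining argument over the five classes. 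The covering bounds of Assumption~\ref{ass:spaces}\ref{ass:bounded-covering} and the mixing rate $\kappa$ of Assumption~\ref{ass:ergodic} yield the $\sqrt{\pdim_{\cF_0,\cF_1,\cW,\cV,\Pi}\,\log(1/\delta)\,\log(NT)/(NT\kappa)}$ factor.

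Collecting $(\mathrm{I})$, $(\mathrm{II})$, their $w^\pi$ counterparts, and the slack $\alpha_\mis$ from Lemma~\ref{lemma:iv-w-pi-in-conf} --- all of the same order --- and taking a union bound over the high-probability events from Assumption~\ref{ass:iv-sl-res} and from the concentration step gives the claimed bound. The detailed bookkeeping of constants mirrors the VF case of Lemma~\ref{lemma:iv-v-in-conf-good}, with the only structural difference being the absence of the $L_\Pi$ factor, since here $w^\pi$ is directly realizable in $\cW$ and no Lipschitz control of the value-function class $\{V^\pi\}$ is needed.
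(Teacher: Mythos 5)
Your proposal is correct and follows essentially the same route as the paper's proof: peel off the nuisance-perturbation error (controlled via Cauchy--Schwarz and Assumption \ref{ass:iv-sl-res}), the uniform empirical-to-population deviation over $\Pi\times\cW\times\cV\times\cF_0\times\cF_1$ (controlled by the maximal inequality for geometrically ergodic sequences), and the confidence-set slack $\alpha_\mis$, then exploit the optimality of $\hat w^\pi_{\Delta,\Theta}$ together with $w^\pi\in\cW$ and the fact that $\Phi^\pi_\mis(w^\pi,\cdot\,;\Delta^*,\Theta^*)\equiv 0$. The only differences are cosmetic: you compare $w$ to $w^\pi$ directly through the empirical minimizer (inlining what the paper isolates as Lemma \ref{lemma:iv-any-w-hat-small-loss}) and keep the nuisance perturbation at the population level, whereas the paper performs it at the empirical level (its Term (III)) at the cost of one extra concentration step; both orderings yield the same bound.
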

\begin{proof}
See \S\ref{prf:lemma:iv-w-in-conf-good} 
of the Supplementary Material for a detailed proof. 
\end{proof}

Equipped with the above results, we introduce the following theorem, which characterizes the suboptimality of the learned policy $\hat \pi_\mis$ constructed in \eqref{eq:iv-mis-hat-pi}. 

\begin{theorem}
\label{thm:iv-mis}
Suppose $c / (NT)^2 \leq \delta \leq 1$. Under Assumptions \ref{ass:iv-common}--\ref{ass:iv-sl-res}, and \ref{ass:iv-mis-realizable}, it holds with probability at least $1 - \delta$ that
\$
\subopt(\hat \pi_\mis) \leq c\cdot \frac{ C_{\Delta^*}^2 C_{\Theta^*}^2 C_*}{1-\gamma} (\xi_0 + \xi_1) \sqrt{\frac{1}{NT \kappa}\cdot \pdim_{\cF_0,\cF_1,\cW,\cV,\Pi} \cdot \log\frac{1}{\delta}\log(NT)}.
\$
\end{theorem}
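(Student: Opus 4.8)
The plan is to mirror the proof sketch of Theorem~\ref{thm:iv-vf}, but with the roles of the value function and the density ratio interchanged: here realizability of $w^\pi$ is available for \emph{every} $\pi\in\Pi$, while realizability of the value function is available only at $\pi^*$. As in that sketch, I would first argue assuming full knowledge of $\Delta^*$ and $\Theta^*$, deferring the estimation error of the nuisances to the $(\xi_0+\xi_1)$ factor already packaged in Lemma~\ref{lemma:iv-w-in-conf-good}. Write the pessimistic value as $\hat J(\pi) = \min_{w\in\ci^\mis_{\alpha_\mis}(\Delta^*,\Theta^*,\pi)} \hat L_\mis(w,\pi;\Delta^*,\Theta^*)$, so that $\hat\pi_\mis$ maximizes $\hat J$ over $\Pi$. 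I would then split
\[
\subopt(\hat\pi_\mis) = \underbrace{\big(J(\pi^*)-\hat J(\pi^*)\big)}_{\mathrm{(A)}} + \underbrace{\big(\hat J(\pi^*)-\hat J(\hat\pi_\mis)\big)}_{\le 0} + \underbrace{\big(\hat J(\hat\pi_\mis)-J(\hat\pi_\mis)\big)}_{\mathrm{(B)}},
\]
where the middle term is nonpositive by the optimality of $\hat\pi_\mis$, leaving me to bound the two outer terms.

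For term (B) I would exploit that realizability of the density ratio holds for all policies. By Lemma~\ref{lemma:iv-w-pi-in-conf}, $w^{\hat\pi_\mis}$ is feasible, i.e.\ $w^{\hat\pi_\mis}\in\ci^\mis_{\alpha_\mis}(\Delta^*,\Theta^*,\hat\pi_\mis)$, so the minimizing definition of $\hat J$ gives $\hat J(\hat\pi_\mis)\le \hat L_\mis(w^{\hat\pi_\mis},\hat\pi_\mis;\Delta^*,\Theta^*)$. Replacing the empirical average by its population counterpart via a uniform concentration (maximal) inequality over $\cW\times\Pi$ under the geometric ergodicity of Assumption~\ref{ass:ergodic}, and then invoking the exact identification $L_\mis(w^\pi,\pi;\Delta^*,\Theta^*)=J(\pi)$ of Lemma~\ref{lemma:iv-mis-j}, I obtain $\hat J(\hat\pi_\mis)-J(\hat\pi_\mis)\le \mathrm{(conc)}$. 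The key point is that this step uses only $w^{\hat\pi_\mis}\in\cW$ and never requires $V^{\hat\pi_\mis}\in\cV$, which is precisely why no realizability of the value function at $\hat\pi_\mis$ is needed here.

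For term (A) I would use realizability of the value function at the optimal policy. Let $\hat w$ attain the min defining $\hat J(\pi^*)$; the same concentration inequality yields $\hat J(\pi^*)\ge L_\mis(\hat w,\pi^*;\Delta^*,\Theta^*)-\mathrm{(conc)}$. The crux is the doubly robust identity $L_\mis(w,\pi^*;\Delta^*,\Theta^*)=J(\pi^*)-\Phi^{\pi^*}_\mis(w,V^{\pi^*};\Delta^*,\Theta^*)$, valid for every $w$: it follows because $L_\dr(w,v,\pi^*;\Delta^*,\Theta^*)=L_\mis(w,\pi^*;\Delta^*,\Theta^*)+\Phi^{\pi^*}_\mis(w,v;\Delta^*,\Theta^*)$ by direct comparison of the two defining expressions, while $L_\dr(w,V^{\pi^*},\pi^*;\Delta^*,\Theta^*)=J(\pi^*)$ for every $w$ by the IV--Bellman equation of Lemma~\ref{lemma:iv-vf-id}. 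Applying this at $w=\hat w$ gives $J(\pi^*)-\hat J(\pi^*)\le \Phi^{\pi^*}_\mis(\hat w,V^{\pi^*};\Delta^*,\Theta^*)+\mathrm{(conc)}$. Since $V^{\pi^*}\in\cV$ by Assumption~\ref{ass:iv-mis-realizable} and $\hat w$ lies in the relevant union of confidence sets, I would bound $\Phi^{\pi^*}_\mis(\hat w,V^{\pi^*};\Delta^*,\Theta^*)\le\max_{f\in\cV}\Phi^{\pi^*}_\mis(\hat w,f;\Delta^*,\Theta^*)$ and control the right-hand side by Lemma~\ref{lemma:iv-w-in-conf-good}.

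Combining (A) and (B), and noting that the concentration terms are of the same (or smaller) order as the Lemma~\ref{lemma:iv-w-in-conf-good} bound, produces the stated rate. I expect the main obstacle to be the asymmetric bookkeeping of the two realizability conditions --- using the exact identity with $w^\pi\in\cW$ for the $\hat\pi_\mis$ side and the doubly robust identity with $V^{\pi^*}\in\cV$ for the $\pi^*$ side --- together with establishing the maximal inequalities for the dependent, non-stationary, geometrically ergodic data and propagating the nuisance-estimation error of $(\Delta,\Theta)$ through the outer minimization over $\ci^0_{\alpha_0}\times\ci^1_{\alpha_1}$; the latter two are, however, already encapsulated in Lemma~\ref{lemma:iv-w-in-conf-good} and Assumption~\ref{ass:iv-sl-res}, so the remaining work is the clean bookkeeping above.
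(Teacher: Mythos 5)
Your proposal is correct and takes essentially the same route as the paper's proof: your three-term decomposition with the middle term killed by optimality of $\hat\pi_\mis$ is precisely the paper's chain of inequalities, your term (B) is handled there by feasibility of $w^{\hat\pi_\mis}$ (Lemma \ref{lemma:iv-w-pi-in-conf}) plus concentration and the identification $L_\mis(w^\pi,\pi;\Delta^*,\Theta^*)=J(\pi)$, and your term (A) rests on the identity $J(\pi^*)-L_\mis(w,\pi^*;\Delta^*,\Theta^*)=\Phi_\mis^{\pi^*}(w,V^{\pi^*};\Delta^*,\Theta^*)$ --- this is exactly the paper's Lemma \ref{lemma:iv-link-phi-l}, which you rederive equivalently through the DR functional --- combined with $V^{\pi^*}\in\cV$ and Lemma \ref{lemma:iv-w-in-conf-good}. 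The nuisance bookkeeping you defer (uniform concentration over $\cF_0\times\cF_1\times\cW\times\Pi$ and the bias from minimizing over $\ci^0_{\alpha_0}\times\ci^1_{\alpha_1}$) is supplied in the paper by Lemmas \ref{lemma:iv-mis-hatl-close} and \ref{lemma:iv-mis-min-delta-close}, so your outline matches the actual proof in all essential respects.
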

\begin{proof}
See \S\ref{prf:thm:iv-mis} 
of the Supplementary Material for a detailed proof. 
\end{proof}

In Theorem \ref{thm:iv-mis}, we impose data coverage and realizability assumptions as in Assumption \ref{ass:iv-mis-realizable}, which only requires that $V^{\pi^*}$ is realizable in $\cV$ and the offline data covers the trajectory generated by the policy $\pi$ for any $\pi\in \Pi$. 
% We remark that if such assumptions are violated, it is not difficult to check that we only need to add an additional term $\vareps^\bias_\mis$ in the upper bound of the suboptimality, where $\vareps^\bias_\mis$ measures the violation of Assumption \ref{ass:iv-mis-realizable}. 

\subsection{Theoretical Results for DR-based Pessimistic Method}\label{sec:dr-theory}
In this section, we study the theoretical properties of our DR-based pessimistic method for confounded MDP.

\begin{theorem}\label{thm:iv-dr}
Let $(\alpha_0, \alpha_1, \alpha_\mis, \alpha_\vf)$ be those defined in Assumption \ref{ass:iv-sl-res}, Lemmas \ref{lemma:iv-v-pi-in-conf}, \ref{lemma:iv-w-pi-in-conf}, and one of Assumptions \ref{ass:iv-vf-realizable} and \ref{ass:iv-mis-realizable} hold. Then under Assumptions \ref{ass:iv-common}--\ref{ass:iv-sl-res}, it holds with probability at least $1 - \delta$ for any $c / (NT)^2 \leq \delta \leq 1$ that
\$
\subopt(\hat \pi_\dr) \leq c\cdot \frac{C_{\Delta^*}^2 C_{\Theta^*}^2 C_*}{1-\gamma} (\xi_0 + \xi_1) \sqrt{\frac{1}{NT \kappa} \pdim_{\cF_0,\cF_1,\cW,\cV,\Pi} \log\frac{1}{\delta}\log(NT) }, 
\$
where $\hat \pi_\dr$ is defined in \eqref{eq:iv-dr}. 
\end{theorem}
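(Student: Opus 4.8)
The plan is to graft the pessimism decomposition used in Theorems \ref{thm:iv-vf} and \ref{thm:iv-mis} onto the double-robustness structure of $\hat L_\dr$. Writing $\hat J_\dr(\pi) = \min_{(\Delta,\Theta)\in\ci^0_{\alpha_0}\times\ci^1_{\alpha_1}}\min_{(w,v)\in\ci^\mis_{\alpha_\mis}\times\ci^\vf_{\alpha_\vf}}\hat L_\dr(w,v,\pi;\Delta,\Theta)$ for the pessimistic value, so that $\hat\pi_\dr=\argmax_\pi\hat J_\dr(\pi)$, I would first split $\subopt(\hat\pi_\dr) = [J(\pi^*)-\hat J_\dr(\pi^*)] + [\hat J_\dr(\pi^*)-\hat J_\dr(\hat\pi_\dr)] + [\hat J_\dr(\hat\pi_\dr)-J(\hat\pi_\dr)]$, where the middle bracket is $\le 0$ by optimality of $\hat\pi_\dr$. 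It then remains to bound Term A $= J(\pi^*)-\hat J_\dr(\pi^*)$ and Term B $= \hat J_\dr(\hat\pi_\dr)-J(\hat\pi_\dr)$.

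The engine of the proof is the exact population DR error identity. Using $\EE[\rho_t\mid S_t]=1$ for $\rho_t = Z_t^\top A_t\pi(A_t\mid S_t)/(\Delta^*(S_t,A_t)\Theta^*(S_t,Z_t))$ — which follows from the simplex encoding \eqref{eq:iv-simplex-encoding} together with the independent-compliance Assumption \ref{ass:iv-common}\ref{ass:iv-compliance} — one verifies that the error factorizes in both directions, $L_\dr(w,v,\pi;\Delta^*,\Theta^*)-J(\pi) = \Phi^\pi_\vf(v, w-w^\pi;\Delta^*,\Theta^*) = \Phi^\pi_\mis(w, v-V^\pi;\Delta^*,\Theta^*)$. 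The key is that this vanishes the moment either $v=V^\pi$ (for any $w$) or $w=w^\pi$ (for any $v$), and that the two right-hand sides are exactly the risk functionals already controlled by Lemmas \ref{lemma:iv-v-in-conf-good} and \ref{lemma:iv-w-in-conf-good}.

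I would then split on which of Assumptions \ref{ass:iv-vf-realizable} or \ref{ass:iv-mis-realizable} is active. For Term B I plug in the admissible point $(\Delta^*,\Theta^*)$ (valid by Assumption \ref{ass:iv-sl-res}) and, under the VF assumptions, the pair $(v,w)=(V^{\hat\pi_\dr},\hat w^{\hat\pi_\dr}_{\Delta^*,\Theta^*})\in\ci^\vf_{\alpha_\vf}\times\ci^\mis_{\alpha_\mis}$, feasible by Lemma \ref{lemma:iv-v-pi-in-conf} and by construction; since $v=V^{\hat\pi_\dr}$ annihilates the population DR error for every $w$, Term B collapses to the empirical fluctuation $\hat L_\dr-L_\dr$. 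Under the MIS assumptions I take $(w,v)=(w^{\hat\pi_\dr},\hat v)$ symmetrically via Lemma \ref{lemma:iv-w-pi-in-conf}. For Term A I must instead lower-bound the inner minimum, so for every feasible $(\Delta,\Theta,w,v)$ I decompose $\hat L_\dr(w,v,\pi^*;\Delta,\Theta)-J(\pi^*)$ into (i) the empirical fluctuation, (ii) the nuisance plug-in error $L_\dr(\cdots;\Delta,\Theta)-L_\dr(\cdots;\Delta^*,\Theta^*)$, controlled through $\|\Delta-\Delta^*\|_1,\|\Theta-\Theta^*\|_1$ and Assumption \ref{ass:iv-sl-res}, and (iii) the population DR error. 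For (iii) under the VF assumptions I write $\Phi^{\pi^*}_\vf(v,w-w^{\pi^*}) = \Phi^{\pi^*}_\vf(v,w)-\Phi^{\pi^*}_\vf(v,w^{\pi^*}) \le 2\max_{g\in\cW}\Phi^{\pi^*}_\vf(v,g)$, using $w^{\pi^*}\in\cW$ and the symmetry $-w\in\cW$, and invoke Lemma \ref{lemma:iv-v-in-conf-good}; under the MIS assumptions I bound $\Phi^{\pi^*}_\mis(w,v-V^{\pi^*}) \le 2\max_{f\in\cV}\Phi^{\pi^*}_\mis(w,f)$ by Lemma \ref{lemma:iv-w-in-conf-good}. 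Collecting (i)--(iii) produces a bound of the order appearing in Theorems \ref{thm:iv-vf} and \ref{thm:iv-mis}, which is the claim.

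The hard part will be the interaction between the asymmetric realizability and the fact that the pessimistic objective minimizes \emph{jointly} over the product set $\ci^\mis_{\alpha_\mis}\times\ci^\vf_{\alpha_\vf}$: in Term A I cannot pin one coordinate at its truth, so I must argue that for \emph{every} feasible pair the DR error reduces to a single controllable risk. This is precisely where the two-way factorization and the identity $\EE[\rho_t\mid S_t]=1$ are indispensable, letting me pay only for the one nuisance ($V^{\pi^*}$ or $w^{\pi^*}$) that the active assumption certifies to be realizable while the other ranges freely over its confidence set. A secondary difficulty is rendering the plug-in bound (ii) uniform over $\ci^0_{\alpha_0}\times\ci^1_{\alpha_1}$ and over $\Pi$, but this rests on the same covering-number and geometric-ergodicity inputs (Assumptions \ref{ass:spaces}\ref{ass:bounded-covering} and \ref{ass:ergodic}) already in force for the two base theorems, so no genuinely new concentration machinery is needed beyond assembling the existing risk bounds.
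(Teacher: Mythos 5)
Your proposal is correct and follows essentially the same route as the paper's proof: the three-term split with the pessimistic value $\hat J_\dr$ is the paper's chain of inequalities (feasibility of $V^{\hat\pi_\dr}$ or $w^{\hat\pi_\dr}$ via Lemmas \ref{lemma:iv-v-pi-in-conf}/\ref{lemma:iv-w-pi-in-conf}, optimality of $\hat\pi_\dr$, uniform fluctuation of $\hat L_\dr$, and nuisance plug-in error via Assumption \ref{ass:iv-sl-res}) repackaged, and your two-way factorization $L_\dr(w,v,\pi;\Delta^*,\Theta^*)-J(\pi)=\Phi^\pi_\vf(v,w-w^\pi;\Delta^*,\Theta^*)=\Phi^\pi_\mis(w,v-V^\pi;\Delta^*,\Theta^*)$ is exactly the identity the paper uses in each of its two cases before invoking Lemmas \ref{lemma:iv-v-in-conf-good} and \ref{lemma:iv-w-in-conf-good}. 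The paper formalizes your steps (i) and (ii) as Lemmas \ref{lemma:iv-mis-hatl-close-dr} and \ref{lemma:iv-mis-min-delta-close-dr}, but the argument is the same.
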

\begin{proof}
See \S\ref{prf:thm:iv-dr} 
of the Supplementary Material for a detailed proof. 
\end{proof}

Theorem \ref{thm:iv-dr} shows that $\hat \pi_\dr$ is a doubly robust estimator of the optimal policy in the sense that either Assumption \ref{ass:iv-vf-realizable} or Assumption \ref{ass:iv-mis-realizable} ensures the convergence of $\hat \pi_\dr$.  Our results before hinge on the data coverage and realizability assumptions. 
Further, we consider the case when such assumptions are violated. 
We denote by
\#\label{eq:v-tilde-def}
\tilde v^\pi \in \argmin_{v\in \cV} \max_{w\in \cW} \Phi_\vf^\pi(v,w; \Delta^*, \Theta^*), \qquad \tilde w^\pi\in \argmin_{w\in \cW} \max_{v\in \cV} \Phi_\mis^\pi(w, v; \Delta^*, \Theta^*)
\#
and introduce the following assumption. 
\begin{assumption}[Model Misspecification]\label{ass:model-spec}
The following statements hold.
\begin{enumerate}[label=(\alph*)]
    \item\label{ass:vf-spec} We have $\|V^\pi - \tilde v^\pi \|_\infty \leq \vareps^\cV_\vf$ for any $\pi\in \Pi$ and $\|w^{\pi^*} - \tilde w^{\pi^*}\|_\infty \leq \vareps^\cW_\vf$. 
    \item\label{ass:mis-spec} We have $\|w^\pi - \tilde w^\pi \|_\infty \leq \vareps^\cW_\mis$ for any $\pi\in \Pi$ and $\|V^{\pi^*} - \tilde v^{\pi^*}\|_\infty \leq \vareps^\cV_\mis$. 
\end{enumerate}
\end{assumption}

Though Assumption \ref{ass:model-spec} requires that \ref{ass:vf-spec} and \ref{ass:mis-spec} hold simultaneously, we remark that previous assumptions imposed in VF-, MIS-, and DR-based pessimism can be recovered by such an assumption. 
Specifically, Assumptions \ref{ass:iv-vf-realizable} and \ref{ass:iv-mis-realizable} can be recovered by taking $(\vareps^\cV_\vf, \vareps^\cW_\vf, \vareps^\cV_\mis, \vareps^\cW_\mis) = (0, 0, \infty, \infty)$ and $(\vareps^\cV_\vf, \vareps^\cW_\vf, \vareps^\cV_\mis, \vareps^\cW_\mis) = (\infty, \infty, 0, 0)$, respectively, in Assumption \ref{ass:model-spec}. 
Similarly, the data coverage and realizability assumptions in Theorem \ref{thm:iv-dr} can also be recovered by either taking $(\vareps^\cV_\vf, \vareps^\cW_\vf, \vareps^\cV_\mis, \vareps^\cW_\mis) = (0, 0, \infty, \infty)$ or taking $(\vareps^\cV_\vf, \vareps^\cW_\vf, \vareps^\cV_\mis, \vareps^\cW_\mis) = (\infty, \infty, 0, 0)$. To simplify the notation, we remark that $ \leq \infty$ means $< \infty$.

\begin{theorem}\label{thm:iv-dr-spec}
Let $(\alpha_0, \alpha_1, \alpha_\mis, \alpha_\vf)$ be those defined in Assumption \ref{ass:iv-sl-res}, Lemma \ref{lemma:iv-v-pi-in-conf}, and Lemma \ref{lemma:iv-w-pi-in-conf}. Then under Assumptions \ref{ass:iv-common}--\ref{ass:iv-sl-res}, and \ref{ass:model-spec}, it holds with probability at least $1 - \delta$ for any $c / (NT)^2 \leq \delta \leq 1$ that
\$
\subopt(\hat \pi_\dr) & \leq c\cdot \frac{C_{\Delta^*}^2 C_{\Theta^*}^2 C_*}{1-\gamma} (\xi_0 + \xi_1) \sqrt{\frac{1}{NT \kappa} \pdim_{\cF_0,\cF_1,\cW,\cV,\Pi} \log\frac{NT }{\delta}}  \\
& \qquad + 3 C_{\Delta^*} C_{\Theta^*} \min\left\{ C_* \vareps^\cV_\vf + \vareps^\cW_\vf/(1-\gamma), ~C_* \vareps^\cV_\mis + \vareps^\cW_\mis/(1-\gamma) \right \}, 
\$
where $\hat \pi_\dr$ is defined in \eqref{eq:iv-dr}. 
\end{theorem}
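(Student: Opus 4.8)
The plan is to follow the same pessimism template used for Theorems \ref{thm:iv-vf}--\ref{thm:iv-dr}, but to carry through the extra bias incurred once the realizability of $V^\pi$ and $w^\pi$ is only approximate. Writing the pessimistic objective as
$$
\hat J(\pi) = \min_{(\Delta,\Theta)\in\ci^0_{\alpha_0}\times\ci^1_{\alpha_1}}\ \min_{(w,v)\in\ci_{\alpha_\mis,\alpha_\vf}(\Delta,\Theta,\pi)} \hat L_\dr(w,v,\pi;\Delta,\Theta),
$$
we have $\hat\pi_\dr=\argmax_{\pi\in\Pi}\hat J(\pi)$, so $\hat J(\hat\pi_\dr)\geq\hat J(\pi^*)$ by feasibility of $\pi^*$. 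The telescoping
$$
\subopt(\hat\pi_\dr)=\bigl(J(\pi^*)-\hat J(\pi^*)\bigr)+\bigl(\hat J(\pi^*)-\hat J(\hat\pi_\dr)\bigr)+\bigl(\hat J(\hat\pi_\dr)-J(\hat\pi_\dr)\bigr)
$$
has nonpositive middle term, leaving $\subopt(\hat\pi_\dr)\leq\bigl(J(\pi^*)-\hat J(\pi^*)\bigr)+\bigl(\hat J(\hat\pi_\dr)-J(\hat\pi_\dr)\bigr)$. Thus I reduce the theorem to two one-sided estimates: a lower bound on $\hat J(\pi^*)$ (the conservative value at $\pi^*$ is not far below $J(\pi^*)$), which is a uniform bound over the inner minimization, and an upper bound on $\hat J(\hat\pi_\dr)$ (exhibit a single good feasible point).

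The engine for both is the exact doubly robust error identity at the true nuisances. Using $L_\dr(w^\pi,v,\pi;\Delta^*,\Theta^*)=L_\dr(w,V^\pi,\pi;\Delta^*,\Theta^*)=J(\pi)$ together with Corollary \ref{cor:iv-vf-phi-0} and Lemma \ref{lemma:iv-est-eq}, and the fact that $\EE[Z_t^\top A_t\pi(A_t\given S_t)/(\Delta^*(S_t,A_t)\Theta^*(S_t,Z_t))\given S_t]=1$ (which follows from the independent-compliance assumption and the simplex encoding), I will establish the two representations
$$
L_\dr(w,v,\pi;\Delta^*,\Theta^*)-J(\pi)=\Phi^\pi_\vf(v,w-w^\pi;\Delta^*,\Theta^*)=\Phi^\pi_\mis(w,v-V^\pi;\Delta^*,\Theta^*).
$$
Having both expressions for the same quantity is exactly what produces the minimum of two bias terms: the first is small whenever the value residual of $v$ is controlled (the VF route), the second whenever the ratio residual of $w$ is controlled (the MIS route), and the DR estimator may use whichever route is available.

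With these identities, for each one-sided estimate I decompose $\hat L_\dr-J(\pi)$ into (i) an empirical-process term $\hat L_\dr-L_\dr$, handled uniformly over $\cF_0\times\cF_1\times\cW\times\cV\times\Pi$ by the maximal inequalities for geometrically ergodic sequences under Assumptions \ref{ass:spaces} and \ref{ass:ergodic}, exactly as in Lemmas \ref{lemma:iv-v-in-conf-good} and \ref{lemma:iv-w-in-conf-good}; (ii) a nuisance-plugin term $L_\dr(\cdot;\Delta,\Theta)-L_\dr(\cdot;\Delta^*,\Theta^*)$, controlled by the $\Delta$- and $\Theta$-estimation rates of Assumption \ref{ass:iv-sl-res}, which is where $\xi_0+\xi_1$ and $C_{\Delta^*}^2C_{\Theta^*}^2$ enter; and (iii) the population DR error bounded through the two representations. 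For the misspecification part of (iii) I insert the population minimax solutions $\tilde v^\pi,\tilde w^\pi$ of \eqref{eq:v-tilde-def}: on the VF route I split $w-w^{\pi}=(w-\tilde w^{\pi})+(\tilde w^{\pi}-w^{\pi})$ and invoke $\|w^{\pi^*}-\tilde w^{\pi^*}\|_\infty\leq\vareps^\cW_\vf$ and $\|V^\pi-\tilde v^\pi\|_\infty\leq\vareps^\cV_\vf$ from Assumption \ref{ass:model-spec}\ref{ass:vf-spec}; bounding $\Phi^\pi_\vf$ using $\|w\|_\infty\leq C_*$ on the ratio factor and $\|v\|_\infty\leq1/(1-\gamma)$ on the value factor produces $C_{\Delta^*}C_{\Theta^*}\bigl(C_*\vareps^\cV_\vf+\vareps^\cW_\vf/(1-\gamma)\bigr)$, and the symmetric argument on the MIS route using Assumption \ref{ass:model-spec}\ref{ass:mis-spec} gives $C_{\Delta^*}C_{\Theta^*}\bigl(C_*\vareps^\cV_\mis+\vareps^\cW_\mis/(1-\gamma)\bigr)$; taking the tighter route yields the stated minimum, with the constant $3$ absorbing triangle-inequality slack. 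For the upper bound on $\hat J(\hat\pi_\dr)$ I additionally supply a single feasible point, namely $(\Delta^*,\Theta^*)$ (feasible by Assumption \ref{ass:iv-sl-res}) together with $\tilde v^{\hat\pi_\dr}$ or $\tilde w^{\hat\pi_\dr}$, whose membership in the confidence set follows from misspecified analogues of Lemmas \ref{lemma:iv-v-pi-in-conf} and \ref{lemma:iv-w-pi-in-conf}.

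The hard part will be item (iii) when $(w,v)$ is chosen adversarially by the inner minimization: the confidence sets constrain only the empirical residuals of $w$ and $v$, not their sup-distance to $w^\pi$ or $V^\pi$, so I cannot simply feed small sup-errors into a product bound. The resolution is that each $\Phi$-representation converts a controlled residual of one nuisance into a bound on the entire DR error irrespective of how far the other nuisance lies, and the misspecified versions of Lemmas \ref{lemma:iv-v-in-conf-good}/\ref{lemma:iv-w-in-conf-good} must be re-derived carrying the extra additive $\vareps$-terms; keeping the weights $C_*$ (on value factors) and $1/(1-\gamma)$ (on ratio factors) correctly attached throughout is precisely what pins down the exact bias form. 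A secondary technical point is that the geometric-ergodicity concentration must hold uniformly over $\Pi$ — via Assumption \ref{ass:spaces}\ref{ass:lip} controlling the entropy of $\{V^\pi\}$ — since the bound is applied at both $\pi^*$ and the data-dependent $\hat\pi_\dr$.
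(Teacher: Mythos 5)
Your proposal is correct and takes essentially the same approach as the paper: the paper likewise proves the result in two symmetric parts, running the VF route (via the misspecified analogues of the feasibility and risk lemmas, Lemmas \ref{lemma:vinconf-spec} and \ref{lemma:vinconf-good-spec}) and the MIS route (via Lemmas \ref{lemma:iv-w-pi-in-conf-spec} and \ref{lemma:iv-w-in-conf-good-spec}), each fed through the DR error identity, the uniform concentration bound of Lemma \ref{lemma:iv-mis-hatl-close-dr}, and the nuisance-plugin bound of Lemma \ref{lemma:iv-mis-min-delta-close-dr}, and then takes the minimum of the two resulting bias terms. Your initial telescoping with the optimality of $\hat \pi_\dr$ in the middle, and the use of $\tilde v^{\hat\pi_\dr}$ (resp.\ $\tilde w^{\hat\pi_\dr}$) as the exhibited feasible point, is just a cleaner reorganization of the paper's chain of inequalities in \eqref{eq:0622-0}--\eqref{eq:0622-3} and \eqref{eq:0623-1}--\eqref{eq:0623-5}.
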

\begin{proof}
See \S\ref{prf:thm:iv-dr-spec} 
of the Supplementary Material for a detailed proof. 
\end{proof}

In Theorem \ref{thm:iv-dr-spec}, the first term on the right-hand side of the suboptimality upper bound corresponds to the suboptimality of the DR-based estimator in Theorem \ref{thm:iv-dr}, and the second term characterizes the additional bias induced by model misspecification. 
We remark that either $(\vareps^\cV_\vf, \vareps^\cW_\vf) = (0,0)$ or $(\vareps^\cV_\mis, \vareps^\cW_\mis) = (0,0)$ in Theorem \ref{thm:iv-dr-spec} ensures zero bias, which corresponds to Theorem \ref{thm:iv-dr}.

\section{Dual Formulation}\label{sec: dual form}
% \textcolor{red}{This section is not complete. Please add more elaboration.}

To improve the computational efficiency of estimating the optimal in-class policy due to the confidence sets, we propose a dual formulation of the aforementioned pessimistic methods. 
For the purpose of clear illustration, we only consider the dual formulation of the VF-based pessimistic method proposed in \S\ref{sec:iv-vf}. Similar formulations for MIS-based and DR-based methods can also be derived accordingly. 

For the ease of presentation, we assume that there exists an oracle that gives us $\Delta^*$ and $\Theta^*$. Without the existence of such an oracle, we only need to employ two additional dual variables to consider the uncertainty induced by estimating $\Delta^*$ and $\Theta^*$. We consider the following dual form of \eqref{eq:iv-vf-hat-pi}, 
\#\label{eq:vf-dual}
& \hat \pi_\vf^\dagger = \argmax_{\pi\in \Pi} \max_{\lambda \geq 0} \min_{v \in \cV} ~(1-\gamma)\EE_{S \sim \nu}[v(S)] + \lambda \cdot \left(  \hat M_\vf^\pi(v) - \alpha_\vf \right ), \\
& \text{s.t. } \hat M_\vf^\pi(v) = \max_{g\in \cW} \hat \Phi_\vf^\pi(v, g; \Delta^*, \Theta^*) - \max_{g\in \cW} \hat \Phi_\vf^\pi(\hat v_{\Delta^*, \Theta^*}^\pi, g; \Delta^*, \Theta^*), 
\#
where $\hat v_{\Delta^*, \Theta^*}^\pi = \argmin_{v\in \cV}\max_{g\in \cW} \hat \Phi_\vf^\pi(v, g; \Delta^*, \Theta^*)$ and $\lambda$ is the dual variable that corresponds to the constraint $v\in \ci^\vf_{\alpha_\vf}(\Delta^*, \Theta^*, \pi)$. 
In comparison to the constrained optimization problem in \eqref{eq:iv-vf-hat-pi}, the problem in \eqref{eq:vf-dual} can be solved efficiently using gradient-based methods. 

In the following theorem, we characterize the suboptimality of $\hat \pi_\vf^\dagger$.  

\begin{theorem}\label{thm:dual}
Suppose that $\cV$ is convex, $\alpha_\vf$ is defined in Lemma \ref{lemma:iv-v-in-conf-good}, and $c / (NT)^2 \leq \delta \leq 1$. 
Under Assumptions \ref{ass:iv-common}--\ref{ass:iv-vf-realizable}, it holds with probability at least $1 -\delta$ that
\$
\subopt(\hat \pi_\vf^\dagger) \leq c\cdot \frac{C_{\Delta^*}^2 C_{\Theta^*}^2 C_*}{1-\gamma} (\xi_0 + \xi_1)  L_\Pi \sqrt{\frac{1}{NT\kappa}\cdot \pdim_{\cF_0,\cF_1,\cW,\cV,\Pi} \cdot \log\frac{1}{\delta} \log(NT)}. 
\$
\end{theorem}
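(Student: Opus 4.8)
The plan is to show that the dual problem \eqref{eq:vf-dual} and the constrained problem \eqref{eq:iv-vf-hat-pi} (with oracle $\Delta^*,\Theta^*$) have the same value and the same maximizers, and then to invoke Theorem \ref{thm:iv-vf} directly. Fix any $\pi\in\Pi$ and consider the inner constrained program
\$
\min_{v\in\cV}\,(1-\gamma)\EE_{S\sim\nu}[v(S)]\quad\text{subject to}\quad \hat M_\vf^\pi(v)\leq \alpha_\vf,
\$
whose feasible set is exactly $\ci^\vf_{\alpha_\vf}(\Delta^*,\Theta^*,\pi)$. Introducing a single multiplier $\lambda\geq 0$ for this inequality constraint gives precisely the Lagrangian in \eqref{eq:vf-dual}, so the inner part of \eqref{eq:vf-dual} is the Lagrangian dual $\max_{\lambda\geq 0}\min_{v\in\cV}$ of this program. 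It therefore suffices to establish that the duality gap vanishes for every $\pi$.

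To verify strong duality I would check convexity together with Slater's constraint qualification. For fixed $(g,\Delta^*,\Theta^*)$ the map $v\mapsto \hat\Phi_\vf^\pi(v,g;\Delta^*,\Theta^*)$ is affine, since $v$ enters only through the affine expression $R_t+\gamma v(S_{t+1})-v(S_t)$; hence $v\mapsto \max_{g\in\cW}\hat\Phi_\vf^\pi(v,g;\Delta^*,\Theta^*)$ is a pointwise supremum of affine functions and is convex, and subtracting the $v$-independent constant $\max_{g\in\cW}\hat\Phi_\vf^\pi(\hat v^\pi_{\Delta^*,\Theta^*},g;\Delta^*,\Theta^*)$ leaves $\hat M_\vf^\pi$ convex in $v$. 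With the linear objective and the convex set $\cV$ (assumed in the theorem), the inner program is convex. Slater's condition holds at the unconstrained minimizer $\hat v^\pi_{\Delta^*,\Theta^*}$: by construction $\hat M_\vf^\pi(\hat v^\pi_{\Delta^*,\Theta^*})=0<\alpha_\vf$, so this point is strictly feasible. Convexity plus strict feasibility yields zero duality gap, i.e.
\$
\max_{\lambda\geq 0}\min_{v\in\cV}\Bigl\{(1-\gamma)\EE_{S\sim\nu}[v(S)]+\lambda\bigl(\hat M_\vf^\pi(v)-\alpha_\vf\bigr)\Bigr\}=\min_{v\in\ci^\vf_{\alpha_\vf}(\Delta^*,\Theta^*,\pi)}(1-\gamma)\EE_{S\sim\nu}[v(S)]
\$
for every $\pi\in\Pi$.

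Since this equality holds pointwise in $\pi$ and the outer operator $\argmax_{\pi\in\Pi}$ is common to both \eqref{eq:vf-dual} and the oracle version of \eqref{eq:iv-vf-hat-pi}, the two problems share the same optimal value and the same set of maximizers; in particular $\hat\pi_\vf^\dagger$ is also a solution of the oracle constrained problem. The suboptimality analysis of Theorem \ref{thm:iv-vf} then applies verbatim to $\hat\pi_\vf^\dagger$: one bounds $J(\pi^*)-J(\hat\pi_\vf^\dagger)$ by $(1-\gamma)\max_{v\in\ci^\vf_{\alpha_\vf}(\Delta^*,\Theta^*,\pi^*)}\bigl|\EE_{S_0\sim\nu}[V^{\pi^*}(S_0)-v(S_0)]\bigr|$ using $V^{\hat\pi_\vf^\dagger}\in\ci^\vf_{\alpha_\vf}$ (Lemma \ref{lemma:iv-v-pi-in-conf}) together with the optimality of $\hat\pi_\vf^\dagger$, rewrites this gap as $\max_v\bigl|\Phi_\vf^{\pi^*}(v,w^{\pi^*};\Delta^*,\Theta^*)\bigr|$ via the decomposition \eqref{eq:iv-vf-pp2-scketch}, and controls it by Lemma \ref{lemma:iv-v-in-conf-good}, giving the stated rate.

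I expect the main obstacle to be the rigorous verification of strong duality rather than the concentration estimate, which is inherited: one must confirm the convexity of $\hat M_\vf^\pi$ and exhibit the strictly feasible point $\hat v^\pi_{\Delta^*,\Theta^*}$, and then argue that passing from pointwise-in-$\pi$ duality to the joint outer maximization preserves both value and $\argmax$. A secondary technical point is to remove the oracle assumption: following the remark after \eqref{eq:vf-dual}, I would introduce two additional multipliers for the constraints $\Delta\in\ci^0_{\alpha_0}$ and $\Theta\in\ci^1_{\alpha_1}$ and repeat the dualization, after which the $\xi_0+\xi_1$ and $\pdim_{\cF_0,\cF_1}$ contributions enter exactly as in Theorem \ref{thm:iv-vf} and the stated bound holds (indeed as an upper bound even in the oracle case).
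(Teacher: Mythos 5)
Your proposal is correct, and it reaches the result by a route that is related to but genuinely different from the paper's. The paper invokes strong duality (via Luenberger's Lagrange duality theorem, using exactly the convexity of $\hat M_\vf^{\pi}$ and the Slater point $\hat v^{\pi}_{\Delta^*,\Theta^*}$ that you identify) \emph{only at the single policy} $\pi^*$; for the learned policy it never needs duality to be tight, but instead uses a one-sided chain of inequalities: on the event of Lemma \ref{lemma:iv-v-pi-in-conf} the true value function $V^{\hat\pi}$ satisfies $\hat M_\vf^{\hat\pi}(V^{\hat\pi})\leq \alpha_\vf$, so the Lagrangian penalty $\hat\lambda\,(\hat M_\vf^{\hat\pi}(V^{\hat\pi})-\alpha_\vf)$ is nonpositive, and then optimality of $(\hat\pi,\hat\lambda)$ plus $\pi^*\in\Pi$ plus strong duality at $\pi^*$ collapses everything to $\max_{v\in\cV\colon \hat M_\vf^{\pi^*}(v)\leq\alpha_\vf}|\Phi_\vf^{\pi^*}(v,w^{\pi^*};\Delta^*,\Theta^*)|$, which Lemma \ref{lemma:iv-v-in-conf-good} controls. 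You instead establish strong duality \emph{pointwise for every} $\pi\in\Pi$ (which is legitimate here, since the Slater point $\hat v^\pi_{\Delta^*,\Theta^*}$ with $\hat M_\vf^\pi(\hat v^\pi_{\Delta^*,\Theta^*})=0<\alpha_\vf$ exists for each $\pi$ deterministically given the data), conclude the dual objective equals the constrained objective as functions of $\pi$, hence that $\hat\pi_\vf^\dagger$ maximizes the oracle constrained problem, and then re-run the constrained-case analysis. Your approach buys modularity: the duality step is a purely deterministic optimization fact, and the statistical work is inherited wholesale from the oracle version of Theorem \ref{thm:iv-vf} (note it is the oracle proof sketch, not the theorem as literally stated with the $(\Delta,\Theta)$-confidence sets, that transfers --- you correctly spell out those steps rather than citing the theorem as a black box). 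The paper's approach buys economy: it needs duality to be tight at only one policy and never has to discuss equality of argmax sets, which would matter in settings where Slater's condition could fail for some suboptimal policies. Both arguments terminate in the identical final step, bounding $\max_v|\Phi_\vf^{\pi^*}(v,w^{\pi^*};\Delta^*,\Theta^*)|$ by Lemma \ref{lemma:iv-v-in-conf-good} using $w^{\pi^*}\in\cW$, and yield the same rate.
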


\begin{proof}
For notational convenience, we denote by $(\hat \pi, \hat \lambda, \hat v)$ the solution of \eqref{eq:vf-dual}. 
Note that $(1-\gamma) \EE_{S_0\sim \nu}[v(S_0)]$ is a lower bounded real-valued convex functional w.r.t. $v$, and $\hat M^{\pi^*}_\vf$ is also a convex functional. In the meanwhile, we have $\hat M^{\pi^*}_\vf(\hat v^{\pi^*}_{\Delta^*, \Theta^*}) = 0$. Thus, by Theorem 1 of \S{8.6} in \cite{luenberger1997optimization}, strong duality holds, i.e., 
\#\label{eq:wefuir8-pp}
& \max_{\lambda\geq 0} \min_{v\in \cV} \left\{ (1-\gamma) \EE_{S_0\sim \nu}\left [v(S_0)\right] + \lambda \cdot \left(\hat M_\vf^{\pi^*}(v) - \alpha_\vf \right ) \right\}  \\
& \qquad = \min_{v\in \cV} \max_{\lambda\geq 0} \left\{ (1-\gamma) \EE_{S_0\sim \nu}\left [v(S_0)\right] + \lambda \cdot \left(\hat M_\vf^{\pi^*}(v) - \alpha_\vf \right ) \right\}. 
\#
By Lemma \ref{lemma:iv-v-pi-in-conf}, it holds with probability at least $1 - \delta$ that $\hat M_\vf^{\hat \pi} (V^{\hat \pi}) \leq \alpha_\vf$. 
Thus, with probability at least $1 - \delta$, we have
\#\label{eq:fuiewhr}
& J(\pi^*) - J(\hat \pi)  \\
& \quad = (1-\gamma) \EE_{S_0\sim \nu}\left [V^{\pi^*}(S_0) - V^{\hat \pi}(S_0)\right] \\
& \quad \leq (1-\gamma) \EE_{S_0\sim \nu}\left [V^{\pi^*}(S_0)\right] - \left ( (1-\gamma) \EE_{S_0\sim \nu}\left [V^{\hat \pi}(S_0)\right] + \hat \lambda \cdot \left(  \hat M_\vf^{\hat \pi}(V^{\hat \pi}) - \alpha_\vf \right ) \right )  \\
& \quad \leq (1-\gamma) \EE_{S_0\sim \nu}\left [V^{\pi^*}(S_0)\right] - \min_{v\in \cV} \left\{ (1-\gamma) \EE_{S_0\sim \nu}\left [v(S_0)\right] + \hat \lambda \cdot \left(  \hat M_\vf^{\hat \pi}(v) - \alpha_\vf \right ) \right\}  \\
& \quad = (1-\gamma) \EE_{S_0\sim \nu}\left [V^{\pi^*}(S_0)\right] - \max_{\pi\in \Pi} \max_{\lambda\geq 0} \min_{v\in \cV} \left\{ (1-\gamma) \EE_{S_0\sim \nu}\left [v(S_0)\right] + \lambda \cdot \left(  \hat M_\vf^{\pi}(v) - \alpha_\vf \right ) \right\}  \\
& \quad = (1-\gamma) \EE_{S_0\sim \nu}\left [V^{\pi^*}(S_0)\right] - \max_{\lambda\geq 0} \min_{v\in \cV} \left\{ (1-\gamma) \EE_{S_0\sim \nu}\left [v(S_0)\right] + \lambda \cdot \left(  \hat M_\vf^{\pi^*}(v) - \alpha_\vf \right ) \right\}, 
\#
where in the second inequality, we use the fact that $V^{\hat \pi}\in \cV$; in the third inequality, we use the definition of $\hat \pi$ and $\hat \lambda$; in the last inequality, we use the fact that $\pi^* \in \Pi$. 
By combining \eqref{eq:wefuir8-pp} and \eqref{eq:fuiewhr}, we have
\#\label{eq:erfuie}
& J(\pi^*) - J(\hat \pi)  \\
& \quad \leq (1-\gamma) \EE_{S_0\sim \nu}\left [V^{\pi^*}(S_0)\right] - \min_{v\in \cV} \max_{\lambda\geq 0} \left\{ (1-\gamma) \EE_{S_0\sim \nu}\left [v(S_0)\right] + \lambda \cdot \left( \hat M_\vf^{\pi^*}(v) - \alpha_\vf \right ) \right\}  \\
& \quad \leq (1-\gamma) \EE_{S_0\sim \nu}\left [V^{\pi^*}(S_0)\right] - \min_{v\in \cV\colon \hat M_\vf^{\pi^*}(v) \leq \alpha_\vf} (1-\gamma) \EE_{S_0\sim \nu}\left [v(S_0)\right]  \\
& \quad \leq \max_{v\in \cV\colon \hat M_\vf^{\pi^*}(v) \leq \alpha_\vf} \left | (1-\gamma) \EE_{S_0\sim \nu}\left [V^{\pi^*}(S_0) - v(S_0)\right] \right |. 
\#
Note that by Lemmas \ref{lemma:iv-mis-j} and \ref{lemma:iv-est-eq}, we have
\#\label{eq:eriufhwe}
(1-\gamma) \EE_{S_0\sim \nu}\left [V^{\pi^*}(S_0) - v(S_0)\right] = \Phi_\vf^{\pi^*}(v, w^{\pi^*}; \Delta^*, \Theta^*). 
\#
By plugging \eqref{eq:eriufhwe} into \eqref{eq:erfuie}, we have
\$
J(\pi^*) - J(\hat \pi) & \leq \max_{v\in \cV\colon \hat M_\vf^{\pi^*}(v) \leq \alpha_\vf} \left | \Phi_\vf^{\pi^*}(v, w^{\pi^*}; \Delta^*, \Theta^*) \right |  \\
& \leq c\cdot \frac{C_{\Delta^*}^2 C_{\Theta^*}^2 C_*}{1-\gamma} (\xi_0 + \xi_1)  L_\Pi \sqrt{\frac{1}{NT\kappa}\cdot \pdim_{\cF_0,\cF_1,\cW,\cV,\Pi} \cdot \log\frac{1}{\delta} \log(NT)},
\$
where in the last inequality, we use Lemma \ref{lemma:iv-v-in-conf-good} with the fact that $w^{\pi^*}\in \cW$.  This concludes the proof. 
\end{proof}

In Theorem \ref{thm:dual}, with an additional assumption that $\cV$ is convex, we show that a similar suboptimality holds as in Theorem \ref{thm:iv-vf} for the VF-based method. 
Thus, to avoid computational challenges induced by the confidence sets in \eqref{eq:iv-vf-hat-pi}, we only need to solve \eqref{eq:vf-dual} to obtain an optimal policy. 
We remark that similar dual formulations for MIS-based and DR-based methods can be derived, as well as their theoretical properties.

\section{Identifiability}\label{sec: identification}

We discuss interesting identifiability results implied by the proposed algorithm in this section. Following the convention, we say a parameter $\theta$ is identifiable if $\theta \rightarrow \mathbb{P}_{\theta}$ is injective. On the contrary, non-identifiability implies that there exists two different parameters that their corresponding data distributions coincide.  In the following, we use a tabular MDP as an example to illustrate non-identifiability issue in RL. Then we discuss the identifiability required by our methods. 

\vskip5pt
\noindent\textbf{Non-Identifiability in Tabular MDP.}
We consider a tabular MDP with states $\cS = \{s_1, s_2, \ldots, s_{|\cS|}\}$, where the behavior policy $b$ used to generate offline data only covers states $\{s_2, \ldots, s_{|\cS|}\}$. We assume that the expected total reward under such a tabular MDP is $J(\pi)$ for any policy $\pi$. Since the offline data generated following $b$ never covers the state $s_1$, we cannot infer any information of the reward received at the state $s_1$. Thus, for any policy $\pi$ that arrives the state $s_1$ with a nonzero probability, we cannot identify the value $J(\pi)$ uniquely. In the meanwhile, the state-value function $V^\pi\colon \cS\to \RR$ is not uniquely identifiable for any policy $\pi$ (even for $\pi^*$), since the value $V^\pi(s_1)$ is not identifiable.

\vskip5pt
\noindent\textbf{Identifiability Required by Our Methods.}
In \S\ref{sec:iv-id} and \S\ref{sec:iv-pess}, we do not explicitly impose any identifiability assumptions. But certain identifiability assumptions are implied by our data coverage assumptions as follows. For the ease of presentation, we assume that there exists an oracle that gives us $\Delta^*$ and $\Theta^*$.
\begin{itemize}
    \item VF-based pessimistic algorithm. As imposed in Assumption \ref{ass:iv-vf-realizable}, we require that $w^{\pi^*}$ is upper bounded and modelled correctly. Thus, we know that the trajectory generated by the optimal policy $\pi^*$ is covered by the offline data, which implies that $J(\pi^*)$ is identifiable but not necessary for $J(\pi)$ for $\pi \neq \pi^*$. This can also be seen by the min-max estimation procedure such as \eqref{eq:v-pi-def}, which will correctly upper bound the policy evaluation error so that $J(\pi^*)$ is uniquely identified. Also see the proof of Lemma \ref{lemma:iv-v-in-conf-good}. We remark that we do not require our data distribution to uniquely identify $V^\pi$ for any $\pi \in \Pi$ and the IV-aided Bellman equation in Lemma \ref{lemma:iv-vf-id} could have multiple fixed point solutions. See \cite{chen2022well} for when the uniqueness can be implied and that $w^{\pi^*}$ is upper bounded does not imply $V^{\pi^*}$ is uniquely identified.
    \item MIS-based pessimistic algorithm. As imposed in Assumption \ref{ass:iv-mis-realizable}, we require that $w^{\pi}$ is upper bounded for any policy $\pi$ and modelled correctly. Thus, we know that the trajectory generated by any policy $\pi$ is covered by the offline data, which implies that $J(\pi)$ is identifiable for any $\pi$. Similarly, we do not require our data distribution to uniquely identify/estimate $w^\pi$ as the MIS-based estimating equation defined in Lemma \ref{lemma:iv-est-eq} may have multiple fixed point solutions. 
    \item DR-based pessimistic algorithm. Since either Assumption \ref{ass:iv-vf-realizable} or Assumption \ref{ass:iv-mis-realizable} holds, we require that $J(\pi^*)$ is identifiable or $J(\pi)$ is identifiable for any $\pi$. In either case, we do not require that $w^\pi$ and $V^\pi$ are uniquely identified by our data distribution.
\end{itemize}

\section{Conclusion}
In this paper, we study the offline RL in the face of unmeasured confounders. We focus on resolving the following two challenges: (i) the agent may be confounded by the unmeasured confounders; (ii) the offline data may not provide sufficient coverage. To resolve the first challenge, by employing IVs, we establish VF- and MIS-based identification results for the expected total reward in the confounded MDPs. To resolve the second challenge, we employ pessimism to achieve policy learning. Specifically, we propose VF- and MIS-based pessimistic policy estimators, which are constructed by maximizing the most conservative expected total reward associated with the estimated value function and density ratio, respectively. As a combination, we also propose a DR-based estimator. As for theoretical contributions, under mild coverage and realizability assumptions, we show that the suboptimalities of the proposed estimators are upper bounded by $O(\log(NT) (NT)^{-1/2})$. Further, we consider the case when the models are misspecified, i.e., previous coverage and realizability assumptions no longer hold. We remark that such a misspecified case is a unified framework of the aforementioned estimators. 
\bibliographystyle{ims}
\bibliography{rl_ref}

\appendix
\section{Analysis of Maximum Likelihood Estimation}
In this section, we provide theoretical results on the estimation of $\Theta^*$ and $\Delta^*$.  We denote by 
\$
L_1(\Theta) = -\EE\left[ \frac{1}{T} \sum_{t = 0}^{T-1} \log \Theta(S_t, Z_t) \right]
\$ 
the population counterpart of $\hat L_1$. 
We define 
\#\label{eq:theta-hellinger}
H^2(\Theta_1, \Theta_2) = \frac{1}{2} \cdot \EE \left[\frac{1}{T} \sum_{t = 0}^{T-1} \int \left( \sqrt{\Theta_1(S_t, z)} - \sqrt{\Theta_2(S_t, z)} \right)^2 \ud z \right]. 
\#
We have the following supporting result. 
\begin{lemma}\label{lemma:iv-mle-1}
Under Assumption \ref{ass:spaces}~\ref{ass:upper-bound-delta}, for any $\Theta_1, \Theta_2 \in \cF_1$, it holds with probability at least $1 - \delta$ for any $c / (NT)^2 \leq \delta \leq 1$ that
\$
& \left| \left(L_1(\Theta_1) - L_1(\Theta_2)\right) - \left(\hat L_1(\Theta_1) - \hat L_1(\Theta_2)\right) \right|  \\
& \qquad \leq c\cdot \frac{\log C_{\Theta^*}}{NT \kappa} \log \frac{1}{\delta} \log(NT) + c\cdot \sqrt{\frac{C_{\Theta^*}}{NT \kappa} H^2(\Theta_1, \Theta_2) \log\frac{1}{\delta} \log(NT) }, 
\$
where $H^2(\Theta_1, \Theta_2)$ is defined in \eqref{eq:theta-hellinger}. 
\end{lemma}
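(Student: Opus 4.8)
The plan is to rewrite the left-hand side as a centered empirical average of a single log-ratio function and then apply a Bernstein-type concentration inequality for geometrically ergodic sequences, where the essential step is to bound the relevant second moment by the Hellinger distance $H^2(\Theta_1,\Theta_2)$. Setting $g(s,z) = \log\Theta_1(s,z) - \log\Theta_2(s,z)$, I would first observe that
\[
\bigl(L_1(\Theta_1)-L_1(\Theta_2)\bigr) - \bigl(\hat L_1(\Theta_1)-\hat L_1(\Theta_2)\bigr) = \hat\EE\Bigl[\tfrac1T\textstyle\sum_{t=0}^{T-1} g(S_t,Z_t)\Bigr] - \EE\Bigl[\tfrac1T\textstyle\sum_{t=0}^{T-1} g(S_t,Z_t)\Bigr],
\]
so it suffices to control the fluctuation of the empirical mean of $g$ around its population mean over the dependent offline data.

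Two quantities drive the Bernstein bound. For the magnitude, the lower bounds in Assumption \ref{ass:spaces}\ref{ass:bounded-mle} give $\Theta_1,\Theta_2\in[C_{\Theta^*}^{-1},1]$, hence $\|g\|_\infty\le\log C_{\Theta^*}=:M$, which will produce the first, variance-free term. For the variance proxy, the key elementary inequality is that for $a,b\in[C_{\Theta^*}^{-1},1]$ one has $(\log a-\log b)^2\le 4C_{\Theta^*}(\sqrt a-\sqrt b)^2$, obtained by writing $\log a-\log b=2(\log\sqrt a-\log\sqrt b)$ and applying the mean value theorem to $\log$ on $[C_{\Theta^*}^{-1/2},1]$. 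Since $Z_t\mid S_t\sim\Theta^*(S_t,\cdot)$ with $\Theta^*\le 1$, discarding the $\Theta^*$ weight and then invoking this inequality gives
\[
\frac1T\sum_{t=0}^{T-1}\EE\bigl[g(S_t,Z_t)^2\bigr]\le\frac1T\sum_{t=0}^{T-1}\EE\Bigl[\sum_{z\in\cZ}\bigl(\log\Theta_1(S_t,z)-\log\Theta_2(S_t,z)\bigr)^2\Bigr]\le 8\,C_{\Theta^*}\,H^2(\Theta_1,\Theta_2),
\]
so the averaged second moment of $g$ is controlled by $C_{\Theta^*}H^2(\Theta_1,\Theta_2)$.

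Finally I would invoke the Bernstein-type maximal inequality for geometrically ergodic sequences (the analogue of the tool from \cite{wang2021projected}, valid under Assumption \ref{ass:ergodic}) applied to the fixed bounded function $g$; for an effective sample size of order $NT\kappa$ this produces a deviation of order $\tfrac{M}{NT\kappa}\log\tfrac1\delta\log(NT)+\sqrt{\tfrac{\sigma^2}{NT\kappa}\log\tfrac1\delta\log(NT)}$, and substituting $M=\log C_{\Theta^*}$ together with $\sigma^2\lesssim C_{\Theta^*}H^2(\Theta_1,\Theta_2)$ reproduces the claimed bound. I expect the concentration step to be the main obstacle: because the data are only geometrically ergodic rather than i.i.d., one must route through a blocking/mixing argument—this is where both the $\kappa$ in the denominator and the extra $\log(NT)$ factor arise—while retaining the variance-dependent Bernstein form rather than a crude boundedness-only bound, since losing the $\sqrt{H^2}$ dependence here would destroy the sharp MLE rate needed downstream in Theorem \ref{thm:iv-param-theta}. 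By contrast the log-to-Hellinger reduction is elementary, but it must be combined carefully with $\Theta^*\le 1$ so that the behavior-policy weighting of the second moment does not clash with the uniform ($\ud z$) weighting built into $H^2$.
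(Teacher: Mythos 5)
Your proposal is correct and follows essentially the same route as the paper's proof: both reduce the left-hand side to a centered empirical average of the fixed log-ratio function, invoke the Bernstein-type inequality for geometrically ergodic sequences (Theorem \ref{thm:bernstein-mixing}) with sup-norm bound $\log C_{\Theta^*}$, and then bound the second moment by $cC_{\Theta^*}H^2(\Theta_1,\Theta_2)$. The only cosmetic difference is the elementary log-to-Hellinger step, where you use the mean value theorem for $\log$ on $[C_{\Theta^*}^{-1/2},1]$ while the paper uses $\log x \leq 2(\sqrt{x}-1)$ together with $\Theta^*/\Theta_i \leq C_{\Theta^*}$; both yield the same bound.
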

\begin{proof}
By Theorem \ref{thm:bernstein-mixing}, it holds with probability at least $1 - \delta$ that 
\#\label{eq:04939335}
& \left| \left(L_1(\Theta_1) - L_1(\Theta_2)\right) - \left(\hat L_1(\Theta_1) - \hat L_1(\Theta_2)\right) \right|  \\
& \quad \leq c\cdot \frac{\log C_{\Theta^*}}{NT \kappa} \log \frac{1}{\delta} \log(NT) + c\cdot \sqrt{\frac{1}{NT \kappa} \EE\left[\frac{1}{T} \sum_{t = 0}^{T-1} \left( \log\frac{\Theta_1(S_t,Z_t)}{\Theta_2(S_t,Z_t)} \right)^2 \right] \log\frac{1}{\delta} \log(NT) }. 
\#
Now, it suffices to upper bound the variance term on the RHS of the above inequality. Note that $\log x \leq 2(\sqrt{x}-1)$ for any $x>0$. Thus, for any $s\in \cS$, we have
\#\label{eq:iv-kl-hell}
& \int \Theta^*(s,z)\left(\log\frac{\Theta_1(s,z)}{\Theta_2(s,z)}\right)^2 \ud z\\
& \qquad \leq 4 \int \Theta^* \max\left\{\left(\sqrt{\frac{\Theta_2}{\Theta_1}} - 1\right)^2, \left(\sqrt{\frac{\Theta_1}{\Theta_2}} - 1\right)^2\right\} \ud z  \\
& \qquad = 4 \int \max\left\{\frac{\Theta^*}{\Theta_1}\left(\sqrt{\Theta_2} - \sqrt{\Theta_1}\right)^2, \frac{\Theta^*}{\Theta_2}\left(\sqrt{\Theta_1} - \sqrt{\Theta_2}\right)^2\right\} \ud z  \\
& \qquad \leq 4C_{\Theta^*} \int \left(\sqrt{\Theta_1(s,z)} - \sqrt{\Theta_2(s,z)}\right)^2 \ud z, 
\#
which implies that
\#\label{eq:049395}
\EE\left[\frac{1}{T} \sum_{t = 0}^{T-1} \left( \log\frac{\Theta_1(S_t,Z_t)}{\Theta_2(S_t,Z_t)} \right)^2 \right] \leq 8C_{\Theta^*} H^2(\Theta_1, \Theta_2). 
\#
By plugging \eqref{eq:049395} into \eqref{eq:04939335}, we conclude the proof of the lemma. 
\end{proof}

\subsection{Proof of Theorem \ref{thm:iv-param-theta}}\label{prf:thm:iv-param-theta}
\begin{proof}
\vskip5pt
\noindent\textbf{Proof of the first statement.} 
It suffices to show that with probability at least $1 - \delta$, we have  
\$
\hat L_1(\Theta^*) - \hat L_1(\hat \Theta) \leq \alpha_1. 
\$
By Corollary \ref{cor:vdg-param}, it holds with probability at least $1 - \delta$ that 
\#\label{eq:3824783}
H^2(\Theta^*, \hat \Theta) \leq c \cdot \frac{d}{NT \kappa} \log\frac{\theta_{\max}}{\delta}, 
\#
where $c>0$ is an absolute constant, which may vary from lines to lines. 
Thus, by Lemma \ref{lemma:iv-mle-1}, it holds with probability at least $1 - \delta$ that
\#\label{eq:qwe1f}
& \left| \left(L_1(\Theta^*) - L_1(\hat \Theta)\right) - \left(\hat L_1(\Theta^*) - \hat L_1(\hat \Theta)\right) \right|  \\
& \qquad \leq c\cdot \frac{\log C_{\Theta^*}}{NT \kappa} d \log \frac{\theta_{\max}}{\delta} \log(NT) + c\cdot \sqrt{\frac{C_{\Theta^*}}{NT \kappa} H^2(\Theta^*, \hat \Theta) d \log\frac{\theta_{\max}}{\delta} \log(NT) }  \\
& \qquad \leq c\cdot \frac{C_{\Theta^*}d}{NT \kappa} \log\frac{\theta_{\max}}{\delta} \log(NT), 
\#
where we use a covering argument and \eqref{eq:3824783} in the first and last inequalities, respectively. 
Further, by a similar idea as in \eqref{eq:iv-kl-hell}, we upper bound $|L_1(\Theta^*) - L_1(\hat \Theta)|$ as follows, 
\#\label{eq:qwe2f}
|L_1(\Theta^*) - L_1(\hat \Theta)| & = \left|\EE\left[ \frac{1}{T} \sum_{t = 0}^{T-1} \int \Theta^*(S_t,z) \log \frac{\Theta^*(S_t,z)}{\hat \Theta(S_t,z)}\ud z\right] \right| \\
& \leq 2C_{\Theta^*} H^2(\Theta^*, \hat \Theta) \leq c \cdot \frac{C_{\Theta^*}d}{NT \kappa} \log \frac{\theta_{\max}}{\delta}, 
\#
where we use \eqref{eq:3824783} and Corollary \ref{cor:vdg-param} in the first and last inequalities, respectively. Now, by combining \eqref{eq:qwe1f} and \eqref{eq:qwe2f}, it holds with probability at least $1 - \delta$ that 
\$
\hat L_1(\Theta^*) - \hat L_1(\hat \Theta) \leq c\cdot \frac{C_{\Theta^*}d}{NT \kappa} \log\frac{\theta_{\max}}{\delta} \log(NT) = \alpha_1, 
\$
which concludes the proof of the first statement. 

\vskip5pt
\noindent\textbf{Proof of the second statement.} 
By Lemma \ref{lemma:iv-mle-1}, with probability at least $1 - \delta$, for any $\Theta \in \ci^1_{\alpha_1}$, we have
\#\label{eq:qwww2f}
& \left| \left(L_1(\Theta^*) - L_1(\Theta)\right) - \left(\hat L_1(\Theta^*) - \hat L_1(\Theta)\right) \right|  \\
& \quad \leq c\cdot \frac{\log C_{\Theta^*}}{NT \kappa} d \log \frac{\theta_{\max}}{\delta} \log(NT) + c\cdot \sqrt{\frac{C_{\Theta^*}}{NT \kappa} H^2(\Theta^*, \Theta) d \log\frac{\theta_{\max}}{\delta} \log(NT) }, 
\#
where we use a covering argument. 
In the meanwhile, by the first statement, we have $\Theta^*\in \ci^1_{\alpha_1}$ with probability at least $1 - \delta$. Thus, we have
\#\label{eq:qwww3f}
\left| \hat L_1(\Theta^*) - \hat L_1(\Theta) \right| \leq \left| \hat L_1(\Theta^*) - \hat L_1(\hat \Theta) \right| + \left| \hat L_1(\hat \Theta) - \hat L_1(\Theta) \right| \leq 2\alpha_1, 
\#
where we use the fact that $\Theta \in \ci^1_{\alpha_1}$. By combining \eqref{eq:qwww2f} and \eqref{eq:qwww3f}, with probability at least $1- \delta$, it holds for any $\Theta \in \ci^1_{\alpha_1}$ that 
\#\label{eq:qwww4f}
L_1(\Theta) - L_1(\Theta^*) \leq & c\cdot \frac{\log C_{\Theta^*}}{NT \kappa} d \log \frac{\theta_{\max}}{\delta} \log(NT) \\
& + c\cdot \sqrt{\frac{C_{\Theta^*}}{NT \kappa} H^2(\Theta^*, \Theta) d \log\frac{\theta_{\max}}{\delta} \log(NT) }. 
\#
On the other hand, it holds for any $s\in \cS$ that
\$
-\int \Theta^*(s,z) \log\frac{\Theta(s,z)}{\Theta^*(s,z)}\ud z & \geq - 2 \int \Theta^*(s,z) \left(\sqrt{\frac{\Theta(s,z)}{\Theta^*(s,z)}}-1\right)\ud z \\
& = \int \left(\Theta^*(s,z) +  \Theta(s,z) - 2\sqrt{\Theta(s,z) \Theta^*(s,z)}\right)\ud z  \\
& = \int \left(\sqrt{\Theta^*(s,z)} + \sqrt{\Theta(s,z)} \right)^2\ud z,
\$
which implies that 
\#\label{eq:qwww5f}
L_1(\Theta) - L_1(\Theta^*) \geq 2H^2(\Theta^*, \Theta). 
\#
By combining \eqref{eq:qwww4f} and \eqref{eq:qwww5f}, we have 
\$
H^2(\Theta^*, \Theta) \leq c\cdot \frac{\log C_{\Theta^*}}{NT \kappa} d \log \frac{\theta_{\max}}{\delta} \log(NT) + c\cdot \sqrt{\frac{C_{\Theta^*}}{NT \kappa} H^2(\Theta^*, \Theta) d \log\frac{\theta_{\max}}{\delta} \log(NT) }, 
\$
which implies that 
\$
H^2(\Theta^*, \Theta) \leq c\cdot \frac{C_{\Theta^*}d}{NT \kappa} \log \frac{\theta_{\max}}{\delta} \log(NT). 
\$
Now, by Lemma \ref{lemma:vi-vdg-lemma}, with probability at least $1-\delta$, it holds for any $\Theta \in \ci^1_{\alpha_1}$ that 
\$
\sqrt{\EE\left[ \frac{1}{T} \sum_{t = 0}^{T-1} \|\Theta(S_t,\cdot) - \Theta^*(S_t,\cdot)\|_1^2 \right]} \leq c\cdot \sqrt{\frac{C_{\Theta^*}d}{NT \kappa} \log \frac{\theta_{\max}}{\delta} \log(NT) }, 
\$
which concludes the proof of the second statement. 
\end{proof}

\section{Proofs of Results in \S\ref{sec:iv-id}}
We provide proofs of results in \S\ref{sec:iv-id}. We first present proofs for \S\ref{sec:iv-mis-id}, then we present proofs for \S\ref{sec:iv-vf-id}. 

\subsection{Proof of Lemma \ref{lemma:iv-mis-j}}\label{prf:lemma:iv-mis-j}
\begin{proof}
We observe for any $t\in \{0, 1, \ldots, T-1\}$ that
\$
& \EE\left[ \frac{Z_t^\top A_t \pi(A_t\given S_t)}{\Delta^*(S_t,A_t) \Theta^*(S_t,Z_t)} w^\pi(S_t) R_t \Biggiven S_t \right] \\
& \quad = \sum_{a\in \cA} \EE\left[ \frac{Z_t^\top a \pi(a\given S_t) d^\pi(S_t) \ind\{A_t = a\} }{\Delta^*(S_t,a) \Theta^*(S_t,Z_t) d^b(S_t)} R(S_t, U_t, a, S_{t+1}, U_{t+1}) \Biggiven S_t \right] \\
& \quad = \sum_{a\in \cA} \EE\left[ \frac{Z_t^\top a \pi(a\given S_t) d^\pi(S_t) \ind\{A_t = a\} }{\Delta^*(S_t,a) \Theta^*(S_t,Z_t) d^b(S_t)} r(S_t, U_t, a) \Biggiven S_t \right] \\
& \quad =  \sum_{a\in \cA} \EE\left[ \frac{Z_t^\top a \pi(a\given S_t) d^\pi(S_t) \PP(A_t = a\given S_t, U_t, Z_t) }{\Delta^*(S_t, a) d^b(S_t) \Theta^*(S_t,Z_t)} r(S_t, U_t, a) \Biggiven S_t \right ] \\
& \quad = \sum_{a\in \cA}\EE\left[ \frac{\pi(a\given S_t) d^\pi(S_t) \PP(A_t = a\given S_t, U_t, Z_t=a) }{\Delta^*(S_t, a) d^b(S_t) } r(S_t, U_t, a) \Biggiven S_t \right ] \\
& \quad \quad - \sum_{a\in \cA}\sum_{z\in \cZ, z\neq a} \frac{1}{K-1}\EE\left[ \frac{\pi(a\given S_t) d^\pi(S_t) \PP(A_t = a\given S_t, U_t, Z_t=z) }{\Delta^*(S_t, a) d^b(S_t) } r(S_t, U_t, a) \Biggiven S_t \right ]  \\
& \quad = \sum_{a\in \cA} \frac{\pi(a\given S_t) d^\pi(S_t)}{d^b(S_t)} \EE_{U_t}[r(S_t, U_t, a)\given S_t], 
\$
where in the first equality, we use the definition of $w^\pi(s)$ and Assumption \ref{ass:10}; in the second equality, we use Assumption \ref{ass:iv-common}~\ref{ass:4}; in the third equality, we use Assumption \ref{ass:iv-common}~\ref{ass:5}; in the forth equality, we use Assumption \ref{ass:iv-common}~\ref{ass:iv-zu-ind}; while in the fifth equality, we use Assumption \ref{ass:iv-common}~\ref{ass:iv-compliance}. Now, by Assumption \ref{ass:10}, we know that $\EE_{U_t}[r(S_t, U_t, a)\given S_t] = \tilde r(S_t, a)$, where the function $\tilde r$ is independent of $t$. Therefore, we have 
\$
& \EE\left[ \frac{1}{T} \sum_{t = 0}^{T-1} \frac{Z_t^\top A_t \pi(A_t\given S_t)}{\Delta^*(S_t,A_t) \Theta^*(S_t,Z_t)} w^\pi(S_t) R_t \right]  \\
& \qquad = \frac{1}{T} \sum_{t = 0}^{T-1} \EE\left[ \sum_{a\in \cA} \frac{\pi(a\given S_t) d^\pi(S_t)}{d^b(S_t)} \tilde r(S_t, a) \right ]  \\
& \qquad = \frac{1}{T} \sum_{t = 0}^{T-1} \sum_{a\in \cA} \int \frac{\pi(a\given s) d^\pi(s)}{d^b(s)} \tilde r(s, a) p_t^b(s) \ud s   \\
& \qquad = \sum_{a\in \cA} \int \pi(a\given s) d^\pi(s) \tilde r(s, a) \ud s = J(\pi), 
\$
which concludes the proof of the lemma. 
\end{proof}

\subsection{Proof of Lemma \ref{lemma:iv-est-eq}} \label{prf:lemma:iv-est-eq}
\begin{proof}
Similar to the proof of Lemma \ref{lemma:iv-mis-j} in \S\ref{prf:lemma:iv-mis-j}, we observe that
\#\label{eq:iv-f11-vf}
& \EE\left[ \frac{1}{T} \sum_{t = 0}^{T-1} \frac{Z_t^\top A_t d^\pi(S_t) \pi(A_t\given S_t)}{\Delta(S_t,A_t) d^b(S_t) P(Z_t\given S_t)} f(S_t) \right ] = \int f(s') d^\pi(s') \ud s'. 
\#
Similarly, we have
\#\label{eq:iv-f12-vf}
& \EE\left[ \frac{1}{T} \sum_{t=0}^{T-1} \frac{Z_t^\top A_t d^\pi(S_t) \pi(A_t\given S_t)}{\Delta(S_t,A_t) d^b(S_t) P(Z_t\given S_t)} f(S_{t+1}) \right] \\
& \qquad = \int f(s') d^\pi(s) \pi(a\given s) \PP(S'=s\given S=s, A=a) \ud s'\ud a \ud s.
\#
Meanwhile, by the definition of $d^\pi(s,a)$, we have
\#\label{eq:iv-f13-vf}
d^\pi(s') & = (1 - \gamma) \sum_{t = 0}^\infty \gamma^t p_t^\pi(s') \\
& = (1-\gamma) \nu(s') + (1 - \gamma) \sum_{t = 0}^\infty \gamma^{t+1} p_{t+1}^\pi(s')  \\
& = (1-\gamma) \left ( \nu(s') + \gamma  \sum_{t = 0}^\infty  \gamma^t \int \PP(S_{t+1} = s' \given S_t = s, A_t = a) \pi(a\given s) p_t^\pi(s) \ud s \ud a \right )  \\
& = (1 - \gamma ) \nu(s') + \gamma \int \PP(S' = s' \given S = s, A = a) \pi(a\given s) d^\pi(s) \ud s \ud a, 
\#
where we use the assumption that $S_{t+1}\given (S_t, A_t)$ is time-homogeneous. 
Combining \eqref{eq:iv-f11-vf} and \eqref{eq:iv-f12-vf}, we have
\$
& \EE\left[ \frac{1}{T} \sum_{t=0}^{T-1} \frac{Z_t^\top A_t d^\pi(S_t) \pi(A_t\given S_t)}{\Delta(S_t,A_t) d^b(S_t) P(Z_t\given S_t)} \left ( f(S_t) -  \gamma  f(S_{t+1})  \right ) \right] \\
& \qquad = \int f(s') \left( d^\pi(s')  - \gamma \int d^\pi(s) \pi(a\given s) \PP(S'=s'\given S=s,A=a) \ud s \ud a \right)  \ud s'  \\
& \qquad = (1-\gamma) \int f(s') \nu(s') \ud s' = (1-\gamma) \EE_{S\sim \nu}\left[f(S)\right],
\$
where we use \eqref{eq:iv-f13-vf} in the forth equality. This concludes the proof of the lemma.
\end{proof}

\subsection{Proof of Lemma \ref{lemma:iv-vf-id}}
\label{prf:lemma:iv-vf-id}
\begin{proof}
Similar to the proof of Lemma \ref{lemma:iv-mis-j} in \S\ref{prf:lemma:iv-mis-j}, we observe that 
\#\label{eq:iv-feb71}
& \EE \left[ \frac{Z_0^\top A_0 \pi(A_0 \given S_0)}{\Delta^*(S_0, A_0) P(Z_0 \given S_0)  }  R_0 \Biggiven S_0 = s \right] \\
& \qquad = \EE_{U_0}\left [ \sum_{a\in \cA} \pi(a\given S_0) r(S_0, U_0, a)\Biggiven S_0 = s \right] = \EE_\pi[R_0 \given S_0 = s], 
\#
where $\EE_\pi[\cdot]$ denotes that the expectation is taken w.r.t. $A_0 \sim \pi(\cdot \given S_0)$. 
For notational convenience, we denote by 
\#\label{eq:def-rhot}
\rho_t = \frac{Z_t^\top A_t \pi(A_t \given S_t)}{\Delta^*(S_t, A_t) \Theta^*(Z_t \given S_t)}. 
\#
Similarly, by Assumption \ref{ass:10}, we observe that
\$
\EE \left[ \rho_0 \rho_1 R_1 \Biggiven S_0 = s \right] &  = \EE\left [ \sum_{a\in \cA} \pi(a\given S_0) \EE\left[ \rho_1  R_1 \Biggiven S_0, U_0, A_0 = a \right] \Biggiven S_0 = s \right]  \\
& = \EE\left [ \sum_{a\in \cA} \pi(a\given S_0) \EE\left[ \EE\left[ \rho_1  R_1 \Biggiven S_1, U_1 \right] \Biggiven S_0, U_0, A_0 = a \right] \Biggiven S_0 = s \right]  \\
& = \EE\left [ \sum_{a\in \cA} \pi(a\given S_0) \EE\left[ \EE_\pi[R_1 \given S_1, U_1] \given S_0, U_0, A_0 = a \right] \Biggiven S_0 = s \right] \\
& = \EE_\pi[R_1 \given S_0 = s]. 
\$
Now, by induction, it holds for any $t \geq 0$ that 
\$
\EE\left[ R_t \cdot \prod_{j = 0}^t \rho_j \Biggiven S_0 = s \right] = \EE_\pi[R_t \given S_0 = s], 
\$
which implies that 
\$
V^\pi(s) = \EE\left[ \sum_{t = 0}^\infty \gamma^t R_t \prod_{j = 0}^t \rho_j  \Biggiven S_0 = s \right]. 
\$

To show that the IV-aided Bellman equation holds, by a similar argument as in \eqref{eq:iv-feb71}, we observe that
\$
\EE \left[ \rho_0  \gamma V^\pi(S_1) \Biggiven S_0 = s \right] & = \EE \left[\rho_0 \cdot \EE\left[ \sum_{t = 0}^\infty \gamma^{t+1} R_{t+1} \prod_{j = 1}^{t+1} \rho_j \Biggiven S_1 \right] \Biggiven S_0 = s\right] \\
& = \EE \left[\rho_0 \EE\left[ \sum_{t = 0}^\infty \gamma^{t+1} R_{t+1} \left( \prod_{j = 1}^{t+1} \rho_j \right ) \Biggiven S_1, U_1 \right] \Biggiven S_0 = s\right] \\
& = \EE\left[ \sum_{t = 1}^\infty \gamma^{t} R_{t} \left( \prod_{j = 0}^{t} \rho_j \right ) \Biggiven S_0 = s\right],
\$
where the second equality relies on Assumption \ref{ass:10}.  By the definition of $\rho_t$ in \eqref{eq:def-rhot}, we have
\$
& \EE \left[ \frac{Z_0^\top A_0 \pi(A_0 \given S_0)}{\Delta^*(S_0, A_0) P(Z_0 \given S_0)  }  (R_0 + \gamma V^\pi(S_1) ) \Biggiven S_0 = s \right]  \\
& \qquad = \EE\left[ \sum_{t = 0}^\infty \gamma^{t+1} R_{t+1} \left( \prod_{j = 0}^{t+1} \frac{Z_j^\top A_j \pi(A_j \given S_j)}{\Delta^*(S_j, A_j) P(Z_j \given S_j)  } \right ) \Biggiven S_0 = s\right] = V^\pi(s).
\$
By Assumption \ref{ass:10} again, we can show that for any $k \geq 0$
\$
V^\pi(s) = \EE\left[ \sum_{t = k}^\infty \gamma^{t-k} R_t \left( \prod_{j = k}^t \frac{Z_j^\top A_j \pi(A_j \given S_j)}{\Delta^*(S_j, A_j) P(Z_j \given S_j)  } \right ) \bigggiven S_k = s \right],
\$
which concludes the proof of the lemma. 
\end{proof}

\subsection{Proof of Corollary \ref{cor:iv-vf-phi-0}}
\label{prf:cor:iv-vf-phi-0}
\begin{proof}
We have
\$
& \EE\left[ f(S_t) \frac{Z_t^\top A_t \pi(A_t\given S_t)}{\Delta^*(S_t,A_t) \Theta^*(S_t,Z_t)} \left(R_t + \gamma V^\pi(S_{t+1})\right) \right] \\
& \qquad = \EE\left[f(S_t) \EE\left[ \frac{Z_t^\top A_t \pi(A_t\given S_t)}{\Delta^*(S_t,A_t) \Theta^*(S_t,Z_t)} \left(R_t + \gamma V^\pi(S_{t+1})\right) \Biggiven S_t \right] \right] \\
& \qquad = \EE\left[f(S_t) V^\pi(S_t)\right],
\$
where the last equality comes from Lemma \ref{lemma:iv-vf-id}. By summing all $t\in \{0, 1, \ldots, T-1\}$, we conclude the proof of the corollary. 
\end{proof}

\section{Proofs of Results in \S\ref{sec:vf-theory}}

\subsection{Proof of Theorem \ref{thm:iv-vf}}
\label{prf:thm:iv-vf}
\begin{proof}
By the definition of $J(\pi)$ in \eqref{eq:iv-val-func}, we proceed as follows,
\#\label{eq:iv-vf-pp1}
& J(\pi^*) - J(\hat \pi_\vf)  \\
& \qquad = (1-\gamma) \EE_{S_0\sim \nu}\left [V^{\pi^*}(S_0) - V^{\hat \pi_\vf}(S_0)\right] \\
& \qquad \leq (1-\gamma) \EE_{S_0\sim \nu} \left [V^{\pi^*}(S_0) \right] - \min_{(\Delta, \Theta) \in \ci^0_{\alpha_0}\times \ci^1_{\alpha_1} } \min_{v \in \ci_{\alpha_\vf}^\vf(\Delta, \Theta, \hat \pi)} (1-\gamma) \EE_{S_0\sim \nu} \left [v(S_0) \right] \\
& \qquad \leq (1-\gamma) \EE_{S_0\sim \nu} \left [V^{\pi^*}(S_0) \right] -  \min_{ (\Delta, \Theta) \in \ci^0_{\alpha_0}\times \ci^1_{\alpha_1} } \min_{v \in \ci_{\alpha_\vf}^\vf(\Delta, \Theta, \pi^*)} (1-\gamma) \EE_{S_0\sim \nu} \left [v(S_0) \right], 
\#
where in the first inequality, we use Lemma \ref{lemma:iv-v-pi-in-conf}; while in the last inequality, we use the optimality of $\hat \pi_\vf$. It suffices to characterize the RHS of the above. 
We proceed \eqref{eq:iv-vf-pp1} as follows, 
\#\label{eq:iv-vf-pp3}
& J(\pi^*) - J(\hat \pi_\vf) \\
& \quad \leq \max_{ (\Delta, \Theta) \in \ci^0_{\alpha_0}\times \ci^1_{\alpha_1} } \max_{v \in \ci_{\alpha_\vf}^\vf(\Delta, \Theta, \pi^*)} \left | (1-\gamma) \EE_{S_0\sim \nu}[v(S_0)] - (1-\gamma) \EE_{S_0\sim \nu}\left[ V^{\pi^*}(S_0) \right]  \right |. 
\#
Meanwhile, by Lemmas \ref{lemma:iv-mis-j} and \ref{lemma:iv-est-eq}, we have
\#\label{eq:iv-vf-pp2}
& (1-\gamma) \EE_{S_0\sim \nu} \left [ V^{\pi^*}(S_0) \right] = J(\pi^*) = \EE\left[ \frac{1}{T} \sum_{t = 0}^{T-1}  w^{\pi^*}(S_t) \frac{Z_t^\top A_t \pi^*(A_t\given S_t)}{\Delta^*(S_t,A_t) \Theta^*(S_t,Z_t)} R_t \right], \\
& (1-\gamma) \EE_{S_0\sim \nu} \left [ v(S_0) \right] = \EE\left[ \frac{1}{T} \sum_{t = 0}^{T-1} w^{\pi^*}(S_t) \frac{Z_t^\top A_t \pi^*(A_t\given S_t)}{\Delta^*(S_t,A_t) \Theta^*(S_t,Z_t)} \left ( v(S_t) - \gamma  v(S_{t+1})  \right ) \right]. 
\#
Now, by plugging \eqref{eq:iv-vf-pp2} into the RHS of \eqref{eq:iv-vf-pp3}, we obtain
\$
& J(\pi^*) - J(\hat \pi_\vf) \leq \max_{ (\Delta, \Theta) \in \ci^0_{\alpha_0}\times \ci^1_{\alpha_1} } \max_{v \in \ci_{\alpha_\vf}^\vf(\Delta, \Theta, \pi^*)} \left | \Phi_\vf^{\pi^*}(v, w^{\pi^*}; \Delta^*, \Theta^*) \right |.
\$
By continuing the above computation, we have
\$
& J(\pi^*) - J(\hat \pi_\vf) \\
& \quad \leq \max_{ (\Delta, \Theta) \in \ci^0_{\alpha_0}\times \ci^1_{\alpha_1} } \max_{v \in \ci_{\alpha_\vf}^\vf(\Delta, \Theta, \pi^*)} \max_{g\in \cW} \left | \Phi_\vf^{\pi^*}(v, g; \Delta^*, \Theta^*) \right |  \\
& \quad =  \max_{ (\Delta, \Theta) \in \ci^0_{\alpha_0}\times \ci^1_{\alpha_1} } \max_{v \in \ci_{\alpha_\vf}^\vf(\Delta, \Theta, \pi^*)} \max_{g\in \cW} \max \left \{ \Phi_\vf^{\pi^*}(v, g; \Delta^*, \Theta^*), -\Phi_\vf^{\pi^*}(v, g; \Delta^*, \Theta^*) \right \}  \\
& \quad =  \max_{ (\Delta, \Theta) \in \ci^0_{\alpha_0}\times \ci^1_{\alpha_1} } \max_{v \in \ci_{\alpha_\vf}^\vf(\Delta, \Theta, \pi^*)} \max_{g\in \cW} \max \left \{ \Phi_\vf^{\pi^*}(v, g; \Delta^*, \Theta^*), \Phi_\vf^{\pi^*}(v, -g; \Delta^*, \Theta^*) \right \}  \\
& \quad =  \max_{ (\Delta, \Theta) \in \ci^0_{\alpha_0}\times \ci^1_{\alpha_1} } \max_{v \in \ci_{\alpha_\vf}^\vf(\Delta, \Theta, \pi^*)} \max_{g\in \cW} \Phi_\vf^{\pi^*}(v, g; \Delta^*, \Theta^*)  \\
& \quad \leq c\cdot \frac{ C_{\Delta^*}^2 C_{\Theta^*}^2 C_*}{1-\gamma} (\xi_0 + \xi_1) L_\Pi \sqrt{\frac{1}{NT \kappa}\cdot \pdim_{\cF_0,\cF_1,\cW,\cV,\Pi} \cdot \log\frac{1}{\delta} \log(NT)}, 
\$
where in the first inequality, we use Assumption \ref{ass:iv-vf-realizable}; in the third equality, we use the fact that $\cW$ is symmetric; while in the last inequality, we use Lemma \ref{lemma:iv-v-in-conf-good}. This concludes the proof of the theorem. 
\end{proof}

\subsection{Proof of Lemma \ref{lemma:iv-v-pi-in-conf}}
\label{prf:lemma:iv-v-pi-in-conf}
\begin{proof}
By Assumption \ref{ass:iv-vf-realizable}, we know that $V^\pi\in \cV$. Thus, to show that $V^\pi \in \ci^\vf_{\alpha_\vf}(\Delta^*, \Theta^*, \pi)$ with a high probability, it suffices to show that 
\#\label{eq:iv-vpi-in-ci-wtp}
\max_{g\in \cW} \hat \Phi_\vf^\pi(V^\pi, g; \Delta^*, \Theta^*) - \max_{g\in \cW} \hat \Phi_\vf^\pi(\hat v_{\Delta^*, \Theta^*}^\pi, g; \Delta^*, \Theta^*) \leq \alpha_\vf. 
\#
In the follows, we show that \eqref{eq:iv-vpi-in-ci-wtp} holds with a high probability. 
For the simplicity of notations, we denote by $\Phi_\vf^\pi(v, g; *) = \Phi_\vf^\pi(v, g; \Delta^*, \Theta^*)$ and $\hat v^\pi_* = \hat v^\pi_{\Delta^*, \Theta^*}$ for any $(\pi,v,g)$.
Note that 
\#\label{eq:iv-ee1-vf}
& \max_{g \in \cW} \hat \Phi_\vf^\pi(V^\pi, g; *) - \max_{g \in \cW} \hat \Phi_\vf^\pi(\hat v^\pi_*, g; *)  \\
& \qquad = \max_{g \in \cW} \hat \Phi_\vf^\pi(V^\pi, g; *) - \max_{g \in \cW}  \Phi_\vf^\pi(V^\pi, g; *)  + \max_{g \in \cW}  \Phi_\vf^\pi(V^\pi, g; *) - \max_{g \in \cW}  \Phi_\vf^\pi(\hat v^\pi_*, g; *)  \\
& \qquad \qquad + \max_{g \in \cW} \Phi_\vf^\pi(\hat v^\pi_*, g; *) - \max_{g \in \cW} \hat \Phi_\vf^\pi(\hat v^\pi_*, g; *)  \\
& \qquad \leq \max_{g \in \cW} \hat \Phi_\vf^\pi(V^\pi, g; *) - \max_{g \in \cW}  \Phi_\vf^\pi(V^\pi, g; *) + \max_{g \in \cW} \Phi_\vf^\pi(\hat v^\pi_*, g; *) - \max_{g \in \cW} \hat \Phi_\vf^\pi(\hat v^\pi_*, g; *)  \\
& \qquad \leq 2 \max_{v\in \cV} \left | \max_{g \in \cW} \hat \Phi_\vf^\pi(v, g; *) - \max_{g \in \cW}  \Phi_\vf^\pi(v, g; *) \right |  \\
& \qquad \leq 2 \max_{v\in \cV} \max_{g \in \cW} \left |  \hat \Phi_\vf^\pi(v, g; *) - \Phi_\vf^\pi(v, g; *) \right |,
\#
where in the first inequality, we use the fact that $\max_{g \in \cW}  \Phi_\vf^\pi(V^\pi, g; *) = 0$ while $\max_{g \in \cW}  \Phi_\vf^\pi(v, g; *) \geq 0$ for any $v$. In the meanwhile, by Theorem \ref{thm:hoeffding-mixing}, with probability at least $1 - \delta$, it holds for any $(\pi,v,g)\in \Pi \times \cV \times \cW$ that
\#\label{eq:iv-ee2-vf}
\left |  \hat \Phi_\vf^\pi(v, g; *) - \Phi_\vf^\pi(v, g; *) \right | \leq c\cdot \frac{C_{\Delta^*} C_{\Theta^*} C_*}{1-\gamma} \sqrt{\frac{\pdim_{\cW,\cV,\Pi}}{NT\kappa} \cdot \log\frac{1}{\delta} \log(NT) }, 
\#
where we use Assumption \ref{ass:upper-bound-delta} and $\|g\|_\infty \leq C_*$ for any $g \in \cW$. 
Now, combining \eqref{eq:iv-ee1-vf} and \eqref{eq:iv-ee2-vf}, with probability at least $1 - \delta$, we have
\$
& \max_{g \in \cW} \hat \Phi_\vf^\pi(V^\pi, g; *) - \max_{g \in \cW} \hat \Phi_\vf^\pi(\hat v^\pi_*, g; *) \leq c\cdot \frac{C_{\Delta^*} C_{\Theta^*} C_*}{1-\gamma} \sqrt{\frac{\pdim_{\cW,\cV,\Pi}}{NT\kappa} \cdot \log\frac{1}{\delta} \log(NT) } = \alpha_\vf, 
\$
which implies that $V^\pi \in \ci^\vf_{\alpha_\vf}(\Delta^*, \Theta^*, \pi)$ for any $\pi\in \Pi$. This concludes the proof of the lemma. 
\end{proof}

\subsection{Proof of Lemma \ref{lemma:iv-v-in-conf-good}}
\label{prf:lemma:iv-v-in-conf-good}
\begin{proof}
Since $v \in \cup_{(\Delta, \Theta) \in \ci^0_{\alpha_0} \times \ci^1_{\alpha_1}} \ci^\vf_{\alpha_\vf}(\Delta, \Theta, \pi)$, there exists a pair $(\tilde \Delta, \tilde\Theta) \in \ci^0_{\alpha_0} \times \ci^1_{\alpha_1}$ such that $v \in \ci^\vf_{\alpha_\vf}(\tilde \Delta, \tilde \Theta, \pi)$. For the simplicity of notations, we denote by 
\#\label{eq:iv-dd0-vf}
\tilde v \in \argmin_{v\in \cV} \max_{g\in \cW} \hat \Phi^\pi_\vf(v, g; \tilde \Delta, \tilde \Theta),
\#
i.e., $\tilde v = \hat v^\pi_{\tilde \Delta, \tilde \Theta}$, which is defined in \eqref{eq:v-pi-def}. By the definition of $\tilde v$ and $v \in \ci^\vf_{\alpha_\vf}(\tilde \Delta, \tilde \Theta, \pi)$, we know that 
\#\label{eq:iv-dd1-vf}
\max_{g\in \cW} \hat \Phi^\pi_\vf(v,g;\tilde \Delta, \tilde \Theta) - \max_{g\in \cW} \hat \Phi^\pi_\vf(\tilde v,g;\tilde \Delta, \tilde \Theta) \leq \alpha_\vf. 
\#
Note that 
\#\label{eq:iv-dd2-vf}
& \max_{g\in \cW} \Phi^\pi_\vf(v,g;\Delta^*, \Theta^*)  \\
& \qquad = \max_{g\in \cW} \Phi^\pi_\vf(v,g;\Delta^*, \Theta^*) - \max_{g\in \cW} \hat \Phi^\pi_\vf(v,g;\Delta^*, \Theta^*) + \max_{g\in \cW} \hat \Phi^\pi_\vf(v,g;\Delta^*, \Theta^*)  \\
& \qquad \qquad - \max_{g\in \cW} \hat \Phi^\pi_\vf(v,g;\tilde \Delta, \tilde \Theta) + \max_{g\in \cW} \hat \Phi^\pi_\vf(v,g;\tilde \Delta, \tilde \Theta) - \max_{g\in \cW} \hat \Phi^\pi_\vf(\tilde v,g;\tilde \Delta, \tilde \Theta)  \\
& \qquad \qquad + \max_{g\in \cW} \hat \Phi^\pi_\vf(\tilde v,g;\tilde \Delta, \tilde \Theta) - \max_{g\in \cW} \Phi^\pi_\vf(\tilde v,g;\tilde \Delta, \tilde \Theta) + \max_{g\in \cW} \Phi^\pi_\vf(\tilde v,g;\tilde \Delta, \tilde \Theta)  \\
& \qquad \leq 2 \underbrace{\max_{(v,g,\Delta,\Theta)\in (\cV, \cW, \cF_0, \cF_1)} \left | \Phi^\pi_\vf(v,g;\Delta, \Theta) - \hat \Phi^\pi_\vf(v,g;\Delta, \Theta) \right|}_{\text{Term (I)}} + \underbrace{\max_{g\in \cW} \Phi^\pi_\vf(\tilde v,g;\tilde \Delta, \tilde \Theta)}_{\text{Term (II)}}  \\
& \qquad \qquad + \underbrace{\max_{g\in \cW} \left | \hat \Phi^\pi_\vf(v,g;\Delta^*, \Theta^*) - \hat \Phi^\pi_\vf(v,g;\tilde \Delta, \tilde \Theta) \right|}_{\text{Term (III)}} + \alpha_\vf,
\#
where we use \eqref{eq:iv-dd1-vf} in the last inequality. Now we upper bound terms (I), (II), and (III) on the RHS of \eqref{eq:iv-dd2-vf}. 

\vskip5pt
\noindent\textbf{Upper Bounding Term (I).}
By Theorem \ref{thm:hoeffding-mixing}, with probability at least $1 - \delta$, it holds for any $(v,g,\Delta,\Theta, \pi) \in (\cV, \cW, \cF_0, \cF_1, \Pi)$ that
\$
\left |  \hat \Phi_\vf^\pi(v, g; \Delta, \Theta) - \Phi_\vf^\pi(v, g; \Delta, \Theta) \right | \leq c\cdot \frac{C_{\Delta^*} C_{\Theta^*} C_*}{1-\gamma} \sqrt{\frac{\pdim_{\cF_0,\cF_1,\cW,\cV, \Pi}}{NT \kappa}\log\frac{1}{\delta} \log(NT)},
\$
which implies that with probability at least $1 - \delta$, we have
\#\label{eq:iv-dd3-vf}
\text{Term (I)} \leq c\cdot \frac{C_{\Delta^*} C_{\Theta^*} C_*}{1-\gamma} \sqrt{\frac{\pdim_{\cF_0,\cF_1,\cW,\cV,\Pi}}{NT \kappa}\log\frac{1}{\delta}\log(NT) }.
\#

\vskip5pt
\noindent\textbf{Upper Bounding Term (II).} 
We introduce the following lemma to help upper bound term (II).

\begin{lemma}
\label{lemma:iv-any-v-hat-small-loss}
Suppose $\alpha_0$ and $\alpha_1$ are defined in Assumption \ref{ass:iv-sl-res}. 
With probability at least $1 - \delta$, for any $(\Delta, \Theta, \pi) \in \ci^0_{\alpha_0} \times \ci^1_{\alpha_1} \times \Pi$, we have
\$
\max_{g\in \cW} \Phi_\vf^\pi(\hat v^\pi_{\Delta, \Theta}, g; \Delta, \Theta) \leq c\cdot \frac{C_{\Delta^*}^2 C_{\Theta^*}^2 C_*}{1-\gamma} (\xi_0 + \xi_1) L_\Pi \sqrt{\frac{\pdim_{\cF_0,\cF_1,\cW,\cV,\Pi}}{NT \kappa}\log\frac{1}{\delta}\log(NT) },
\$
where $\hat v^\pi_{\Delta, \Theta}$ is defined in \eqref{eq:v-pi-def}, $\xi_0$ and $\xi_1$ are constants defined in Assumption \ref{ass:iv-sl-res}. 
\end{lemma}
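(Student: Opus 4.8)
The plan is to control the population $\vf$-loss $\max_{g\in\cW}\Phi^\pi_\vf(\hat v^\pi_{\Delta,\Theta},g;\Delta,\Theta)$ by separating it into a purely statistical fluctuation, handled by uniform concentration, and a nuisance-bias piece that measures how far the plugged-in pair $(\Delta,\Theta)$ sits from the truth $(\Delta^*,\Theta^*)$. Two structural facts drive everything. First, since $V^\pi\in\cV$ by the realizability part of Assumption \ref{ass:iv-vf-realizable}, the minimax optimality of $\hat v^\pi_{\Delta,\Theta}$ in \eqref{eq:v-pi-def} gives $\max_{g\in\cW}\hat\Phi^\pi_\vf(\hat v^\pi_{\Delta,\Theta},g;\Delta,\Theta)\le\max_{g\in\cW}\hat\Phi^\pi_\vf(V^\pi,g;\Delta,\Theta)$, so the true value function is an admissible competitor in the empirical problem. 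Second, by the estimating equation in Corollary \ref{cor:iv-vf-phi-0} we have $\Phi^\pi_\vf(V^\pi,g;\Delta^*,\Theta^*)=0$ for every $g$, so evaluated at the correct nuisances the competitor has zero population loss and the only residual comes from the mismatch between $(\Delta,\Theta)$ and $(\Delta^*,\Theta^*)$.

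Carrying this out, I would add and subtract the empirical and population versions of $\Phi^\pi_\vf(V^\pi,\cdot\,;\Delta,\Theta)$ and use the optimality inequality above to arrive at the bound
$$\max_{g\in\cW}\Phi^\pi_\vf(\hat v^\pi_{\Delta,\Theta},g;\Delta,\Theta)\le 2\max_{v\in\cV}\max_{g\in\cW}\bigl|\hat\Phi^\pi_\vf(v,g;\Delta,\Theta)-\Phi^\pi_\vf(v,g;\Delta,\Theta)\bigr|+\max_{g\in\cW}\Phi^\pi_\vf(V^\pi,g;\Delta,\Theta).$$
The first term is a uniform deviation over $\cV\times\cW\times\cF_0\times\cF_1\times\Pi$; invoking the maximal inequality for geometrically ergodic sequences (Theorem \ref{thm:hoeffding-mixing}, which rests on Assumption \ref{ass:ergodic}) together with the metric-entropy bounds of Assumption \ref{ass:spaces}\ref{ass:bounded-covering} and the boundedness of $g$, $v$, and the weights $1/(\Delta\Theta)$ granted by Assumptions \ref{ass:upper-bound-delta} and \ref{ass:bounded-mle}, it is controlled by $c\,C_{\Delta^*}C_{\Theta^*}C_*(1-\gamma)^{-1}\sqrt{\pdim_{\cF_0,\cF_1,\cW,\cV,\Pi}(NT\kappa)^{-1}\log(1/\delta)\log(NT)}$, uniformly over $(\Delta,\Theta,\pi)$. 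Making this uniform over the policy class is where I would use the Lipschitz bound of Assumption \ref{ass:spaces}\ref{ass:lip} to cover the induced value-function family $\{V^\pi:\pi\in\Pi\}$, which is the source of the $L_\Pi$ factor.

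For the nuisance-bias term I would exploit $\Phi^\pi_\vf(V^\pi,g;\Delta^*,\Theta^*)=0$ to write $\max_{g}\Phi^\pi_\vf(V^\pi,g;\Delta,\Theta)=\max_{g}[\Phi^\pi_\vf(V^\pi,g;\Delta,\Theta)-\Phi^\pi_\vf(V^\pi,g;\Delta^*,\Theta^*)]$, so the integrand carries the factor $1/(\Delta(S_t,A_t)\Theta(S_t,Z_t))-1/(\Delta^*(S_t,A_t)\Theta^*(S_t,Z_t))$. Using the uniform lower bounds on $\Delta,\Theta,\Delta^*,\Theta^*$ this reciprocal difference is at most $C_{\Delta^*}^2C_{\Theta^*}^2(|\Delta-\Delta^*|+|\Theta-\Theta^*|)$ up to a constant, while $|g|\le C_*$, $|Z_t^\top A_t|\le1$, $\pi\le1$, and $|R_t+\gamma V^\pi(S_{t+1})|\le(1-\gamma)^{-1}$ bound the remaining factors. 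Marginalizing the action and the instrument given $S_t$ turns the pointwise errors into the $\ell_1$ deviations $\|\Delta^*(S_t,\cdot)-\Delta(S_t,\cdot)\|_1$ and $\|\Theta^*(S_t,\cdot)-\Theta(S_t,\cdot)\|_1$, and an application of Jensen (or Cauchy--Schwarz) followed by the $L^2$ guarantees of Assumption \ref{ass:iv-sl-res} yields a bound of order $C_{\Delta^*}^2C_{\Theta^*}^2C_*(1-\gamma)^{-1}(\xi_0+\xi_1)\sqrt{\pdim_{\cF_0,\cF_1}(NT\kappa)^{-1}\log(1/\delta)}$, valid simultaneously for all $(\Delta,\Theta)\in\ci^0_{\alpha_0}\times\ci^1_{\alpha_1}$ on the event of Assumption \ref{ass:iv-sl-res}. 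Combining the two pieces and absorbing the smaller-order fluctuation term into the dominant nuisance-bias rate gives the claimed bound.

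I expect the main obstacle to be the nuisance-bias step: relating the pointwise reciprocal differences, which live on $\cS\times\cA\times\cZ$, to the population $L^2$ quantities over the state marginal that appear in Assumption \ref{ass:iv-sl-res} requires a careful marginalization (handling the action and instrument integration so that only $\|\Delta^*(S_t,\cdot)-\Delta(S_t,\cdot)\|_1$-type terms survive) and a correct matching of the squared exponents through Cauchy--Schwarz. A secondary technical point is ensuring that the concentration and entropy arguments are genuinely uniform over $\Pi$ and over the confidence sets, so that a single high-probability event supports the bound for every $(\Delta,\Theta,\pi)$ at once; this is precisely where the geometric-ergodicity maximal inequality and the Lipschitz covering of $\{V^\pi\}$ do the work.
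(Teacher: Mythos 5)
Your proposal is correct and follows essentially the same route as the paper's proof: the same three-way decomposition (uniform empirical--population deviation over $\cV\times\cW$ handled by Theorem \ref{thm:hoeffding-mixing}, the minimax optimality of $\hat v^\pi_{\Delta,\Theta}$ against the realizable competitor $V^\pi$, and the nuisance-bias term $\max_{g}\lvert\Phi_\vf^\pi(V^\pi,g;\Delta,\Theta)-\Phi_\vf^\pi(V^\pi,g;\Delta^*,\Theta^*)\rvert$ controlled via the reciprocal-difference bound, marginalization to $\ell_1$ deviations, Cauchy--Schwarz, and Assumption \ref{ass:iv-sl-res}). The paper's argument in \S\ref{prf:lemma:iv-any-v-hat-small-loss} is exactly this, so no gap remains.
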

\begin{proof}
See \S\ref{prf:lemma:iv-any-v-hat-small-loss} for a detailed proof. 
\end{proof}

By the definition of $\tilde v$ in \eqref{eq:iv-dd0-vf} and Lemma \ref{lemma:iv-any-v-hat-small-loss}, with probability at least $1 - \delta$, we have
\#\label{eq:iv-dd4-vf}
\text{Term (II)} \leq  c\cdot \frac{C_{\Delta^*}^2 C_{\Theta^*}^2 C_*}{1-\gamma} (\xi_0 + \xi_1) L_\Pi \sqrt{\frac{\pdim_{\cF_0,\cF_1,\cW,\cV,\Pi}}{NT \kappa}\log\frac{1}{\delta} \log(NT)}.
\#

\vskip5pt
\noindent\textbf{Upper Bounding Term (III).} 
Note that 
\#\label{eq:iv-dd5-vf}
& \left | \hat \Phi^\pi_\vf(v,g;\Delta^*, \Theta^*) - \hat \Phi^\pi_\vf(v,g;\tilde \Delta, \tilde \Theta) \right|  \\
& \leq \left | \left(\hat \EE - \EE\right) \left[ \frac{1}{T} \sum_{t = 0}^{T-1} g(S_t) \left ( \frac{Z_t^\top A_t\pi(A_t\given S_t) }{\Delta^*(S_t,A_t)\Theta^*(S_t,Z_t)} - \frac{Z_t^\top A_t\pi(A_t\given S_t)}{\tilde \Delta(S_t,A_t) \tilde \Theta(S_t,Z_t)} \right) \left( R_t + \gamma v(S_{t+1}) \right) \right ] \right |  \\
& \qquad + \left | \EE \left[ \frac{1}{T} \sum_{t = 0}^{T-1} g(S_t) \left ( \frac{Z_t^\top A_t\pi(A_t\given S_t) }{\Delta^*(S_t,A_t)\Theta^*(S_t,Z_t)} - \frac{Z_t^\top A_t\pi(A_t\given S_t)}{\tilde \Delta(S_t,A_t) \tilde \Theta(S_t,Z_t)} \right) \left( R_t + \gamma v(S_{t+1}) \right) \right ] \right |. 
\#
For the first term on the RHS of \eqref{eq:iv-dd5-vf}, by Theorem \ref{thm:hoeffding-mixing}, with probability at least $1- \delta$, it holds for any $(v,g,\pi)\in \cV\times \cW \times \Pi$ that
\#\label{eq:iv-dd5-vf-1}
& \left | \left(\hat \EE - \EE\right) \left[ \frac{1}{T} \sum_{t = 0}^{T-1} g(S_t) \left ( \frac{Z_t^\top A_t\pi(A_t\given S_t) }{\Delta^*(S_t,A_t)\Theta^*(S_t,Z_t)} - \frac{Z_t^\top A_t\pi(A_t\given S_t)}{\tilde \Delta(S_t,A_t) \tilde \Theta(S_t,Z_t)} \right) \left( R_t + \gamma v(S_{t+1}) \right) \right ] \right |  \\
& \qquad \leq c\cdot \frac{C_{\Delta^*} C_{\Theta^*} C_*}{1-\gamma} \sqrt{\frac{\pdim_{\cF_0,\cF_1,\cW,\cV,\Pi}}{NT \kappa}\log\frac{1}{\delta} \log(NT)}. 
\#
For the second term on the RHS of \eqref{eq:iv-dd5-vf}, with probability at least $1- \delta$, it holds that 
\#\label{eq:iv-dd5-vf-2}
& \left | \EE \left[ \frac{1}{T} \sum_{t = 0}^{T-1} g(S_t) \left ( \frac{Z_t^\top A_t\pi(A_t\given S_t) }{\Delta^*(S_t,A_t)\Theta^*(S_t,Z_t)} - \frac{Z_t^\top A_t\pi(A_t\given S_t)}{\tilde \Delta(S_t,A_t) \tilde \Theta(S_t,Z_t)} \right) \left( R_t + \gamma v(S_{t+1}) \right) \right ] \right |  \\
& \qquad \leq \frac{C_*}{1-\gamma} \EE\left[ \frac{1}{T} \sum_{t = 0}^{T-1} \left | \frac{1}{\tilde \Delta(S_t,A_t) \tilde \Theta(S_t,Z_t)} - \frac{1}{\Delta^*(S_t,A_t) \Theta^*(S_t,Z_t)} \right| \right ]  \\
& \qquad = \frac{C_*}{1-\gamma} \EE\left[ \frac{1}{T} \sum_{t = 0}^{T-1} \left | \frac{\Theta^*(S_t,Z_t) - \tilde \Theta(S_t,Z_t)}{\tilde \Delta(S_t,A_t)\tilde \Theta(S_t,Z_t)\Theta^*(S_t,Z_t)} - \frac{\Delta^*(S_t,A_t) - \tilde \Delta(S_t,A_t)}{\Delta^*(S_t,A_t)\tilde \Delta(S_t,A_t) \Theta^*(S_t,Z_t)} \right| \right ]  \\
& \qquad \leq \frac{C_{\Delta^*} C_{\Theta^*} C_*}{1-\gamma} \Bigg( C_{\Theta^*} \EE\left[ \frac{1}{T} \sum_{t = 0}^{T-1} \left \| \Delta^*(S_t,\cdot) - \tilde \Delta(S_t,\cdot) \right \|_1  \right ]  \\
& \qquad \qquad \qquad \qquad \qquad + C_{\Delta^*} \EE\left[ \frac{1}{T} \sum_{t = 0}^{T-1} \left \| \Theta^*(S_t,\cdot ) - \tilde \Theta(S_t,\cdot )\right\|_1  \right ] \Bigg)  \\
& \qquad \leq \frac{C_{\Delta^*} C_{\Theta^*} C_*}{1-\gamma} \Bigg( C_{\Theta^*} \sqrt{\EE\left[ \frac{1}{T} \sum_{t = 0}^{T-1} \left \| \Delta^*(S_t,\cdot) - \tilde \Delta(S_t,\cdot) \right \|_1^2  \right ]} \\
& \qquad \qquad \qquad \qquad \qquad + C_{\Delta^*} \sqrt{\EE\left[ \frac{1}{T} \sum_{t = 0}^{T-1} \left \| \Theta^*(S_t,\cdot ) - \tilde \Theta(S_t,\cdot )\right\|_1^2  \right ]} \Bigg)  \\
& \qquad \leq \frac{C_{\Delta^*} C_{\Theta^*} C_*}{1-\gamma} \Bigg(\xi_0 C_{\Theta^*} \sqrt{\frac{C_{\Delta^*}}{NT \kappa} \pdim_{\cF_0}\log\frac{1}{\delta} \log(NT)}  + \xi_1 C_{\Delta^*} \sqrt{\frac{C_{\Theta^*}}{NT \kappa} \pdim_{\cF_1} \log\frac{1}{\delta} \log(NT)} \Bigg),
\#
where in the first inequality, we use the fact that $\|v\|_\infty \leq 1/ (1 - \gamma)$ and $\|g\|_\infty \leq C_*$; in the third inequality, we use Cauchy-Schwarz inequality; while in the last inequality, we use Assumption \ref{ass:iv-sl-res} with the fact that $(\tilde \Delta, \tilde\Theta) \in \ci^0_{\alpha_0} \times \ci^1_{\alpha_1}$. Now, by plugging \eqref{eq:iv-dd5-vf-1} and \eqref{eq:iv-dd5-vf-2} into \eqref{eq:iv-dd5-vf}, with probability at least $1 - \delta$,  it holds for any $v \in \cup_{(\Delta, \Theta) \in \ci^0_{\alpha_0} \times \ci^1_{\alpha_1}} \ci^\vf_{\alpha_\vf}(\Delta, \Theta, \pi)$, $g\in \cW$, and $\pi \in \Pi$ that 
\#\label{eq:879434}
& \left | \hat \Phi^\pi_\vf(v,g;\Delta^*, \Theta^*) - \hat \Phi^\pi_\vf(v,g;\tilde \Delta, \tilde \Theta) \right|  \\
& \qquad \leq c\cdot \frac{C_{\Delta^*}^2 C_{\Theta^*}^2 C_*}{1-\gamma} (\xi_0 + \xi_1) \sqrt{\frac{1}{NT\kappa}\cdot \pdim_{\cF_0,\cF_1,\cW,\cV,\Pi} \cdot \log\frac{1}{\delta} \log(NT)}. 
\#

\vskip5pt
Now, by plugging \eqref{eq:iv-dd3-vf}, \eqref{eq:iv-dd4-vf}, and \eqref{eq:879434} into 
\eqref{eq:iv-dd2-vf}, with probability at least $1 - \delta$, it holds for any $v \in \cup_{(\Delta, \Theta) \in \ci^0_{\alpha_0} \times \ci^1_{\alpha_1}} \ci^\vf_{\alpha_\vf}(\Delta, \Theta, \pi)$ and $\pi \in \Pi$ that
\$
& \max_{g\in \cW} \Phi^\pi_\vf(v,g;\Delta^*, \Theta^*) \leq c\cdot \frac{C_{\Delta^*}^2 C_{\Theta^*}^2 C_*}{1-\gamma} (\xi_0 + \xi_1) L_\Pi \sqrt{\frac{1}{NT\kappa}\cdot \pdim_{\cF_0,\cF_1,\cW,\cV,\Pi} \cdot \log\frac{1}{\delta} \log(NT)},
\$
which concludes the proof of the lemma.
\end{proof}

\subsection{Proof of Lemma \ref{lemma:iv-any-v-hat-small-loss}}
\label{prf:lemma:iv-any-v-hat-small-loss}
\begin{proof}
Note that 
\#\label{eq:iv-ee3-vf}
& \max_{g\in \cW} \Phi_\vf^\pi(\hat v^\pi_{\Delta, \Theta}, g; \Delta, \Theta)  \\
& \qquad = \max_{g\in \cW} \Phi_\vf^\pi(\hat v^\pi_{\Delta, \Theta}, g; \Delta, \Theta) - \max_{g\in \cW} \hat \Phi_\vf^\pi(\hat v^\pi_{\Delta, \Theta}, g; \Delta, \Theta) + \max_{g\in \cW} \hat \Phi_\vf^\pi(\hat v^\pi_{\Delta, \Theta}, g; \Delta, \Theta)  \\
& \qquad \qquad - \max_{g\in \cW} \hat \Phi_\vf^\pi( V^\pi, g; \Delta, \Theta)  + \max_{g\in \cW} \hat \Phi_\vf^\pi( V^\pi, g; \Delta, \Theta) - \max_{g\in \cW}  \Phi_\vf^\pi(V^\pi, g; \Delta, \Theta)  \\
& \qquad \qquad + \max_{g\in \cW}  \Phi_\vf^\pi(V^\pi, g; \Delta, \Theta) - \max_{g\in \cW}  \Phi_\vf^\pi(V^\pi, g; \Delta^*, \Theta^*)  \\
& \qquad \leq 2 \max_{v\in \cV} \max_{g\in \cW} \left |  \Phi_\vf^\pi(v, g; \Delta, \Theta) -  \hat \Phi_\vf^\pi(v, g; \Delta, \Theta) \right |  \\
& \qquad \qquad + \max_{g\in \cW} \left |  \Phi_\vf^\pi(V^\pi, g; \Delta, \Theta) -   \Phi_\vf^\pi(V^\pi, g; \Delta^*, \Theta^*) \right |,
\#
where we use the fact that $\hat v^\pi_{\Delta, \Theta} \in \argmin_{v\in \cV} \max_{g\in \cW} \hat \Phi^\pi_\vf(v, g; \Delta, \Theta)$ in the last inequality. In the meanwhile, by Theorem \ref{thm:hoeffding-mixing}, with probability at least $1 - \delta$, it holds for any $(v,g,\pi)\in \cV\times \cW \times \Pi$ that
\#\label{eq:iv-ee4-vf}
\left |  \hat \Phi_\vf^\pi(v, g; \Delta, \Theta) - \Phi_\vf^\pi(v, g; \Delta, \Theta) \right | \leq c\cdot \frac{C_{\Delta^*} C_{\Theta^*} C_*}{1-\gamma} \sqrt{\frac{\pdim_{\cV,\cW,\Pi}}{NT \kappa}\log\frac{1}{\delta} \log(NT)}.
\#
Also, we upper bound the second term on the RHS of \eqref{eq:iv-ee3-vf} with probability at least $1- \delta$ by a similar argument as in \eqref{eq:iv-dd5-vf-2}, 
\#\label{eq:iv-ee5-vf}
& \left |  \Phi_\vf^\pi(V^\pi, g; \Delta, \Theta) -   \Phi_\vf^\pi(V^\pi, g; \Delta^*, \Theta^*) \right |  \\
& \leq \frac{C_{\Delta^*} C_{\Theta^*} C_*}{1-\gamma} \left(\xi_0 C_{\Theta^*} \sqrt{\frac{C_{\Delta^*}}{NT \kappa} \pdim_{\cF_0}\log\frac{1}{\delta} \log(NT)} + \xi_1 C_{\Delta^*} \sqrt{\frac{C_{\Theta^*}}{NT \kappa} \pdim_{\cF_1} \log\frac{1}{\delta} \log(NT)} \right),
\#
where in the first inequality, we use the fact that $\|V^\pi\|_\infty \leq 1/ (1 - \gamma)$ and $\|g\|_\infty \leq C_*$; in the third inequality, we use Cauchy Schwarz inequality; while in the last inequality, we use Assumption \ref{ass:iv-sl-res} with $(\Delta, \Theta) \in \ci^0_{\alpha_0} \times \ci^1_{\alpha_1}$. 
Now, by plugging \eqref{eq:iv-ee4-vf} and \eqref{eq:iv-ee5-vf} into \eqref{eq:iv-ee3-vf}, with probability at least $1 - \delta$, it holds for any $(\Delta, \Theta, \pi) \in \ci^0_{\alpha_0} \times \ci^1_{\alpha_1} \times \Pi$ that 
\$
\max_{g\in \cW} \Phi_\vf^\pi(\hat v^\pi_{\Delta, \Theta}, g; \Delta, \Theta) \leq c\cdot \frac{C_{\Delta^*}^2 C_{\Theta^*}^2 C_*}{1-\gamma} (\xi_0 + \xi_1) L_\Pi \sqrt{\frac{\pdim_{\cF_0,\cF_1,\cW,\cV,\Pi}}{NT \kappa}\log\frac{1}{\delta} \log(NT)},
\$
which concludes the proof of the lemma.
\end{proof}

\section{Proofs of Results in \S\ref{sec:mis-theory}}
\subsection{Proof of Theorem \ref{thm:iv-mis}}
\label{prf:thm:iv-mis}
\begin{proof}
Before the proof of the theorem, we first introduce some supporting results as follows. We define the population counterpart of $\hat L_\mis(w, \pi; \Delta, \Theta)$ as  
\$
L_\mis(w, \pi; \Delta, \Theta) = \EE \left[ \frac{1}{T} \sum_{t = 0}^{T-1} \frac{Z_t^\top A_t \pi(A_t\given S_t)}{\Delta(S_t,A_t) \Theta(S_t,Z_t)} w(S_t) R_t \right]
\$ 
for any $(w, \pi, \Delta, \Theta)$.

\begin{lemma}
\label{lemma:iv-link-phi-l}
It holds for any $(\pi, w)\in \Pi \times \cW$ that
\$
L_\mis(w^\pi, \pi; \Delta^*, \Theta^*) - L_\mis(w, \pi; \Delta^*, \Theta^*) = \Phi_\mis^\pi(w, V^\pi; \Delta^*, \Theta^*),
\$
where $V^\pi$ is the state-value function defined in \eqref{eq:iv-val-func}. 
\end{lemma}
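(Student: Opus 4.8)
The plan is to expand both sides using the definitions of $L_\mis$ and $\Phi_\mis^\pi$ and then reduce everything to the IV-aided Bellman equation from Lemma \ref{lemma:iv-vf-id}. Writing out the definition of $\Phi_\mis^\pi$ with $f = V^\pi$, the leading term is $(1-\gamma)\EE_{S_0\sim\nu}[V^\pi(S_0)]$, which by the definition of $J(\pi)$ in \eqref{eq:iv-val-func} equals $J(\pi)$, and hence by Lemma \ref{lemma:iv-mis-j} equals $L_\mis(w^\pi, \pi; \Delta^*, \Theta^*)$. It therefore remains to identify the subtracted term $\EE[\frac{1}{T}\sum_{t=0}^{T-1} \frac{Z_t^\top A_t \pi(A_t\given S_t)}{\Delta^*(S_t,A_t)\Theta^*(S_t,Z_t)} w(S_t)(V^\pi(S_t) - \gamma V^\pi(S_{t+1}))]$ with $L_\mis(w, \pi; \Delta^*, \Theta^*)$.

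First I would rewrite the integrand factor as $V^\pi(S_t) - \gamma V^\pi(S_{t+1}) = R_t - \bigl(R_t + \gamma V^\pi(S_{t+1}) - V^\pi(S_t)\bigr)$. This decomposes the subtracted term into $L_\mis(w, \pi; \Delta^*, \Theta^*)$ plus a remainder built from the temporal-difference residual $R_t + \gamma V^\pi(S_{t+1}) - V^\pi(S_t)$ weighted by $\frac{Z_t^\top A_t \pi(A_t\given S_t)}{\Delta^*(S_t,A_t)\Theta^*(S_t,Z_t)} w(S_t)$.

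The key step is to show that every remainder term vanishes. Conditioning on $S_t$ and using that $w(S_t)$ is $S_t$-measurable, I would pull $w(S_t)$ outside the conditional expectation and apply the second (Bellman) statement of Lemma \ref{lemma:iv-vf-id}, namely $\EE[\frac{Z_t^\top A_t \pi(A_t\given S_t)}{\Delta^*(S_t,A_t)\Theta^*(S_t,Z_t)}(R_t + \gamma V^\pi(S_{t+1})) \given S_t] = V^\pi(S_t)$. Thus the conditional expectation of each remainder term equals $w(S_t)\bigl(V^\pi(S_t) - V^\pi(S_t)\bigr) = 0$, and by the tower property each summand over $t$ contributes nothing after taking the full expectation. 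Collecting the pieces yields $\Phi_\mis^\pi(w, V^\pi; \Delta^*, \Theta^*) = L_\mis(w^\pi, \pi; \Delta^*, \Theta^*) - L_\mis(w, \pi; \Delta^*, \Theta^*)$, which is the claim.

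There is no genuine obstacle here: the identity is essentially an algebraic rearrangement, and the substantive input is entirely supplied by Lemma \ref{lemma:iv-vf-id}. The only subtlety worth flagging is that invoking the Bellman equation termwise for \emph{every} $t$ relies on Assumption \ref{ass:10}, which guarantees that the conditional identity $\EE[\cdots \given S_t] = V^\pi(S_t)$ holds at each decision point rather than merely in a time-averaged sense; this is precisely what makes the per-$t$ cancellation valid.
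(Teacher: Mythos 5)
Your overall strategy is sound and genuinely different from the paper's. The paper's proof starts from the MIS-based estimating equation (Lemma \ref{lemma:iv-est-eq}), which gives $\Phi_\mis^{\pi}(w^\pi, V^\pi; \Delta^*, \Theta^*) = 0$, writes $\Phi_\mis^{\pi}(w, V^\pi)$ as the difference $\Phi_\mis^{\pi}(w, V^\pi) - \Phi_\mis^{\pi}(w^\pi, V^\pi)$ so that the $(1-\gamma)\EE_{S_0\sim\nu}$ terms cancel, and then, writing $\rho_t = Z_t^\top A_t \pi(A_t \given S_t)/\bigl(\Delta^*(S_t,A_t)\Theta^*(S_t,Z_t)\bigr)$, converts the $\rho_t$-weighted conditional expectation into $\EE_\pi[\cdot \given S_t]$, applies the ordinary Bellman identity $\EE_\pi[V^\pi(S_t) - \gamma V^\pi(S_{t+1}) \given S_t] = \EE_\pi[R_t \given S_t]$, and converts back. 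You instead identify the $\nu$-term with $L_\mis(w^\pi, \pi; \Delta^*, \Theta^*)$ via $J(\pi)$ and Lemma \ref{lemma:iv-mis-j}, and reduce the data term to $L_\mis(w, \pi; \Delta^*, \Theta^*)$ plus a TD-residual remainder handled by Lemma \ref{lemma:iv-vf-id}. This is a legitimate alternative decomposition: it never needs $\Phi_\mis^{\pi}(w^\pi, \cdot) = 0$, whereas the paper's proof never needs Lemma \ref{lemma:iv-mis-j}.

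However, your ``key step'' has a concrete gap. The remainder summand is $\rho_t\, w(S_t)\bigl(R_t + \gamma V^\pi(S_{t+1}) - V^\pi(S_t)\bigr)$, and \emph{every} factor in it, including $V^\pi(S_t)$, carries the weight $\rho_t$. Lemma \ref{lemma:iv-vf-id} gives $\EE[\rho_t (R_t + \gamma V^\pi(S_{t+1})) \given S_t] = V^\pi(S_t)$, but it says nothing about the other piece, $\EE[\rho_t V^\pi(S_t) \given S_t] = V^\pi(S_t)\,\EE[\rho_t \given S_t]$. Your claimed cancellation $w(S_t)\bigl(V^\pi(S_t) - V^\pi(S_t)\bigr) = 0$ therefore silently assumes $\EE[\rho_t \given S_t] = 1$, which is not a consequence of the lemma you cite. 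The fact is true, but it requires the IV machinery directly: repeating the computation in the proof of Lemma \ref{lemma:iv-mis-j} with the reward replaced by the constant $1$ --- i.e., using the simplex encoding together with Assumption \ref{ass:iv-common}\ref{ass:iv-zu-ind} (IV independence) and Assumption \ref{ass:iv-common}\ref{ass:iv-compliance} (independent compliance) --- yields $\EE[\rho_t \given S_t] = \sum_{a\in\cA} \pi(a \given S_t) = 1$. Adding this one observation closes the gap. (This subtlety is also why no such issue arises in the VF-based functional: there the $v(S_t)$ term enters \emph{without} the $\rho_t$ weight, so Corollary \ref{cor:iv-vf-phi-0} applies verbatim; in the MIS functional the test function is weighted by $\rho_t$, and the normalization $\EE[\rho_t \given S_t] = 1$ must be supplied separately.) Your final remark that the per-$t$ conditional identities rest on Assumption \ref{ass:10} is correct.
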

\begin{proof}
See \S\ref{prf:lemma:iv-link-phi-l} for a detailed proof. 
\end{proof}

\begin{lemma}
\label{lemma:iv-mis-min-delta-close}
Suppose that $(\alpha_0, \alpha_1, \alpha_\mis)$ is defined in Lemmas \ref{lemma:iv-w-pi-in-conf}. With probability at least $1 - \delta$, it holds for any $(\Delta, \Theta) \in \ci^0_{\alpha_0} \times \ci^1_{\alpha_1}$ that 
\$
& \left | \min_{w\in \ci^\mis_{\alpha_\mis}(\Delta^*, \Theta^*, \pi^*)} L_\mis(w, \pi^*; \Delta^*, \Theta^*) - \min_{w\in \ci^\mis_{\alpha_\mis}(\Delta, \Theta, \pi^*)} L_\mis(w, \pi^*; \Delta, \Theta) \right |  \\
& \qquad \leq c\cdot \frac{ C_{\Delta^*}^2 C_{\Theta^*}^2 C_*}{1-\gamma} (\xi_0  + \xi_1) \sqrt{\frac{1}{NT \kappa} \pdim_{\cF_0,\cF_1,\cW,\cV} \log\frac{1}{\delta} \log(NT)} = \varepsilon^*_L. 
\$
\end{lemma}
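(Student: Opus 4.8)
The plan is to bound both minima against the common anchor $J(\pi^*)$ and then combine via the triangle inequality. Write $m^* = \min_{w\in \ci^\mis_{\alpha_\mis}(\Delta^*, \Theta^*, \pi^*)} L_\mis(w, \pi^*; \Delta^*, \Theta^*)$ and $m = \min_{w\in \ci^\mis_{\alpha_\mis}(\Delta, \Theta, \pi^*)} L_\mis(w, \pi^*; \Delta, \Theta)$, so the target reduces to $|m^* - m| \le |m^* - J(\pi^*)| + |m - J(\pi^*)|$. The key observation is that \emph{every} element of either confidence set produces a population value of the MIS estimator that is close to $J(\pi^*)$, whence each minimum is close to $J(\pi^*)$ and the two minima are close to each other. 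This lets me avoid comparing the two differently-parameterized confidence sets directly.

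First I would control $|m^* - J(\pi^*)|$. By Lemma \ref{lemma:iv-mis-j}, $J(\pi^*) = L_\mis(w^{\pi^*}, \pi^*; \Delta^*, \Theta^*)$, and by Lemma \ref{lemma:iv-link-phi-l}, for every $w \in \cW$ we have $L_\mis(w, \pi^*; \Delta^*, \Theta^*) - J(\pi^*) = -\Phi_\mis^{\pi^*}(w, V^{\pi^*}; \Delta^*, \Theta^*)$. Since $(\Delta^*, \Theta^*) \in \ci^0_{\alpha_0}\times\ci^1_{\alpha_1}$ by Assumption \ref{ass:iv-sl-res}, every $w \in \ci^\mis_{\alpha_\mis}(\Delta^*, \Theta^*, \pi^*)$ belongs to the union appearing in Lemma \ref{lemma:iv-w-in-conf-good}. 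Because $V^{\pi^*} \in \cV$, $\cV$ is symmetric (Assumption \ref{ass:iv-mis-realizable}), and $\Phi_\mis^{\pi^*}$ is linear in its second argument, taking $f = \pm V^{\pi^*}$ yields $|\Phi_\mis^{\pi^*}(w, V^{\pi^*}; \Delta^*, \Theta^*)| \le \max_{f\in\cV}\Phi_\mis^{\pi^*}(w, f; \Delta^*, \Theta^*)$, which is at most the risk bound of Lemma \ref{lemma:iv-w-in-conf-good}. As this holds uniformly over the set, $|m^* - J(\pi^*)|$ is bounded by that same risk bound.

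Next I would control $|m - J(\pi^*)|$. For $w \in \ci^\mis_{\alpha_\mis}(\Delta, \Theta, \pi^*)$, I split $L_\mis(w, \pi^*; \Delta, \Theta) - J(\pi^*)$ into the objective-perturbation term $L_\mis(w, \pi^*; \Delta, \Theta) - L_\mis(w, \pi^*; \Delta^*, \Theta^*)$ plus $L_\mis(w, \pi^*; \Delta^*, \Theta^*) - J(\pi^*)$, the latter handled exactly as above (again using $(\Delta,\Theta) \in \ci^0_{\alpha_0}\times\ci^1_{\alpha_1}$). For the perturbation term I would use $|R_t|\le 1$, $|Z_t^\top A_t|\le 1$, $\|w\|_\infty \le C_*$, the identity $\frac{1}{\Delta\Theta} - \frac{1}{\Delta^*\Theta^*} = \frac{\Theta^*-\Theta}{\Delta\Theta\Theta^*} + \frac{\Delta^*-\Delta}{\Delta\Delta^*\Theta^*}$, the lower bounds in Assumption \ref{ass:spaces}\ref{ass:upper-bound-delta} and \ref{ass:spaces}\ref{ass:bounded-mle}, Cauchy--Schwarz, and the $L^2$ rates of Assumption \ref{ass:iv-sl-res}, precisely as in the display \eqref{eq:iv-dd5-vf-2}, to bound it by $c\,C_{\Delta^*}^2 C_{\Theta^*}^2 C_* (\xi_0+\xi_1)\sqrt{\pdim_{\cF_0,\cF_1}\log(1/\delta)\log(NT)/(NT\kappa)}$. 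This term carries no $1/(1-\gamma)$ factor and is dominated by the risk bound.

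Combining the two pieces, $|m^*-m|$ is at most twice the risk bound plus the perturbation term, which is $\varepsilon^*_L$ up to the absolute constant $c$. The \textbf{main subtlety} is recovering the complexity $\pdim_{\cF_0,\cF_1,\cW,\cV}$ \emph{without} the extra $\pdim_\Pi$ present in Lemma \ref{lemma:iv-w-in-conf-good}: since the entire argument is carried out at the single fixed policy $\pi^*$, the concentration step (Theorem \ref{thm:hoeffding-mixing}) requires no covering over $\Pi$, so re-running the proof of Lemma \ref{lemma:iv-w-in-conf-good} specialized to $\pi^*$ delivers the sharper, $\Pi$-free rate matching $\varepsilon^*_L$. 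The only genuinely new ingredient beyond the VF-side analysis is using $J(\pi^*)$ as a common anchor, which circumvents a direct comparison of $\ci^\mis_{\alpha_\mis}(\Delta^*,\Theta^*,\pi^*)$ and $\ci^\mis_{\alpha_\mis}(\Delta,\Theta,\pi^*)$.
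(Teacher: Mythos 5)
Your proposal is correct and is essentially the paper's own argument: anchoring both minima at $J(\pi^*)$ is literally the paper's three-term split, since the paper's middle anchor $L_\mis(w^{\pi^*}, \pi^*; \Delta^*, \Theta^*)$ equals $J(\pi^*)$ by Lemma \ref{lemma:iv-mis-j}, and both proofs then invoke Lemma \ref{lemma:iv-link-phi-l}, the risk bound of Lemma \ref{lemma:iv-w-in-conf-good} (using $V^{\pi^*}\in\cV$ and symmetry of $\cV$), and the Cauchy--Schwarz/Assumption \ref{ass:iv-sl-res} argument for the objective-perturbation term. Your explicit remark about re-running the concentration step at the fixed policy $\pi^*$ to obtain the $\Pi$-free complexity $\pdim_{\cF_0,\cF_1,\cW,\cV}$ is in fact more careful than the paper, which cites Lemma \ref{lemma:iv-w-in-conf-good} (whose bound carries $\pdim_{\cF_0,\cF_1,\cW,\cV,\Pi}$) yet states the $\Pi$-free rate without comment.
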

\begin{proof}
See \S\ref{prf:lemma:iv-mis-min-delta-close} for a detailed proof. 
\end{proof}

\begin{lemma}
\label{lemma:iv-mis-hatl-close}
With probability at least $1 - \delta$, it holds for any $(w, \Delta, \Theta, \pi)\in \cW \times \cF_0 \times \cF_1 \times \Pi$ that 
\$
& \left | L_\mis(w, \pi; \Delta, \Theta) - \hat L_\mis(w, \pi; \Delta, \Theta) \right | \\
& \qquad \leq c\cdot C_{\Delta^*} C_{\Theta^*} C_* \sqrt{\frac{1}{NT \kappa} \pdim_{\cF_0,\cF_1,\cW,\Pi} \log\frac{1}{\delta} \log(NT)} = \hat \varepsilon_L. 
\$
\end{lemma}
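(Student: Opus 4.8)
The plan is to apply the maximal Hoeffding-type inequality for geometrically ergodic sequences (Theorem \ref{thm:hoeffding-mixing}) directly to the family of per-step integrands indexed by $(w,\Delta,\Theta,\pi)\in\cW\times\cF_0\times\cF_1\times\Pi$, since $L_\mis - \hat L_\mis$ is precisely the gap between the population and empirical averages of a single test function. Concretely, I would set
$$
\psi_{w,\Delta,\Theta,\pi}(S,Z,A,R) = \frac{Z^\top A\,\pi(A\given S)}{\Delta(S,A)\,\Theta(S,Z)}\, w(S)\, R,
$$
so that $L_\mis(w,\pi;\Delta,\Theta) = \EE\bigl[\tfrac1T\sum_{t=0}^{T-1}\psi_{w,\Delta,\Theta,\pi}(S_t,Z_t,A_t,R_t)\bigr]$ and $\hat L_\mis$ is the identical quantity with $\hat\EE$ in place of $\EE$. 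Note that $V^\pi$ and the class $\cV$ never enter $L_\mis$, which is exactly why only $\pdim_{\cF_0,\cF_1,\cW,\Pi}$ appears in the target bound, unlike the min--max lemmas.

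First I would establish a uniform sup-norm bound on the integrand. By the simplex encoding \eqref{eq:iv-simplex-encoding} we have $|Z^\top A|\le 1$, while $\pi(A\given S)\le 1$ and $|R|\le 1$ by Assumption \ref{ass:MDP}\ref{ass:iv-reward}. Combining these with the lower bounds $|\Delta(S,A)|\ge C_{\Delta^*}^{-1}$ and $\Theta(S,Z)\ge C_{\Theta^*}^{-1}$ from Assumption \ref{ass:spaces}\ref{ass:bounded-mle}, and with $\|w\|_\infty\le C_*$ from Assumption \ref{ass:spaces}\ref{ass:function-bound}, the integrand is uniformly bounded by $C_{\Delta^*}C_{\Theta^*}C_*$. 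The same lower bounds make $1/\Delta$ and $1/\Theta$ Lipschitz in their arguments, so a sup-norm $\vareps$-net on $\cW\times\cF_0\times\cF_1\times\Pi$ perturbs $\psi$ by only $O(\vareps)$; together with the metric-entropy bound in Assumption \ref{ass:spaces}\ref{ass:bounded-covering}, the product class has log-covering number of order $\pdim_{\cF_0,\cF_1,\cW,\Pi}\log(1/\vareps)$.

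With boundedness and covering control in hand, I would invoke Theorem \ref{thm:hoeffding-mixing} under the geometric-ergodicity Assumption \ref{ass:ergodic}: a union bound over the $\vareps$-net, followed by the Lipschitz control on the residual, yields with probability at least $1-\delta$, uniformly over $(w,\Delta,\Theta,\pi)$,
$$
\bigl| L_\mis(w,\pi;\Delta,\Theta) - \hat L_\mis(w,\pi;\Delta,\Theta)\bigr| \le c\cdot C_{\Delta^*}C_{\Theta^*}C_*\sqrt{\frac{1}{NT\kappa}\,\pdim_{\cF_0,\cF_1,\cW,\Pi}\log\frac1\delta\,\log(NT)},
$$
after choosing $\vareps$ of polynomial order $1/(NT)$, which is what produces the $\log(NT)$ factor and absorbs the net-discretization error into the stochastic term. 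The main obstacle is not the argument itself---given Theorem \ref{thm:hoeffding-mixing} this is a single-term concentration, in sharp contrast to the min--max peeling in Lemma \ref{lemma:iv-v-in-conf-good}---but rather the bookkeeping needed to check that the per-step integrand meets the boundedness and sub-Lipschitz hypotheses of that maximal inequality uniformly across all four function classes simultaneously, and to verify that the associated variance proxy is controlled by the same constant $C_{\Delta^*}C_{\Theta^*}C_*$ so that no extra factors leak into the rate.
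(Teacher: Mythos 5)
Your proposal is correct and follows essentially the same route as the paper: the paper's proof is a one-line invocation of Theorem \ref{thm:hoeffding-mixing} applied uniformly over $\cW\times\cF_0\times\cF_1\times\Pi$, with the envelope $C_{\Delta^*}C_{\Theta^*}C_*$ and the covering-number factor $\pdim_{\cF_0,\cF_1,\cW,\Pi}$ entering exactly as in your argument. Your write-up simply makes explicit the boundedness check and the $\vareps$-net/union-bound bookkeeping that the paper leaves implicit.
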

\begin{proof}
See \S\ref{prf:lemma:iv-mis-hatl-close} for a detailed proof. 
\end{proof}

Now we start the proof of the theorem. By the definition of $J(\pi)$, it holds with probability at least $1 - \delta$ that
\#\label{eq:thm-mis-f11}
J(\pi^*) - J(\hat \pi_\mis) & = L_\mis(w^{\pi^*}, \pi^*; \Delta^*, \Theta^*) - L_\mis(w^{\hat \pi_\mis}, \hat \pi_\mis; \Delta^*, \Theta^*) \\
& \leq L_\mis(w^{\pi^*}, \pi^*; \Delta^*, \Theta^*) - \min_{w\in \ci^\mis_{\alpha_\mis}(\Delta^*, \Theta^*, \hat \pi_\mis)} L_\mis(w, \hat \pi_\mis; \Delta^*, \Theta^*)  \\
& \leq L_\mis(w^{\pi^*}, \pi^*; \Delta^*, \Theta^*)  \\
& \qquad - \min_{(\Delta, \Theta) \in \ci^0_{\alpha_0}\times \ci^1_{\alpha_1}} \min_{w\in \ci^\mis_{\alpha_\mis}(\Delta, \Theta, \hat \pi_\mis)} L_\mis(w, \hat \pi_\mis; \Delta, \Theta)  \\
& \leq L_\mis(w^{\pi^*}, \pi^*; \Delta^*, \Theta^*)  \\
& \qquad - \min_{(\Delta, \Theta) \in \ci^0_{\alpha_0}\times \ci^1_{\alpha_1}} \min_{w\in \ci^\mis_{\alpha_\mis}(\Delta, \Theta, \hat \pi_\mis)} \hat L_\mis(w, \hat \pi_\mis; \Delta, \Theta) + \hat \varepsilon_L  \\
& \leq L_\mis(w^{\pi^*}, \pi^*; \Delta^*, \Theta^*)  \\
& \qquad - \min_{(\Delta, \Theta) \in \ci^0_{\alpha_0}\times \ci^1_{\alpha_1}} \min_{w\in \ci^\mis_{\alpha_\mis}(\Delta, \Theta, \pi^*)} \hat L_\mis(w, \pi^*; \Delta, \Theta) + \hat \varepsilon_L,
\#
where we use Lemma \ref{lemma:iv-w-pi-in-conf} in the first inequality; we use Assumption \ref{ass:iv-sl-res} that $(\Delta^*, \Theta^*) \in \ci^0_{\alpha_0}\times \ci^1_{\alpha_1}$ with probability at least $1 - \delta$ in the second inequality; we use Lemma \ref{lemma:iv-mis-hatl-close} in the third inequality; while we use the optimality of $\hat \pi_\mis$ in the last inequality. Now, by applying Lemmas \ref{lemma:iv-mis-min-delta-close} and \ref{lemma:iv-mis-hatl-close}, we obtain from \eqref{eq:thm-mis-f11} that 
\#\label{eq:thm-mis-f12}
J(\pi^*) - J(\hat \pi_\mis) & \leq L_\mis(w^{\pi^*}, \pi^*; \Delta^*, \Theta^*) - \min_{w\in \ci^\mis_{\alpha_\mis}(\Delta^*, \Theta^*, \pi^*)} L_\mis(w, \pi^*; \Delta^*, \Theta^*) + \varepsilon_L^* + 2\hat \varepsilon_L  \\
& \leq \max_{w\in \ci^\mis_{\alpha_\mis}(\Delta^*, \Theta^*, \pi^*)} \left |\Phi_\mis^{\pi^*}(w, V^{\pi^*}; \Delta^*, \Theta^*) \right| + \varepsilon_L^* + 2\hat \varepsilon_L  \\
& \leq \max_{w\in \ci^\mis_{\alpha_\mis}(\Delta^*, \Theta^*, \pi^*)} \max_{f\in \cV} \left |\Phi_\mis^{\pi^*}(w, f; \Delta^*, \Theta^*) \right| + \varepsilon_L^* + 2\hat \varepsilon_L  \\ 
& \leq \max_{w\in \ci^\mis_{\alpha_\mis}(\Delta^*, \Theta^*, \pi^*)} \max_{f\in \cV} \max \left \{ \Phi_\mis^{\pi^*}(w, f; \Delta^*, \Theta^*), -\Phi_\mis^{\pi^*}(w, f; \Delta^*, \Theta^*) \right \} \\
& \qquad + \varepsilon_L^* + 2\hat \varepsilon_L  \\ 
& \leq \max_{w\in \ci^\mis_{\alpha_\mis}(\Delta^*, \Theta^*, \pi^*)} \max_{f\in \cV} \max \left \{ \Phi_\mis^{\pi^*}(w, f; \Delta^*, \Theta^*), \Phi_\mis^{\pi^*}(w, -f; \Delta^*, \Theta^*) \right \}  \\
& \qquad + \varepsilon_L^* + 2\hat \varepsilon_L  \\
& \leq \max_{w\in \ci^\mis_{\alpha_\mis}(\Delta^*, \Theta^*, \pi^*)} \max_{f\in \cV}  \Phi_\mis^{\pi^*}(w, f; \Delta^*, \Theta^*) + \varepsilon_L^* + 2\hat \varepsilon_L, 
\#
where in the second inequality, we use Lemma \ref{lemma:iv-link-phi-l}; in the third inequality, we use Assumption \ref{ass:iv-mis-realizable}; while in the last inequality, we use the fact that $\cV$ is symmetric. 
Now, by Lemma \ref{lemma:iv-w-in-conf-good} and plugging the definition of $\hat \varepsilon_L$ and $\varepsilon^*_L$ into \eqref{eq:thm-mis-f12}, it holds with probability at least $1 - \delta$ that
\$
J(\pi^*) - J(\hat \pi_\mis) \leq c\cdot \frac{ C_{\Delta^*}^2 C_{\Theta^*}^2 C_*}{1-\gamma} (\xi_0 + \xi_1) \sqrt{\frac{1}{NT \kappa}\cdot \pdim_{\cF_0,\cF_1,\cW,\cV,\Pi} \cdot \log\frac{NT }{\delta} }, 
\$
which concludes the proof of the theorem. 
\end{proof}

\subsection{Proof of Lemma \ref{lemma:iv-w-pi-in-conf}}\label{prf:lemma:iv-w-pi-in-conf}
\begin{proof}
First, by Assumption \ref{ass:iv-mis-realizable}, we know that $w^\pi \in \cW$. 
For notation simplicity, we denote by $\Phi_\mis^\pi(w, f; *) = \Phi_\mis^\pi(w, f; \Delta^*, \Theta^*)$ and $\hat w^\pi_* = \hat w^\pi_{\Delta^*, \Theta^*}$ for any $(\pi,w,f)$.
Note that 
\#\label{eq:iv-ee1}
& \max_{f\in \cV} \hat \Phi_\mis^\pi(w^\pi, f; *) - \max_{f\in \cV} \hat \Phi_\mis^\pi(\hat w^\pi_*, f; *)  \\
& \qquad = \max_{f\in \cV} \hat \Phi_\mis^\pi(w^\pi, f; *) - \max_{f\in \cV}  \Phi_\mis^\pi(w^\pi, f; *)  + \max_{f\in \cV}  \Phi_\mis^\pi(w^\pi, f; *) - \max_{f\in \cV}  \Phi_\mis^\pi(\hat w^\pi_*, f; *)  \\
& \qquad \qquad + \max_{f\in \cV} \Phi_\mis^\pi(\hat w^\pi_*, f; *) - \max_{f\in \cV} \hat \Phi_\mis^\pi(\hat w^\pi_*, f; *)  \\
& \qquad \leq \max_{f\in \cV} \hat \Phi_\mis^\pi(w^\pi, f; *) - \max_{f\in \cV}  \Phi_\mis^\pi(w^\pi, f; *) + \max_{f\in \cV} \Phi_\mis^\pi(\hat w^\pi_*, f; *) - \max_{f\in \cV} \hat \Phi_\mis^\pi(\hat w^\pi_*, f; *)  \\
& \qquad \leq 2 \max_{w\in \cW} \left | \max_{f\in \cV} \hat \Phi_\mis^\pi(w, f; *) - \max_{f\in \cV}  \Phi_\mis^\pi(w, f; *) \right |  \\
& \qquad \leq 2 \max_{w\in \cW} \max_{f\in \cV} \left |  \hat \Phi_\mis^\pi(w, f; *) - \Phi_\mis^\pi(w, f; *) \right |,
\#
where in the first inequality, we use the fact that $w^\pi = \argmin_{w\in \cW} \max_{f\in \cV}  \Phi_\mis^\pi(w, f; *)$; while in the second inequality, we use $w^\pi \in \cW$ by Assumption \ref{ass:iv-mis-realizable}. In the meanwhile, by Theorem \ref{thm:hoeffding-mixing}, with probability at least $1 - \delta$, it holds for any $(w,f,\pi)\in \cW\times \cV \times \Pi$ that
\#\label{eq:iv-ee2}
\left |  \hat \Phi_\mis^\pi(w, f; *) - \Phi_\mis^\pi(w, f; *) \right | \leq c\cdot \frac{C_{\Delta^*} C_{\Theta^*} C_*}{1-\gamma} \sqrt{\frac{1}{NT \kappa} \pdim_{\cV,\cW,\Pi}\log\frac{1}{\delta} \log(NT)}, 
\#
where we use Assumption \ref{ass:upper-bound-delta}. 
Now, combining \eqref{eq:iv-ee1} and \eqref{eq:iv-ee2}, with probability at least $1 - \delta$, we have
\$
& \max_{f\in \cV} \hat \Phi_\mis^\pi(w^\pi, f; *) - \max_{f\in \cV} \hat \Phi_\mis^\pi(\hat w^\pi_*, f; *) \\
& \qquad \leq c\cdot \frac{C_{\Delta^*} C_{\Theta^*} C_*}{1-\gamma} \sqrt{\frac{1}{NT \kappa} \pdim_{\cV,\cW,\Pi}\log\frac{1}{\delta} \log(NT)} = \alpha_\mis, 
\$
which implies that $w^\pi \in \ci^\mis_{\alpha_\mis}(\Delta^*, \Theta^*, \pi)$. This concludes the proof of the lemma. 
\end{proof}

\subsection{Proof of Lemma \ref{lemma:iv-w-in-conf-good}}
\label{prf:lemma:iv-w-in-conf-good}
\begin{proof}
Since $w \in \cup_{(\Delta, \Theta) \in \ci^0_{\alpha_0} \times \ci^1_{\alpha_1}} \ci^\mis_{\alpha_\mis}(\Delta, \Theta, \pi)$, there exists a pair $(\tilde \Delta, \tilde\Theta) \in \ci^0_{\alpha_0} \times \ci^1_{\alpha_1}$ such that $w \in \ci^\mis_{\alpha_\mis}(\tilde \Delta, \tilde \Theta, \pi)$. For the simplicity of notations, we denote by 
\#\label{eq:iv-dd0}
\tilde w \in \argmin_{w\in \cW} \max_{f\in \cV} \hat \Phi^\pi_\mis(w, f;\tilde \Delta, \tilde \Theta),
\#
i.e., $\tilde w = \hat w^\pi_{\tilde \Delta, \tilde \Theta}$, which is defined in \eqref{eq:w-pi-def}. By the definition of $\tilde w$ and $w \in \ci^\mis_{\alpha_\mis}(\tilde \Delta, \tilde \Theta, \pi)$, with probability at least $1 - \delta$, it holds for any $\pi\in \Pi$ and $w \in \ci^\mis_{\alpha_\mis}(\tilde \Delta, \tilde \Theta, \pi)$ that
\#\label{eq:iv-dd1}
\max_{f\in \cV} \hat \Phi^\pi_\mis(w,f;\tilde \Delta, \tilde \Theta) - \max_{f\in \cV} \hat \Phi^\pi_\mis(\tilde w,f;\tilde \Delta, \tilde \Theta) \leq \alpha_\mis. 
\#
Further, we observe that
\#\label{eq:iv-dd2}
& \max_{f\in \cV} \Phi^\pi_\mis(w,f;\Delta^*, \Theta^*)  \\
& \qquad \leq \underbrace{\max_{(w,f,\Delta,\Theta)\in (\cW, \cV, \cF_0, \cF_1)} \left | \Phi^\pi_\mis(w,f;\Delta, \Theta) - \hat \Phi^\pi_\mis(w,f;\Delta, \Theta) \right|}_{\text{Term (I)}} + \underbrace{\max_{f\in \cV} \Phi^\pi_\mis(\tilde w,f;\tilde \Delta, \tilde \Theta)}_{\text{Term (II)}}  \\
& \qquad \qquad + \underbrace{\max_{f\in \cV} \left | \hat \Phi^\pi_\mis(w,f;\Delta^*, \Theta^*) - \hat \Phi^\pi_\mis(w,f;\tilde \Delta, \tilde \Theta) \right|}_{\text{Term (III)}} + \alpha_\mis, 
\#
where we use \eqref{eq:iv-dd1} in the last inequality. Now we upper bound terms (I), (II), and (III) on the RHS of \eqref{eq:iv-dd2}. 

\vskip5pt
\noindent\textbf{Upper Bounding Term (I).}
By Theorem \ref{thm:hoeffding-mixing}, with probability at least $1 - \delta$, it holds for any $(w,f,\Delta,\Theta,\pi) \in (\cW, \cV, \cF_0, \cF_1, \Pi)$ that
\$
\left |  \hat \Phi_\mis^\pi(w, f; \Delta, \Theta) - \Phi_\mis^\pi(w, f; \Delta, \Theta) \right | \leq c\cdot \frac{C_{\Delta^*} C_{\Theta^*} C_*}{1-\gamma} \sqrt{\frac{1}{NT \kappa}\pdim_{\cF_0,\cF_1,\cW,\cV,\Pi}\log\frac{1}{\delta} \log(NT)},
\$
which implies that with probability at least $1 - \delta$, we have
\#\label{eq:iv-dd3}
\text{Term (I)} \leq c\cdot \frac{C_{\Delta^*} C_{\Theta^*} C_*}{1-\gamma} \sqrt{\frac{1}{NT \kappa}\pdim_{\cF_0,\cF_1,\cW,\cV,\Pi}\log\frac{1}{\delta} \log(NT)}.
\#

\vskip5pt
\noindent\textbf{Upper Bounding Term (II).} 
We introduce the following lemma to help upper bound term (II).

\begin{lemma}
\label{lemma:iv-any-w-hat-small-loss}
Suppose $(\alpha_0,\alpha_1)$ is defined in Assumption \ref{ass:iv-sl-res}. 
With probability at least $1 - \delta$, for any $(\Delta, \Theta) \in \ci^0_{\alpha_0} \times \ci^1_{\alpha_1}$ and $\pi\in \Pi$, we have
\$
\max_{f\in \cV} \Phi_\mis^\pi(\hat w^\pi_{\Delta, \Theta}, f; \Delta, \Theta) \leq c\cdot \frac{C_{\Delta^*}^2 C_{\Theta^*}^2 C_*}{1-\gamma} (\xi_0 + \xi_1 ) \sqrt{\frac{1}{NT \kappa}\pdim_{\cF_0,\cF_1,\cW,\cV,\Pi} \log\frac{1}{\delta} \log(NT)},
\$
where $\hat w^\pi_{\Delta, \Theta}$ is defined in \eqref{eq:w-pi-def}, $\xi_0$ and $\xi_1$ are constants defined in Assumption \ref{ass:iv-sl-res}. 
\end{lemma}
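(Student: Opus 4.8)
The plan is to mirror the argument for the value-function analog, Lemma \ref{lemma:iv-any-v-hat-small-loss} (proved in \S\ref{prf:lemma:iv-any-v-hat-small-loss}), interchanging the roles of $(\cV, g, V^\pi)$ with $(\cW, f, w^\pi)$. First I would write a telescoping decomposition of the target quantity $\max_{f\in \cV} \Phi_\mis^\pi(\hat w^\pi_{\Delta, \Theta}, f; \Delta, \Theta)$ by inserting three intermediate objects: the empirical counterpart $\max_{f} \hat\Phi_\mis^\pi(\hat w^\pi_{\Delta, \Theta}, f; \Delta, \Theta)$, the empirical objective at the realizable truth $\max_{f} \hat\Phi_\mis^\pi(w^\pi, f; \Delta, \Theta)$, its population version $\max_{f} \Phi_\mis^\pi(w^\pi, f; \Delta, \Theta)$, and finally $\max_{f} \Phi_\mis^\pi(w^\pi, f; \Delta^*, \Theta^*)$. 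Two of the resulting telescoped differences are harmless: by the definition of $\hat w^\pi_{\Delta, \Theta}$ in \eqref{eq:w-pi-def} as a minimizer of $w\mapsto \max_{f\in \cV}\hat\Phi_\mis^\pi(w,f;\Delta,\Theta)$ together with $w^\pi\in\cW$ from Assumption \ref{ass:iv-mis-realizable}, the difference $\max_f\hat\Phi_\mis^\pi(\hat w^\pi_{\Delta,\Theta},f;\Delta,\Theta)-\max_f\hat\Phi_\mis^\pi(w^\pi,f;\Delta,\Theta)$ is at most zero; and by the MIS estimating equation in Lemma \ref{lemma:iv-est-eq}, $\Phi_\mis^\pi(w^\pi,f;\Delta^*,\Theta^*)=0$ for every $f$, so that last term vanishes (using that $\cV$ is symmetric). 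This collapses the decomposition to
\$
\max_{f\in \cV} \Phi_\mis^\pi(\hat w^\pi_{\Delta, \Theta}, f; \Delta, \Theta) \leq 2\max_{w\in \cW}\max_{f\in \cV}\left|\Phi_\mis^\pi(w,f;\Delta,\Theta) - \hat\Phi_\mis^\pi(w,f;\Delta,\Theta)\right| + \max_{f\in\cV}\left|\Phi_\mis^\pi(w^\pi,f;\Delta,\Theta) - \Phi_\mis^\pi(w^\pi,f;\Delta^*,\Theta^*)\right|.
\$

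Next I would bound the two surviving terms. The uniform empirical-process term is handled by the maximal inequality for geometrically ergodic sequences, Theorem \ref{thm:hoeffding-mixing}, applied over $(w,f,\Delta,\Theta,\pi)\in\cW\times\cV\times\cF_0\times\cF_1\times\Pi$; using Assumptions \ref{ass:spaces}\ref{ass:upper-bound-delta} and \ref{ass:spaces}\ref{ass:bounded-mle} to lower-bound $\Delta,\Theta$ uniformly and the envelope bounds $\|w\|_\infty\le C_*$, $\|f\|_\infty\le 1/(1-\gamma)$ from Assumption \ref{ass:spaces}\ref{ass:function-bound}, this yields a term of order $\frac{C_{\Delta^*}C_{\Theta^*}C_*}{1-\gamma}\sqrt{\pdim_{\cF_0,\cF_1,\cW,\cV,\Pi}/(NT\kappa)\cdot\log(1/\delta)\log(NT)}$. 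The nuisance-sensitivity term I would treat exactly as in \eqref{eq:iv-ee5-vf}: since $w^\pi$ and $f(S_t)-\gamma f(S_{t+1})$ are uniformly bounded, I would factor the reciprocal difference $1/(\Delta\Theta)-1/(\Delta^*\Theta^*)$ into a $\Delta$-part and a $\Theta$-part, apply Cauchy--Schwarz to pass from the $\ell_1$-deviations to their $\ell_2$-versions, and then invoke the $L_2$ nuisance-rate guarantee of Assumption \ref{ass:iv-sl-res} for $(\Delta,\Theta)\in\ci^0_{\alpha_0}\times\ci^1_{\alpha_1}$ to surface the $\xi_0$ and $\xi_1$ factors together with an additional $C_{\Delta^*}C_{\Theta^*}$.

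The main difficulty is bookkeeping rather than conceptual: I expect the delicate step to be the nuisance-sensitivity bound, where carefully tracking the powers of $C_{\Delta^*}$ and $C_{\Theta^*}$ through the reciprocal-difference factorization and the Cauchy--Schwarz step is precisely what produces the $C_{\Delta^*}^2 C_{\Theta^*}^2$ prefactor in the stated rate. One pleasant simplification relative to the VF analog is that $\Phi_\mis^\pi$ is \emph{linear} in $w$ and the realizable object $w^\pi$ itself lives in $\cW$, so no Lipschitz control of a policy-indexed value-function class is needed; this is why the $L_\Pi$ factor appearing in Lemma \ref{lemma:iv-any-v-hat-small-loss} is absent here, matching the claimed bound. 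Finally, I would combine the two estimates and absorb the subdominant empirical-process contribution into the sensitivity term, obtaining the asserted uniform bound over $(\Delta,\Theta,\pi)\in\ci^0_{\alpha_0}\times\ci^1_{\alpha_1}\times\Pi$ with probability at least $1-\delta$, which completes the proof.
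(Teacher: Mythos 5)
Your proposal is correct and follows essentially the same route as the paper's proof: the identical telescoping decomposition (collapsing the optimality gap of $\hat w^\pi_{\Delta,\Theta}$ and the vanishing term $\Phi_\mis^\pi(w^\pi,\cdot;\Delta^*,\Theta^*)=0$ from Lemma \ref{lemma:iv-est-eq}), then Theorem \ref{thm:hoeffding-mixing} for the uniform empirical-process term and Cauchy--Schwarz with Assumption \ref{ass:iv-sl-res} for the nuisance-sensitivity term, exactly as in \eqref{eq:iv-ee3}--\eqref{eq:iv-ee5}. Two negligible differences: the symmetry of $\cV$ is not actually needed to kill the last term (it is identically zero before taking the max), and your explicit inclusion of $\cF_0\times\cF_1$ in the uniformity of the concentration step is, if anything, slightly more careful bookkeeping than the paper's \eqref{eq:iv-ee4}, and is consistent with the $\pdim_{\cF_0,\cF_1,\cW,\cV,\Pi}$ appearing in the final bound.
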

\begin{proof}
See \S\ref{prf:lemma:iv-any-w-hat-small-loss} for a detailed proof. 
\end{proof}

By the definition of $\tilde w$ in \eqref{eq:iv-dd0} and Lemma \ref{lemma:iv-any-w-hat-small-loss}, with probability at least $1 - \delta$, we have
\#\label{eq:iv-dd4}
\text{Term (II)} \leq c\cdot \frac{C_{\Delta^*}^2 C_{\Theta^*}^2 C_*}{1-\gamma} (\xi_0 + \xi_1 ) \sqrt{\frac{1}{NT \kappa}\pdim_{\cF_0,\cF_1,\cW,\cV,\Pi} \log\frac{1}{\delta} \log(NT)}.
\#

\vskip5pt
\noindent\textbf{Upper Bounding Term (III).} 
Note that 
\#\label{eq:iv-dd5}
& \left | \hat \Phi^\pi_\mis(w,f;\Delta^*, \Theta^*) - \hat \Phi^\pi_\mis(w,f;\tilde \Delta, \tilde \Theta) \right|  \\
& \leq \left | \left(\hat \EE - \EE\right) \left[ \frac{1}{T} \sum_{t = 0}^{T-1} \left ( \frac{Z_t^\top A_t\pi(A_t\given S_t) w(S_t) }{\Delta^*(S_t,A_t)\Theta^*(S_t,Z_t)} - \frac{Z_t^\top A_t\pi(A_t\given S_t) w(S_t)}{\tilde \Delta(S_t,A_t) \tilde \Theta(S_t,Z_t)} \right) \left(f(S_t) - \gamma f(S_{t+1}) \right) \right ] \right |  \\
& \qquad  + \left |  \EE \left[ \frac{1}{T} \sum_{t = 0}^{T-1} \left ( \frac{Z_t^\top A_t\pi(A_t\given S_t) w(S_t) }{\Delta^*(S_t,A_t)\Theta^*(S_t,Z_t)} - \frac{Z_t^\top A_t\pi(A_t\given S_t) w(S_t)}{\tilde \Delta(S_t,A_t) \tilde \Theta(S_t,Z_t)} \right) \left(f(S_t) - \gamma f(S_{t+1}) \right) \right ] \right |. 
\#
For the first term on the RHS of \eqref{eq:iv-dd5}, by Theorem \ref{thm:hoeffding-mixing}, with probability at least $1- \delta$, it holds for any $(w,f,\pi)\in \cW\times \cV \times \Pi$ that
\#\label{eq:iv-dd5-1}
& \left | \left(\hat \EE - \EE\right) \left[ \frac{1}{T} \sum_{t = 0}^{T-1} \left ( \frac{Z_t^\top A_t\pi(A_t\given S_t) w(S_t) }{\Delta^*(S_t,A_t)\Theta^*(S_t,Z_t)} - \frac{Z_t^\top A_t\pi(A_t\given S_t) w(S_t)}{\tilde \Delta(S_t,A_t) \tilde \Theta(S_t,Z_t)} \right) \left(f(S_t) - \gamma f(S_{t+1}) \right) \right ] \right |  \\
& \qquad \leq c\cdot \frac{C_{\Delta^*} C_{\Theta^*} C_*}{1-\gamma} \sqrt{\frac{\pdim_{\cF_0,\cF_1,\cW,\cV,\Pi}}{NT \kappa}\log\frac{1}{\delta} \log(NT)}. 
\#
For the second term on the RHS of \eqref{eq:iv-dd5}, by a similar argument as in \eqref{eq:iv-dd5-vf-2}, it holds with probability at least $1 - \delta$ that 
\#\label{eq:iv-dd5-2}
& \left | \EE \left[ \frac{1}{T} \sum_{t = 0}^{T-1} \left ( \frac{Z_t^\top A_t\pi(A_t\given S_t) w(S_t)}{\Delta^*(S_t,A_t)\Theta^*(S_t,Z_t)} - \frac{Z_t^\top A_t\pi(A_t\given S_t)w(S_t)}{\tilde \Delta(S_t,A_t) \tilde \Theta(S_t,Z_t)} \right) \left(f(S_t) - \gamma f(S_{t+1}) \right) \right ] \right |  \\
& \leq \frac{2 C_{\Delta^*} C_{\Theta^*} C_*}{1-\gamma} \left(\xi_0 C_{\Theta^*} \sqrt{\frac{C_{\Delta^*}}{NT \kappa} \pdim_{\cF_0}\log\frac{1}{\delta} \log(NT)} + \xi_1 C_{\Delta^*} \sqrt{\frac{C_{\Theta^*}}{NT \kappa} \pdim_{\cF_1} \log\frac{1}{\delta} \log(NT)} \right), 
\#
where in the first inequality, we use the fact that $\|f\|_\infty \leq 1/ (1 - \gamma)$ and $\|w\|_\infty \leq C_*$; in the third inequality, we use Cauchy Schwarz inequality; while in the last inequality, we use Assumption \ref{ass:iv-sl-res} with the fact that $(\tilde \Delta, \tilde\Theta) \in \ci^0_{\alpha_0} \times \ci^1_{\alpha_1}$. Now, by plugging \eqref{eq:iv-dd5-1} and \eqref{eq:iv-dd5-2} into \eqref{eq:iv-dd5}, with probability at least $1 - \delta$,  it holds for any $w \in \cup_{(\Delta, \Theta) \in \ci^0_{\alpha_0} \times \ci^1_{\alpha_1}} \ci^\mis_{\alpha_\mis}(\Delta, \Theta, \pi)$ and $(f,\pi)\in \cV\times \Pi$ that 
\#\label{eq:732847}
& \left | \hat \Phi^\pi_\mis(w,f;\Delta^*, \Theta^*) - \hat \Phi^\pi_\mis(w,f;\tilde \Delta, \tilde \Theta) \right|  \\
& \qquad \leq c \cdot \frac{ C_{\Delta^*}^2 C_{\Theta^*}^2 C_*}{1-\gamma} (\xi_0 + \xi_1) \sqrt{\frac{1}{NT \kappa}\cdot \pdim_{\cF_0,\cF_1,\cW,\cV,\Pi} \cdot \log\frac{1}{\delta} \log(NT) }. 
\#

\vskip5pt
Now, by plugging \eqref{eq:iv-dd3}, \eqref{eq:iv-dd4}, and \eqref{eq:732847} into 
\eqref{eq:iv-dd2}, with probability at least $1 - \delta$, it holds for any $\pi \in \Pi$ and $w \in \cup_{(\Delta, \Theta) \in \ci^0_{\alpha_0} \times \ci^1_{\alpha_1}} \ci^\mis_{\alpha_\mis}(\Delta, \Theta, \pi)$ that
\$
& \max_{f\in \cV} \Phi^\pi_\mis(w,f;\Delta^*, \Theta^*) \leq c\cdot \frac{ C_{\Delta^*}^2 C_{\Theta^*}^2 C_*}{1-\gamma} (\xi_0 + \xi_1) \sqrt{\frac{1}{NT \kappa}\cdot \pdim_{\cF_0,\cF_1,\cW,\cV,\Pi} \cdot \log\frac{1}{\delta} \log(NT) },
\$
which concludes the proof of the lemma. 
\end{proof}

\subsection{Proof of Lemma \ref{lemma:iv-link-phi-l}}\label{prf:lemma:iv-link-phi-l}
\begin{proof}
Since $\Phi^\pi_\mis(w^\pi, V^\pi; \Delta^*, \Theta^*) = 0$, we have
\$
& \Phi_\mis^\pi(w, V^\pi; \Delta^*, \Theta^*) \\
& \qquad = \Phi_\mis^\pi(w, V^\pi; \Delta^*, \Theta^*) - \Phi_\mis^\pi(w^\pi, V^\pi; \Delta^*, \Theta^*)  \\
& \qquad = \EE \left[ \frac{1}{T} \sum_{t = 0}^{T-1} \frac{Z_t^\top A_t \pi(A_t\given S_t)}{\Delta^*(S_t,A_t) \Theta^*(S_t,Z_t)} \left(w^\pi(S_t)-w(S_t)\right) \left (V^\pi(S_t) - \gamma V^\pi(S_{t+1}) \right ) \right]  \\
& \qquad = \EE \left[ \frac{1}{T} \sum_{t = 0}^{T-1} \left(w^\pi(S_t)-w(S_t)\right) \EE_{\pi} \left [V^\pi(S_t) - \gamma V^\pi(S_{t+1}) \given S_t \right ] \right]  \\
& \qquad = \EE \left[ \frac{1}{T} \sum_{t = 0}^{T-1} \left(w^\pi(S_t)-w(S_t)\right) \EE_\pi[R_t \given S_t] \right] \\
& \qquad = \EE \left[\frac{1}{T} \sum_{t = 0}^{T-1} \frac{Z_t^\top A_t \pi(A_t\given S_t)}{\Delta^*(S_t,A_t) \Theta^*(S_t,Z_t)} \left(w^\pi(S_t) - w(S_t)\right) R_t \right]  \\
& \qquad = L_\mis(w^\pi, \pi; \Delta^*, \Theta^*) - L_\mis(w, \pi; \Delta^*, \Theta^*), 
\$
which concludes the proof of the lemma. 
\end{proof}

\subsection{Proof of Lemma \ref{lemma:iv-mis-min-delta-close}}
\label{prf:lemma:iv-mis-min-delta-close}
\begin{proof}
With a slight abuse of notations, we denote by
\$
w_0 \in \argmin_{w\in \ci^\mis_{\alpha_\mis}(\Delta^*, \Theta^*, \pi^*)} L_\mis(w, \pi^*; \Delta^*, \Theta^*), \qquad w_1 \in \argmin_{w\in \ci^\mis_{\alpha_\mis}(\Delta, \Theta, \pi^*)} L_\mis(w, \pi^*; \Delta, \Theta). 
\$
Then we have
\#\label{eq:iv-kkk1}
& \left | \min_{w\in \ci^\mis_{\alpha_\mis}(\Delta^*, \Theta^*, \pi^*)} L_\mis(w, \pi^*; \Delta^*, \Theta^*) - \min_{w\in \ci^\mis_{\alpha_\mis}(\Delta, \Theta, \pi^*)} L_\mis(w, \pi^*; \Delta, \Theta) \right | \\
& \qquad = \left | L_\mis(w_0, \pi^*; \Delta^*, \Theta^*) - L_\mis(w_1, \pi^*; \Delta, \Theta) \right|  \\
& \qquad \leq \left | L_\mis(w_0, \pi^*; \Delta^*, \Theta^*) - L_\mis(w^{\pi^*}, \pi^*; \Delta^*, \Theta^*) \right |  \\
& \qquad \qquad + \left | L_\mis(w^{\pi^*}, \pi^*; \Delta^*, \Theta^*) - L_\mis(w_1, \pi^*; \Delta^*, \Theta^*) \right |  \\
& \qquad \qquad + \left |L_\mis(w_1, \pi^*; \Delta^*, \Theta^*) - L_\mis(w_1, \pi^*; \Delta, \Theta) \right |  \\
& \qquad = \underbrace{\left | \Phi_\mis^{\pi^*}(w_0, V^{\pi^*}; \Delta^*, \Theta^*) \right|}_{\text{Term (I)}} + \underbrace{\left| \Phi_\mis^{\pi^*}(w_1, V^{\pi^*}; \Delta^*, \Theta^*) \right|}_{\text{Term (II)}} \\
& \qquad \qquad + \underbrace{\left | L_\mis(w_1, \pi^*; \Delta^*, \Theta^*) - L_\mis(w_1, \pi^*; \Delta, \Theta) \right|}_{\text{Term (III)}}. 
\#
We upper bound terms (I), (II), and (III) on the RHS of \eqref{eq:iv-kkk1}, respectively. 

\vskip5pt
\noindent\textbf{Upper Bounding Term (I).}
Note that with probability at least $1 - \delta$, we have
\#\label{eq:iv-kkk2}
\left | \Phi_\mis^{\pi^*}(w_0, V^{\pi^*}; \Delta^*, \Theta^*) \right| & \leq \max_{f\in \cV} \left | \Phi_\mis^{\pi^*}(w_0, f; \Delta^*, \Theta^*) \right|  \\
& = \max_{f\in \cV} \max \left\{  \Phi_\mis^{\pi^*}(w_0, f; \Delta^*, \Theta^*), 
-\Phi_\mis^{\pi^*}(w_0, f; \Delta^*, \Theta^*)  \right \}  \\
& = \max_{f\in \cV} \max \left\{  \Phi_\mis^{\pi^*}(w_0, f; \Delta^*, \Theta^*), 
\Phi_\mis^{\pi^*}(w_0, -f; \Delta^*, \Theta^*)  \right \}  \\
& = \max_{f\in \cV} \Phi_\mis^{\pi^*}(w_0, f; \Delta^*, \Theta^*)  \\
& \leq c\cdot \frac{ C_{\Delta^*}^2 C_{\Theta^*}^2 C_*}{1-\gamma} (\xi_0  + \xi_1) \sqrt{\frac{1}{NT \kappa} \pdim_{\cF_0,\cF_1,\cW,\cV} \log\frac{1}{\delta} \log(NT)}, 
\#
where in the first inequality, we use the fact that $V^{\pi^*} \in \cV$; in the third equality, we use the fact that $\cV$ is symmetric; in the last inequality, by noting that $w_0 \in \cup_{(\Delta, \Theta) \in \ci^0_{\alpha_0} \times \ci^1_{\alpha_1}} \ci^\mis_{\alpha_\mis}(\Delta, \Theta, \pi)$, we use Lemma \ref{lemma:iv-w-in-conf-good}. This upper bounds term (I) on the RHS of \eqref{eq:iv-kkk1}.

\vskip5pt
\noindent\textbf{Upper Bounding Term (II).}
Similar to \eqref{eq:iv-kkk2}, note that $w_1 \in \cup_{(\Delta, \Theta) \in \ci^0_{\alpha_0} \times \ci^1_{\alpha_1}} \ci^\mis_{\alpha_\mis}(\Delta, \Theta, \pi)$, it holds with probability at least $1 - \delta$ that
\#\label{eq:iv-kkk3}
\left | \Phi_\mis^{\pi^*}(w_1, V^{\pi^*}; \Delta^*, \Theta^*) \right| \leq c\cdot \frac{ C_{\Delta^*}^2 C_{\Theta^*}^2 C_*}{1-\gamma} (\xi_0  + \xi_1) \sqrt{\frac{1}{NT \kappa} \pdim_{\cF_0,\cF_1,\cW,\cV} \log\frac{1}{\delta} \log(NT)}, 
\#
which upper bounds term (II) on the RHS of \eqref{eq:iv-kkk1}. 

\vskip5pt
\noindent\textbf{Upper Bounding Term (III).}
Note that with probability at least $1 - \delta$, it holds for any $(\Delta, \Theta) \in \ci^0_{\alpha_0} \times \ci^1_{\alpha_1}$ that
\#\label{eq:iv-kkk4}
& \left | L_\mis(w_1, \pi^*; \Delta^*, \Theta^*) - L_\mis(w_1, \pi^*; \Delta, \Theta) \right|  \\
& \quad = \EE \left[ \frac{1}{T} \sum_{t = 0}^{T-1} \left( \frac{Z_t^\top A_t \pi^*(A_t\given S_t)}{\Delta^*(S_t,A_t) \Theta^*(S_t,Z_t)} - \frac{Z_t^\top A_t \pi^*(A_t\given S_t)}{\Delta(S_t,A_t) \Theta(S_t,Z_t)} \right) w_1(S_t) R_t \right]  \\
& \quad \leq C_{\Delta^*} C_{\Theta^*} C_* \left(\xi_0 C_{\Theta^*} \sqrt{\frac{C_{\Delta^*}}{NT \kappa} \pdim_{\cF_0} \log\frac{1}{\delta} \log(NT)} + \xi_1 C_{\Delta^*} \sqrt{\frac{C_{\Theta^*}}{NT \kappa} \pdim_{\cF_1} \log\frac{1}{\delta} \log(NT)} \right),
\#
where we use Cauchy-Schwarz inequality and Assumption \ref{ass:iv-sl-res} in the last inequality. 

\vskip5pt
Now, by plugging \eqref{eq:iv-kkk2}, \eqref{eq:iv-kkk3}, and \eqref{eq:iv-kkk4} into \eqref{eq:iv-kkk1}, with probability at least $1 - \delta$, it holds for any $(\Delta, \Theta) \in \ci^0_{\alpha_0} \times \ci^1_{\alpha_1}$ that 
\$
& \left | \min_{w\in \ci^\mis_{\alpha_\mis}(\Delta^*, \Theta^*, \pi^*)} L_\mis(w, \pi^*; \Delta^*, \Theta^*) - \min_{w\in \ci^\mis_{\alpha_\mis}(\Delta, \Theta, \pi^*)} L_\mis(w, \pi^*; \Delta, \Theta) \right |  \\
& \qquad \leq c\cdot \frac{ C_{\Delta^*}^2 C_{\Theta^*}^2 C_*}{1-\gamma} (\xi_0  + \xi_1) \sqrt{\frac{1}{NT \kappa} \pdim_{\cF_0,\cF_1,\cW,\cV} \log\frac{1}{\delta} \log(NT)}, 
\$
which concludes the proof of the lemma. 
\end{proof}

\subsection{Proof of Lemma \ref{lemma:iv-mis-hatl-close}}
\label{prf:lemma:iv-mis-hatl-close}
\begin{proof}
By Theorem \ref{thm:hoeffding-mixing}, with probability at least $1 - \delta$, it holds for any $(w, \Delta, \Theta, \pi)\in \cW \times \cF_0 \times \cF_1 \times \Pi$ that
\$
\left | L_\mis(w, \pi; \Delta, \Theta) - \hat L_\mis(w, \pi; \Delta, \Theta) \right | \leq c\cdot C_{\Delta^*} C_{\Theta^*} C_* \sqrt{\frac{1}{NT \kappa} \pdim_{\cF_0,\cF_1,\cW,\Pi} \log\frac{1}{\delta} \log(NT)}, 
\$
which concludes the proof of the lemma. 
\end{proof}

\subsection{Proof of Lemma \ref{lemma:iv-any-w-hat-small-loss}}
\label{prf:lemma:iv-any-w-hat-small-loss}
\begin{proof}
Note that 
\#\label{eq:iv-ee3}
& \max_{f\in \cV} \Phi_\mis^\pi(\hat w^\pi_{\Delta, \Theta}, f; \Delta, \Theta)  \\
& \qquad = \max_{f\in \cV} \Phi_\mis^\pi(\hat w^\pi_{\Delta, \Theta}, f; \Delta, \Theta) - \max_{f\in \cV} \hat \Phi_\mis^\pi(\hat w^\pi_{\Delta, \Theta}, f; \Delta, \Theta) + \max_{f\in \cV} \hat \Phi_\mis^\pi(\hat w^\pi_{\Delta, \Theta}, f; \Delta, \Theta)  \\
& \qquad \qquad - \max_{f\in \cV} \hat \Phi_\mis^\pi( w^\pi, f; \Delta, \Theta)  + \max_{f\in \cV} \hat \Phi_\mis^\pi( w^\pi, f; \Delta, \Theta) - \max_{f\in \cV}  \Phi_\mis^\pi(w^\pi, f; \Delta, \Theta)  \\
& \qquad \qquad + \max_{f\in \cV}  \Phi_\mis^\pi(w^\pi, f; \Delta, \Theta) - \max_{f\in \cV}  \Phi_\mis^\pi(w^\pi, f; \Delta^*, \Theta^*)  \\
& \qquad \leq 2 \max_{w\in \cW} \max_{f\in \cV} \left |  \Phi_\mis^\pi(w, f; \Delta, \Theta) -  \hat \Phi_\mis^\pi(w, f; \Delta, \Theta) \right |  \\
& \qquad \qquad + \max_{f\in \cV} \left |  \Phi_\mis^\pi(w^\pi, f; \Delta, \Theta) -   \Phi_\mis^\pi(w^\pi, f; \Delta^*, \Theta^*) \right |,
\#
where we use the fact that $\hat w^\pi_{\Delta, \Theta} \in \argmin_{w\in \cW} \max_{f\in \cV} \hat \Phi^\pi_\mis(w, f; \Delta, \Theta)$ in the last inequality. In the meanwhile, by Theorem \ref{thm:hoeffding-mixing}, with probability at least $1 - \delta$, it holds for any $(w,f,\pi)\in \cW\times \cV \times \Pi$ that
\#\label{eq:iv-ee4}
\left |  \hat \Phi_\mis^\pi(w, f; \Delta, \Theta) - \Phi_\mis^\pi(w, f; \Delta, \Theta) \right | \leq c\cdot \frac{C_{\Delta^*} C_{\Theta^*} C_*}{1-\gamma} \sqrt{\frac{1}{NT \kappa}\pdim_{\cV,\cW,\Pi}\log\frac{1}{\delta} \log(NT)}. 
\#
Also, we upper bound the second term on the RHS of \eqref{eq:iv-ee3} with probability at least $1- \delta$ as follows, 
\#\label{eq:iv-ee5}
& \left |  \Phi_\mis^\pi(w^\pi, f; \Delta, \Theta) -   \Phi_\mis^\pi(w^\pi, f; \Delta^*, \Theta^*) \right |  \\
& \quad = \left | \EE\left[ \frac{1}{T} \sum_{t = 0}^{T-1} \left ( \frac{Z_t^\top A_t\pi(A_t\given S_t)w(S_t) }{\Delta^*(S_t,A_t)\Theta^*(S_t,Z_t)} - \frac{Z_t^\top A_t\pi(A_t\given S_t) w(S_t)}{\Delta(S_t,A_t)\Theta(S_t,Z_t)} \right) \left(f(S_t) - \gamma f(S_{t+1})\right) \right ] \right |  \\
& \quad \leq \frac{C_{\Delta^*} C_{\Theta^*} C_*}{1-\gamma} \left(\xi_0 C_{\Theta^*} \sqrt{\frac{C_{\Delta^*}}{NT \kappa}\pdim_{\cF_0}\log\frac{1}{\delta} \log(NT)} + \xi_1 C_{\Delta^*} \sqrt{\frac{C_{\Theta^*}}{NT \kappa} \pdim_{\cF_1} \log\frac{1}{\delta} \log(NT)} \right),
\#
where we use Cauchy-Schwarz inequality and Assumption \ref{ass:iv-sl-res} in the last inequality.

Now, by plugging \eqref{eq:iv-ee4} and \eqref{eq:iv-ee5} into \eqref{eq:iv-ee3}, with probability at least $1 - \delta$, it holds for any $(\Delta, \Theta) \in \ci^0_{\alpha_0} \times \ci^1_{\alpha_1}$ and $\pi\in \Pi$ that 
\$
\max_{f\in \cV} \Phi_\mis^\pi(\hat w^\pi_{\Delta, \Theta}, f; \Delta, \Theta) \leq c\cdot \frac{C_{\Delta^*}^2 C_{\Theta^*}^2 C_*}{1-\gamma} (\xi_0 + \xi_1 ) \sqrt{\frac{1}{NT \kappa}\pdim_{\cF_0,\cF_1,\cW,\cV,\Pi} \log\frac{1}{\delta} \log(NT)},
\$
which concludes the proof of the lemma.
\end{proof}

\section{Proof of Results in \S\ref{sec:dr-theory}}

\subsection{Proof of Theorem \ref{thm:iv-dr}}\label{prf:thm:iv-dr}
\begin{proof}
We split the proof into two case: (i) Assumption \ref{ass:iv-vf-realizable} holds; (ii) Assumption \ref{ass:iv-mis-realizable} holds.

\vskip5pt
\noindent\textbf{Case (i): Assumption \ref{ass:iv-vf-realizable} holds.} We introduce the following supporting lemmas. 

\begin{lemma}
\label{lemma:iv-mis-hatl-close-dr}
For any policy $\pi$, with probability at least $1 - \delta$ with $c / (NT)^2 \leq \delta \leq 1$, it holds for any $(w,v, \Delta, \Theta, \pi)\in \cW \times \cV \times \cF_0 \times \cF_1 \times \Pi$ that 
\$
& \left | L_\dr(w, v, \pi; \Delta, \Theta) - \hat L_\dr(w, v, \pi; \Delta, \Theta) \right | \\
& \qquad \leq c\cdot \frac{C_{\Delta^*} C_{\Theta^*} C_*}{1-\gamma} \sqrt{\frac{1}{NT \kappa}\pdim_{\cF_0,\cF_1,\cW,\cV,\Pi} \log\frac{1}{\delta} \log(NT)} = \hat \epsilon_L. 
\$
\end{lemma}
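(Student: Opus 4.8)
The plan is to reduce the two-sided bound to a single empirical-process deviation and then invoke the geometrically-ergodic maximal inequality (Theorem~\ref{thm:hoeffding-mixing}) uniformly over the relevant product function class, exactly in parallel with the MIS-based argument of Lemma~\ref{lemma:iv-mis-hatl-close}. First I would observe that the reference term $(1-\gamma)\EE_{S_0\sim\nu}[v(S_0)]$ enters $\hat L_\dr$ and $L_\dr$ identically and is evaluated against the \emph{known} distribution $\nu$, so it cancels in the difference, leaving
\$
L_\dr(w,v,\pi;\Delta,\Theta) - \hat L_\dr(w,v,\pi;\Delta,\Theta) = \left(\EE - \hat\EE\right)\left[\frac{1}{T}\sum_{t=0}^{T-1} \frac{Z_t^\top A_t \pi(A_t\given S_t)}{\Delta(S_t,A_t)\Theta(S_t,Z_t)} w(S_t)\bigl(R_t + \gamma v(S_{t+1}) - v(S_t)\bigr)\right].
\$
This is precisely the form to which Theorem~\ref{thm:hoeffding-mixing} applies, the only difference from the MIS case being the richer per-step integrand carrying the temporal-difference factor $R_t + \gamma v(S_{t+1}) - v(S_t)$.

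Next I would bound the envelope of the per-step summand. Using the simplex encoding to get $|Z_t^\top A_t|\leq 1$, the bound $|\pi(A_t\given S_t)|\leq 1$, Assumption~\ref{ass:spaces}\ref{ass:bounded-mle} for $1/|\Delta|\leq C_{\Delta^*}$ and $1/\Theta\leq C_{\Theta^*}$, Assumption~\ref{ass:spaces}\ref{ass:function-bound} for $\|w\|_\infty\leq C_*$ and $\|v\|_\infty\leq 1/(1-\gamma)$, together with $|R_t|\leq 1$ from Assumption~\ref{ass:MDP}\ref{ass:iv-reward}, the temporal-difference factor obeys $|R_t + \gamma v(S_{t+1}) - v(S_t)|\leq c/(1-\gamma)$, so the whole summand is bounded in absolute value by $c\,C_{\Delta^*}C_{\Theta^*}C_*/(1-\gamma)$. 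This scale is exactly the leading multiplicative constant appearing in $\hat\epsilon_L$. The index set of the empirical process is $\cW\times\cV\times\cF_0\times\cF_1\times\Pi$, so the covering number of the product factorizes and its logarithm is at most $\pdim_{\cF_0,\cF_1,\cW,\cV,\Pi}\log(1/\vareps)$ up to constants by Assumption~\ref{ass:spaces}\ref{ass:bounded-covering}. Feeding the envelope and this metric-entropy bound into Theorem~\ref{thm:hoeffding-mixing} yields, with probability at least $1-\delta$ for $c/(NT)^2\leq\delta\leq 1$,
\$
\bigl|L_\dr(w,v,\pi;\Delta,\Theta) - \hat L_\dr(w,v,\pi;\Delta,\Theta)\bigr| \leq c\cdot \frac{C_{\Delta^*}C_{\Theta^*}C_*}{1-\gamma}\sqrt{\frac{1}{NT\kappa}\pdim_{\cF_0,\cF_1,\cW,\cV,\Pi}\log\frac{1}{\delta}\log(NT)},
\$
which is the asserted bound $\hat\epsilon_L$.

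The argument is essentially routine given the earlier machinery, and the only point requiring mild care is the Lipschitz/covering step for the composite integrand, so that the five separate entropy bounds combine additively. Here one must pass from perturbations of $(\Delta,\Theta)$ to perturbations of the factor $1/(\Delta\Theta)$; I would handle this exactly via the ratio-sensitivity estimate already used in the proof of Lemma~\ref{lemma:iv-v-in-conf-good} (cf.\ \eqref{eq:iv-dd5-vf-2}), where the lower bounds $1/|\Delta|\leq C_{\Delta^*}$ and $1/\Theta\leq C_{\Theta^*}$ control the derivative of $1/(\Delta\Theta)$ and thereby preserve the sup-norm covering numbers of $\cF_0$ and $\cF_1$ up to the constant $C_{\Delta^*}C_{\Theta^*}$. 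I expect verifying this composition to be the main (though still mild) obstacle; everything else follows by direct substitution of the envelope and entropy into Theorem~\ref{thm:hoeffding-mixing}.
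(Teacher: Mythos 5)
Your proposal is correct and follows essentially the same route as the paper: the paper's own proof is a one-sentence invocation of Theorem~\ref{thm:hoeffding-mixing} applied uniformly over $\cW\times\cV\times\cF_0\times\cF_1\times\Pi$, yielding exactly the bound $\hat\epsilon_L$. The details you supply---cancellation of the known reference term $(1-\gamma)\EE_{S_0\sim\nu}[v(S_0)]$, the envelope bound $c\,C_{\Delta^*}C_{\Theta^*}C_*/(1-\gamma)$, and the covering argument for the composite integrand using the lower bounds on $\Delta$ and $\Theta$---are precisely what the paper leaves implicit in that invocation.
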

\begin{proof}
See \S\ref{prf:lemma:iv-mis-hatl-close-dr} for a detailed proof. 
\end{proof}

\begin{lemma}
\label{lemma:iv-mis-min-delta-close-dr}
Suppose that $(\alpha_0, \alpha_1, \alpha_\mis, \alpha_\vf)$ is defined in Assumption \ref{ass:iv-sl-res}, Lemmas \ref{lemma:iv-w-pi-in-conf}, and \ref{lemma:iv-v-pi-in-conf}. With probability at least $1 - \delta$, it holds for any $(\Delta, \Theta) \in \ci^0_{\alpha_0} \times \ci^1_{\alpha_1}$ that 
\$
& \left | \min_{(w,v)\in \ci_{\alpha_\mis,\alpha_\vf}(\Delta^*, \Theta^*, \pi^*)} L_\dr(w, v, \pi^*; \Delta^*, \Theta^*) - \min_{(w,v)\in \ci_{\alpha_\mis,\alpha_\vf}(\Delta, \Theta, \pi^*)} L_\dr(w, v, \pi^*; \Delta, \Theta) \right |  \\
& \qquad \leq c\cdot \frac{ C_{\Delta^*}^2 C_{\Theta^*}^2 C_*}{1-\gamma} (\xi_0 + \xi_1) \sqrt{\frac{1}{NT \kappa} \pdim_{\cF_0,\cF_1,\cW,\cV} \log\frac{1}{\delta} \log(NT)} = \epsilon_L^*. 
\$
\end{lemma}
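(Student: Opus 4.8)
In the regime of this lemma Assumption~\ref{ass:iv-vf-realizable} is in force, so the plan is to mirror the argument for the MIS analog (Lemma~\ref{lemma:iv-mis-min-delta-close}), but with the roles of $\Phi_\mis$ and $\Phi_\vf$ interchanged. Let $(w_0, v_0) \in \argmin_{(w,v) \in \ci_{\alpha_\mis,\alpha_\vf}(\Delta^*,\Theta^*,\pi^*)} L_\dr(w,v,\pi^*;\Delta^*,\Theta^*)$ and $(w_1, v_1) \in \argmin_{(w,v)\in \ci_{\alpha_\mis,\alpha_\vf}(\Delta,\Theta,\pi^*)} L_\dr(w,v,\pi^*;\Delta,\Theta)$ denote the two minimizers. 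Using $J(\pi^*)$ as an anchor, I would split the target by the triangle inequality,
\$
& \left| L_\dr(w_0,v_0,\pi^*;\Delta^*,\Theta^*) - L_\dr(w_1,v_1,\pi^*;\Delta,\Theta) \right| \\
& \quad \leq \underbrace{\left| L_\dr(w_0,v_0,\pi^*;\Delta^*,\Theta^*) - J(\pi^*) \right|}_{\text{Term (I)}} + \underbrace{\left| L_\dr(w_1,v_1,\pi^*;\Delta^*,\Theta^*) - J(\pi^*) \right|}_{\text{Term (II)}} \\
& \qquad + \underbrace{\left| L_\dr(w_1,v_1,\pi^*;\Delta^*,\Theta^*) - L_\dr(w_1,v_1,\pi^*;\Delta,\Theta) \right|}_{\text{Term (III)}},
\$
and bound the three terms separately.

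The crux is a doubly robust identity that re-expresses the evaluation error of $L_\dr$ (with true nuisances) through the VF functional. Writing $\rho_t = Z_t^\top A_t \pi^*(A_t\given S_t)/(\Delta^*(S_t,A_t)\Theta^*(S_t,Z_t))$, I would first record two facts: $\EE[\rho_t \given S_t] = 1$, which follows from the identification steps in the proof of Lemma~\ref{lemma:iv-mis-j} with the reward replaced by the constant $1$; and $\EE[\rho_t(R_t + \gamma V^{\pi^*}(S_{t+1})) \given S_t] = V^{\pi^*}(S_t)$ from Lemma~\ref{lemma:iv-vf-id}. Starting from $L_\dr(w^{\pi^*}, v, \pi^*;\Delta^*,\Theta^*) = J(\pi^*)$, subtracting the (vanishing) term $\EE[\frac{1}{T}\sum_t \rho_t (w - w^{\pi^*})(S_t)(R_t + \gamma V^{\pi^*}(S_{t+1}) - V^{\pi^*}(S_t))]$, and using $\EE[\rho_t\given S_t]=1$ once more to absorb the $(1-\rho_t)$ discrepancy (whose conditional mean given $S_t$ is zero), I expect to obtain
\#\label{eq:dr-key-identity}
L_\dr(w, v, \pi^*;\Delta^*,\Theta^*) - J(\pi^*) = \Phi_\vf^{\pi^*}(v, w - w^{\pi^*}; \Delta^*, \Theta^*)
\#
for every $(w,v)\in \cW\times\cV$. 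This is precisely the step where double robustness is used: it converts the DR error into a $\Phi_\vf$-quantity, so that the VF-side confidence control (unavailable for $\Phi_\mis$ in this case) governs the bound.

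Granting \eqref{eq:dr-key-identity}, Terms (I) and (II) are identical in treatment. Since $w^{\pi^*}\in\cW$ by Assumption~\ref{ass:iv-vf-realizable} and $w_0,w_1\in\cW$, linearity of $\Phi_\vf^{\pi^*}$ in its second argument and the symmetry $-w\in\cW$ give $|\Phi_\vf^{\pi^*}(v_i, w_i - w^{\pi^*};\Delta^*,\Theta^*)| \leq 2\max_{g\in\cW}\Phi_\vf^{\pi^*}(v_i, g;\Delta^*,\Theta^*)$ for $i\in\{0,1\}$; as both $v_0\in\ci^\vf_{\alpha_\vf}(\Delta^*,\Theta^*,\pi^*)$ and $v_1\in\ci^\vf_{\alpha_\vf}(\Delta,\Theta,\pi^*)$ lie in $\cup_{(\Delta,\Theta)\in\ci^0_{\alpha_0}\times\ci^1_{\alpha_1}}\ci^\vf_{\alpha_\vf}(\Delta,\Theta,\pi^*)$ (using $(\Delta^*,\Theta^*)\in\ci^0_{\alpha_0}\times\ci^1_{\alpha_1}$ from Assumption~\ref{ass:iv-sl-res}), Lemma~\ref{lemma:iv-v-in-conf-good} applies and bounds both terms by the stated rate; because $\pi^*$ is fixed the concentration is invoked for a single policy, so $L_\Pi$ and $\pdim_\Pi$ do not enter, matching $\epsilon_L^*$. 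Term (III) is a pure nuisance-perturbation term: the two DR objectives differ only through $\rho_t$ versus its $(\Delta,\Theta)$-plug-in, while $|w_1(S_t)(R_t + \gamma v_1(S_{t+1}) - v_1(S_t))|$ is of order $C_*/(1-\gamma)$; bounding $|1/(\Delta^*\Theta^*) - 1/(\Delta\Theta)|$ by $C_{\Delta^*}C_{\Theta^*}(C_{\Theta^*}\|\Delta^*(S_t,\cdot) - \Delta(S_t,\cdot)\|_1 + C_{\Delta^*}\|\Theta^*(S_t,\cdot) - \Theta(S_t,\cdot)\|_1)$ as in \eqref{eq:iv-dd5-vf-2} and combining Cauchy--Schwarz with Assumption~\ref{ass:iv-sl-res} yields the $(\xi_0+\xi_1)$ bound. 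Summing the three contributions gives $\epsilon_L^*$. The hard part is establishing \eqref{eq:dr-key-identity}, in particular getting the $\EE[\rho_t\given S_t]=1$ cancellations exactly right so that the DR error collapses onto $\Phi_\vf$; everything downstream is a reuse of Lemma~\ref{lemma:iv-v-in-conf-good} and the nuisance estimates already available.
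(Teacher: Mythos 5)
Your proof is correct under the assumption you declare (Assumption~\ref{ass:iv-vf-realizable}), and it shares the paper's skeleton --- the same three-term decomposition anchored at $J(\pi^*) = L_\dr(w^{\pi^*},v,\pi^*;\Delta^*,\Theta^*)$ and the same treatment of Term (III) --- but the key mechanism is genuinely different: it is the exact mirror of the paper's. The paper's identity is
\[
L_\dr(w,v,\pi^*;\Delta^*,\Theta^*) - J(\pi^*) = \Phi_\mis^{\pi^*}(w,v;\Delta^*,\Theta^*) - \Phi_\mis^{\pi^*}(w,V^{\pi^*};\Delta^*,\Theta^*),
\]
so its Terms (I) and (II) become differences of MIS functionals, which it bounds via $V^{\pi^*}\in\cV$, the symmetry of $\cV$, the fact that $w_0,w_1$ lie in the MIS confidence sets, and Lemma~\ref{lemma:iv-w-in-conf-good}. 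Your key identity instead collapses the DR error onto the VF functional $\Phi_\vf^{\pi^*}(v,\,w-w^{\pi^*};\Delta^*,\Theta^*)$, which you bound via $w^{\pi^*}\in\cW$, the symmetry of $\cW$, membership of $v_0,v_1$ in the VF confidence sets, and Lemma~\ref{lemma:iv-v-in-conf-good}. Both identities are valid: each reduces to $\EE[\rho_t\given S_t]=1$, with $\rho_t = Z_t^\top A_t\pi^*(A_t\given S_t)/(\Delta^*(S_t,A_t)\Theta^*(S_t,Z_t))$, combined with the IV Bellman equation of Lemma~\ref{lemma:iv-vf-id}, so your derivation goes through.

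Two caveats temper this. First, the lemma is invoked in \emph{both} cases of Theorem~\ref{thm:iv-dr}, so it must hold when either Assumption~\ref{ass:iv-vf-realizable} or Assumption~\ref{ass:iv-mis-realizable} is in force. Your proof covers only the VF case: under Assumption~\ref{ass:iv-mis-realizable} alone, the step $|\Phi_\vf^{\pi^*}(v_i,\,w_i-w^{\pi^*})|\leq 2\max_{g\in\cW}\Phi_\vf^{\pi^*}(v_i,g)$ fails, because only $-v\in\cV$, not $-w\in\cW$, is assumed there. (Symmetrically, the paper's argument leans on the symmetry of $\cV$ and is the natural proof for the MIS case; the two arguments are complements, and a complete treatment of the lemma needs both halves.) Second, Lemma~\ref{lemma:iv-v-in-conf-good} as stated carries the factor $L_\Pi\sqrt{\pdim_{\cF_0,\cF_1,\cW,\cV,\Pi}}$, which does not appear in $\epsilon_L^*$. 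Your observation that $L_\Pi$ and $\pdim_\Pi$ should drop because $\pi^*$ is fixed is correct in substance --- $L_\Pi$ enters only through covering the class $\{V^\pi\colon \pi\in\Pi\}$ in uniform-over-$\Pi$ statements --- but you cannot cite that lemma verbatim; you need its single-policy variant, re-derived with the concentration of Theorem~\ref{thm:hoeffding-mixing} taken over $\cV\times\cW\times\cF_0\times\cF_1$ only. The paper's MIS route sidesteps the $L_\Pi$ issue automatically, since Lemma~\ref{lemma:iv-w-in-conf-good} carries no such factor (though the paper, too, silently drops $\pdim_\Pi$ when quoting that lemma at the fixed policy $\pi^*$).
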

\begin{proof}
See \S\ref{prf:lemma:iv-mis-min-delta-close-dr} for a detailed proof. 
\end{proof}

By the definition of $L_{\dr}$, it holds with probability at least $1 - \delta$ that
\#\label{eq:koko1}
& J(\pi^*) - J(\hat\pi_\dr) \\
& \qquad = J(\pi^*) - L_\dr(w, V^{\hat \pi_\dr}, \hat \pi_\dr; \Delta^*, \Theta^*)  \\
& \qquad \leq J(\pi^*) - \min_{(\Delta, \Theta) \in \ci^0_{\alpha_0}\times \ci^1_{\alpha_1}} \min_{(w,v) \in \ci_{\alpha_\mis, \alpha_\vf}(\Delta, \Theta, \hat \pi_\dr)} L_\dr(w, v, \hat \pi_\dr; \Delta, \Theta)  \\
& \qquad \leq J(\pi^*) - \min_{(\Delta, \Theta) \in \ci^0_{\alpha_0}\times \ci^1_{\alpha_1}} \min_{(w,v) \in \ci_{\alpha_\mis, \alpha_\vf}(\Delta, \Theta, \hat \pi_\dr)} \hat L_\dr(w, v, \hat \pi_\dr; \Delta, \Theta) + \hat \epsilon_L  \\
& \qquad \leq J(\pi^*) - \min_{(\Delta, \Theta) \in \ci^0_{\alpha_0}\times \ci^1_{\alpha_1}} \min_{(w,v) \in \ci_{\alpha_\mis, \alpha_\vf}(\Delta, \Theta, \pi^*)} \hat L_\dr(w, v, \pi^*; \Delta, \Theta) + \hat \epsilon_L  \\
& \qquad \leq J(\pi^*) - \min_{(\Delta, \Theta) \in \ci^0_{\alpha_0}\times \ci^1_{\alpha_1}} \min_{(w,v) \in \ci_{\alpha_\mis, \alpha_\vf}(\Delta, \Theta, \pi^*)} L_\dr(w, v, \pi^*; \Delta, \Theta) + 2 \hat \epsilon_L,
\#
where in the first inequality, we use Assumption \ref{ass:iv-sl-res} that $(\Delta^*, \Theta^*) \in \ci^0_{\alpha_0}\times \ci^1_{\alpha_1}$ with probability at least $1 - \delta$, and Lemma \ref{lemma:iv-v-pi-in-conf} with Assumption \ref{ass:iv-vf-realizable} that $V^{\hat \pi_\dr}\in \ci^\vf_{\alpha_\vf}(\Delta^*, \Theta^*, \hat \pi_\dr)$ with probability at least $1 - \delta$; in the second inequality, we use Lemma \ref{lemma:iv-mis-hatl-close-dr}; in the third inequality, we use the optimality of $\hat \pi_\dr$; while in the last inequality, we use Lemma \ref{lemma:iv-mis-hatl-close-dr} again. 
By combining Lemma \ref{lemma:iv-mis-min-delta-close-dr} and \eqref{eq:koko1}, we have
\#\label{eq:koko3}
& J(\pi^*) - J(\hat\pi_\dr)  \\
& \qquad \leq J(\pi^*) -  \min_{(w,v) \in \ci_{\alpha_\mis, \alpha_\vf}(\Delta^*, \Theta^*, \pi^*)} L_\dr(w, v, \pi^*; \Delta^*, \Theta^*) + 2 \hat \epsilon_L + \epsilon_L^*  \\
& \qquad = L_\dr(w, V^{\pi^*}, \pi^*; \Delta^*, \Theta^*) -  \min_{(w,v) \in \ci_{\alpha_\mis, \alpha_\vf}(\Delta^*, \Theta^*, \pi^*)} L_\dr(w, v, \pi^*; \Delta^*, \Theta^*) + 2 \hat \epsilon_L + \epsilon_L^*  \\
& \qquad = \max_{(w,v) \in \ci_{\alpha_\mis, \alpha_\vf}(\Delta^*, \Theta^*, \pi^*)} \left | L_\dr(w, V^{\pi^*}, \pi^*; \Delta^*, \Theta^*) - L_\dr(w, v, \pi^*; \Delta^*, \Theta^*) \right | + 2 \hat \epsilon_L + \epsilon_L^*  \\
& \qquad = \max_{(w,v) \in \ci_{\alpha_\mis, \alpha_\vf}(\Delta^*, \Theta^*, \pi^*)} \left | \Phi_\vf^{\pi^*}(v, w^{\pi^*}; \Delta^*, \Theta^*) - \Phi_\vf^{\pi^*}(v, w; \Delta^*, \Theta^*) \right | + 2 \hat \epsilon_L + \epsilon_L^*,
\#
where we use the following fact in the last equality, 
\$
L_\dr(w, V^{\pi^*}, \pi^*; \Delta^*, \Theta^*) - L_\dr(w, v, \pi^*; \Delta^*, \Theta^*) = \Phi_\vf^{\pi^*}(v, w^{\pi^*}; \Delta^*, \Theta^*) - \Phi_\vf^{\pi^*}(v, w; \Delta^*, \Theta^*). 
\$
In the meanwhile, note that by Assumption \ref{ass:iv-vf-realizable} that $w^{\pi^*} \in \cW$, we obtain from \eqref{eq:koko3} that 
\#\label{eq:iv-dr-case2}
J(\pi^*) - J(\hat\pi_\dr) & \leq 2 \max_{(w,v) \in \ci_{\alpha_\mis, \alpha_\vf}(\Delta^*, \Theta^*, \pi^*)} \left | \Phi_\vf^{\pi^*}(v, w; \Delta^*, \Theta^*) \right | + 2 \hat \epsilon_L + \epsilon_L^*  \\
& \leq c\cdot \frac{C_{\Delta^*}^2 C_{\Theta^*}^2 C_*}{1-\gamma} (\xi_0 + \xi_1) \sqrt{\frac{1}{NT \kappa} \pdim_{\cF_0,\cF_1,\cW,\cV,\Pi} \log\frac{NT }{\delta}}, 
\#
where we use Lemma \ref{lemma:iv-v-in-conf-good} and plug in the definition of $\epsilon_L$ and $\epsilon_L^*$ in the last inequality. This concludes the proof of case (i).

\vskip5pt
\noindent\textbf{Case (ii): Assumption \ref{ass:iv-mis-realizable} holds.}
It holds with probability at least $1 - \delta$ that
\$
J(\pi^*) - J(\hat\pi_\dr) & = J(\pi^*) - L_\dr(w^{\hat \pi_\dr}, v, \hat \pi_\dr; \Delta^*, \Theta^*)  \\
& \leq J(\pi^*) - \min_{(\Delta, \Theta) \in \ci^0_{\alpha_0}\times \ci^1_{\alpha_1}} \min_{(w,v) \in \ci_{\alpha_\mis, \alpha_\vf}(\Delta, \Theta, \hat \pi_\dr)} L_\dr(w, v, \hat \pi_\dr; \Delta, \Theta)  \\
& \leq J(\pi^*) - \min_{(\Delta, \Theta) \in \ci^0_{\alpha_0}\times \ci^1_{\alpha_1}} \min_{(w,v) \in \ci_{\alpha_\mis, \alpha_\vf}(\Delta, \Theta, \hat \pi_\dr)} \hat L_\dr(w, v, \hat \pi_\dr; \Delta, \Theta) + \hat \epsilon_L  \\
& \leq J(\pi^*) - \min_{(\Delta, \Theta) \in \ci^0_{\alpha_0}\times \ci^1_{\alpha_1}} \min_{(w,v) \in \ci_{\alpha_\mis, \alpha_\vf}(\Delta, \Theta, \pi^*)} \hat L_\dr(w, v, \pi^*; \Delta, \Theta) + \hat \epsilon_L,
\$
where we use Assumption \ref{ass:iv-sl-res} that $(\Delta^*, \Theta^*) \in \ci^0_{\alpha_0}\times \ci^1_{\alpha_1}$ with probability at least $1-\delta$ and Assumption \ref{ass:iv-mis-realizable} that $w^\pi \in \cW$ for any $\pi\in \Pi$ in the first inequality, we use Lemma \ref{lemma:iv-mis-hatl-close-dr} in the second inequality, and we use the optimality of $\hat \pi_\dr$ in the last inequality. Further, by Lemmas \ref{lemma:iv-mis-hatl-close-dr} and \ref{lemma:iv-mis-min-delta-close-dr}, we have 
\$
& J(\pi^*) - J(\hat\pi_\dr) \\
& \qquad \leq J(\pi^*) - \min_{(\Delta, \Theta) \in \ci^0_{\alpha_0}\times \ci^1_{\alpha_1}} \min_{(w,v) \in \ci_{\alpha_\mis, \alpha_\vf}(\Delta, \Theta, \pi^*)}  L_\dr(w, v, \pi^*; \Delta, \Theta) + 2 \hat \epsilon_L\\
& \qquad \leq J(\pi^*) - \min_{(w,v) \in \ci_{\alpha_\mis, \alpha_\vf}(\Delta^*, \Theta^*, \pi^*)}  L_\dr(w, v, \pi^*; \Delta^*, \Theta^*) + 2\hat \epsilon_L + \epsilon^*_L \\
& \qquad \leq \max_{(w,v) \in \ci_{\alpha_\mis, \alpha_\vf}(\Delta^*, \Theta^*, \pi^*)} \left | L_\dr(w^{\pi^*}, v, \pi^*; \Delta^*, \Theta^*)  - L_\dr(w, v, \pi^*; \Delta^*, \Theta^*) \right | + 2\hat \epsilon_L + \epsilon^*_L  \\
& \qquad \leq \max_{(w,v) \in \ci_{\alpha_\mis, \alpha_\vf}(\Delta^*, \Theta^*, \pi^*)} \left | \Phi_\mis^{\pi^*}(w, V^{\pi^*}; \Delta^*, \Theta^*) - \Phi_\mis^{\pi^*}(w, v; \Delta^*, \Theta^*) \right | + 2\hat \epsilon_L + \epsilon^*_L  \\
& \qquad \leq 2 \max_{(w,v) \in \ci_{\alpha_\mis, \alpha_\vf}(\Delta^*, \Theta^*, \pi^*)} \left | \Phi_\mis^{\pi^*}(w, v; \Delta^*, \Theta^*) \right | + 2 \hat \epsilon_L + \epsilon^*_L  \\
& \qquad \leq 2 \max_{w \in \ci^\mis_{\alpha_\mis} (\Delta^*, \Theta^*, \pi^*)} \max_{v\in \cV} \left | \Phi_\mis^{\pi^*}(w, v; \Delta^*, \Theta^*) \right | + 2 \hat \epsilon_L + \epsilon^*_L
\$
where in the third inequality, we use the fact that $J(\pi^*) = L_\dr(w^{\pi^*}, v, \pi^*; \Delta^*, \Theta^*)$; 
in the forth inequality, we use the following fact
\$
L_\dr(w,v,\pi^*; \Delta^*, \Theta^*) - L_\dr(w^{\pi^*},v,\pi^*; \Delta^*, \Theta^*) = -\Phi_\mis^{\pi^*}(w, V^{\pi^*}; \Delta^*, \Theta^*) + \Phi_\mis^{\pi^*}(w, v; \Delta^*, \Theta^*)
\$
for any $(w,v)\in \cW\times \cV$; in the fifth inequality, we use the fact that $V^{\pi^*}\in \cV$ by Assumption \ref{ass:iv-mis-realizable} and $V^{\pi^*}\in \ci_{\alpha_\vf}^\vf(\Delta^*, \Theta^*, \pi^*)$ with probability at least $1 - \delta$ by Lemma \ref{lemma:iv-v-pi-in-conf}. 
Now, by Lemma \ref{lemma:iv-w-in-conf-good} and the fact that $\cV$ is symmetric, we obtain that 
\#\label{eq:iv-dr-case1}
J(\pi^*) - J(\hat\pi_\dr) \leq c\cdot \frac{C_{\Delta^*}^2 C_{\Theta^*}^2 C_*}{1-\gamma} (\xi_0 + \xi_1) \sqrt{\frac{1}{NT \kappa} \pdim_{\cF_0,\cF_1,\cW,\cV,\Pi} \log\frac{NT }{\delta}}, 
\#
which concludes the proof of case (ii).

\vskip5pt
By combining \eqref{eq:iv-dr-case2} and \eqref{eq:iv-dr-case1}, we conclude the proof of the theorem. 
\end{proof}

\subsection{Proof of Theorem \ref{thm:iv-dr-spec}}\label{prf:thm:iv-dr-spec}
\begin{proof}
Recall that 
\$
\tilde v^\pi \in \argmin_{v\in \cV} \max_{w\in \cW} \Phi_\vf^\pi(v,w; \Delta^*, \Theta^*), \qquad \tilde w^\pi\in \argmin_{w\in \cW} \max_{v\in \cV} \Phi_\mis^\pi(w, v; \Delta^*, \Theta^*). 
\$
We split the proof into the following two parts. 

\noindent\textbf{Part (i).}
We first introduce the following lemmas. 
\begin{lemma}
\label{lemma:vinconf-spec}
Suppose $\alpha_\vf$ is defined in Lemma \ref{lemma:iv-v-pi-in-conf}
and $c / (NT)^2 \leq \delta \leq 1$. 
Then under Assumptions \ref{ass:spaces}\ref{ass:upper-bound-delta} and \ref{ass:ergodic}, with probability at least $1 - \delta$, it holds for any $\pi\in \Pi$ that $\tilde v^\pi \in \ci^\vf_{\alpha_\vf}(\Delta^*, \Theta^*, \pi)$. 
\end{lemma}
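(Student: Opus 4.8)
The plan is to mirror the proof of Lemma~\ref{lemma:iv-v-pi-in-conf} almost verbatim, replacing the single structural fact that proof exploits. Write $\Phi_\vf^\pi(v,g;*)=\Phi_\vf^\pi(v,g;\Delta^*,\Theta^*)$ and $\hat v_*^\pi = \hat v_{\Delta^*,\Theta^*}^\pi$. Since $\tilde v^\pi\in\cV$, by the definition of the confidence set $\ci^\vf_{\alpha_\vf}(\Delta^*,\Theta^*,\pi)$ it suffices to show that with probability at least $1-\delta$, uniformly over $\pi\in\Pi$,
\[
\max_{g\in\cW}\hat\Phi_\vf^\pi(\tilde v^\pi,g;*)-\max_{g\in\cW}\hat\Phi_\vf^\pi(\hat v_*^\pi,g;*)\le \alpha_\vf .
\]

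First I would insert and subtract population quantities to obtain the telescoping decomposition
\[
\max_{g}\hat\Phi_\vf^\pi(\tilde v^\pi,g;*)-\max_{g}\hat\Phi_\vf^\pi(\hat v_*^\pi,g;*)= T_1 + T_2 + T_3,
\]
with $T_1=\max_{g}\hat\Phi_\vf^\pi(\tilde v^\pi,g;*)-\max_{g}\Phi_\vf^\pi(\tilde v^\pi,g;*)$, $T_2=\max_{g}\Phi_\vf^\pi(\tilde v^\pi,g;*)-\max_{g}\Phi_\vf^\pi(\hat v_*^\pi,g;*)$, and $T_3=\max_{g}\Phi_\vf^\pi(\hat v_*^\pi,g;*)-\max_{g}\hat\Phi_\vf^\pi(\hat v_*^\pi,g;*)$. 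The crucial step is that $T_2\le 0$. Whereas the proof of Lemma~\ref{lemma:iv-v-pi-in-conf} used the estimating equation $\max_{g}\Phi_\vf^\pi(V^\pi,g;*)=0$ together with $\max_g \Phi_\vf^\pi(v,g;*)\ge 0$, here $\tilde v^\pi$ is by its very definition in \eqref{eq:v-tilde-def} a minimizer over $v\in\cV$ of $\max_{g\in\cW}\Phi_\vf^\pi(v,g;*)$; since $\hat v_*^\pi\in\cV$, this gives $\max_{g}\Phi_\vf^\pi(\tilde v^\pi,g;*)\le\max_{g}\Phi_\vf^\pi(\hat v_*^\pi,g;*)$. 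Dropping $T_2$ and bounding $T_1,T_3$ by a supremum yields
\[
\max_{g}\hat\Phi_\vf^\pi(\tilde v^\pi,g;*)-\max_{g}\hat\Phi_\vf^\pi(\hat v_*^\pi,g;*)\le 2\max_{v\in\cV}\max_{g\in\cW}\bigl|\hat\Phi_\vf^\pi(v,g;*)-\Phi_\vf^\pi(v,g;*)\bigr|.
\]

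Finally I would invoke Theorem~\ref{thm:hoeffding-mixing}, the maximal inequality for geometrically ergodic sequences over the classes $\cV,\cW,\Pi$, which under Assumption~\ref{ass:spaces}\ref{ass:upper-bound-delta} (lower bounds on $\Delta^*,\Theta^*$, so the weights $1/(\Delta^*\Theta^*)$ are controlled), the uniform boundedness of $\cV,\cW$, and Assumption~\ref{ass:ergodic} (geometric mixing) gives, with probability at least $1-\delta$ and uniformly in $\pi$,
\[
\max_{v\in\cV}\max_{g\in\cW}\bigl|\hat\Phi_\vf^\pi(v,g;*)-\Phi_\vf^\pi(v,g;*)\bigr|\le c\cdot\frac{C_{\Delta^*}C_{\Theta^*}C_*}{1-\gamma}\sqrt{\frac{\pdim_{\cW,\cV,\Pi}}{NT\kappa}\log\frac{1}{\delta}\log(NT)};
\]
this is exactly the bound \eqref{eq:iv-ee2} already established in the proof of Lemma~\ref{lemma:iv-v-pi-in-conf}. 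Multiplying by $2$ and absorbing the constant factor into $c$ identifies the right-hand side with $\alpha_\vf$, so $\tilde v^\pi\in\ci^\vf_{\alpha_\vf}(\Delta^*,\Theta^*,\pi)$ for every $\pi\in\Pi$ simultaneously. There is essentially no hard part beyond recognizing that the optimality of $\tilde v^\pi$ plays precisely the role that the IV-aided Bellman estimating equation played for $V^\pi$; once $T_2\le 0$ is in hand, the rest is the same uniform concentration argument, quoted wholesale from Theorem~\ref{thm:hoeffding-mixing}.
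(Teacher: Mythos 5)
Your proposal is correct and follows essentially the same route as the paper's own proof: the identical telescoping decomposition, the key observation that the middle term is nonpositive because $\tilde v^\pi$ minimizes $v \mapsto \max_{g\in\cW}\Phi_\vf^\pi(v,g;\Delta^*,\Theta^*)$ over $\cV$ (with $\hat v^\pi_{\Delta^*,\Theta^*}\in\cV$), and the same uniform concentration bound from Theorem \ref{thm:hoeffding-mixing} identified with $\alpha_\vf$. You also correctly pinpoint the one substantive change relative to Lemma \ref{lemma:iv-v-pi-in-conf} — replacing the estimating-equation fact $\max_g\Phi_\vf^\pi(V^\pi,g;*)=0$ by the optimality of $\tilde v^\pi$ — which is exactly what the paper does.
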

\begin{proof}
See \S\ref{prf:lemma:vinconf-spec} for a detailed proof. 
\end{proof}

\begin{lemma}
\label{lemma:vinconf-good-spec}
Suppose that $(\alpha_0, \alpha_1, \alpha_\vf)$ is defined in Assumption \ref{ass:iv-sl-res} and Lemma \ref{lemma:iv-v-pi-in-conf} and $c / (NT)^2 \leq \delta \leq 1$.
Then under Assumptions \ref{ass:iv-common}, \ref{ass:10}, \ref{ass:ergodic} and \ref{ass:iv-sl-res}, with probability at least $1 - \delta$, it holds for any policy $\pi\in \Pi$ and $v\in \cup_{(\Delta, \Theta) \in \ci^0_{\alpha_0} \times \ci^1_{\alpha_1}} \ci^\vf_{\alpha_\vf}(\Delta, \Theta, \pi)$ that
\$
\max_{g\in \cW} \Phi^\pi_\vf(v,g;\Delta^*, \Theta^*) & \leq  c\cdot \frac{C_{\Delta^*}^2 C_{\Theta^*}^2 C_*}{1-\gamma} (\xi_0 + \xi_1) \sqrt{\frac{1}{NT\kappa}\cdot \pdim_{\cF_0,\cF_1,\cW,\cV,\Pi} \cdot \log\frac{1}{\delta} \log(NT)}  \\
& \qquad + \max_{g\in \cW}  \Phi_\vf^\pi(\tilde v^\pi, g; \Delta^*, \Theta^*). 
\$
\end{lemma}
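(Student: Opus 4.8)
The plan is to re-run the argument behind Lemma~\ref{lemma:iv-v-in-conf-good} almost verbatim, with the single structural change that the realizable reference $V^\pi$---for which $\max_{g\in\cW}\Phi_\vf^\pi(V^\pi,g;\Delta^*,\Theta^*)=0$---is replaced throughout by the best-in-class surrogate $\tilde v^\pi$ of \eqref{eq:v-tilde-def}. Since realizability no longer forces the population risk of the reference to vanish, the quantity $\max_{g\in\cW}\Phi_\vf^\pi(\tilde v^\pi,g;\Delta^*,\Theta^*)$ will not cancel and is instead carried to the right-hand side as the irreducible approximation error appearing in the claimed bound.

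Concretely, I would fix $v\in\cup_{(\Delta,\Theta)\in\ci^0_{\alpha_0}\times\ci^1_{\alpha_1}}\ci^\vf_{\alpha_\vf}(\Delta,\Theta,\pi)$, choose $(\tilde\Delta,\tilde\Theta)$ with $v\in\ci^\vf_{\alpha_\vf}(\tilde\Delta,\tilde\Theta,\pi)$, and set $\tilde v=\hat v^\pi_{\tilde\Delta,\tilde\Theta}$. The telescoping decomposition \eqref{eq:iv-dd2-vf} then bounds $\max_{g\in\cW}\Phi_\vf^\pi(v,g;\Delta^*,\Theta^*)$ by twice Term~(I) plus Term~(II) plus Term~(III) plus $\alpha_\vf$, in the notation of that display. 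Term~(I) (the uniform deviation $\max|\hat\Phi_\vf-\Phi_\vf|$ over $\cF_0\times\cF_1\times\cW\times\cV\times\Pi$), Term~(III) (the nuisance-sensitivity gap controlled by \eqref{eq:iv-dd5-vf-1}--\eqref{eq:iv-dd5-vf-2}), and $\alpha_\vf$ are pure statistical-error quantities whose control via Theorem~\ref{thm:hoeffding-mixing}, Assumption~\ref{ass:iv-sl-res}, and Lemma~\ref{lemma:iv-v-pi-in-conf} never invokes realizability; they therefore transfer unchanged and produce the first summand of the target inequality.

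The only piece that must be reworked is Term~(II), for which I would prove a misspecified analog of Lemma~\ref{lemma:iv-any-v-hat-small-loss}, namely that uniformly over $(\Delta,\Theta,\pi)$
\[
\max_{g\in\cW}\Phi_\vf^\pi(\hat v^\pi_{\Delta,\Theta},g;\Delta,\Theta)\ \le\ \mathcal{E}_{\mathrm{stat}} + \max_{g\in\cW}\Phi_\vf^\pi(\tilde v^\pi,g;\Delta^*,\Theta^*),
\]
where $\mathcal{E}_{\mathrm{stat}}$ denotes the statistical-error term of the claimed bound. The proof re-runs the telescoping \eqref{eq:iv-ee3-vf} with $\tilde v^\pi$ inserted in place of $V^\pi$ as the comparison point. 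This substitution is legitimate because $\tilde v^\pi\in\cV$ by construction, so the minimizing property of $\hat v^\pi_{\Delta,\Theta}$ gives $\max_g\hat\Phi_\vf(\hat v^\pi_{\Delta,\Theta};\Delta,\Theta)\le\max_g\hat\Phi_\vf(\tilde v^\pi;\Delta,\Theta)$ and keeps the corresponding bracket nonpositive, while $\|\tilde v^\pi\|_\infty\le 1/(1-\gamma)$ lets me reuse the nuisance-sensitivity estimate \eqref{eq:iv-ee5-vf} verbatim; the residual $\max_g\Phi_\vf^\pi(\tilde v^\pi,g;\Delta^*,\Theta^*)$ then survives rather than vanishing. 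Because $\tilde v^\pi$ is drawn directly from $\cV$, the uniform-convergence step runs over $\cV\times\Pi$, so the Lipschitz factor $L_\Pi$---which enters Lemma~\ref{lemma:iv-any-v-hat-small-loss} through covering the $\Pi$-indexed value-function class---does not appear here.

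Substituting this Term~(II) bound back into \eqref{eq:iv-dd2-vf} and collecting the statistical errors yields exactly the stated inequality. The main obstacle I anticipate is bookkeeping uniformity in $\pi$: since $\tilde v^\pi$ depends on $\pi$, I must ensure the high-probability events hold simultaneously over all $\pi\in\Pi$, which is why the concentration is applied over the full product class $\cF_0\times\cF_1\times\cW\times\cV\times\Pi$ rather than pointwise; the delicate check is confirming that no hidden appeal to $V^\pi\in\cV$ has crept into Terms~(I) and~(III), and that the surviving residual is genuinely the population risk of $\tilde v^\pi$ at the true nuisances, so that it matches the definition in \eqref{eq:v-tilde-def}.
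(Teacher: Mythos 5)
Your proposal is correct and follows essentially the same route as the paper's proof: the same decomposition into Terms (I)--(III) plus $\alpha_\vf$, the same unchanged treatment of Terms (I), (III) and $\alpha_\vf$ via Theorem \ref{thm:hoeffding-mixing} and Assumption \ref{ass:iv-sl-res}, and the same reworking of Term (II) by telescoping against $\tilde v^\pi \in \cV$ (using the empirical minimality of $\hat v^\pi_{\tilde\Delta,\tilde\Theta}$ and the nuisance-sensitivity bound) so that the residual $\max_{g\in\cW}\Phi_\vf^\pi(\tilde v^\pi,g;\Delta^*,\Theta^*)$ survives into the final bound. Your side observations—that the $L_\Pi$ factor drops out because the comparator lies in $\cV$, and that uniformity in $\pi$ is handled by concentrating over the full product class—are also consistent with the paper.
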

\begin{proof}
See \S\ref{prf:lemma:vinconf-good-spec} for a detailed proof. 
\end{proof}

By the definition of $L_{\dr}$, it holds that
\#\label{eq:0622-0}
& J(\pi^*) - J(\hat\pi_\dr)  \\
& \qquad = J(\pi^*) - L_\dr(w, V^{\hat \pi_\dr}, \hat \pi_\dr; \Delta^*, \Theta^*)  \\
& \qquad = J(\pi^*) - L_\dr(w, \tilde v^{\hat \pi_\dr}, \hat \pi_\dr; \Delta^*, \Theta^*) + L_\dr(w, \tilde v^{\hat \pi_\dr}, \hat \pi_\dr; \Delta^*, \Theta^*) - L_\dr(w, V^{\hat \pi_\dr}, \hat \pi_\dr; \Delta^*, \Theta^*). 
\#
Note that
\#\label{eq:0622-1}
& \left | L_\dr(w, \tilde v^{\hat \pi_\dr}, \hat \pi_\dr; \Delta^*, \Theta^*) - L_\dr(w, V^{\hat \pi_\dr}, \hat \pi_\dr; \Delta^*, \Theta^*) \right | \leq C_* C_{\Delta^*} C_{\Theta^*} \vareps^\cV_\vf. 
\#
In the meanwhile, by Assumption \ref{ass:iv-sl-res} and Lemma \ref{lemma:vinconf-spec}, it holds with probability at least $1 - \delta$ that
\#\label{eq:0622-2}
L_\dr(w, \tilde v^{\hat \pi_\dr}, \hat \pi_\dr; \Delta^*, \Theta^*) & \geq \min_{(\Delta, \Theta) \in \ci^0_{\alpha_0}\times \ci^1_{\alpha_1}} \min_{(w,v) \in \ci_{\alpha_\mis, \alpha_\vf}(\Delta, \Theta, \hat \pi_\dr)} L_\dr(w, v, \hat \pi_\dr; \Delta, \Theta)  \\
& \geq \min_{(\Delta, \Theta) \in \ci^0_{\alpha_0}\times \ci^1_{\alpha_1}} \min_{(w,v) \in \ci_{\alpha_\mis, \alpha_\vf}(\Delta, \Theta, \hat \pi_\dr)} \hat L_\dr(w, v, \hat \pi_\dr; \Delta, \Theta) - \hat \eps_L  \\
& \geq \min_{(\Delta, \Theta) \in \ci^0_{\alpha_0}\times \ci^1_{\alpha_1}} \min_{(w,v) \in \ci_{\alpha_\mis, \alpha_\vf}(\Delta, \Theta, \pi^*)} \hat L_\dr(w, v, \pi^*; \Delta, \Theta) - \hat \eps_L  \\
& \geq \min_{(\Delta, \Theta) \in \ci^0_{\alpha_0}\times \ci^1_{\alpha_1}} \min_{(w,v) \in \ci_{\alpha_\mis, \alpha_\vf}(\Delta, \Theta, \pi^*)} L_\dr(w, v, \pi^*; \Delta, \Theta) - 2 \hat \eps_L, 
\#
where in the second inequality, we use Lemma \ref{lemma:iv-mis-hatl-close-dr}; 
in the third inequality, we use the optimality of $\hat \pi_\dr$; 
in the forth inequality, we again use Lemma \ref{lemma:iv-mis-hatl-close-dr}. Now, by plugging \eqref{eq:0622-1} and \eqref{eq:0622-2} into \eqref{eq:0622-0}, we have
\#\label{eq:0622-3}
& J(\pi^*) - J(\hat\pi_\dr)  \\
& \qquad \leq J(\pi^*) - \min_{(\Delta, \Theta) \in \ci^0_{\alpha_0}\times \ci^1_{\alpha_1}} \min_{(w,v) \in \ci_{\alpha_\mis, \alpha_\vf}(\Delta, \Theta, \pi^*)} L_\dr(w, v, \pi^*; \Delta, \Theta) \\
& \qquad \qquad + 2 \hat \eps_L + C_* C_{\Delta^*} C_{\Theta^*} \vareps^\cV_\vf. 
\#
By combining Lemma \ref{lemma:iv-mis-min-delta-close-dr}  and \eqref{eq:0622-3}, we have
\#\label{eq:koko3-2-spec}
& J(\pi^*) - J(\hat\pi_\dr)  \\
& \quad \leq J(\pi^*) -  \min_{(w,v)\in \ci_{\alpha_\mis,\alpha_\vf}(\Delta^*, \Theta^*, \pi^*) } L_\dr(w, v, \pi^*; \Delta^*, \Theta^*) + 2 \hat \epsilon_L + \epsilon_L^* 
+ C_* C_{\Delta^*} C_{\Theta^*} \vareps^\cV_\vf  \\
& \quad = L_\dr(w, V^{\pi^*}, \pi^*; \Delta^*, \Theta^*) -  \min_{(w,v)\in \ci_{\alpha_\mis,\alpha_\vf}(\Delta^*, \Theta^*, \pi^*)} L_\dr(w, v, \pi^*; \Delta^*, \Theta^*)  \\
& \qquad \qquad + 2 \hat \epsilon_L + \epsilon_L^* 
+ C_* C_{\Delta^*} C_{\Theta^*} \vareps^\cV_\vf  \\
& \quad = \max_{(w,v)\in \ci_{\alpha_\mis,\alpha_\vf}(\Delta^*, \Theta^*, \pi^*)} \left | L_\dr(w, V^{\pi^*}, \pi^*; \Delta^*, \Theta^*) - L_\dr(w, v, \pi^*; \Delta^*, \Theta^*) \right |  \\
& \qquad \qquad + 2 \hat \epsilon_L + \epsilon_L^*
+ C_* C_{\Delta^*} C_{\Theta^*} \vareps^\cV_\vf  \\
& \quad = \max_{(w,v)\in \ci_{\alpha_\mis,\alpha_\vf}(\Delta^*, \Theta^*, \pi^*)} \left | \Phi_\vf^{\pi^*}(v, w^{\pi^*}; \Delta^*, \Theta^*) - \Phi_\vf^{\pi^*}(v, w; \Delta^*, \Theta^*) \right | \\
& \qquad \qquad + 2 \hat \epsilon_L + \epsilon_L^* + C_* C_{\Delta^*} C_{\Theta^*} \vareps^\cV_\vf, 
\#
where we use the following fact in the last equality, 
\$
L_\dr(w, V^{\pi^*}, \pi^*; \Delta^*, \Theta^*) - L_\dr(w, v, \pi^*; \Delta^*, \Theta^*) = \Phi_\vf^{\pi^*}(v, w^{\pi^*}; \Delta^*, \Theta^*) - \Phi_\vf^{\pi^*}(v, w; \Delta^*, \Theta^*). 
\$
We upper bound the first term on the RHS of \eqref{eq:koko3-2-spec} as follows,
\#\label{eq:456374637}
& \max_{(w,v)\in \ci_{\alpha_\mis,\alpha_\vf}(\Delta^*, \Theta^*, \pi^*)} \left | \Phi_\vf^{\pi^*}(v, w^{\pi^*}; \Delta^*, \Theta^*) - \Phi_\vf^{\pi^*}(v, w; \Delta^*, \Theta^*) \right|  \\
& \quad \leq \max_{v\in \ci^\vf_{\alpha_\vf}(\Delta^*, \Theta^*, \pi^*)} \max_{w\in \cW} \left | \Phi_\vf^{\pi^*}(v, \tilde w^{\pi^*}; \Delta^*, \Theta^*) - \Phi_\vf^{\pi^*}(v, w; \Delta^*, \Theta^*) \right|   \\
& \qquad\quad + \max_{v\in \cV} \left | \Phi_\vf^{\pi^*}(v, w^{\pi^*}; \Delta^*, \Theta^*) - \Phi_\vf^{\pi^*}(v, \tilde w^{\pi^*}; \Delta^*, \Theta^*) \right|  \\
& \quad \leq 2 \max_{v\in \ci^\vf_{\alpha_\vf}(\Delta^*, \Theta^*, \pi^*)} \max_{w\in \cW} \left |\Phi_\vf^{\pi^*}(v, w; \Delta^*, \Theta^*) \right| \\
& \qquad \quad + \max_{v\in \cV} \left | \Phi_\vf^{\pi^*}(v, w^{\pi^*}; \Delta^*, \Theta^*) - \Phi_\vf^{\pi^*}(v, \tilde w^{\pi^*}; \Delta^*, \Theta^*) \right|,
\#
where in the first inequality, we use triangle inequality; in the second inequality, we use the definition of $\tilde w^{\pi^*}$ that $\tilde w^{\pi^*}\in \cW$. By Lemma \ref{lemma:vinconf-good-spec} and Assumption \ref{ass:model-spec}, we obtain from \eqref{eq:456374637} that 
\#\label{eq:87343846}
& \max_{(w,v)\in \ci_{\alpha_\mis,\alpha_\vf}(\Delta^*, \Theta^*, \pi^*)} \left | \Phi_\vf^{\pi^*}(v, w^{\pi^*}; \Delta^*, \Theta^*) - \Phi_\vf^{\pi^*}(v, w; \Delta^*, \Theta^*) \right |  \\
& \qquad \leq c\cdot \frac{C_{\Delta^*}^2 C_{\Theta^*}^2 C_*}{1-\gamma} (\xi_0 + \xi_1) \sqrt{\frac{1}{NT\kappa}\cdot \pdim_{\cF_0,\cF_1,\cW,\cV,\Pi} \cdot \log\frac{1}{\delta} \log(NT)} \\
& \qquad \qquad + 2 \max_{g\in \cW}  \Phi_\vf^{\pi^*}(\tilde v^{\pi^*}, g; \Delta^*, \Theta^*) \\
& \qquad \qquad + C_{\Delta^*} C_{\Theta^*} \vareps^\cW_\vf/(1-\gamma).  
\#
Also, we have
\#\label{eq:3456789765}
& \max_{g\in \cW}  \Phi_\vf^{\pi^*}(\tilde v^{\pi^*}, g; \Delta^*, \Theta^*)  \\
& \qquad = \max_{g\in \cW}  \Phi_\vf^{\pi^*}(\tilde v^{\pi^*}, g; \Delta^*, \Theta^*) - \max_{g\in \cW}  \Phi_\vf^{\pi^*}(V^{\pi^*}, g; \Delta^*, \Theta^*) + \max_{g\in \cW}  \Phi_\vf^{\pi^*}(V^{\pi^*}, g; \Delta^*, \Theta^*)  \\
& \qquad= \max_{g\in \cW}  \Phi_\vf^{\pi^*}(\tilde v^{\pi^*}, g; \Delta^*, \Theta^*) - \max_{g\in \cW}  \Phi_\vf^{\pi^*}(V^{\pi^*}, g; \Delta^*, \Theta^*)  \\
& \qquad= \max_{g\in \cW} \left | \Phi_\vf^{\pi^*}(\tilde v^{\pi^*}, g; \Delta^*, \Theta^*) -  \Phi_\vf^{\pi^*}(V^{\pi^*}, g; \Delta^*, \Theta^*) \right |  \\
& \qquad \leq C_* C_{\Delta^*} C_{\Theta^*} \vareps^\cV_\vf, 
\#
where we use Assumption \ref{ass:model-spec} in the last inequality. 
By plugging \eqref{eq:87343846} and \eqref{eq:3456789765} into \eqref{eq:koko3-2-spec}, we have
\#\label{eq:iv-dr-case2-2-spec}
J(\pi^*) - J(\hat\pi_\dr) & \leq c\cdot \frac{C_{\Delta^*}^2 C_{\Theta^*}^2 C_*}{1-\gamma} (\xi_0 + \xi_1) \sqrt{\frac{1}{NT \kappa} \pdim_{\cF_0,\cF_1,\cW,\cV,\Pi} \log\frac{NT }{\delta}} \\
& \qquad + 3 C_{\Delta^*} C_{\Theta^*} \left(C_* \vareps^\cV_\vf + \vareps^\cW_\vf/(1-\gamma)\right), 
\#
where we plug in the definition of $\epsilon_L$ and $\epsilon_L^*$ in the last inequality.

\vskip5pt
\noindent\textbf{Part (ii).}
We first introduce the following lemmas. 
\begin{lemma}
\label{lemma:iv-w-pi-in-conf-spec}
Suppose $\alpha_\mis$ is defined in Lemma \ref{lemma:iv-w-pi-in-conf} and 
and $c / (NT)^2 \leq \delta \leq 1$. 
Then under Assumptions \ref{ass:upper-bound-delta} and \ref{ass:ergodic}, with probability at least $1 - \delta$, it holds for any $\pi \in \Pi$ that $\tilde w^\pi \in \ci^\mis_{\alpha_\mis}(\Delta^*, \Theta^*, \pi)$. 
\end{lemma}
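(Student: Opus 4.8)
The plan is to mirror the proof of Lemma \ref{lemma:iv-w-pi-in-conf} almost verbatim, replacing the realizable ratio $w^\pi$ by $\tilde w^\pi$. The crucial observation is that under model misspecification $\tilde w^\pi$ plays exactly the structural role that $w^\pi$ played in the realizable analysis. Indeed, by its definition in \eqref{eq:v-tilde-def}, $\tilde w^\pi \in \argmin_{w\in\cW}\max_{v\in\cV}\Phi_\mis^\pi(w,v;\Delta^*,\Theta^*)$, so in particular $\tilde w^\pi\in\cW$ and, at the population level, $\tilde w^\pi$ minimizes $w \mapsto \max_{f\in\cV}\Phi_\mis^\pi(w,f;\Delta^*,\Theta^*)$ over all of $\cW$. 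This is precisely the property of $w^\pi$ that drove the earlier argument (there it held because $\Phi_\mis^\pi(w^\pi,f;\Delta^*,\Theta^*)=0$ for every $f$); here it holds by construction, so the realizability Assumption \ref{ass:iv-mis-realizable} is no longer required.

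To establish $\tilde w^\pi \in \ci^\mis_{\alpha_\mis}(\Delta^*,\Theta^*,\pi)$ it suffices to bound the quantity $\max_{f\in\cV}\hat\Phi_\mis^\pi(\tilde w^\pi,f;\Delta^*,\Theta^*) - \max_{f\in\cV}\hat\Phi_\mis^\pi(\hat w^\pi_{\Delta^*,\Theta^*},f;\Delta^*,\Theta^*)$ by $\alpha_\mis$. Writing $\Phi_\mis^\pi(\cdot;*) = \Phi_\mis^\pi(\cdot;\Delta^*,\Theta^*)$ and $\hat w^\pi_* = \hat w^\pi_{\Delta^*,\Theta^*}$, I would insert the population quantities and telescope exactly as in \eqref{eq:iv-ee1}:
\begin{align*}
& \max_{f\in\cV}\hat\Phi_\mis^\pi(\tilde w^\pi,f;*) - \max_{f\in\cV}\hat\Phi_\mis^\pi(\hat w^\pi_*,f;*) \\
& \quad \leq \Bigl(\max_{f\in\cV}\hat\Phi_\mis^\pi(\tilde w^\pi,f;*) - \max_{f\in\cV}\Phi_\mis^\pi(\tilde w^\pi,f;*)\Bigr) + \Bigl(\max_{f\in\cV}\Phi_\mis^\pi(\hat w^\pi_*,f;*) - \max_{f\in\cV}\hat\Phi_\mis^\pi(\hat w^\pi_*,f;*)\Bigr) \\
& \quad \leq 2\max_{w\in\cW}\max_{f\in\cV}\bigl|\hat\Phi_\mis^\pi(w,f;*) - \Phi_\mis^\pi(w,f;*)\bigr|,
\end{align*}
where the dropped ``population'' term $\max_{f}\Phi_\mis^\pi(\tilde w^\pi,f;*) - \max_{f}\Phi_\mis^\pi(\hat w^\pi_*,f;*)$ is nonpositive by the minimizing property of $\tilde w^\pi$ noted above (applied to $\hat w^\pi_*\in\cW$).

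It then remains to control the uniform deviation on the right, which I would handle exactly as in \eqref{eq:iv-ee2}: invoking the maximal inequality for geometrically ergodic sequences (Theorem \ref{thm:hoeffding-mixing}), under Assumption \ref{ass:spaces}\ref{ass:upper-bound-delta} (so the weights $1/(\Delta^*\Theta^*)$ are uniformly bounded) and Assumption \ref{ass:ergodic} (supplying the mixing rate $\kappa$), with probability at least $1-\delta$ the displayed supremum is at most $c\cdot \frac{C_{\Delta^*}C_{\Theta^*}C_*}{1-\gamma}\sqrt{\frac{1}{NT\kappa}\pdim_{\cV,\cW,\Pi}\log\frac1\delta\log(NT)}$, which is exactly $\alpha_\mis$ as defined in Lemma \ref{lemma:iv-w-pi-in-conf}. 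Combining the two displays yields the claim uniformly over $\pi\in\Pi$.

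I do not anticipate a genuine obstacle here, since the argument is in fact \emph{simpler} than the realizable version: the only point requiring explicit justification is the nonpositivity of the dropped population term, i.e.\ that $\tilde w^\pi$ minimizes $\max_{f\in\cV}\Phi_\mis^\pi(\cdot,f;*)$ over the whole class $\cW$, which is immediate from \eqref{eq:v-tilde-def} rather than from any realizability hypothesis. The remaining work is purely the bookkeeping that makes $\tilde w^\pi$ substitute for $w^\pi$ and the verification that the metric-entropy inputs $\pdim_{\cV,\cW,\Pi}$ feeding Theorem \ref{thm:hoeffding-mixing} are unchanged; both are routine.
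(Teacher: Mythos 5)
Your proposal is correct and follows essentially the same route as the paper's proof: the identical telescoping decomposition, dropping the population difference $\max_f\Phi_\mis^\pi(\tilde w^\pi,f;*) - \max_f\Phi_\mis^\pi(\hat w^\pi_*,f;*)$ as nonpositive by the population-minimizing property of $\tilde w^\pi$ from \eqref{eq:v-tilde-def} (with $\hat w^\pi_*\in\cW$), and then bounding the uniform empirical deviation via Theorem \ref{thm:hoeffding-mixing} to land exactly on $\alpha_\mis$. Your observation that this minimizing property substitutes for the realizability fact $\Phi_\mis^\pi(w^\pi,f;*)=0$ used in Lemma \ref{lemma:iv-w-pi-in-conf} is precisely the point of the paper's argument as well.
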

\begin{proof}
See \S\ref{prf:lemma:iv-w-pi-in-conf-spec} for a detailed proof.
\end{proof}

\begin{lemma}
\label{lemma:iv-w-in-conf-good-spec}
Suppose that $(\alpha_0, \alpha_1, \alpha_\mis)$ is defined in Assumption \ref{ass:iv-sl-res} and Lemma \ref{lemma:iv-w-pi-in-conf}, and $c / (NT)^2 \leq \delta \leq 1$. 
Then under Assumptions \ref{ass:iv-common}, \ref{ass:10}, \ref{ass:ergodic}, and \ref{ass:iv-sl-res}, with probability at least $1 - \delta$, it holds for any $\pi\in \Pi$ and $w \in \cup_{(\Delta, \Theta) \in \ci^0_{\alpha_0} \times \ci^1_{\alpha_1}} \ci^\mis_{\alpha_\mis}(\Delta, \Theta, \pi)$ that
\$
\max_{f\in \cV} \Phi^\pi_\mis(w,f;\Delta^*, \Theta^*) & \leq c\cdot \frac{ C_{\Delta^*}^2 C_{\Theta^*}^2 C_*}{1-\gamma} (\xi_0 + \xi_1) \sqrt{\frac{1}{NT \kappa}\cdot \pdim_{\cF_0,\cF_1,\cW,\cV,\Pi} \cdot \log\frac{1}{\delta} \log(NT) } \\
& \qquad + \max_{f\in \cV} \Phi_\mis^\pi(\tilde w^\pi, f; \Delta^*, \Theta^*). 
\$
\end{lemma}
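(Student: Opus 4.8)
The plan is to retrace the proof of Lemma~\ref{lemma:iv-w-in-conf-good} in \S\ref{prf:lemma:iv-w-in-conf-good}, changing only the single place where the realizability of $w^\pi$ was used. Given $w \in \cup_{(\Delta, \Theta) \in \ci^0_{\alpha_0} \times \ci^1_{\alpha_1}} \ci^\mis_{\alpha_\mis}(\Delta, \Theta, \pi)$, I would fix a pair $(\tilde\Delta, \tilde\Theta) \in \ci^0_{\alpha_0}\times \ci^1_{\alpha_1}$ with $w \in \ci^\mis_{\alpha_\mis}(\tilde\Delta, \tilde\Theta, \pi)$ and set $\tilde w = \hat w^\pi_{\tilde\Delta, \tilde\Theta}$, and then reuse the telescoping bound \eqref{eq:iv-dd2}, namely $\max_{f\in\cV}\Phi^\pi_\mis(w,f;\Delta^*, \Theta^*) \leq 2\,\mathrm{Term\ (I)} + \mathrm{Term\ (II)} + \mathrm{Term\ (III)} + \alpha_\mis$. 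The crucial observation is that the estimate \eqref{eq:iv-dd3} on Term (I) (a uniform empirical-process bound from Theorem~\ref{thm:hoeffding-mixing}), the estimate \eqref{eq:732847} on Term (III) (the nuisance-swap error controlled by Assumption~\ref{ass:iv-sl-res} and Cauchy--Schwarz), and the choice of $\alpha_\mis$ never invoke $w^\pi \in \cW$; they transfer verbatim to the misspecified setting.

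Consequently the entire burden falls on Term (II) $= \max_{f\in\cV}\Phi_\mis^\pi(\tilde w, f; \tilde\Delta, \tilde\Theta)$, which in the realizable case was handled by Lemma~\ref{lemma:iv-any-w-hat-small-loss}. I would establish a misspecified analog of that lemma: for any $(\Delta,\Theta)\in\ci^0_{\alpha_0}\times\ci^1_{\alpha_1}$ and $\pi\in\Pi$, the quantity $\max_{f\in\cV}\Phi_\mis^\pi(\hat w^\pi_{\Delta,\Theta}, f; \Delta, \Theta)$ is bounded by the usual $c\,\frac{C_{\Delta^*}^2 C_{\Theta^*}^2 C_*}{1-\gamma}(\xi_0+\xi_1)\sqrt{\pdim_{\cF_0,\cF_1,\cW,\cV,\Pi}\log(1/\delta)\log(NT)/(NT\kappa)}$ plus the additive term $\max_{f\in\cV}\Phi_\mis^\pi(\tilde w^\pi, f; \Delta^*, \Theta^*)$. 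The argument repeats the telescoping of \eqref{eq:iv-ee3} but inserts $\tilde w^\pi$ --- the population best-in-class ratio at the \emph{true} nuisances from \eqref{eq:v-tilde-def} --- in place of $w^\pi$. Because $\tilde w^\pi \in \cW$, the defining optimality of $\hat w^\pi_{\Delta,\Theta} \in \argmin_{w\in\cW}\max_{f}\hat\Phi_\mis^\pi(w,f;\Delta,\Theta)$ still yields the key non-positive cross term; the empirical-population gaps are again absorbed by Theorem~\ref{thm:hoeffding-mixing}; and the swap from $(\Delta,\Theta)$ to $(\Delta^*,\Theta^*)$ at the fixed function $\tilde w^\pi$ is controlled exactly as in \eqref{eq:iv-ee5}. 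The one term that does not vanish is $\max_{f\in\cV}\Phi_\mis^\pi(\tilde w^\pi, f; \Delta^*, \Theta^*)$, which in Lemma~\ref{lemma:iv-any-w-hat-small-loss} was zero by the estimating equation $\Phi_\mis^\pi(w^\pi, f; \Delta^*, \Theta^*)=0$ of Lemma~\ref{lemma:iv-est-eq}; here realizability is precisely what forced it to zero, and without it this term survives as the stated bias.

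Plugging this analog back into the bound on Term (II) and collecting the statistical terms (all of the same order) gives the claimed inequality. I expect the main obstacle to be conceptual bookkeeping rather than new estimates: one must keep $\tilde w$ (the empirical minimizer at $(\tilde\Delta,\tilde\Theta)$ appearing in the outer decomposition) distinct from $\tilde w^\pi$ (the population minimizer at $(\Delta^*,\Theta^*)$ entering the inner analog lemma), and verify that $\tilde w^\pi \in \cW$ so that both the optimality step and the nuisance-error bound of Assumption~\ref{ass:iv-sl-res} legitimately apply to it. Once this is in place, the proof is a direct transcription of \S\ref{prf:lemma:iv-w-in-conf-good} and \S\ref{prf:lemma:iv-any-w-hat-small-loss} with the single surviving bias term carried along.
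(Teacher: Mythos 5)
Your proposal is correct and follows essentially the same route as the paper's own proof: the same outer decomposition into Terms (I)--(III) plus $\alpha_\mis$, the same observation that only Term (II) touches realizability, and the same inner telescoping in which the empirical minimizer's optimality is invoked against the population best-in-class ratio $\tilde w^\pi \in \cW$ (which the paper denotes $w^\dagger$ versus $\tilde w^\pi$ to handle exactly the bookkeeping distinction you flag), leaving the surviving bias term $\max_{f\in \cV}\Phi_\mis^\pi(\tilde w^\pi, f;\Delta^*,\Theta^*)$. The only cosmetic differences are your harmless factor of $2$ on Term (I) and that the paper carries out your ``misspecified analog lemma'' inline rather than as a separate statement.
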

\begin{proof}
See \S\ref{prf:lemma:iv-w-in-conf-good-spec} for a detailed proof. 
\end{proof}

By the definition of $L_\dr$, we have
\#\label{eq:0623-1}
& J(\pi^*) - J(\hat \pi_\dr)  \\
& \qquad = J(\pi^*) - L_\dr(w^{\hat \pi_\dr}, v, \hat \pi_\dr; \Delta^*, \Theta^*)   \\
& \qquad = J(\pi^*) - L_\dr(\tilde w^{\hat \pi_\dr}, v, \hat \pi_\dr; \Delta^*, \Theta^*) + L_\dr(\tilde w^{\hat \pi_\dr}, v, \hat \pi_\dr; \Delta^*, \Theta^*) - L_\dr(w^{\hat \pi_\dr}, v, \hat \pi_\dr; \Delta^*, \Theta^*). 
\#
By Assumption \ref{ass:model-spec}, we have
\#\label{eq:0623-2}
\left | L_\dr(\tilde w^{\hat \pi_\dr}, v, \hat \pi_\dr; \Delta^*, \Theta^*) - L_\dr(w^{\hat \pi_\dr}, v, \hat \pi_\dr; \Delta^*, \Theta^*)\right | \leq C_{\Delta^*} C_{\Theta^*} \vareps^\cW_\mis / (1 - \gamma). 
\#
In the meanwhile, by Assumption \ref{ass:iv-sl-res} and Lemma \ref{lemma:iv-w-pi-in-conf-spec}, it holds with probability at least $1 - \delta$ that
\#\label{eq:0623-3}
L_\dr(\tilde w^{\hat \pi_\dr}, v, \hat \pi_\dr; \Delta^*, \Theta^*) & \geq \min_{(\Delta, \Theta) \in \ci^0_{\alpha_0}\times \ci^1_{\alpha_1}} \min_{(w,v) \in \ci_{\alpha_\mis, \alpha_\vf}(\Delta, \Theta, \hat \pi_\dr)} L_\dr(w, v, \hat \pi_\dr; \Delta, \Theta)  \\
& \geq \min_{(\Delta, \Theta) \in \ci^0_{\alpha_0}\times \ci^1_{\alpha_1}} \min_{(w,v) \in \ci_{\alpha_\mis, \alpha_\vf}(\Delta, \Theta, \hat \pi_\dr)} \hat L_\dr(w, v, \hat \pi_\dr; \Delta, \Theta) - \hat \eps_L  \\
& \geq \min_{(\Delta, \Theta) \in \ci^0_{\alpha_0}\times \ci^1_{\alpha_1}} \min_{(w,v) \in \ci_{\alpha_\mis, \alpha_\vf}(\Delta, \Theta, \pi^*)} \hat L_\dr(w, v, \pi^*; \Delta, \Theta) - \hat \eps_L  \\
& \geq \min_{(\Delta, \Theta) \in \ci^0_{\alpha_0}\times \ci^1_{\alpha_1}} \min_{(w,v) \in \ci_{\alpha_\mis, \alpha_\vf}(\Delta, \Theta, \pi^*)} L_\dr(w, v, \pi^*; \Delta, \Theta) - 2\hat \eps_L,
\#
where in the second inequality, we use Lemma \ref{lemma:iv-mis-hatl-close-dr}; in the third inequality, we use the optimality of $\hat \pi_\dr$; in the forth inequality, we again use Lemma \ref{lemma:iv-mis-hatl-close-dr}. Further, combining Lemma \ref{lemma:iv-mis-min-delta-close-dr} and \eqref{eq:0623-3}, we have 
\#\label{eq:0623-4}
L_\dr(\tilde w^{\hat \pi_\dr}, v, \hat \pi_\dr; \Delta^*, \Theta^*) \geq \min_{(w,v) \in \ci_{\alpha_\mis, \alpha_\vf}(\Delta^*, \Theta^*, \pi^*)} L_\dr(w, v, \pi^*; \Delta^*, \Theta^*) - 2\hat \eps_L - \eps_L^*.
\#
Now, by plugging \eqref{eq:0623-2} and \eqref{eq:0623-4} into \eqref{eq:0623-1}, we have
\#\label{eq:0623-5}
& J(\pi^*) - J(\hat \pi_\dr)  \\
& \qquad \leq J(\pi^*) - \min_{(w,v) \in \ci_{\alpha_\mis, \alpha_\vf}(\Delta^*, \Theta^*, \pi^*)} L_\dr(w, v, \pi^*; \Delta^*, \Theta^*)  \\
& \qquad \qquad + 2\hat \eps_L + \eps_L^* + C_{\Delta^*} C_{\Theta^*} \vareps^\cW_\mis / (1 - \gamma) \\
& \qquad \leq \max_{(w,v) \in \ci_{\alpha_\mis, \alpha_\vf}(\Delta^*, \Theta^*, \pi^*)} \left | L_\dr(w^{\pi^*}, v, \pi^*; \Delta^*, \Theta^*)  - L_\dr(w, v, \pi^*; \Delta^*, \Theta^*) \right |  \\
& \qquad \qquad + 2\hat \epsilon_L + \epsilon^*_L  +  C_{\Delta^*} C_{\Theta^*} \vareps^\cW_\mis / (1 - \gamma)  \\
& \qquad = \max_{(w,v) \in \ci_{\alpha_\mis, \alpha_\vf}(\Delta^*, \Theta^*, \pi^*)} \left | \Phi_\mis^{\pi^*}(w, V^{\pi^*}; \Delta^*, \Theta^*) - \Phi_\mis^{\pi^*}(w, v; \Delta^*, \Theta^*) \right | \\
& \qquad \qquad + 2\hat \epsilon_L + \epsilon^*_L + C_{\Delta^*} C_{\Theta^*} \vareps^\cW_\mis / (1 - \gamma), 
\#
where in the second inequality, we use the fact that $J(\pi^*) = L_\dr(w^{\pi^*}, v, \pi^*; \Delta^*, \Theta^*)$; 
in the last equality, we use the following fact
\$
L_\dr(w,v,\pi^*; \Delta^*, \Theta^*) - L_\dr(w^{\pi^*},v,\pi^*; \Delta^*, \Theta^*) = -\Phi_\mis^{\pi^*}(w, V^{\pi^*}; \Delta^*, \Theta^*) + \Phi_\mis^{\pi^*}(w, v; \Delta^*, \Theta^*)
\$
for any $(w,v)\in \cW\times \cV$. 
We upper bound the first term on the RHS of \eqref{eq:0623-5} as follows, 
\#\label{eq:0623-6}
& \max_{(w,v) \in \ci_{\alpha_\mis, \alpha_\vf}(\Delta^*, \Theta^*, \pi^*)} \left | \Phi_\mis^{\pi^*}(w, V^{\pi^*}; \Delta^*, \Theta^*) - \Phi_\mis^{\pi^*}(w, v; \Delta^*, \Theta^*) \right |  \\
& \qquad \leq \max_{w \in \ci_{\alpha_\mis}^\mis(\Delta^*, \Theta^*, \pi^*)} \max_{v\in \cV} \left | \Phi_\mis^{\pi^*}(w, v; \Delta^*, \Theta^*) - \Phi_\mis^{\pi^*}(w, \tilde v^{\pi^*}; \Delta^*, \Theta^*) \right |   \\
& \qquad \qquad +  \max_{w\in \cW}  \left | \Phi_\mis^{\pi^*}(w, V^{\pi^*}; \Delta^*, \Theta^*) - \Phi_\mis^{\pi^*}(w, \tilde v^{\pi^*}; \Delta^*, \Theta^*) \right |  \\
& \qquad \leq 2 \max_{w \in \ci_{\alpha_\mis}^\mis(\Delta^*, \Theta^*, \pi^*)} \max_{v\in \cV} \left | \Phi_\mis^{\pi^*}(w, v; \Delta^*, \Theta^*) \right |  \\
& \qquad \qquad +  \max_{w\in \cW}  \left | \Phi_\mis^{\pi^*}(w, V^{\pi^*}; \Delta^*, \Theta^*) - \Phi_\mis^{\pi^*}(w, \tilde v^{\pi^*}; \Delta^*, \Theta^*) \right |  \\
& \qquad \leq c\cdot \frac{ C_{\Delta^*}^2 C_{\Theta^*}^2 C_*}{1-\gamma} (\xi_0 + \xi_1) \sqrt{\frac{1}{NT \kappa}\cdot \pdim_{\cF_0,\cF_1,\cW,\cV,\Pi} \cdot \log\frac{1}{\delta} \log(NT) }  \\
& \qquad \qquad + 2\max_{f\in \cV} \Phi_\mis^{\pi^*}(\tilde w^{\pi^*}, f; \Delta^*, \Theta^*) + C_* C_{\Delta^*} C_{\Theta^*} \vareps_\mis^\cV, 
\#
where in the first inequality, we use triangle inequality; in the second inequality, we use the definition of $\tilde v^{\pi^*}$ that $\tilde v^{\pi^*}\in \cV$; in the last inequality, we use Lemma \ref{lemma:iv-w-in-conf-good-spec} and Assumption \ref{ass:model-spec}. 
In the meanwhile, by Assumption \ref{ass:model-spec}, we have
\#\label{eq:0623-7}
& \max_{f\in \cV} \Phi_\mis^{\pi^*}(\tilde w^\pi, f; \Delta^*, \Theta^*)  \\
& \qquad \leq \max_{f\in \cV} \Phi_\mis^{\pi^*}(\tilde w^{\pi^*}, f; \Delta^*, \Theta^*) - \max_{f\in \cV} \Phi_\mis^{\pi^*}(w^{\pi^*}, f; \Delta^*, \Theta^*) + \max_{f\in \cV} \Phi_\mis^{\pi^*}(w^{\pi^*}, f; \Delta^*, \Theta^*)  \\
& \qquad = \max_{f\in \cV} \Phi_\mis^{\pi^*}(\tilde w^{\pi^*}, f; \Delta^*, \Theta^*) - \max_{f\in \cV} \Phi_\mis^{\pi^*}(w^{\pi^*}, f; \Delta^*, \Theta^*)  \\
& \qquad \leq \max_{f\in \cV} \left | \Phi_\mis^{\pi^*}(\tilde w^{\pi^*}, f; \Delta^*, \Theta^*) - \Phi_\mis^{\pi^*}(w^{\pi^*}, f; \Delta^*, \Theta^*) \right |  \\
& \qquad \leq C_{\Delta^*} C_{\Theta^*} \vareps^\cW_\mis / (1 -\gamma). 
\#
Now, by plugging \eqref{eq:0623-6} and \eqref{eq:0623-7} into \eqref{eq:0623-5}, we have
\#\label{eq:iv-dr-case1-2-spec}
J(\pi^*) - J(\hat\pi_\dr) & \leq c\cdot \frac{C_{\Delta^*}^2 C_{\Theta^*}^2 C_*}{1-\gamma} (\xi_0 + \xi_1) \sqrt{\frac{1}{NT \kappa} \pdim_{\cF_0,\cF_1,\cW,\cV,\Pi} \log\frac{NT }{\delta}} \\
& \qquad + 3 C_{\Delta^*} C_{\Theta^*} \left(C_* \vareps^\cV_\mis + \vareps^\cW_\mis/(1-\gamma)\right). 
\#

\vskip5pt
By combining \eqref{eq:iv-dr-case2-2-spec} and \eqref{eq:iv-dr-case1-2-spec}, we have
\$
J(\pi^*) - J(\hat\pi_\dr) & \leq c\cdot \frac{C_{\Delta^*}^2 C_{\Theta^*}^2 C_*}{1-\gamma} (\xi_0 + \xi_1) \sqrt{\frac{1}{NT \kappa} \pdim_{\cF_0,\cF_1,\cW,\cV,\Pi} \log\frac{NT }{\delta}}  \\
& \qquad + 3 C_{\Delta^*} C_{\Theta^*} \min \left\{ C_* \vareps^\cV_\vf + \vareps^\cW_\vf/(1-\gamma), ~C_* \vareps^\cV_\mis + \vareps^\cW_\mis/(1-\gamma) \right \}
\$
which conclude the proof. 
\end{proof}

\subsection{Proof of Lemma \ref{lemma:iv-mis-hatl-close-dr}}
\label{prf:lemma:iv-mis-hatl-close-dr}
\begin{proof}
By Theorem \ref{thm:hoeffding-mixing}, with probability at least $1 - \delta$, it holds for any $(w, v, \Delta, \Theta, \pi)\in \cW \times \cV \times \cF_0 \times \cF_1 \times \Pi$ that
\$
& \left | L_\dr(w, v, \pi; \Delta, \Theta) - \hat L_\dr(w, v, \pi; \Delta, \Theta) \right | \\
& \qquad \leq c\cdot \frac{C_{\Delta^*} C_{\Theta^*} C_*}{1-\gamma} \sqrt{\frac{1}{NT \kappa}\pdim_{\cF_0,\cF_1,\cW,\cV,\Pi} \log\frac{1}{\delta} \log(NT)}, 
\$
which concludes the proof of the lemma. 
\end{proof}

\subsection{Proof of Lemma \ref{lemma:iv-mis-min-delta-close-dr}}
\label{prf:lemma:iv-mis-min-delta-close-dr}
\begin{proof}
With a slight abuse of notations, we denote by
\$
& (w_0,v_0) \in \argmin_{(w,v)\in \ci_{\alpha_\mis,\alpha_\vf}(\Delta^*, \Theta^*, \pi^*)} L_\dr(w, v, \pi^*; \Delta^*, \Theta^*), \\
& (w_1,v_1) \in \argmin_{(w,v)\in \ci_{\alpha_\mis,\alpha_\vf}(\Delta, \Theta, \pi^*)} L_\dr(w, v, \pi^*; \Delta, \Theta). 
\$
Then we have
\#\label{eq:iv-kkk1-dr}
& \left | \min_{(w,v)\in \ci_{\alpha_\mis,\alpha_\vf}(\Delta^*, \Theta^*, \pi^*)} L_\dr(w, v, \pi^*; \Delta^*, \Theta^*) - \min_{(w,v)\in \ci_{\alpha_\mis,\alpha_\vf}(\Delta, \Theta, \pi^*)} L_\dr(w, v, \pi^*; \Delta, \Theta) \right | \\
& \qquad = \left | L_\dr(w_0, v_0, \pi^*; \Delta^*, \Theta^*) - L_\dr(w_1, v_1, \pi^*; \Delta, \Theta) \right|  \\
& \qquad \leq \left | L_\dr(w_0, v_0, \pi^*; \Delta^*, \Theta^*) - L_\dr(w^{\pi^*}, v_0, \pi^*; \Delta^*, \Theta^*) \right |  \\
& \qquad \qquad + \left | L_\dr(w^\pi, v_1, \pi^*; \Delta^*, \Theta^*) - L_\dr(w_1, v_1, \pi^*; \Delta^*, \Theta^*) \right |  \\
& \qquad \qquad + \left |L_\dr(w_1, v_1, \pi; \Delta^*, \Theta^*) - L_\dr(w_1, v_1, \pi; \Delta, \Theta) \right |  \\
& \qquad = \underbrace{\left | \Phi_\mis^{\pi^*}(w_0, V^{\pi^*}; \Delta^*, \Theta^*)-\Phi_\mis^{\pi^*}(w_0, v_0; \Delta^*, \Theta^*) \right|}_{\text{Term (I)}} \\
& \qquad \qquad + \underbrace{\left| \Phi_\mis^{\pi^*}(w_1, V^{\pi^*}; \Delta^*, \Theta^*) - \Phi_\mis^{\pi^*}(w_1, v_1; \Delta^*, \Theta^*) \right|}_{\text{Term (II)}} \\
& \qquad \qquad + \underbrace{\left | L_\dr(w_1, v_1, \pi^*; \Delta^*, \Theta^*) - L_\dr(w_1, v_1, \pi^*; \Delta, \Theta) \right|}_{\text{Term (III)}},  
\#
where in the first inequality, we use triangle inequality and the fact that $L_\dr(w^{\pi^*}, v, \pi^*; \Delta^*, \Theta^*) = J(\pi^*)$ for any function $v$; while in the last equality, we use the following equality for any $(w,v)$, 
\$
L_\dr(w,v,\pi^*; \Delta^*, \Theta^*) - L_\dr(w^{\pi^*},v,\pi^*; \Delta^*, \Theta^*) = -\Phi_\mis^{\pi^*}(w, V^{\pi^*}; \Delta^*, \Theta^*) + \Phi_\mis^{\pi^*}(w, v; \Delta^*, \Theta^*). 
\$
We upper bound terms (I), (II), and (III) on the RHS of \eqref{eq:iv-kkk1-dr}, respectively. 

\vskip5pt
\noindent\textbf{Upper Bounding Term (I).}
Note that with probability at least $1 - \delta$, we have
\$
\left | \Phi_\mis^{\pi^*}(w_0, V^{\pi^*}; \Delta^*, \Theta^*) \right| & \leq \max_{f\in \cV} \left | \Phi_\mis^{\pi^*}(w_0, f; \Delta^*, \Theta^*) \right|  \\
& = \max_{f\in \cV} \max \left\{  \Phi_\mis^{\pi^*}(w_0, f; \Delta^*, \Theta^*), 
-\Phi_\mis^{\pi^*}(w_0, f; \Delta^*, \Theta^*)  \right \}  \\
& = \max_{f\in \cV} \max \left\{  \Phi_\mis^{\pi^*}(w_0, f; \Delta^*, \Theta^*), 
\Phi_\mis^{\pi^*}(w_0, -f; \Delta^*, \Theta^*)  \right \}  \\
& = \max_{f\in \cV} \Phi_\mis^{\pi^*}(w_0, f; \Delta^*, \Theta^*)  \\
& \leq c\cdot \frac{ C_{\Delta^*}^2 C_{\Theta^*}^2 C_*}{1-\gamma} (\xi_0 + \xi_1) \sqrt{\frac{1}{NT \kappa} \pdim_{\cF_0,\cF_1,\cW,\cV} \log\frac{1}{\delta} \log(NT)}, 
\$
where in the first inequality, we use the fact that $V^{\pi^*} \in \cV$; in the third equality, we use the fact that $\cV$ is symmetric; in the last inequality, by noting that $w_0 \in \cup_{(\Delta, \Theta) \in \ci^0_{\alpha_0} \times \ci^1_{\alpha_1}} \ci^\pi_{\alpha_\mis}(\Delta, \Theta)$, we use Lemma \ref{lemma:iv-w-in-conf-good}. Similarly, we have 
\$
\left | \Phi_\mis^{\pi^*}(w_0, v_0; \Delta^*, \Theta^*) \right| \leq c\cdot \frac{ C_{\Delta^*}^2 C_{\Theta^*}^2 C_*}{1-\gamma} (\xi_0 + \xi_1) \sqrt{\frac{1}{NT \kappa} \pdim_{\cF_0,\cF_1,\cW,\cV} \log\frac{1}{\delta} \log(NT)}, 
\$
which implies that
\#\label{eq:iv-kkk2-dr}
\text{Term (I)} \leq c\cdot \frac{ C_{\Delta^*}^2 C_{\Theta^*}^2 C_*}{1-\gamma} (\xi_0 + \xi_1) \sqrt{\frac{1}{NT \kappa} \pdim_{\cF_0,\cF_1,\cW,\cV} \log\frac{1}{\delta} \log(NT)}. 
\#

\vskip5pt
\noindent\textbf{Upper Bounding Term (II).}
Similar to \eqref{eq:iv-kkk2-dr}, with probability at least $1 - \delta$, we have
\#\label{eq:iv-kkk3-dr}
\text{Term (II)} \leq c\cdot \frac{ C_{\Delta^*}^2 C_{\Theta^*}^2 C_*}{1-\gamma} (\xi_0 + \xi_1) \sqrt{\frac{1}{NT \kappa} \pdim_{\cF_0,\cF_1,\cW,\cV} \log\frac{1}{\delta} \log(NT)}. 
\#

\vskip5pt
\noindent\textbf{Upper Bounding Term (III).}
Note that with probability at least $1 - \delta$, it holds for any $(\Delta, \Theta) \in \ci^0_{\alpha_0} \times \ci^1_{\alpha_1}$ that
\#\label{eq:iv-kkk4-dr}
& \text{Term (III)}  \\
& = \EE \left[\frac{1}{T}\sum_{t = 0}^{T-1} \left( \frac{Z_t^\top A_t \pi^*(A_t\given S_t)}{\Delta^*(S_t,A_t) \Theta^*(S_t,Z_t)} - \frac{Z_t^\top A_t \pi^*(A_t\given S_t)}{\Delta(S_t,A_t) \Theta(S_t,Z_t)} \right) w_1(S_t) \left(R_t + \gamma v_1(S_{t+1}) - v_1(S_t) \right) \right]  \\
& \leq \frac{C_{\Delta^*} C_{\Theta^*} C_*}{1-\gamma} \left(\xi_0 C_{\Theta^*} \sqrt{\frac{C_{\Delta^*}}{NT \kappa} \pdim_{\cF_0} \log\frac{1}{\delta} \log(NT)} + \xi_1 C_{\Delta^*} \sqrt{\frac{C_{\Theta^*}}{NT \kappa} \pdim_{\cF_1} \log\frac{1}{\delta} \log(NT)} \right),
\#
where we use triangle inequality and Assumption \ref{ass:iv-sl-res} in the last inequality. 

\vskip5pt
Now, by plugging \eqref{eq:iv-kkk2-dr}, \eqref{eq:iv-kkk3-dr}, and \eqref{eq:iv-kkk4-dr} into \eqref{eq:iv-kkk1-dr}, we conclude the proof of the lemma. 
\end{proof}

\subsection{Proof of Lemma \ref{lemma:vinconf-spec}}\label{prf:lemma:vinconf-spec}

\begin{proof}
By the definition of $\tilde v^\pi$ in \eqref{eq:v-tilde-def}, we know that $\tilde v^\pi \in \cV$. Thus, to show that $\tilde v^\pi \in \ci^\vf_{\alpha_\vf}(\Delta^*, \Theta^*, \pi)$ with a high probability, it suffices to show that 
\#\label{eq:iv-vpi-in-ci-wtp-spec}
\max_{g\in \cW} \hat \Phi_\vf^\pi(\tilde v^\pi, g; \Delta^*, \Theta^*) - \max_{g\in \cW} \hat \Phi_\vf^\pi(\hat v_{\Delta^*, \Theta^*}^\pi, g; \Delta^*, \Theta^*) \leq \alpha_\vf. 
\#
In the follows, we show that \eqref{eq:iv-vpi-in-ci-wtp-spec} holds with a high probability. 
For the simplicity of notations, we denote by $\Phi_\vf^\pi(v, g; *) = \Phi_\vf^\pi(v, g; \Delta^*, \Theta^*)$ and $\hat v^\pi_* = \hat v^\pi_{\Delta^*, \Theta^*}$ for any $(\pi,v,g)$.
Note that 
\#\label{eq:iv-ee1-vf-spec}
& \max_{g \in \cW} \hat \Phi_\vf^\pi(\tilde v^\pi, g; *) - \max_{g \in \cW} \hat \Phi_\vf^\pi(\hat v^\pi_*, g; *)  \\
& \qquad = \max_{g \in \cW} \hat \Phi_\vf^\pi(\tilde v^\pi, g; *) - \max_{g \in \cW}  \Phi_\vf^\pi(\tilde v^\pi, g; *)  + \max_{g \in \cW}  \Phi_\vf^\pi(\tilde v^\pi, g; *) - \max_{g \in \cW}  \Phi_\vf^\pi(\hat v^\pi_*, g; *)  \\
& \qquad \qquad + \max_{g \in \cW} \Phi_\vf^\pi(\hat v^\pi_*, g; *) - \max_{g \in \cW} \hat \Phi_\vf^\pi(\hat v^\pi_*, g; *)  \\
& \qquad \leq \max_{g \in \cW} \hat \Phi_\vf^\pi(\tilde v^\pi, g; *) - \max_{g \in \cW}  \Phi_\vf^\pi(\tilde v^\pi, g; *) + \max_{g \in \cW} \Phi_\vf^\pi(\hat v^\pi_*, g; *) - \max_{g \in \cW} \hat \Phi_\vf^\pi(\hat v^\pi_*, g; *)  \\
& \qquad \leq 2 \max_{v\in \cV} \left | \max_{g \in \cW} \hat \Phi_\vf^\pi(v, g; *) - \max_{g \in \cW}  \Phi_\vf^\pi(v, g; *) \right |  \\
& \qquad \leq 2 \max_{v\in \cV} \max_{g \in \cW} \left |  \hat \Phi_\vf^\pi(v, g; *) - \Phi_\vf^\pi(v, g; *) \right |,
\#
where in the first inequality, we use the fact that $\max_{g \in \cW}  \Phi_\vf^\pi(\tilde v^\pi, g; *) \leq  \max_{g \in \cW}  \Phi_\vf^\pi(v, g; *)$ for any $v$ due to the definition of $\tilde v^\pi$ in \eqref{eq:v-tilde-def}; 
while in the second inequality, we use the fact that $\tilde v^\pi, \hat v^\pi_* \in \cV$. 
In the meanwhile, by Theorem \ref{thm:hoeffding-mixing}, with probability at least $1 - \delta$, it holds for any $(\pi,v,g)\in \Pi \times \cV \times \cW$ that
\#\label{eq:iv-ee2-vf-spec}
\left |  \hat \Phi_\vf^\pi(v, g; *) - \Phi_\vf^\pi(v, g; *) \right | \leq c\cdot \frac{C_{\Delta^*} C_{\Theta^*} C_*}{1-\gamma} \sqrt{\frac{\pdim_{\cW,\cV,\Pi}}{NT\kappa} \cdot \log\frac{1}{\delta} \log(NT) }, 
\#
where we use Assumption \ref{ass:upper-bound-delta} and $\|g\|_\infty \leq C_*$ for any $g \in \cW$. 
Now, combining \eqref{eq:iv-ee1-vf-spec} and \eqref{eq:iv-ee2-vf-spec}, with probability at least $1 - \delta$, we have
\$
& \max_{g \in \cW} \hat \Phi_\vf^\pi(\tilde v^\pi, g; *) - \max_{g \in \cW} \hat \Phi_\vf^\pi(\hat v^\pi_*, g; *) \leq c\cdot \frac{C_{\Delta^*} C_{\Theta^*} C_*}{1-\gamma} \sqrt{\frac{\pdim_{\cW,\cV,\Pi}}{NT\kappa} \cdot \log\frac{1}{\delta} \log(NT) } = \alpha_\vf, 
\$
which implies that $\tilde v^\pi \in \ci^\vf_{\alpha_\vf}(\Delta^*, \Theta^*, \pi)$ for any $\pi\in \Pi$. This concludes the proof of the lemma. 
\end{proof}

\subsection{Proof of Lemma \ref{lemma:vinconf-good-spec}}\label{prf:lemma:vinconf-good-spec}
\begin{proof}
Since $v \in \cup_{(\Delta, \Theta) \in \ci^0_{\alpha_0} \times \ci^1_{\alpha_1}} \ci^\vf_{\alpha_\vf}(\Delta, \Theta, \pi)$, there exists a pair $(\tilde \Delta, \tilde\Theta) \in \ci^0_{\alpha_0} \times \ci^1_{\alpha_1}$ such that $v \in \ci^\vf_{\alpha_\vf}(\tilde \Delta, \tilde \Theta, \pi)$. For the simplicity of notations, we denote by 
\$
v^\dagger \in \argmin_{v\in \cV} \max_{g\in \cW} \hat \Phi^\pi_\vf(v, g; \tilde \Delta, \tilde \Theta),
\$
i.e., $v^\dagger = \hat v^\pi_{\tilde \Delta, \tilde \Theta}$, which is defined in \eqref{eq:v-pi-def}. By the definition of $v^\dagger$ and $v \in \ci^\vf_{\alpha_\vf}(\tilde \Delta, \tilde \Theta, \pi)$, we know that 
\#\label{eq:iv-dd1-vf-spec}
\max_{g\in \cW} \hat \Phi^\pi_\vf(v,g;\tilde \Delta, \tilde \Theta) - \max_{g\in \cW} \hat \Phi^\pi_\vf(v^\dagger,g;\tilde \Delta, \tilde \Theta) \leq \alpha_\vf. 
\#
Note that 
\#\label{eq:iv-dd2-vf-spec}
& \max_{g\in \cW} \Phi^\pi_\vf(v,g;\Delta^*, \Theta^*)  \\
& \qquad \leq 2 \underbrace{\max_{(v,g,\Delta,\Theta)\in (\cV, \cW, \cF_0, \cF_1)} \left | \Phi^\pi_\vf(v,g;\Delta, \Theta) - \hat \Phi^\pi_\vf(v,g;\Delta, \Theta) \right|}_{\text{Term (I)}} + \underbrace{\max_{g\in \cW} \Phi^\pi_\vf(v^\dagger,g;\tilde \Delta, \tilde \Theta)}_{\text{Term (II)}}  \\
& \qquad \qquad + \underbrace{\max_{g\in \cW} \left | \hat \Phi^\pi_\vf(v,g;\Delta^*, \Theta^*) - \hat \Phi^\pi_\vf(v,g;\tilde \Delta, \tilde \Theta) \right|}_{\text{Term (III)}} + \alpha_\vf,
\#
where we use \eqref{eq:iv-dd1-vf-spec} in the last inequality. Now we upper bound terms (I), (II), and (III) on the RHS of \eqref{eq:iv-dd2-vf-spec}. 

\vskip5pt
\noindent\textbf{Upper Bounding Term (I).}
By Theorem \ref{thm:hoeffding-mixing}, with probability at least $1 - \delta$, it holds for any $(v,g,\Delta,\Theta, \pi) \in (\cV, \cW, \cF_0, \cF_1, \Pi)$ that
\$
\left |  \hat \Phi_\vf^\pi(v, g; \Delta, \Theta) - \Phi_\vf^\pi(v, g; \Delta, \Theta) \right | \leq c\cdot \frac{C_{\Delta^*} C_{\Theta^*} C_*}{1-\gamma} \sqrt{\frac{\pdim_{\cF_0,\cF_1,\cW,\cV, \Pi}}{NT \kappa}\log\frac{1}{\delta} \log(NT)},
\$
which implies that with probability at least $1 - \delta$, we have
\#\label{eq:iv-dd3-vf-spec}
\text{Term (I)} \leq c\cdot \frac{C_{\Delta^*} C_{\Theta^*} C_*}{1-\gamma} \sqrt{\frac{\pdim_{\cF_0,\cF_1,\cW,\cV,\Pi}}{NT \kappa}\log\frac{1}{\delta}\log(NT) }.
\#

\vskip5pt
\noindent\textbf{Upper Bounding Term (II).} 
Recall that $\tilde v^\pi \in \argmin_{v\in \cV} \max_{w\in \cW} \Phi_\vf^\pi(v,w; \Delta^*, \Theta^*)$. 
Note that 
\#\label{eq:iv-ee3-vf-spec}
& \max_{g\in \cW} \Phi_\vf^\pi(v^\dagger, g; \tilde \Delta, \tilde \Theta)  \\
& \qquad \leq 2 \max_{v\in \cV} \max_{g\in \cW} \left |  \Phi_\vf^\pi(v, g; \tilde \Delta, \tilde \Theta) -  \hat \Phi_\vf^\pi(v, g; \tilde \Delta, \tilde \Theta) \right |  \\
& \qquad \qquad + \max_{g\in \cW} \hat \Phi_\vf^\pi(v^\dagger, g; \tilde \Delta, \tilde \Theta) - \max_{g\in \cW} \hat \Phi_\vf^\pi( \tilde v^\pi, g; \tilde \Delta, \tilde \Theta) + \max_{g\in \cW}  \Phi_\vf^\pi(\tilde v^\pi, g; \tilde \Delta, \tilde \Theta)  \\
& \qquad \leq 2 \max_{v\in \cV} \max_{g\in \cW} \left |  \Phi_\vf^\pi(v, g; \tilde \Delta, \tilde \Theta) -  \hat \Phi_\vf^\pi(v, g; \tilde \Delta, \tilde \Theta) \right | \\
& \qquad \qquad + \max_{g\in \cW}  \left | \Phi_\vf^\pi(\tilde v^\pi, g; \tilde \Delta, \tilde \Theta) -  \Phi_\vf^\pi(\tilde v^\pi, g; \Delta^*, \Theta^*) \right | + \max_{g\in \cW}  \Phi_\vf^\pi(\tilde v^\pi, g; \Delta^*, \Theta^*), 
\#
where we use triangle inequality and the fact that $v^\dagger \in \argmin_{v\in \cV} \max_{g\in \cW} \hat \Phi^\pi_\vf(v, g; \Delta, \Theta)$ in the last inequality. 
In the meanwhile, by Theorem \ref{thm:hoeffding-mixing}, with probability at least $1 - \delta$, it holds for any $(v,g,\pi)\in \cV\times \cW \times \Pi$ that
\#\label{eq:iv-ee4-vf-spec}
\left | \Phi_\vf^\pi(v, g; \tilde \Delta, \tilde \Theta) - \hat \Phi_\vf^\pi(v, g; \tilde \Delta, \tilde \Theta) \right | \leq c\cdot \frac{C_{\Delta^*} C_{\Theta^*} C_*}{1-\gamma} \sqrt{\frac{\pdim_{\cV,\cW,\Pi}}{NT \kappa}\log\frac{1}{\delta} \log(NT)}.
\#
Also, we upper bound the second term on the RHS of \eqref{eq:iv-ee3-vf-spec} with probability at least $1- \delta$ by a similar argument as in \eqref{eq:iv-dd5-vf-2}, 
\#\label{eq:iv-ee5-vf-spec}
& \left |  \Phi_\vf^\pi(\tilde v^\pi, g; \tilde \Delta, \tilde \Theta) -   \Phi_\vf^\pi(\tilde v^\pi, g; \Delta^*, \Theta^*) \right |  \\
& \leq \frac{C_{\Delta^*} C_{\Theta^*} C_*}{1-\gamma} \left(\xi_0 C_{\Theta^*} \sqrt{\frac{C_{\Delta^*}}{NT \kappa} \pdim_{\cF_0}\log\frac{1}{\delta} \log(NT)} + \xi_1 C_{\Delta^*} \sqrt{\frac{C_{\Theta^*}}{NT \kappa} \pdim_{\cF_1} \log\frac{1}{\delta} \log(NT)} \right),
\#
where in the first inequality, we use the fact that $\|v\|_\infty \leq 1/ (1 - \gamma)$ and $\|g\|_\infty \leq C_*$; in the third inequality, we use Cauchy Schwarz inequality; while in the last inequality, we use Assumption \ref{ass:iv-sl-res} with $(\Delta, \Theta) \in \ci^0_{\alpha_0} \times \ci^1_{\alpha_1}$. 
Now, by plugging \eqref{eq:iv-ee4-vf-spec} and \eqref{eq:iv-ee5-vf-spec} into \eqref{eq:iv-ee3-vf-spec}, with probability at least $1 - \delta$, it holds for any $(\Delta, \Theta, \pi) \in \ci^0_{\alpha_0} \times \ci^1_{\alpha_1} \times \Pi$ that 
\#\label{eq:iv-dd4-vf-spec}
\text{Term (II)} \leq c\cdot \frac{C_{\Delta^*}^2 C_{\Theta^*}^2 C_*}{1-\gamma} (\xi_0 + \xi_1) \sqrt{\frac{\pdim_{\cF_0,\cF_1,\cW,\cV,\Pi}}{NT \kappa}\log\frac{1}{\delta} \log(NT)} + \max_{g\in \cW}  \Phi_\vf^\pi(\tilde v^\pi, g; \Delta^*, \Theta^*). 
\#

\vskip5pt
\noindent\textbf{Upper Bounding Term (III).} 
Note that 
\#\label{eq:iv-dd5-vf-spec}
& \left | \hat \Phi^\pi_\vf(v,g;\Delta^*, \Theta^*) - \hat \Phi^\pi_\vf(v,g;\tilde \Delta, \tilde \Theta) \right|  \\
& \leq \left | \left(\hat \EE - \EE\right) \left[ \frac{1}{T} \sum_{t = 0}^{T-1} g(S_t) \left ( \frac{Z_t^\top A_t\pi(A_t\given S_t) }{\Delta^*(S_t,A_t)\Theta^*(S_t,Z_t)} - \frac{Z_t^\top A_t\pi(A_t\given S_t)}{\tilde \Delta(S_t,A_t) \tilde \Theta(S_t,Z_t)} \right) \left( R_t + \gamma v(S_{t+1}) \right) \right ] \right |  \\
& \qquad + \left | \EE \left[ \frac{1}{T} \sum_{t = 0}^{T-1} g(S_t) \left ( \frac{Z_t^\top A_t\pi(A_t\given S_t) }{\Delta^*(S_t,A_t)\Theta^*(S_t,Z_t)} - \frac{Z_t^\top A_t\pi(A_t\given S_t)}{\tilde \Delta(S_t,A_t) \tilde \Theta(S_t,Z_t)} \right) \left( R_t + \gamma v(S_{t+1}) \right) \right ] \right |. 
\#
For the first term on the RHS of \eqref{eq:iv-dd5-vf}, by Theorem \ref{thm:hoeffding-mixing}, with probability at least $1- \delta$, it holds for any $(v,g,\pi)\in \cV\times \cW \times \Pi$ that
\#\label{eq:iv-dd5-vf-1-spec}
& \left | \left(\hat \EE - \EE\right) \left[ \frac{1}{T} \sum_{t = 0}^{T-1} g(S_t) \left ( \frac{Z_t^\top A_t\pi(A_t\given S_t) }{\Delta^*(S_t,A_t)\Theta^*(S_t,Z_t)} - \frac{Z_t^\top A_t\pi(A_t\given S_t)}{\tilde \Delta(S_t,A_t) \tilde \Theta(S_t,Z_t)} \right) \left( R_t + \gamma v(S_{t+1}) \right) \right ] \right |  \\
& \qquad \leq c\cdot \frac{C_{\Delta^*} C_{\Theta^*} C_*}{1-\gamma} \sqrt{\frac{\pdim_{\cF_0,\cF_1,\cW,\cV,\Pi}}{NT \kappa}\log\frac{1}{\delta} \log(NT)}. 
\#
For the second term on the RHS of \eqref{eq:iv-dd5-vf-spec}, by a similar argument as in \eqref{eq:iv-dd5-vf-2}, with probability at least $1- \delta$, it holds that 
\#\label{eq:iv-dd5-vf-2-spec}
& \left | \EE \left[ \frac{1}{T} \sum_{t = 0}^{T-1} g(S_t) \left ( \frac{Z_t^\top A_t\pi(A_t\given S_t) }{\Delta^*(S_t,A_t)\Theta^*(S_t,Z_t)} - \frac{Z_t^\top A_t\pi(A_t\given S_t)}{\tilde \Delta(S_t,A_t) \tilde \Theta(S_t,Z_t)} \right) \left( R_t + \gamma v(S_{t+1}) \right) \right ] \right |  \\
& \leq \frac{C_{\Delta^*} C_{\Theta^*} C_*}{1-\gamma} \left(\xi_0 C_{\Theta^*} \sqrt{\frac{C_{\Delta^*}}{NT \kappa} \pdim_{\cF_0}\log\frac{1}{\delta} \log(NT)} + \xi_1 C_{\Delta^*} \sqrt{\frac{C_{\Theta^*}}{NT \kappa} \pdim_{\cF_1} \log\frac{1}{\delta} \log(NT)} \right),
\#
where in the first inequality, we use the fact that $\|v\|_\infty \leq 1/ (1 - \gamma)$ and $\|g\|_\infty \leq C_*$; in the third inequality, we use Cauchy Schwarz inequality; while in the last inequality, we use Assumption \ref{ass:iv-sl-res} with the fact that $(\tilde \Delta, \tilde\Theta) \in \ci^0_{\alpha_0} \times \ci^1_{\alpha_1}$. Now, by plugging \eqref{eq:iv-dd5-vf-1-spec} and \eqref{eq:iv-dd5-vf-2-spec} into \eqref{eq:iv-dd5-vf-spec}, it holds with probability at least $1 - \delta$ that 
\#\label{eq:879434-spec}
& \left | \hat \Phi^\pi_\vf(v,g;\Delta^*, \Theta^*) - \hat \Phi^\pi_\vf(v,g;\tilde \Delta, \tilde \Theta) \right|  \\
& \qquad \leq c\cdot \frac{C_{\Delta^*}^2 C_{\Theta^*}^2 C_*}{1-\gamma} (\xi_0 + \xi_1) \sqrt{\frac{1}{NT\kappa}\cdot \pdim_{\cF_0,\cF_1,\cW,\cV,\Pi} \cdot \log\frac{1}{\delta} \log(NT)}. 
\#

\vskip5pt
Now, by plugging \eqref{eq:iv-dd3-vf-spec}, \eqref{eq:iv-dd4-vf-spec}, and \eqref{eq:879434-spec} into 
\eqref{eq:iv-dd2-vf-spec}, with probability at least $1 - \delta$, it holds for any $v \in \cup_{(\Delta, \Theta) \in \ci^0_{\alpha_0} \times \ci^1_{\alpha_1}} \ci^\vf_{\alpha_\vf}(\Delta, \Theta, \pi)$ and $\pi \in \Pi$ that
\$
\max_{g\in \cW} \Phi^\pi_\vf(v,g;\Delta^*, \Theta^*) & \leq  c\cdot \frac{C_{\Delta^*}^2 C_{\Theta^*}^2 C_*}{1-\gamma} (\xi_0 + \xi_1) \sqrt{\frac{1}{NT\kappa}\cdot \pdim_{\cF_0,\cF_1,\cW,\cV,\Pi} \cdot \log\frac{1}{\delta} \log(NT)}  \\
& \qquad + \max_{g\in \cW}  \Phi_\vf^\pi(\tilde v^\pi, g; \Delta^*, \Theta^*),
\$
which concludes the proof of the lemma.
\end{proof}

\subsection{Proof of Lemma \ref{lemma:iv-w-pi-in-conf-spec}}\label{prf:lemma:iv-w-pi-in-conf-spec}
\begin{proof}
First, by the definition of $\tilde w^\pi$ in \eqref{eq:v-tilde-def}, we know that $\tilde w^\pi \in \cW$. 
For notation simplicity, we denote by $\Phi_\mis^\pi(w, f; *) = \Phi_\mis^\pi(w, f; \Delta^*, \Theta^*)$ and $\hat w^\pi_* = \hat w^\pi_{\Delta^*, \Theta^*}$ for any $(\pi,w,f)$.
Note that 
\#\label{eq:iv-ee1-spec}
& \max_{f\in \cV} \hat \Phi_\mis^\pi(\tilde w^\pi, f; *) - \max_{f\in \cV} \hat \Phi_\mis^\pi(\hat w^\pi_*, f; *)  \\
& \qquad = \max_{f\in \cV} \hat \Phi_\mis^\pi(\tilde w^\pi, f; *) - \max_{f\in \cV}  \Phi_\mis^\pi(\tilde w^\pi, f; *)  + \max_{f\in \cV}  \Phi_\mis^\pi(\tilde w^\pi, f; *) - \max_{f\in \cV}  \Phi_\mis^\pi(\hat w^\pi_*, f; *)  \\
& \qquad \qquad + \max_{f\in \cV} \Phi_\mis^\pi(\hat w^\pi_*, f; *) - \max_{f\in \cV} \hat \Phi_\mis^\pi(\hat w^\pi_*, f; *)  \\
& \qquad \leq \max_{f\in \cV} \hat \Phi_\mis^\pi(\tilde w^\pi, f; *) - \max_{f\in \cV}  \Phi_\mis^\pi(\tilde w^\pi, f; *) + \max_{f\in \cV} \Phi_\mis^\pi(\hat w^\pi_*, f; *) - \max_{f\in \cV} \hat \Phi_\mis^\pi(\hat w^\pi_*, f; *)  \\
& \qquad \leq 2 \max_{w\in \cW} \left | \max_{f\in \cV} \hat \Phi_\mis^\pi(w, f; *) - \max_{f\in \cV}  \Phi_\mis^\pi(w, f; *) \right |  \\
& \qquad \leq 2 \max_{w\in \cW} \max_{f\in \cV} \left |  \hat \Phi_\mis^\pi(w, f; *) - \Phi_\mis^\pi(w, f; *) \right |,
\#
where in the first inequality, we use the fact that $\max_{f\in \cV}  \Phi_\mis^\pi(\tilde w^\pi, f; *) \leq \max_{f\in \cV}  \Phi_\mis^\pi(\hat w_*^\pi, f; *)$ by the definition of $\tilde w^\pi$ in \eqref{eq:v-tilde-def}; while in the second inequality, we use the fact that $\tilde w^\pi, \hat w^\pi_* \in \cW$. In the meanwhile, by Theorem \ref{thm:hoeffding-mixing}, with probability at least $1 - \delta$, it holds for any $(w,f,\pi)\in \cW\times \cV \times \Pi$ that
\#\label{eq:iv-ee2-spec}
\left |  \hat \Phi_\mis^\pi(w, f; *) - \Phi_\mis^\pi(w, f; *) \right | \leq c\cdot \frac{C_{\Delta^*} C_{\Theta^*} C_*}{1-\gamma} \sqrt{\frac{1}{NT \kappa} \pdim_{\cV,\cW,\Pi}\log\frac{1}{\delta} \log(NT)}, 
\#
where we use Assumption \ref{ass:upper-bound-delta}. 
Now, combining \eqref{eq:iv-ee1-spec} and \eqref{eq:iv-ee2-spec}, with probability at least $1 - \delta$, we have
\$
& \max_{f\in \cV} \hat \Phi_\mis^\pi(\tilde w^\pi, f; *) - \max_{f\in \cV} \hat \Phi_\mis^\pi(\hat w^\pi_*, f; *) \\
& \qquad \leq c\cdot \frac{C_{\Delta^*} C_{\Theta^*} C_*}{1-\gamma} \sqrt{\frac{1}{NT \kappa} \pdim_{\cV,\cW,\Pi}\log\frac{1}{\delta} \log(NT)} = \alpha_\mis, 
\$
which implies that $\tilde w^\pi \in \ci^\mis_{\alpha_\mis}(\Delta^*, \Theta^*, \pi)$. This concludes the proof of the lemma. 
\end{proof}

\subsection{Proof of Lemma \ref{lemma:iv-w-in-conf-good-spec}}
\label{prf:lemma:iv-w-in-conf-good-spec}
\begin{proof}
Since $w \in \cup_{(\Delta, \Theta) \in \ci^0_{\alpha_0} \times \ci^1_{\alpha_1}} \ci^\mis_{\alpha_\mis}(\Delta, \Theta, \pi)$, there exists a pair $(\tilde \Delta, \tilde\Theta) \in \ci^0_{\alpha_0} \times \ci^1_{\alpha_1}$ such that $w \in \ci^\mis_{\alpha_\mis}(\tilde \Delta, \tilde \Theta, \pi)$. For the simplicity of notations, we denote by 
\$
w^\dagger \in \argmin_{w\in \cW} \max_{f\in \cV} \hat \Phi^\pi_\mis(w, f;\tilde \Delta, \tilde \Theta),
\$
i.e., $w^\dagger = \hat w^\pi_{\tilde \Delta, \tilde \Theta}$, which is defined in \eqref{eq:w-pi-def}. By the definition of $w^\dagger $ and $w \in \ci^\mis_{\alpha_\mis}(\tilde \Delta, \tilde \Theta, \pi)$, with probability at least $1 - \delta$, it holds for any $\pi\in \Pi$ and $w \in \ci^\mis_{\alpha_\mis}(\tilde \Delta, \tilde \Theta, \pi)$ that
\#\label{eq:iv-dd1-spec}
\max_{f\in \cV} \hat \Phi^\pi_\mis(w,f;\tilde \Delta, \tilde \Theta) - \max_{f\in \cV} \hat \Phi^\pi_\mis(w^\dagger,f;\tilde \Delta, \tilde \Theta) \leq \alpha_\mis. 
\#
Further, we observe that
\#\label{eq:iv-dd2-spec}
& \max_{f\in \cV} \Phi^\pi_\mis(w,f;\Delta^*, \Theta^*)  \\
& \qquad \leq \underbrace{\max_{(w,f,\Delta,\Theta)\in (\cW, \cV, \cF_0, \cF_1)} \left | \Phi^\pi_\mis(w,f;\Delta, \Theta) - \hat \Phi^\pi_\mis(w,f;\Delta, \Theta) \right|}_{\text{Term (I)}} + \underbrace{\max_{f\in \cV} \Phi^\pi_\mis(w^\dagger,f;\tilde \Delta, \tilde \Theta)}_{\text{Term (II)}} \\
& \qquad \qquad + \underbrace{\max_{f\in \cV} \left | \hat \Phi^\pi_\mis(w,f;\Delta^*, \Theta^*) - \hat \Phi^\pi_\mis(w,f;\tilde \Delta, \tilde \Theta) \right|}_{\text{Term (III)}} + \alpha_\mis, 
\#
where we use \eqref{eq:iv-dd1-spec} in the last inequality. Now we upper bound terms (I), (II), and (III) on the RHS of \eqref{eq:iv-dd2-spec}. 

\vskip5pt
\noindent\textbf{Upper Bounding Term (I).}
By Theorem \ref{thm:hoeffding-mixing}, with probability at least $1 - \delta$, it holds for any $(w,f,\Delta,\Theta,\pi) \in (\cW, \cV, \cF_0, \cF_1, \Pi)$ that
\$
\left |  \hat \Phi_\mis^\pi(w, f; \Delta, \Theta) - \Phi_\mis^\pi(w, f; \Delta, \Theta) \right | \leq c\cdot \frac{C_{\Delta^*} C_{\Theta^*} C_*}{1-\gamma} \sqrt{\frac{1}{NT \kappa}\pdim_{\cF_0,\cF_1,\cW,\cV,\Pi}\log\frac{1}{\delta} \log(NT)},
\$
which implies that with probability at least $1 - \delta$, we have
\#\label{eq:iv-dd3-spec}
\text{Term (I)} \leq c\cdot \frac{C_{\Delta^*} C_{\Theta^*} C_*}{1-\gamma} \sqrt{\frac{1}{NT \kappa}\pdim_{\cF_0,\cF_1,\cW,\cV,\Pi}\log\frac{1}{\delta} \log(NT)}.
\#

\vskip5pt
\noindent\textbf{Upper Bounding Term (II).} Recall that $\tilde w^\pi\in \argmin_{w\in \cW} \max_{v\in \cV} \Phi_\mis^\pi(w, v; \Delta^*, \Theta^*)$. 
Note that 
\#\label{eq:iv-ee3-spec}
& \max_{f\in \cV} \Phi_\mis^\pi(w^\dagger, f; \tilde \Delta, \tilde \Theta)  \\
& \qquad = \max_{f\in \cV} \Phi_\mis^\pi(w^\dagger, f; \tilde \Delta, \tilde \Theta) - \max_{f\in \cV} \hat \Phi_\mis^\pi(w^\dagger, f; \tilde \Delta, \tilde \Theta) + \max_{f\in \cV} \hat \Phi_\mis^\pi(w^\dagger, f; \tilde \Delta, \tilde \Theta)  \\
& \qquad \qquad - \max_{f\in \cV} \hat \Phi_\mis^\pi( \tilde w^\pi, f; \tilde \Delta, \tilde \Theta)  + \max_{f\in \cV} \hat \Phi_\mis^\pi( \tilde w^\pi, f; \tilde \Delta, \tilde \Theta) - \max_{f\in \cV}  \Phi_\mis^\pi(\tilde w^\pi, f; \tilde \Delta, \tilde \Theta)  \\
& \qquad \qquad + \max_{f\in \cV}  \Phi_\mis^\pi(\tilde w^\pi, f; \tilde \Delta, \tilde \Theta)  \\
& \qquad \leq 2 \max_{w\in \cW} \max_{f\in \cV} \left |  \Phi_\mis^\pi(w, f; \tilde \Delta, \tilde \Theta) -  \hat \Phi_\mis^\pi(w, f; \tilde \Delta, \tilde \Theta) \right |  \\
& \qquad \qquad + \max_{f\in \cV} \left |  \Phi_\mis^\pi(\tilde w^\pi, f; \tilde \Delta, \tilde \Theta) -   \Phi_\mis^\pi(\tilde w^\pi, f; \Delta^*, \Theta^*) \right | + \max_{f\in \cV} \Phi_\mis^\pi(\tilde w^\pi, f; \Delta^*, \Theta^*), 
\#
where we use triangle inequality and the fact that $w^\dagger \in \argmin_{w\in \cW} \max_{f\in \cV} \hat \Phi^\pi_\mis(w, f; \Delta, \Theta)$ in the last inequality. In the meanwhile, by Theorem \ref{thm:hoeffding-mixing}, with probability at least $1 - \delta$, it holds for any $(w,f,\pi)\in \cW\times \cV \times \Pi$ that
\#\label{eq:iv-ee4-spec}
\left | \Phi_\mis^\pi(w, f; \tilde \Delta, \tilde \Theta) - \hat \Phi_\mis^\pi(w, f; \tilde  \Delta, \tilde \Theta) \right | \leq c\cdot \frac{C_{\Delta^*} C_{\Theta^*} C_*}{1-\gamma} \sqrt{\frac{1}{NT \kappa}\pdim_{\cV,\cW,\Pi}\log\frac{1}{\delta} \log(NT)}. 
\#
Also, we upper bound the second term on the RHS of \eqref{eq:iv-ee3-spec} with probability at least $1- \delta$ as follows, 
\#\label{eq:iv-ee5-spec}
& \left |  \Phi_\mis^\pi(\tilde w^\pi, f; \tilde  \Delta, \tilde \Theta) -   \Phi_\mis^\pi( \tilde w^\pi, f; \Delta^*, \Theta^*) \right |  \\
& = \left | \EE\left[ \frac{1}{T} \sum_{t = 0}^{T-1} \left ( \frac{Z_t^\top A_t\pi(A_t\given S_t)w(S_t) }{\Delta^*(S_t,A_t)\Theta^*(S_t,Z_t)} - \frac{Z_t^\top A_t\pi(A_t\given S_t) w(S_t)}{\tilde \Delta(S_t,A_t) \tilde \Theta(S_t,Z_t)} \right) \left(f(S_t) - \gamma f(S_{t+1})\right) \right ] \right |  \\
& \leq \frac{C_{\Delta^*} C_{\Theta^*} C_*}{1-\gamma} \left(\xi_0 C_{\Theta^*} \sqrt{\frac{C_{\Delta^*}}{NT \kappa}\pdim_{\cF_0}\log\frac{1}{\delta} \log(NT)} + \xi_1 C_{\Delta^*} \sqrt{\frac{C_{\Theta^*}}{NT \kappa} \pdim_{\cF_1} \log\frac{1}{\delta} \log(NT)} \right),
\#
where we use Cauchy-Schwarz inequality and Assumption \ref{ass:iv-sl-res} in the last inequality.
Now, by plugging \eqref{eq:iv-ee4-spec} and \eqref{eq:iv-ee5-spec} into \eqref{eq:iv-ee3-spec}, it holds with probability at least $1 - \delta$ that 
\#\label{eq:iv-dd4-spec}
\text{Term (II)} \leq c\cdot \frac{C_{\Delta^*}^2 C_{\Theta^*}^2 C_*}{1-\gamma} (\xi_0 + \xi_1 ) \sqrt{\frac{\pdim_{\cF_0,\cF_1,\cW,\cV,\Pi}}{NT \kappa} \log\frac{1}{\delta} \log(NT)} + \max_{f\in \cV} \Phi_\mis^\pi(\tilde w^\pi, f; \Delta^*, \Theta^*).
\#

\vskip5pt
\noindent\textbf{Upper Bounding Term (III).} 
Note that 
\#\label{eq:iv-dd5-spec}
& \left | \hat \Phi^\pi_\mis(w,f;\Delta^*, \Theta^*) - \hat \Phi^\pi_\mis(w,f;\tilde \Delta, \tilde \Theta) \right|  \\
&  \leq \left | \left(\hat \EE - \EE\right) \left[ \frac{1}{T} \sum_{t = 0}^{T-1} \left ( \frac{Z_t^\top A_t\pi(A_t\given S_t) w(S_t) }{\Delta^*(S_t,A_t)\Theta^*(S_t,Z_t)} - \frac{Z_t^\top A_t\pi(A_t\given S_t) w(S_t)}{\tilde \Delta(S_t,A_t) \tilde \Theta(S_t,Z_t)} \right) \left(f(S_t) - \gamma f(S_{t+1}) \right) \right ] \right |  \\
&  \quad + \left |  \EE \left[ \frac{1}{T} \sum_{t = 0}^{T-1} \left ( \frac{Z_t^\top A_t\pi(A_t\given S_t) w(S_t) }{\Delta^*(S_t,A_t)\Theta^*(S_t,Z_t)} - \frac{Z_t^\top A_t\pi(A_t\given S_t) w(S_t)}{\tilde \Delta(S_t,A_t) \tilde \Theta(S_t,Z_t)} \right) \left(f(S_t) - \gamma f(S_{t+1}) \right) \right ] \right |. 
\#
For the first term on the RHS of \eqref{eq:iv-dd5-spec}, by Theorem \ref{thm:hoeffding-mixing}, with probability at least $1- \delta$, it holds for any $(w,f,\pi)\in \cW\times \cV \times \Pi$ that
\#\label{eq:iv-dd5-1-spec}
& \left | \left(\hat \EE - \EE\right) \left[ \frac{1}{T} \sum_{t = 0}^{T-1} \left ( \frac{Z_t^\top A_t\pi(A_t\given S_t) w(S_t) }{\Delta^*(S_t,A_t)\Theta^*(S_t,Z_t)} - \frac{Z_t^\top A_t\pi(A_t\given S_t) w(S_t)}{\tilde \Delta(S_t,A_t) \tilde \Theta(S_t,Z_t)} \right) \left(f(S_t) - \gamma f(S_{t+1}) \right) \right ] \right |  \\
& \qquad \leq c\cdot \frac{C_{\Delta^*} C_{\Theta^*} C_*}{1-\gamma} \sqrt{\frac{\pdim_{\cF_0,\cF_1,\cW,\cV,\Pi}}{NT \kappa}\log\frac{1}{\delta} \log(NT)}. 
\#
For the second term on the RHS of \eqref{eq:iv-dd5-spec}, by a similar argument as in \eqref{eq:iv-dd5-vf-2},  it holds with probability at least $1 - \delta$ that 
\#\label{eq:iv-dd5-2-spec}
& \left | \EE \left[ \frac{1}{T} \sum_{t = 0}^{T-1} \left ( \frac{Z_t^\top A_t\pi(A_t\given S_t) w(S_t)}{\Delta^*(S_t,A_t)\Theta^*(S_t,Z_t)} - \frac{Z_t^\top A_t\pi(A_t\given S_t)w(S_t)}{\tilde \Delta(S_t,A_t) \tilde \Theta(S_t,Z_t)} \right) \left(f(S_t) - \gamma f(S_{t+1}) \right) \right ] \right |  \\
& \leq \frac{2 C_{\Delta^*} C_{\Theta^*} C_*}{1-\gamma} \left(\xi_0 C_{\Theta^*} \sqrt{\frac{C_{\Delta^*}}{NT \kappa} \pdim_{\cF_0}\log\frac{1}{\delta} \log(NT)} + \xi_1 C_{\Delta^*} \sqrt{\frac{C_{\Theta^*}}{NT \kappa} \pdim_{\cF_1} \log\frac{1}{\delta} \log(NT)} \right), 
\#
where in the first inequality, we use the fact that $\|f\|_\infty \leq 1/ (1 - \gamma)$ and $\|w\|_\infty \leq C_*$; in the third inequality, we use Cauchy Schwarz inequality; while in the last inequality, we use Assumption \ref{ass:iv-sl-res} with the fact that $(\tilde \Delta, \tilde\Theta) \in \ci^0_{\alpha_0} \times \ci^1_{\alpha_1}$. Now, by plugging \eqref{eq:iv-dd5-1-spec} and \eqref{eq:iv-dd5-2-spec} into \eqref{eq:iv-dd5-spec}, with probability at least $1 - \delta$,  it holds for any $w \in \cup_{(\Delta, \Theta) \in \ci^0_{\alpha_0} \times \ci^1_{\alpha_1}} \ci^\mis_{\alpha_\mis}(\Delta, \Theta, \pi)$ and $(f,\pi)\in \cV\times \Pi$ that 
\#\label{eq:732847-spec}
& \left | \hat \Phi^\pi_\mis(w,f;\Delta^*, \Theta^*) - \hat \Phi^\pi_\mis(w,f;\tilde \Delta, \tilde \Theta) \right|  \\
& \qquad \leq c \cdot \frac{ C_{\Delta^*}^2 C_{\Theta^*}^2 C_*}{1-\gamma} (\xi_0 + \xi_1) \sqrt{\frac{1}{NT \kappa}\cdot \pdim_{\cF_0,\cF_1,\cW,\cV,\Pi} \cdot \log\frac{1}{\delta} \log(NT) }. 
\#

\vskip5pt
Now, by plugging \eqref{eq:iv-dd3-spec}, \eqref{eq:iv-dd4-spec}, and \eqref{eq:732847-spec} into 
\eqref{eq:iv-dd2-spec}, with probability at least $1 - \delta$, it holds for any $\pi \in \Pi$ and $w \in \cup_{(\Delta, \Theta) \in \ci^0_{\alpha_0} \times \ci^1_{\alpha_1}} \ci^\mis_{\alpha_\mis}(\Delta, \Theta, \pi)$ that
\$
\max_{f\in \cV} \Phi^\pi_\mis(w,f;\Delta^*, \Theta^*) & \leq c\cdot \frac{ C_{\Delta^*}^2 C_{\Theta^*}^2 C_*}{1-\gamma} (\xi_0 + \xi_1) \sqrt{\frac{1}{NT \kappa}\cdot \pdim_{\cF_0,\cF_1,\cW,\cV,\Pi} \cdot \log\frac{1}{\delta} \log(NT) } \\
& \qquad + \max_{f\in \cV} \Phi_\mis^\pi(\tilde w^\pi, f; \Delta^*, \Theta^*),
\$
which concludes the proof of the lemma. 
\end{proof}

\section{Auxiliary Results}

We introduce auxiliary results used in the paper. We provide the proofs of these results in \S\ref{sec:prf:aux}. We first introduce the following definition of $\beta$-mixing coefficient. 

\begin{definition}
Let $\{Z_t \}_{t \geq 0}$ be a sequence of random variables. For any $i,j\in \NN \cup\{\infty\}$, we denote by $\sigma_i^j$ the sigma algebra generated by $\{Z_k\}_{i\leq k \leq j}$. The $\beta$-mixing coefficient of $\{Z_t \}_{t \geq 0}$ is defined as $\beta(t) = \sup_n\EE_{B\in \sigma_0^n} [\sup_{A\in \sigma_{n + t}^\infty} |\PP(A\given B) - \PP(A) |]$.
\end{definition}

We introduce the following form of $\beta$-mixing coefficient for Markov chains. 

\begin{lemma}\label{lemma:davydov}
Suppose $\{Z_t \}_{t \geq 0}$ is a Markov chain with initial distribution $\zeta$. It holds that
\$
\beta(t) \leq \frac{1}{2} \int \|p_{t'}(\cdot \given z) - p_\stat(\cdot)\|_\tv \ud p_\stat(z) + \frac{3}{2} \int \|p_{t'}(\cdot \given z) - p_\stat(\cdot)\|_\tv \ud \zeta(z), 
\$
where $t' = \lfloor t/2 \rfloor$ and $p_n(\cdot \given z)$ is the the marginal the distribution of $Z_n$ given $Z_0 = z$ for any $n\in [N]$. 
\end{lemma}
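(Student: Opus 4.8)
The plan is to reduce the $\beta$-mixing coefficient of the non-stationary chain to a one-step marginal discrepancy via the Markov property, and then control that discrepancy by the $t'$-step convergence to stationarity, using a decoupling at an intermediate time so that every convergence factor is evaluated at a lag bounded below by $t'$ uniformly in $n$.

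First I would rewrite $\beta(t)$. Writing $p_n^\zeta(\cdot)=\int p_n(\cdot\given z)\ud\zeta(z)$ for the marginal of $Z_n$ under the initial law $\zeta$, the Markov property gives $\PP(A\given\sigma_0^n)=\PP(A\given Z_n)$ for any $A\in\sigma_{n+t}^\infty$, so the inner supremum is a function of $Z_n$ alone. Because the process after time $n+t$ evolves from $Z_{n+t}$ by the same forward kernel whether or not we condition on $Z_n$, the data-processing inequality bounds that inner supremum by the time-$(n+t)$ marginal discrepancy $\|p_t(\cdot\given Z_n)-p_{n+t}^\zeta\|_\tv$. Taking the expectation over $Z_n\sim p_n^\zeta$ and using the standard identity that the total variation between a joint law and the product of its marginals equals the averaged conditional marginal discrepancy, I obtain $\beta(t)\le\sup_n\|\mathcal L(Z_n,Z_{n+t})-p_n^\zeta\otimes p_{n+t}^\zeta\|_\tv$. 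This is the only place the Markov structure enters, and it reduces everything to two-time marginals.

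Next I would decouple at the midpoint $m:=n+(t-t')$, chosen so that the gap from $Z_m$ to $Z_{n+t}$ is exactly $t'$ steps while $m\ge t-t'\ge t'$ for every $n\ge 0$. Conditionally on $Z_m$ the variables $Z_n$ and $Z_{n+t}$ are independent, hence $\mathcal L(Z_n,Z_{n+t})=\int \mathcal L(Z_n\given Z_m{=}w)\otimes p_{t'}(\cdot\given w)\,\ud p_m^\zeta(w)$, whereas $p_n^\zeta\otimes p_\stat$ equals the same mixture with $p_{t'}(\cdot\given w)$ replaced by the invariant $p_\stat$. By convexity of $\|\cdot\|_\tv$ under mixing, $\|\mathcal L(Z_n,Z_{n+t})-p_n^\zeta\otimes p_\stat\|_\tv\le\int\|p_{t'}(\cdot\given w)-p_\stat\|_\tv\,\ud p_m^\zeta(w)$. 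A triangle step through $p_n^\zeta\otimes p_\stat$ then leaves the residual term $\|p_{n+t}^\zeta-p_\stat\|_\tv$. To finish, I use that $\|p_s(\cdot\given z)-p_\stat\|_\tv$ is non-increasing in $s$ (since $p_\stat$ is invariant, the kernel is a TV-contraction toward it), so every gap of length $\ge t'$ relaxes to the $t'$-step convergence; combined with convexity this yields $\|p_{n+t}^\zeta-p_\stat\|_\tv\le\int\|p_{t'}(\cdot\given z)-p_\stat\|_\tv\,\ud\zeta(z)$ and, using $m\ge t'$ to push the $t'$-step kernel back through $\zeta$, $\int\|p_{t'}(\cdot\given w)-p_\stat\|_\tv\,\ud p_m^\zeta(w)\le\int\|p_{t'}(\cdot\given w)-p_\stat\|_\tv\,\ud p_\stat(w)+\int\|p_{t'}(\cdot\given z)-p_\stat\|_\tv\,\ud\zeta(z)$. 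Collecting the pieces produces exactly the two integrals against $p_\stat$ and $\zeta$.

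The hard part is not any single estimate but the interaction between the non-stationary initialization and the supremum over $n$: a naive triangle inequality through $p_\stat$ applied directly to $\|p_t(\cdot\given Z_n)-p_{n+t}^\zeta\|_\tv$ would force a convergence factor over the gap of length $n$, which is useless for small $n$ and is fatal under $\sup_n$. Splitting at $m=n+(t-t')$ is precisely what guarantees that every convergence factor is evaluated at a lag $\ge t'$ for all $n\ge 0$ (including $n=0$), which is the structural reason the statement is phrased with $t'=\lfloor t/2\rfloor$. The remaining delicacy is pure bookkeeping: fixing the normalization of $\|\cdot\|_\tv$ and attributing the decoupling error and the marginal-convergence error to the stationary law and to $\zeta$ so as to recover the precise constants $\tfrac12$ and $\tfrac32$.
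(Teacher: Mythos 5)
You are proving a statement for which the paper itself supplies no argument: its ``proof'' of Lemma \ref{lemma:davydov} is a pointer to Lemma 1 of \cite{meitz2021subgeometric}, so your self-contained derivation does genuinely more work than the paper, and it follows the same Davydov-style route as that reference. Your steps are structurally sound: by the Markov property and data processing, $\beta(t)$ is controlled by the two-time-marginal discrepancy $\sup_n\|\mathcal{L}(Z_n,Z_{n+t})-p_n^\zeta\otimes p_{n+t}^\zeta\|_\tv$; the midpoint $m=n+(t-t')$ indeed satisfies $n+t-m=t'$ and $m\geq t-t'\geq t'$ for all $n\geq 0$; the mixture representation of $\mathcal{L}(Z_n,Z_{n+t})$ over $Z_m$ is valid by conditional independence; and convexity of total variation under mixtures, together with the fact that the transition kernel is a TV-contraction fixing $p_\stat$, bounds both residuals $\|p_m^\zeta-p_\stat\|_\tv$ and $\|p_{n+t}^\zeta-p_\stat\|_\tv$ by $\int\|p_{t'}(\cdot\given z)-p_\stat\|_\tv\,\ud\zeta(z)$. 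Your diagnosis of why the midpoint split (rather than a naive triangle inequality at lag $n$) is the essential idea is also exactly right.

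The one item that is not ``pure bookkeeping'' is the constant. If every $\|\cdot\|_\tv$ in your chain is taken in a single fixed normalization, the three error terms sum to
\[
\beta(t)\;\leq\;\int\|p_{t'}(\cdot\given w)-p_\stat\|_\tv\,\ud p_\stat(w)\;+\;2\int\|p_{t'}(\cdot\given z)-p_\stat\|_\tv\,\ud\zeta(z),
\]
i.e., coefficients $(1,2)$, which is \emph{weaker} than the claimed $(\tfrac12,\tfrac32)$ and so does not yet prove the lemma. To recover the stated constants you must use your first reduction as a sharp identity rather than a one-sided inequality: for a Markov chain, with $\|\cdot\|_\tv$ the full total-variation norm (the convention of \cite{meitz2021subgeometric}, which is what makes the stated constants meaningful), one has $\beta(t)=\sup_n\tfrac12\,\|\mathcal{L}(Z_n,Z_{n+t})-p_n^\zeta\otimes p_{n+t}^\zeta\|_\tv$, the factor $\tfrac12$ coming from the standard identity between the $\beta$-coefficient and the variation distance between the joint law and the product of marginals. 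Carrying this $\tfrac12$ through your estimates gives coefficients $(\tfrac12,1)$, which is sharper than, and hence implies, the lemma's $(\tfrac12,\tfrac32)$. Since you explicitly flagged the normalization as the remaining delicacy, I read this as a caveat to be written out rather than a gap; note also that the paper's only downstream use of this lemma (Lemma \ref{lemma:davydov-adapt}, which needs $\beta(t)\leq c\cdot\exp(-\kappa t)$ with an unspecified constant) would be served even by the $(1,2)$ version.
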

\begin{proof}
See the proof of Lemma 1 in \cite{meitz2021subgeometric} for a detailed proof. 
\end{proof}

Following from Lemma \ref{lemma:davydov}, we can upper bound the $\beta$-mixing coefficient for a Markov chain $\{Z_t \}_{t \geq 0}$. Before that, we impose the following assumption on $\{Z_t \}_{t \geq 0}$. 

\begin{assumption}\label{ass:mc-ass}
The Markov chain $\{Z_t \}_{t \geq 0}$ with initial distribution $\zeta$ admits a unique stationary distribution $p_\stat$ over $\cZ$ and is geometrically ergodic, i.e., there exists a function $\varphi\colon \cZ \to [0, \infty)$ and a constant $\kappa > 0$ such that 
\$
\left\| p_\stat(\cdot) - p_t(\cdot \given z_0) \right\|_\tv  \leq \varphi(z_0) \cdot \exp\left(-2\kappa t\right), 
\$
where $p_t(\cdot \given z_0)$ is the marginal distribution of $Z_t$ given $Z_0 = z_0$ and there exists a positive absolute constant $c$ such that $\int \varphi(z) \ud \zeta(z) \leq c$ and $\int \varphi(z) \ud p_\stat(z) \leq c$. 
\end{assumption}

\begin{lemma}\label{lemma:davydov-adapt}
Suppose $\{Z_t \}_{t \geq 0}$ is a Markov chain satisfying Assumption \ref{ass:mc-ass}. Then we have $\beta(t) \leq c\cdot \exp(-\kappa t)$ for any $t \geq 0$. 
\end{lemma}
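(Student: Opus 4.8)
The plan is to combine the general $\beta$-mixing bound for Markov chains in Lemma \ref{lemma:davydov} with the geometric ergodicity hypothesis of Assumption \ref{ass:mc-ass}; the argument is essentially a direct substitution followed by a little bookkeeping on the floor function.

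First I would invoke Lemma \ref{lemma:davydov} to bound $\beta(t)$ by the weighted sum (with weights $1/2$ and $3/2$) of the averaged total-variation distances $\int \|p_{t'}(\cdot\given z) - p_\stat(\cdot)\|_\tv \ud p_\stat(z)$ and $\int \|p_{t'}(\cdot\given z) - p_\stat(\cdot)\|_\tv \ud \zeta(z)$, where $t' = \lfloor t/2\rfloor$. Next, for each fixed $z$ I would apply the geometric ergodicity bound of Assumption \ref{ass:mc-ass} at time $t'$, namely $\|p_{t'}(\cdot\given z) - p_\stat(\cdot)\|_\tv \leq \varphi(z)\exp(-2\kappa t')$, and factor the common term $\exp(-2\kappa t')$ out of both integrals. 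The remaining integrals are exactly $\int \varphi \ud p_\stat$ and $\int \varphi \ud \zeta$, both bounded by the absolute constant $c$ by the last display of Assumption \ref{ass:mc-ass}; hence $\beta(t) \leq 2c\,\exp(-2\kappa t')$.

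The final step converts the rate $2\kappa t'$ into $\kappa t$. Since $t' = \lfloor t/2\rfloor \geq (t-1)/2$, we have $2\kappa t' \geq \kappa(t-1)$, so $\exp(-2\kappa t') \leq e^{\kappa}\exp(-\kappa t)$; absorbing the harmless factor $e^{\kappa}$ into the generic constant yields the claimed bound $\beta(t)\leq c\,\exp(-\kappa t)$. The degenerate small-$t$ case (e.g. $t=0$, where the bound might exceed $1$) is covered trivially, since $\beta(t)\leq 1$ always and we may take $c\geq 1$.

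There is no genuine obstacle here; the only points requiring care are (i) matching the exponent, which is precisely why Assumption \ref{ass:mc-ass} is stated with $\exp(-2\kappa t)$ rather than $\exp(-\kappa t)$ — the halving induced by $t'=\lfloor t/2\rfloor$ is designed to restore the rate $\kappa$ — and (ii) the off-by-one coming from the floor, which costs only a constant factor $e^{\kappa}$ and is absorbed into $c$. With these observations the lemma follows.
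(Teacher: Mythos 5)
Your proposal is correct and follows essentially the same route as the paper's proof: invoke Lemma \ref{lemma:davydov} with $t' = \lfloor t/2 \rfloor$, substitute the geometric ergodicity bound from Assumption \ref{ass:mc-ass}, and use the integrability of $\varphi$ under $p_\stat$ and $\zeta$ to extract the constant. If anything, you are more careful than the paper, which silently replaces $\exp(-2\kappa \lfloor t/2\rfloor)$ by $\exp(-\kappa t)$ (valid only for even $t$), whereas you explicitly account for the odd-$t$ factor $e^{\kappa}$ and the trivial small-$t$ case.
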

\begin{proof}
For any $t \geq 0$, by Lemma \ref{lemma:davydov}, we have
\$
\beta(t) & \leq \frac{1}{2} \int \|p_{t'}(\cdot \given z) - p_\stat(\cdot)\|_\tv \ud p_\stat(z) + \frac{3}{2} \int \|p_{t'}(\cdot \given z) - p_\stat(\cdot)\|_\tv \ud \zeta(z)  \\
& \leq \frac{1}{2} \int \varphi(z) \cdot \exp\left(-\kappa t\right) \ud p_\stat(z) + \frac{3}{2} \int \varphi(z) \cdot \exp\left(-\kappa t\right) \ud \zeta(z)  \\
& \leq c\cdot \exp(-\kappa t), 
\$
where in the second and last inequalities, we use Assumption \ref{ass:mc-ass}. This concludes the proof of the lemma. 
\end{proof}

\subsection{Concentration Inequality for Geometrically Ergodic Non-Stationary Sequence}

We first introduce the following lemma, which is a straight-forward genelization of Berbee's lemma \citep{Berbee1979RandomWW}. 

\begin{lemma}\label{lemma:berbee}
For any $k > 0$ and a random sequence $\{Y_\ell\}_{\ell=1}^{k}$, there exists a random sequence $\{\tilde Y_\ell\}_{\ell=1}^{k}$ such that 
\begin{enumerate}
    \item $\{\tilde Y_\ell\}_{\ell=1}^{k}$ are independent;
    \item for any $1\leq \ell\leq k$,  $\tilde Y_\ell$ and $Y_\ell$ have the same distribution; 
    \item for any $1\leq \ell\leq k$, $\PP(\tilde Y_\ell \neq Y_\ell) = \beta(\sigma(\{Y_{\ell'}\}_{\ell'=1}^{\ell-1}), \sigma(\{Y_\ell\}))$. 
\end{enumerate}
\end{lemma}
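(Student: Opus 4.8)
The plan is to prove this by induction on $\ell$, bootstrapping from the classical two-block Berbee coupling lemma. Recall the atomic version: if $X$ and $V$ are random elements with $V$ valued in a Polish space, and the underlying probability space carries a $\mathrm{Uniform}[0,1]$ variable independent of $(X,V)$, then there is $\tilde V$, measurable with respect to $\sigma(X,V)$ together with the auxiliary variable, such that $\tilde V \stackrel{d}{=} V$, $\tilde V$ is independent of $X$, and $\PP(\tilde V \neq V) = \beta(\sigma(X), \sigma(V))$. First I would enlarge the probability space (harmlessly) so that it carries i.i.d.\ $\mathrm{Uniform}[0,1]$ variables $U_1, \dots, U_k$ that are jointly independent of the whole sequence $\{Y_\ell\}_{\ell=1}^k$; these supply the external randomization consumed at each coupling step.

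The induction constructs $\tilde Y_1, \dots, \tilde Y_m$ for $m = 1, \dots, k$, maintaining the invariant that they are mutually independent, that $\tilde Y_\ell \stackrel{d}{=} Y_\ell$ and $\PP(\tilde Y_\ell \neq Y_\ell) = \beta(\sigma(\{Y_{\ell'}\}_{\ell' < \ell}), \sigma(Y_\ell))$ for each $\ell \le m$, and crucially that $\tilde Y_1, \dots, \tilde Y_m$ are all measurable with respect to $\mathcal{G}_m := \sigma(Y_1, \dots, Y_m, U_1, \dots, U_m)$. The base case $\ell=1$ is trivial: the conditioning $\sigma$-algebra is trivial, so $\beta = 0$ and we take $\tilde Y_1 = Y_1$. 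For the inductive step I would apply the atomic Berbee lemma with $X$ being the enlarged past $\mathcal{G}_{\ell-1}$, with $V = Y_\ell$, and with the fresh auxiliary variable $U_\ell$. This produces $\tilde Y_\ell$ that is independent of the entire $\sigma$-algebra $\mathcal{G}_{\ell-1}$, has the same law as $Y_\ell$, and satisfies $\PP(\tilde Y_\ell \neq Y_\ell) = \beta(\mathcal{G}_{\ell-1}, \sigma(Y_\ell))$.

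Two observations then close the step. Since $\tilde Y_1, \dots, \tilde Y_{\ell-1}$ are $\mathcal{G}_{\ell-1}$-measurable by the invariant and $\tilde Y_\ell$ is independent of all of $\mathcal{G}_{\ell-1}$, the new variable is independent of the joint vector $(\tilde Y_1, \dots, \tilde Y_{\ell-1})$; combined with the inductive mutual independence of the first $\ell-1$ variables, this upgrades to mutual independence of all $\ell$. Next, because $U_1, \dots, U_{\ell-1}$ are independent of $(Y_1, \dots, Y_\ell)$, the conditional law of $Y_\ell$ given $\mathcal{G}_{\ell-1}$ coincides with its conditional law given $\sigma(Y_1, \dots, Y_{\ell-1})$, so $\beta(\mathcal{G}_{\ell-1}, \sigma(Y_\ell)) = \beta(\sigma(\{Y_{\ell'}\}_{\ell' < \ell}), \sigma(Y_\ell))$, matching the claimed coupling probability. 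Finally $\tilde Y_\ell$ is $\mathcal{G}_\ell$-measurable, restoring the invariant and completing the induction.

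The main obstacle is bookkeeping the $\sigma$-algebras so that two requirements hold simultaneously: the coupling probability must be the $\beta$-coefficient against the original past $\sigma(\{Y_{\ell'}\}_{\ell'<\ell})$, not against the strictly larger $\mathcal{G}_{\ell-1}$, whereas the independence must be established against the larger $\mathcal{G}_{\ell-1}$ in order to force mutual (not merely pairwise) independence of the $\tilde Y_\ell$. The reconciliation rests on the fact that augmenting the conditioning $\sigma$-algebra by randomness independent of $(Y_1,\dots,Y_\ell)$ leaves the $\beta$-coefficient unchanged, which I would verify directly from the conditional-expectation form of $\beta$. One should also confirm the atomic Berbee lemma applies with a general conditioning $\sigma$-algebra in the role of $X$ rather than a single random variable, which holds in the standard Polish-space formulation.
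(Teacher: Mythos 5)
Your proposal is correct, and it is worth noting that the paper itself does not prove this lemma at all: its ``proof'' is a one-line citation to Lemma 2.10 of \cite{barrera2021generalization}. Your induction supplies precisely the argument that citation hides, and it is the standard route for this kind of multi-block extension: couple sequentially with the classical two-block Berbee lemma, feeding in a fresh $\mathrm{Uniform}[0,1]$ variable at each step. The two points on which the argument genuinely turns are exactly the ones you isolate. First, the mutual (not merely pairwise) independence of $\tilde Y_1,\ldots,\tilde Y_\ell$ follows because the previously built copies are measurable with respect to the augmented past $\cG_{\ell-1}=\sigma(Y_1,\ldots,Y_{\ell-1},U_1,\ldots,U_{\ell-1})$, against which the atomic lemma grants independence of $\tilde Y_\ell$; coupling against $\sigma(Y_1,\ldots,Y_{\ell-1})$ alone would not suffice. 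Second, the coupling probability must nevertheless be reported against the original past, and your observation that $\PP(B\given \cG_{\ell-1})=\PP(B\given \sigma(Y_1,\ldots,Y_{\ell-1}))$ a.s.\ for $B\in\sigma(Y_\ell)$ — because the uniforms are independent of $(Y_1,\ldots,Y_\ell)$ — gives $\beta(\cG_{\ell-1},\sigma(Y_\ell))=\beta(\sigma(\{Y_{\ell'}\}_{\ell'<\ell}),\sigma(Y_\ell))$, so the stated equality survives the augmentation. The only caveats, which you flag, are standard: the $Y_\ell$ should take values in a Polish (Borel) space so that the atomic Berbee lemma applies with the conditioning variable being the random vector generating $\cG_{\ell-1}$, and the underlying space must be enlarged to carry the auxiliary uniforms — an enlargement the lemma's phrasing (``there exists a random sequence'') implicitly permits. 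With those conventions in place, your proof is complete and self-contained, which is arguably more informative than the paper's deferral to the literature.
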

\begin{proof}
    See Lemma 2.10 in \cite{barrera2021generalization} for a detailed proof. 
\end{proof}

We introduce the following Hoeffding's Inequality and Bernstein's Inequality for geometrically ergodic non-stationary sequences. 

\begin{theorem}[Hoeffding's Inequality for geometrically ergodic non-stationary sequences]\label{thm:hoeffding-mixing}
We denote by $\{X_t\}_{t\geq 0}\subseteq \cX$ a Markov chain satisfying Assumption \ref{ass:mc-ass}. Then for any function $f\colon \cX\to [-f_{\max}, f_{\max}]$, it holds with probability at least $1 - \delta$ with $c / (NT)^2 \leq \delta \leq 1$ that 
\$
\left| \frac{1}{NT} \sum_{i\in [N]}\sum_{t = 0}^{T-1} f(X_t^i) - \EE\left[ \frac{1}{T} \sum_{t = 0}^{T-1} f(X_t) \right]  \right| \leq c\cdot f_{\max} \sqrt{\frac{1}{NT\kappa} \log\frac{2}{\delta}\log(NT)}, 
\$
where $\{\{X_t^i\}_{t=0}^{T-1}\}_{i\in [N]}$ consists of $N$ i.i.d. trajectories with length $T > 0$ generated from the same distribution as $\{X_t\}_{t\geq 0}$. 
\end{theorem}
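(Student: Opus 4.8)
The plan is to prove the bound by a blocking argument combined with Berbee's coupling lemma (Lemma \ref{lemma:berbee}), which reduces the time-dependent sum within each trajectory to a sum of \emph{independent} bounded blocks, after which the classical Hoeffding inequality for independent variables applies. First I would center and regroup: writing $\bar f_t = \EE[f(X_t)]$, the quantity of interest is the deviation of $\frac{1}{NT}\sum_{i,t}(f(X_t^i)-\bar f_t)$ from zero. Since the $N$ trajectories are mutually independent, the only dependence to be tamed is across the time index $t$ within a single trajectory. I fix a block length $a$ and partition $\{0,\ldots,T-1\}$ into consecutive blocks $H_1,\ldots,H_m$ of length $a$ (so $m\approx T/a$), and for each trajectory $i$ and block $j$ I set $U_{i,j}=\sum_{t\in H_j}(f(X_t^i)-\bar f_t)$, a centered variable with $|U_{i,j}|\leq 2af_{\max}$. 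I then split the blocks into the odd-indexed and even-indexed families, so that within each family any two consecutive blocks are separated by a time gap of length $a$.

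Second, I would apply the coupling. Because $\{X_t\}$ is a Markov chain, Lemma \ref{lemma:davydov-adapt} together with Assumption \ref{ass:mc-ass} gives $\beta(a)\leq c\exp(-\kappa a)$ for the $\beta$-mixing coefficient over a gap of length $a$. Feeding the sequence of odd-indexed block sums $(U_{i,1},U_{i,3},\ldots)$ within trajectory $i$ into Lemma \ref{lemma:berbee} (and separately the even-indexed sums), and using the Markov property so that the relevant coefficient between a block and the sigma-algebra of all earlier same-parity blocks is controlled by $\beta(a)$, produces independent surrogates $\tilde U_{i,j}$ with the same marginal law as $U_{i,j}$ and $\PP(\tilde U_{i,j}\neq U_{i,j})\leq \beta(a)$. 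Independence across trajectories is automatic, so the full collection of surrogates in one parity family is genuinely independent. A union bound over the $\approx NT/a$ blocks shows that, with probability at least $1-c(NT/a)\exp(-\kappa a)$, every surrogate coincides with its true block sum.

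Third, I would apply Hoeffding to the independent surrogates. For one parity family there are $\approx NT/(2a)$ surrogates, each bounded by $2af_{\max}$, so with probability at least $1-\delta$ one has $|\sum_{i,j}\tilde U_{i,j}|\leq c\, f_{\max}\sqrt{NTa\log(2/\delta)}$; summing the two families via the triangle inequality and dividing by $NT$ yields a deviation of order $f_{\max}\sqrt{a\log(2/\delta)/(NT)}$. Finally I would take the block length $a$ of order $\log(NT)/\kappa$. This choice makes the coupling-failure probability $c(NT/a)\exp(-\kappa a)$ of order $(NT)^{-2}$, which is absorbed into $\delta$ precisely because of the hypothesis $\delta\geq c/(NT)^2$, and it turns the Hoeffding deviation into the claimed rate $c\, f_{\max}\sqrt{\log(2/\delta)\log(NT)/(NT\kappa)}$, the factor $2$ in $\log(2/\delta)$ reflecting the two-sided tail.

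The main obstacle is the balancing of the block length $a$ against the two competing effects it controls: Hoeffding's deviation grows like $\sqrt{a}$ and so prefers small blocks, whereas the accumulated coupling error over all $\approx NT/a$ blocks decays like $(NT/a)\exp(-\kappa a)$ and so prefers large blocks, and it is exactly the choice $a\approx\log(NT)/\kappa$ reconciling the two that injects the extra $\log(NT)$ factor into the final rate. A second delicate point, requiring the Markov property and the gap structure rather than a naive estimate, is the verification that the $\beta$-coefficient governing the Berbee coupling between a same-parity block and the sigma-algebra generated by \emph{all} earlier same-parity blocks is still bounded by $\beta(a)$; the non-stationarity of the chain (blocks are not identically distributed) is harmless here, since Hoeffding requires only independence and boundedness of the surrogates.
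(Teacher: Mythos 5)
Your proposal is correct, and it reaches the stated bound through the same core machinery as the paper (Berbee coupling via Lemma \ref{lemma:berbee}, odd/even blocking with block length of order $\log(NT)/\kappa$, classical Hoeffding on the independent surrogates, and absorption of the coupling failure probability using $\delta \geq c/(NT)^2$), but your handling of non-stationarity is genuinely different and more direct. The paper never centers at the per-time means $\bar f_t=\EE[f(X_t)]$; instead it splits off a burn-in segment of length $\tau=\min\{T,3\log(NT)/\kappa\}$, treats that piece by plain Hoeffding across the $N$ i.i.d.\ trajectories, compares the remaining segment to an auxiliary chain started from the stationary distribution (paying a total-variation price $c\,NT\exp(-\kappa\tau)$), runs the blocking/Berbee argument only on that stationary auxiliary chain, and finally bounds a separate bias term $|\EE_\stat[f(X)]-\EE[\frac{1}{T-\tau}\sum_{t\ge\tau}f(X_t)]| \le c f_{\max}/(N^2T^2)$ by geometric ergodicity. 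You avoid all three of these bookkeeping terms at once: centering each $f(X_t^i)$ at its own mean makes every block sum exactly centered, so the blocking argument can be applied to the non-stationary chain itself, with the mixing coefficient of the chain started from $\zeta$ controlled by Lemma \ref{lemma:davydov-adapt} (this is where the assumption $\int\varphi\,d\zeta\le c$ in Assumption \ref{ass:mc-ass} is doing the work, and your observation that Hoeffding needs only independence and boundedness, not identical distribution, of the surrogates is exactly the point that makes this legitimate). What the paper's longer decomposition buys is structural reuse: the same three-term skeleton carries over verbatim to the Bernstein version (Theorem \ref{thm:bernstein-mixing}), where the variance proxy is naturally expressed relative to the stationary law and the stationary auxiliary chain is genuinely convenient; your route would need modification there. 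One minor point you leave implicit is the degenerate regime $T\lesssim\log(NT)/\kappa$, where each trajectory is a single block; there no coupling is needed, independence across trajectories suffices, and the resulting $\sqrt{\log(2/\delta)/N}$ rate is dominated by the claimed bound precisely because $T\kappa\lesssim\log(NT)$, so the argument closes.
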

\begin{proof}
    See \S\ref{prf:thm:hoeffding-mixing} for a detailed proof. 
\end{proof}

\begin{theorem}[Bernstein's Inequality for geometrically ergodic non-stationary sequences]\label{thm:bernstein-mixing}
We denote by $\{X_t\}_{t\geq 0}\subseteq \cX$ a Markov chain satisfying Assumption \ref{ass:mc-ass}. Then for any function $f\colon \cX\to [-f_{\max}, f_{\max}]$, it holds with probability at least $1 - \delta$ with $c / (NT)^2 \leq \delta \leq 1$ that 
\$
& \frac{1}{NT} \sum_{i\in [N]}\sum_{t = 0}^{T-1} f(X_t^i) - \EE\left[ \frac{1}{T} \sum_{t = 0}^{T-1} f(X_t) \right]  \\
& \qquad \leq c_1 \cdot \frac{ f_{\max}}{NT\kappa} \log \frac{2}{\delta} \log (NT) + c_2 \cdot \sqrt{\frac{1}{NT\kappa} \EE\left[ \frac{1}{T} \sum_{t = 0}^{T-1} f(X_t)^2 \right] \log\frac{2}{\delta} \log(NT) }, 
\$
where $c_1$ and $c_2$ are positive absolute constants, and $\{\{X_t^i\}_{t=0}^{T-1}\}_{i\in [N]}$ consists of $N$ i.i.d. trajectories with length $T > 0$ generated from the same distribution as $\{X_t\}_{t\geq 0}$. 
\end{theorem}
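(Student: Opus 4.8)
The plan is to mirror the argument for Theorem~\ref{thm:hoeffding-mixing} in \S\ref{prf:thm:hoeffding-mixing}, replacing the Hoeffding step on the decoupled blocks by a Bernstein step so that the second moment $\EE[\frac{1}{T}\sum_{t=0}^{T-1}f(X_t)^2]$ enters the bound. The two sources of randomness are treated separately: the $N$ trajectories are i.i.d., so concentration across $i\in[N]$ is automatic, whereas the temporal dependence inside a single trajectory is tamed by the blocking-plus-coupling technique driven by the geometric decay of the $\beta$-mixing coefficient established in Lemma~\ref{lemma:davydov-adapt}.

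First I would fix a block length $a\asymp \log(NT)/\kappa$ and, within each trajectory, partition the indices $\{0,\dots,T-1\}$ into $m=\lfloor T/a\rfloor$ consecutive blocks (plus a leftover block of length $<a$, whose total contribution is $O(a f_{\max}/(NT))$ and is absorbed into the first-order term). Writing $B_{b}^{i}=\sum_{t\in\text{block }b}f(X_t^i)$ for the block sums, I split them into odd- and even-indexed sub-sequences, so that within each sub-sequence consecutive blocks are separated in time by a block of the other parity (a temporal gap of length $a$) and the two families together exhaust all blocks. By Lemma~\ref{lemma:davydov-adapt} the $\beta$-mixing coefficient at separation $a$ is at most $c\exp(-\kappa a)\le c\,(NT)^{-2}$ for a suitable constant in $a$. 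Applying Berbee's coupling (Lemma~\ref{lemma:berbee}) to each sub-sequence produces independent surrogates $\tilde B_b^i$ with the same marginals and $\PP(\tilde B_b^i\neq B_b^i)\le c(NT)^{-2}$; a union bound over the $O(NT/a)$ blocks shows that, off an event of probability $O((NT)^{-1})$ — admissible since $\delta\ge c/(NT)^2$ — the coupled and original sums coincide.

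On the coupled independent blocks I would invoke the classical Bernstein inequality. The key quantitative estimate is the control of the aggregate variance: by Cauchy--Schwarz each block sum satisfies $\mathrm{Var}(B_b^i)\le a\sum_{t\in\text{block }b}\EE[f(X_t^i)^2]$, so summing over all blocks and trajectories gives a total variance of order $a\cdot NT\cdot\EE[\frac{1}{T}\sum_{t=0}^{T-1}f(X_t)^2]$, while each block sum is bounded in absolute value by $a f_{\max}$. Feeding these into Bernstein and dividing by $NT$ yields a deviation of order $\sqrt{\frac{a}{NT}\,\EE[\frac{1}{T}\sum_{t}f(X_t)^2]\log\frac{1}{\delta}}+\frac{a f_{\max}}{NT}\log\frac{1}{\delta}$; substituting $a\asymp\log(NT)/\kappa$ reproduces exactly the two claimed terms, while the union bound over the odd/even families and the trajectory-level argument contribute only the extra $\log(NT)$ factor.

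The hard part will be the variance bookkeeping in the Bernstein step: the crude Cauchy--Schwarz bound on $\mathrm{Var}(B_b^i)$ is what generates the factor $a$, which is precisely the mechanism that turns the second moment $\EE[\frac{1}{T}\sum_{t}f(X_t)^2]$ into the effective variance $\frac{\log(NT)}{\kappa}\EE[\frac{1}{T}\sum_{t}f(X_t)^2]$; I must verify that this factor is genuinely the right order and does not contaminate the leading constant. A second delicacy is ensuring that the Berbee coupling is carried out jointly across all blocks of all trajectories with the aggregate failure probability kept within the $\delta\ge c/(NT)^2$ budget, and that the spacer effect of the odd/even split together with the incomplete final block are charged to the first-order term rather than to the variance term.
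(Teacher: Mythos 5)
Your overall strategy --- block each trajectory, decouple the blocks via Berbee's lemma (Lemma \ref{lemma:berbee}) using the geometric $\beta$-mixing bound of Lemma \ref{lemma:davydov-adapt}, then apply the classical Bernstein inequality to the independent surrogates with block variances controlled by Cauchy--Schwarz --- is sound, and it is essentially the paper's mechanism. Your route is in one respect more direct: the paper first splits each trajectory into a burn-in segment of length $\tau = \min\{T, 3\log(NT)/\kappa\}$ (handled by a Bernstein bound across the $N$ i.i.d.\ trajectories) and a tail segment, replaces the tail by an auxiliary chain started from the stationary distribution (at a probability cost $c\cdot NT e^{-\kappa\tau}$), and only then blocks and couples; it must afterwards relate $\EE_{\textrm{stat}}[f(X)^2]$ back to $\EE[\frac{1}{T}\sum_t f(X_t)^2]$, incurring an extra correction term. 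Your Cauchy--Schwarz bound $\mathrm{Var}(B_b^i) \le a \sum_{t \in b} \EE[f(X_t^i)^2]$ applies to the non-stationary chain directly and produces the desired second moment without that detour, since Berbee's lemma and the mixing bound do not require stationarity.

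However, two pieces of your bookkeeping fail as written. First, the leftover block: \emph{every} one of the $N$ trajectories has a residual block of length $<a$, so bounding these by the sup norm gives a total contribution of order $N \cdot a f_{\max}/(NT) = a f_{\max}/T$, not $a f_{\max}/(NT)$ as you claim. Since $a f_{\max}/T = f_{\max}\log(NT)/(\kappa T)$ exceeds the claimed first-order term $f_{\max}\log(NT)\log(2/\delta)/(NT\kappa)$ by a factor of roughly $N/\log(2/\delta)$, this term cannot be absorbed when $N$ is large; the leftover pieces must themselves enter the concentration argument (e.g., as shortened blocks inside the parity scheme, or, as the paper does with its burn-in Term (I), via a Bernstein bound across the $N$ i.i.d.\ trajectories weighted by the segment length over $T$). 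Second, the probability budget: an exceptional coupling event of probability $O((NT)^{-1})$ is \emph{not} admissible, because the theorem must hold for $\delta$ as small as $c/(NT)^2$, and the total failure probability (Bernstein tail plus coupling failure plus corrections) must stay below $\delta$. You need the coupling failure to be $O((NT)^{-2})$, which is achieved by taking the block length with a larger constant, $a = 3\log(NT)/\kappa$, so that $\beta(a) \le c(NT)^{-3}$ and the union bound over the $O(NT/a)$ blocks gives $O((NT)^{-2})$ --- exactly the role of the choice $s = 3\log(NT)/\kappa$ in the paper. Both repairs are routine, but without them your argument does not prove the theorem for the stated range of $\delta$ and for general $N$.
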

\begin{proof}
    See \S\ref{prf:thm:bernstein-mixing} for a detailed proof. 
\end{proof}

\subsection{Empirical Processes for Geometrically Ergodic Non-Stationary Sequence}

For any conditional probabilities $p_1(y\given x)$ and $p_2(y\given x)$ such that $(x,y)\in \cX\times \cY$, we define the squared Hellinger distance as follows, 
\$
h^2\left( p_1(\cdot \given x), p_2(\cdot \given x) \right) = \frac{1}{2} \int \left( \sqrt{p_1(y\given x)} - \sqrt{p_2(y\given x)} \right)^2 \ud y.
\$
We further assume that $\cY$ is a discrete space. 
We denote by $p^*(y\given x)$ the true conditional probability of $y\in \cY$ given $x\in \cX$. Also, let $\{(X_t, Y_t)\}_{t\geq 0}\subset \cX\times \cY$ be a Markov chain such that $Y_t \sim p^*(\cdot \given X_t)$ and satisfies Assumption \ref{ass:mc-ass}. 
Further, we denote by $\mu_t$ the marginal distribution of $X_t$ for any $t\geq 0$. 
In the meanwhile, with $\mu = 1/T\cdot \sum_{t = 0}^{T-1}\mu_t$, we define the generalized squared Hellinger distance over $\mu$ as follows, 
\$
H^2(p_1, p_2) = \EE_{X\sim \mu}\left[ h^2\left( p_1(\cdot \given X), p_2(\cdot \given X) \right) \right]. 
\$
In the meanwhile, we are given a data set $\{\{(X_t^i, Y_t^i)\}_{t=0}^{T-1}\}_{i\in[N]}$ consisting of $N$ independent trajectories of length $T$, where $\{(X_t^i, Y_t^i)\}_{t=0}^{T-1}$ is generated from the same distribution as $\{(X_t, Y_t)\}_{t\geq 0}$.  
We construct the following maximum likelihood estimator for $p^*$, 
\#\label{eq:def-p-hat}
\hat p \in \argmax_{p\in \cP} \hat \EE\left[ \log p(Y\given X) \right] = \frac{1}{NT}\sum_{i\in[N]} \sum_{t = 0}^{T-1} \log p(Y_t^i\given X_t^i). 
\#
We also define 
\$
g_p(x,y) = \frac{1}{2} \log \frac{p(y\given x) + p^*(y\given x)}{2 p^*(y\given x)}
\$
for any $(x,y)\in \cX \times \cY$. 

Now, we are ready to introduce the following lemma.

\begin{lemma}
\label{lemma:vi-vdg-lemma}
We have
\$
& H^2\left(\frac{\hat p + p^*}{2}, p^*\right) \leq \left( \hat \EE - \EE \right)\left[g_{\hat p}(X,Y)\right], \\
& H^2\left(\frac{p_1 + p^*}{2}, \frac{p_2 + p^*}{2}\right) \leq \frac{1}{2} H^2(p_1, p_2), \\
& H^2(p,p^*) \leq 16 H^2\left( \frac{p + p^*}{2}, p^* \right), \\
& \left\|p_1(\cdot \given x) - p_2(\cdot \given x) \right\|_1 \leq 2\sqrt{2} h\left( p_1(\cdot \given x), p_2(\cdot \given x) \right). 
\$
\end{lemma}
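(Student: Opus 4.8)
The plan is to prove the four inequalities in Lemma~\ref{lemma:vi-vdg-lemma} separately, as each is a self-contained analytic fact about Hellinger distances and the maximum-likelihood estimator $\hat p$; none of them requires the mixing structure, so the ergodicity assumption plays no role here.

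\textbf{First inequality.} I would start from the defining optimality of $\hat p$ in \eqref{eq:def-p-hat}, namely $\hat\EE[\log \hat p(Y\given X)] \geq \hat\EE[\log p^*(Y\given X)]$, which rearranges to $\hat\EE[\tfrac12\log(\hat p/p^*)] \geq 0$. The standard trick (from \cite{van1998learning}-style MLE analysis) is to insert the midpoint $(\hat p + p^*)/2$: by concavity of $\log$, one has $\tfrac12\log(\hat p/p^*) \leq \log\bigl((\hat p + p^*)/(2p^*)\bigr) = 2 g_{\hat p}$, so $0 \leq \hat\EE[g_{\hat p}(X,Y)]$. Then I would add and subtract the population expectation to get $\EE[g_{\hat p}] \leq (\hat\EE - \EE)[g_{\hat p}]$, and finally identify $-\EE[g_{\hat p}]$ with the Hellinger term. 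Concretely, using $-\log t \geq 1 - t$ and computing $\EE[\exp(g_p)\given X]$ via the defining formula of $g_p$, the population risk $-\EE[g_{\hat p}]$ lower-bounds $H^2\bigl((\hat p + p^*)/2, p^*\bigr)$; this is the usual Hellinger-versus-log-likelihood computation and I would carry it out using that $Y_t\sim p^*(\cdot\given X_t)$ under $\EE$ together with the definition $H^2(p_1,p_2)=\EE_{X\sim\mu}[h^2(p_1(\cdot\given X),p_2(\cdot\given X))]$.

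\textbf{Second and third inequalities.} These are purely deterministic (no estimator, no data) pointwise Hellinger facts, to be verified fiberwise in $x$ and then integrated against $\mu$. For the contraction bound, I would expand $h^2\bigl((p_1+p^*)/2,(p_2+p^*)/2\bigr)$ and use the elementary inequality $(\sqrt{(a+c)/2}-\sqrt{(b+c)/2})^2 \leq \tfrac12(\sqrt a - \sqrt b)^2$ for nonnegative $a,b,c$, which follows from concavity/monotonicity of the square root; integrating in $y$ and then in $x$ gives the factor $\tfrac12$. For the third inequality, relating $H^2(p,p^*)$ to $H^2((p+p^*)/2,p^*)$, the fiberwise statement is $(\sqrt p - \sqrt{p^*})^2 \leq 16(\sqrt{(p+p^*)/2}-\sqrt{p^*})^2$, which I would establish by the algebraic identity $\sqrt{(p+p^*)/2}-\sqrt{p^*} = \tfrac{(p-p^*)/2}{\sqrt{(p+p^*)/2}+\sqrt{p^*}}$ and bounding the denominator, comparing against $\sqrt p - \sqrt{p^*} = \tfrac{p-p^*}{\sqrt p + \sqrt{p^*}}$; the constant $16$ is generous and leaves room.

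\textbf{Fourth inequality.} This is the classical $L_1$-versus-Hellinger bound $\|p_1-p_2\|_1 \leq 2\sqrt2\, h(p_1,p_2)$, which follows from $|p_1-p_2| = |\sqrt{p_1}-\sqrt{p_2}|\cdot(\sqrt{p_1}+\sqrt{p_2})$, Cauchy–Schwarz over $y$, and $\int(\sqrt{p_1}+\sqrt{p_2})^2 \leq 2\int(p_1+p_2) = 4$. I expect \textbf{the main obstacle} to be only bookkeeping: getting the constants in the third inequality to land at exactly $16$ and keeping the midpoint-insertion argument in the first inequality clean, since that is where concavity of the logarithm and the exact form of $g_p$ must be matched to the definition of $H^2$. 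The remaining three are routine convexity estimates once set up fiberwise.
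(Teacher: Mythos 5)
Your proposal is correct and follows essentially the same route as the paper's proof (which mirrors Lemmas 4.1--4.2 of \cite{geer2000empirical}): MLE optimality plus concavity of $\log$ and a midpoint insertion for the first inequality, fiberwise square-root estimates via the identity $\sqrt{a}-\sqrt{b}=(a-b)/(\sqrt{a}+\sqrt{b})$ for the second and third, and Cauchy--Schwarz for the fourth; the only cosmetic difference is that you go through Jensen and $\log(1-t)\leq -t$ where the paper uses the pointwise bound $\log z \leq 2(\sqrt{z}-1)$ directly. One small wording slip to fix: from $0\leq \hat\EE[g_{\hat p}]$ you obtain $-\EE[g_{\hat p}] \leq (\hat \EE - \EE)[g_{\hat p}]$ (not $\EE[g_{\hat p}] \leq (\hat \EE - \EE)[g_{\hat p}]$), and what you then need is $H^2\bigl((\hat p + p^*)/2, p^*\bigr) \leq -\EE[g_{\hat p}]$, i.e., $-\EE[g_{\hat p}]$ \emph{upper}-bounds the Hellinger term, which is exactly what your computation of $\EE[\exp(g_{\hat p})]$ delivers.
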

\begin{proof}
See \S\ref{prf:lemma:vi-vdg-lemma} for a detailed proof.
\end{proof}

We define the entropy integral as follows, 
\$
J_B(\delta, \overline\cP^{1/2}(\delta)) = \max\left\{ \int_{\delta^2 / 2^{10}}^\delta \left( H_{B}(u, \overline\cP^{1/2}(\delta)) \right)^{1/2} \ud u, \delta \right \}, 
\$
where $H_{B}(u, \overline\cP^{1/2}(\delta))$ is the entropy of the space $\overline\cP^{1/2}(\delta)$ with bracketing, and $\overline\cP^{1/2}(\delta)$ is defined as follows, 
\$
\overline\cP^{1/2}(\delta) = \{\overline{p}^{1/2}\colon p\in \cP \text{ and } H^2(\overline{p}, p^*) \leq \delta^2 \}. 
\$
Now, we introduce the following theorem, which upper bounds the distance between $\hat p$ and $p^*$. 
\begin{theorem}
\label{thm:iv-vdg-bound}
We take $\Psi(\delta) \geq J_B(\delta, \overline\cP^{1/2}(\delta))$ in such a way that $\Psi(\delta) / \delta^2$ is a non-increasing function of $\delta$. Then for a universal constant $c$ and any $\delta \geq \delta_{NT}$, where $\delta_{NT}$ satisfies that $\sqrt{NT} \delta_{NT}^2 \geq c \Psi(\delta_{NT})$, it holds with probability at least $1 - c/\kappa \cdot \exp(-NT \kappa\delta^2/(c^2 \log(NT))) - c / (N^2 T^2) \cdot \log(4/\delta)$ that
\$
H^2(\hat p, p^*) \leq \delta^2, 
\$
where $\hat p$ is defined in \eqref{eq:def-p-hat}. 
\end{theorem}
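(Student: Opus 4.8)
The plan is to follow the classical empirical-process route for maximum likelihood estimators developed by van de Geer, and to adapt only the concentration step to the geometrically ergodic setting by decoupling the within-trajectory dependence through Berbee's lemma. First I would invoke the basic inequality supplied by Lemma \ref{lemma:vi-vdg-lemma}, namely
\$
H^2\Big(\tfrac{\hat p + p^*}{2},\, p^*\Big) \leq (\hat \EE - \EE)\big[g_{\hat p}(X,Y)\big],
\$
which already packages the likelihood optimality of $\hat p$ into a one-sided comparison between the generalized Hellinger distance and a centered empirical process indexed by the log-ratio functions $g_p$. The remaining task is purely probabilistic: to show that once $H((\hat p+p^*)/2, p^*)$ exceeds $\delta$, the right-hand side is strictly smaller than the left-hand side, which forces the estimator into the target ball.

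Second, I would set up a peeling (slicing) argument over the shells $S_s = \{p : 2^{s}\delta < H((p+p^*)/2, p^*) \le 2^{s+1}\delta\}$ for $s \geq 0$, bounding the localized supremum $\sup_{p\in S_s}|(\hat \EE - \EE)[g_p]|$ on each shell. The structural facts about $g_p$ that I would use are that it is uniformly bounded above (since $(p+p^*)/(2p^*)\geq 1/2$) and that its second moment is controlled by $H^2((p+p^*)/2, p^*)$; together with the second and fourth inequalities of Lemma \ref{lemma:vi-vdg-lemma} these relate the envelope and the variance of the shell-indexed class to the radius $2^s\delta$, and let me replace the localized ball by the bracketing entropy of $\overline\cP^{1/2}(2^s\delta)$.

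Third, and this is where the argument departs from the i.i.d.\ theory, I would control the shell suprema through the bracketing entropy integral $J_B(\cdot,\overline\cP^{1/2}(\cdot))$ by a chaining bound applied after decoupling. Using Lemma \ref{lemma:berbee} I replace the $\beta$-mixing within-trajectory sequence by an independent one with the same marginals, paying a coupling error summed over the trajectory and bounded by $\sum_t\beta(t)\leq c/\kappa$ via the geometric decay of Lemma \ref{lemma:davydov-adapt}; across the $N$ independent trajectories this produces the additive remainder of order $c/(N^2T^2)\cdot\log(4/\delta)$ in the failure probability. On the decoupled blocks I apply the Dudley-type maximal inequality for $\{g_p : p\in S_s\}$, yielding a deviation of order $J_B(2^s\delta,\overline\cP^{1/2}(2^s\delta))/\sqrt{NT}$ up to the factors $\log(NT)$ and $\kappa$ that encode the block lengths (the same mechanism underlying Theorems \ref{thm:hoeffding-mixing} and \ref{thm:bernstein-mixing}). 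Since $\Psi(\delta)/\delta^2$ is non-increasing we have $\Psi(2^s\delta)\leq 2^{2s}\Psi(\delta)$, so the shell-$s$ deviation is at most a constant times $2^{2s}\delta^2\cdot\Psi(\delta)/(\delta^2\sqrt{NT})$; under the calibration $\sqrt{NT}\,\delta_{NT}^2\geq c\,\Psi(\delta_{NT})$ this is strictly below the shell value $2^{2s}\delta^2$, so the basic inequality cannot hold on any shell with $s\geq0$.

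Combining the peeling bound with the basic inequality then yields $H^2((\hat p+p^*)/2, p^*)\leq \delta^2$ on the stated event, and the third inequality of Lemma \ref{lemma:vi-vdg-lemma}, $H^2(p,p^*)\leq 16\,H^2((p+p^*)/2, p^*)$, upgrades this to the claimed $H^2(\hat p, p^*)\leq \delta^2$ after rescaling $\delta$ and absorbing constants into $c$. The hard part will be the decoupling step: carrying the bracketing-entropy chaining through Berbee's coupling for a \emph{non-stationary} $\beta$-mixing sequence while keeping the variance localization sharp, and correctly tracking how the block structure injects the $\log(NT)$ factor into the exponent and the rate $\kappa$ into both the leading probability $c/\kappa\cdot\exp(-NT\kappa\delta^2/(c^2\log(NT)))$ and the coupling remainder.
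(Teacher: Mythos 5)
Your proposal follows essentially the same route as the paper's proof: the basic MLE inequality from Lemma \ref{lemma:vi-vdg-lemma}, a peeling argument over Hellinger shells, shell-wise uniform concentration for the geometrically ergodic non-stationary sequence obtained by Berbee-type blocking combined with van de Geer's bracketing-entropy chaining (packaged in the paper as Theorem \ref{thm:511-adapt}), verification of the entropy condition via the monotonicity of $\Psi(\delta)/\delta^2$ together with the calibration $\sqrt{NT}\,\delta_{NT}^2 \geq c\,\Psi(\delta_{NT})$, and the final upgrade through $H^2(p,p^*)\leq 16\,H^2\bigl((p+p^*)/2,\,p^*\bigr)$. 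The only cosmetic difference is the bookkeeping of the $\log(4/\delta)$ factor, which in the paper arises from summing the additive $2/(N^2T^2)$ coupling and non-stationarity errors over the peeling shells rather than from the coupling within a single shell; this does not affect correctness.
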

\begin{proof}
See \S\ref{prf:thm:iv-vdg-bound} for a detailed proof. 
\end{proof}

We study the following case, where $\cP$ is a parametric class.

\begin{corollary}[Parametric Class]\label{cor:vdg-param}
Suppose $\cP = \{p_\theta\colon \theta\in \RR^d \text{ and } \|\theta\|_2 \leq \theta_{\max}\}$. Then with probability at least $1 - \delta$ with $c / (N^2 T^2) \cdot \log (NT) \leq \delta \leq 1$, we have
\$
H^2(\hat p, p^*) \leq c \cdot \frac{d}{NT\kappa} \log\frac{\theta_{\max}}{\delta} \log(NT),
\$
where $c>0$ is an absolute constant, which may vary from lines to lines.
\end{corollary}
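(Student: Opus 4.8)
The plan is to instantiate the general maximum-likelihood bound of Theorem \ref{thm:iv-vdg-bound} for the parametric family $\cP = \{p_\theta\colon \|\theta\|_2 \leq \theta_{\max}\}$, whose bracketing entropy is controlled by the ambient dimension $d$. To avoid overloading notation, I would write $\epsilon$ for the Hellinger radius appearing in Theorem \ref{thm:iv-vdg-bound} and reserve $\delta$ for the target confidence level.

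First, I would bound the bracketing entropy of the localized root-density class $\overline\cP^{1/2}(\epsilon)$. Since the parameter set is a Euclidean ball of radius $\theta_{\max}$ in $\RR^d$, a standard volumetric argument produces a covering of the ball of size at most $(3\theta_{\max}/r)^d$ at resolution $r$; combined with the (assumed) Lipschitz dependence of $p_\theta^{1/2}$ on $\theta$, this turns into a bracket of each square-root density, yielding $H_{B}(u, \overline\cP^{1/2}(\epsilon)) \leq c\,d \log(\theta_{\max}/u)$ for all $u>0$. I would then plug this into the entropy integral and verify that $J_B(\epsilon, \overline\cP^{1/2}(\epsilon)) \leq \int_{0}^{\epsilon} \sqrt{c\,d \log(\theta_{\max}/u)}\,\ud u \leq c\,\epsilon\sqrt{d\log(\theta_{\max}/\epsilon)}$, the integral being dominated by its upper endpoint. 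Taking $\Psi(\epsilon) = c\,\epsilon\sqrt{d\log(\theta_{\max}/\epsilon)}$, one checks that $\Psi(\epsilon)/\epsilon^2 = c\sqrt{d\log(\theta_{\max}/\epsilon)}/\epsilon$ is non-increasing in $\epsilon$, as the theorem requires.

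Second, I would solve the critical-radius condition $\sqrt{NT}\,\epsilon_{NT}^2 \geq c\,\Psi(\epsilon_{NT})$, which reduces to $\epsilon_{NT}^2 \geq c\,d\log(\theta_{\max}/\epsilon_{NT})/(NT)$, while simultaneously forcing the exponential failure term $c/\kappa \cdot \exp(-NT\kappa\epsilon^2/(c^2\log(NT)))$ below $\delta/2$. The latter requires $\epsilon^2 \geq c\,\frac{\log(NT)}{NT\kappa}\log(1/(\kappa\delta))$. Combining the two requirements and folding the self-referential factor $\log(\theta_{\max}/\epsilon)$ into $\log(\theta_{\max}/\delta)$ and $\log(NT)$ leads to the choice $\epsilon^2 = c\,\frac{d}{NT\kappa}\log(\theta_{\max}/\delta)\log(NT)$. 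Finally, the polynomial failure term $c/(N^2T^2)\log(4/\epsilon)$ is dominated by $\delta/2$ precisely under the stated hypothesis $c/(N^2T^2)\log(NT)\leq \delta$, so that $H^2(\hat p, p^*)\leq \epsilon^2$ holds with probability at least $1-\delta$, which is the claimed bound.

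The main obstacle will be the simultaneous balancing in the last step: the critical-radius condition and the exponential-tail condition pull $\epsilon$ in the same direction but carry different logarithmic factors, and one must control the implicit dependence of $\log(\theta_{\max}/\epsilon)$ on $\epsilon$. I would resolve this by first noting that $\epsilon$ cannot be smaller than a fixed polynomial in $1/(NT)$ (otherwise the critical-radius inequality fails), so that $\log(\theta_{\max}/\epsilon) = O(\log(\theta_{\max}\,NT))$ and can be absorbed into the stated factors $\log(\theta_{\max}/\delta)$ and $\log(NT)$ without affecting the rate.
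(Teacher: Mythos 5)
Your proposal is correct and follows essentially the same route as the paper's proof: bound the bracketing entropy integral by $J_B(\epsilon, \overline\cP^{1/2}(\epsilon)) \leq c\,\epsilon\sqrt{d\log(\theta_{\max}/\epsilon)}$, take $\Psi(\epsilon) = c\,\epsilon\sqrt{d\log(\theta_{\max}/\epsilon)}$, and invoke Theorem \ref{thm:iv-vdg-bound} to balance the critical radius against the exponential and polynomial failure terms. The paper states these steps more tersely, but your filled-in details (the covering argument for the entropy bound, the monotonicity check on $\Psi(\epsilon)/\epsilon^2$, and the absorption of $\log(\theta_{\max}/\epsilon)$ into $\log(\theta_{\max}/\delta)\log(NT)$) are exactly the calculations the paper's argument implicitly relies on.
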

\begin{proof}
Note that 
\$
J_B(\delta, \overline\cP^{1/2}(\delta), d) \leq \delta \sqrt{d \log \frac{\theta_{\max}}{\delta}}. 
\$ 
By taking $\Psi(\delta) = \delta \sqrt{d \log(\theta_{\max}/\delta)}$, we have 
\$
\PP\left( H^2(\hat p, p^*) \leq c \cdot \frac{d}{NT \kappa} \log\frac{\theta_{\max}}{\delta} \log(NT) \right) \geq 1 - \delta
\$
with $c / (N^2 T^2) \cdot \log (NT) \leq \delta \leq 1$, which concludes the proof of the corollary. 
\end{proof}

\section{Proofs of Auxiliary Results}\label{sec:prf:aux}

\subsection{Proof of Theorem \ref{thm:hoeffding-mixing}}\label{prf:thm:hoeffding-mixing}
\begin{proof}
We take $\tau = \min\{T, 3/\kappa \cdot \log(NT)\}$, and denote by $\EE_\stat[\cdot]$ the expectation taken with respect to the stationary distribution of $\{X_t\}_{t\geq 0}$. 
We observe the following decomposition, 
\$
&  \frac{1}{NT} \sum_{i\in [N]}\sum_{t = 0}^{T-1} f(X_t^i) - \EE\left[ \frac{1}{T} \sum_{t = 0}^{T-1} f(X_t) \right]   \\
& \qquad = \frac{\tau}{T} \underbrace{ \left( \frac{1}{N\tau} \sum_{i\in [N]}\sum_{t = 0}^{\tau-1} f(X_t^i) - \EE\left[\frac{1}{\tau} \sum_{t = 0}^{\tau-1} f(X_t)\right] \right)}_{\text{(I)}} ]  \\
& \qquad\qquad  + \frac{T - \tau}{T} \underbrace{ \left( \frac{1}{N(T - \tau)} \sum_{i\in [N]}\sum_{t = \tau}^{T-1} f(X_t^i) - \EE_\stat\left[f(X)\right] \right) }_{\text{(II)}}  \\ 
& \qquad \qquad + \frac{T -\tau}{T} \underbrace{ \left( \EE_\stat\left[f(X)\right] - \EE\left[\frac{1}{T - \tau} \sum_{t = \tau}^{T-1} f(X_t)\right] \right)}_{\text{(III)}}. 
\$
In the follows, we upper bound terms (I), (II), and (III), respectively. 

\vskip5pt
\noindent\textbf{Upper Bounding Term (I).}
Since $\{\{X_t^i\}_{t=0}^{T-1}\}_{i\in [N]}$ are i.i.d. accross each trajectory, by standard Hoeffding's inequality, with probability at least $1 - \delta$, we have
\#\label{eq:hoeffding-1}
\left|\text{(I)}\right| \leq f_{\max} \sqrt{\frac{2}{N} \log\frac{2}{\delta}}. 
\#

\vskip5pt
\noindent\textbf{Upper Bounding Term (II).}
We consider an auxiliary Markov chain $\{\{\tilde X_t^i\}_{t=0}^{T-1}\}_{i\in [N]}$, where the $i$-th trajectory $\{\tilde X_t^i\}_{t=0}^{T-1}$ is sampled such that $\tilde X_0^i\sim p_\stat$. Here $p_\stat$ is the stationary distribution of $\{X_t\}_{t\geq 0}$. Similarly, we define the following quantity, 
\$
\text{($\tilde{\text{II}}$)} = \frac{1}{N(T - \tau)} \sum_{i\in [N]}\sum_{t = \tau}^{T-1} f(\tilde X_t^i) - \EE_\stat\left[f(X)\right].
\$
Now, for any $x\geq 0$, we upper bound the difference $\PP( \text{(II)} \geq x) - \PP(\text{($\tilde{\text{II}}$)} \geq x)$ as follows,
\#\label{eq:hhhhh1}
\PP\left( \text{(II)}  \geq x \right) - \PP\left( \text{($\tilde{\text{II}}$)}  \geq x\right) \leq N \sum_{t = \tau}^{T-1} \EE \left[\| p_t(\cdot \given X_0) - p_\stat(\cdot ) \|_\tv \right] \leq c\cdot NT \exp(-\kappa \tau). 
\#
Thus, by \eqref{eq:hhhhh1}, to upper bound $\PP( \text{(II)} \geq x )$, it suffices to upper bound $\PP(\text{($\tilde{\text{II}}$)} \geq x)$. 

To upper bound $\PP( \text{($\tilde{\text{II}}$)} \geq x)$, we take $T -\tau = 2 k s$, where $k$ and $s$ are two positive integers for the simplicity of presentation. We partition the set $\{\tau, \tau+1, \ldots, T-1\}$ as follows, 
\$
& J_1 = \{\tau, \tau+1, \ldots, \tau+s-1\}, \quad J_2 = \{\tau+s, \tau+s+1, \ldots, \tau+2s-1\}, \quad \ldots,  \\
& J_{2k-1} = \{T-2s, T-2s+1, \ldots, T-s-1\}, \quad J_{2k} = \{T-s, T-s+1, \ldots, T-1\}. 
\$
Under such a partition, we see that $\cup_{\ell\in[2k]}J_\ell = \{\tau, \tau+1, \ldots, T-1\}$ and $J_\ell\cap J_{\ell'} = \emptyset$ for any $\ell\neq \ell'$. Also, for any $i\in [N]$, we define 
\$
Z_\ell^i = (\tilde X_t^i)_{t\in J_\ell}
\$
for any $\ell\in [2k]$. Now, for any $i\in [N]$, by Lemma \ref{lemma:berbee}, there exists a sequence $\{W_\ell^i\}_{\ell\in [2k]}$, where $W_\ell^i = (\tilde Y_t^i)_{t\in J_\ell}$ such that
\begin{enumerate}
    \item $\{W_\ell^i\}_{\ell\in[2k]}$ are independent;
    \item for any $\ell\in[2k]$,  $W_\ell^i$ and $Z_\ell^i$ have the same distribution; 
    \item for any $\ell\in[2k]$, $\PP(W_\ell^i \neq Z_\ell^i) = \beta(\sigma(\{Z_{\ell'}^i\}_{\ell'\in[\ell-1]}), \sigma(\{Z_\ell^i\}))$. 
\end{enumerate}
Note that the following inclusion relation holds, 
\$
\left\{ \frac{1}{s}\sum_{t\in J_\ell} f(\tilde X_t^i) - \EE_\stat[f(X)] \geq x_\ell \right\} \subseteq \left\{ \frac{1}{s}\sum_{t\in J_\ell} f(\tilde Y_t^i) - \EE_\stat[f(X)] \geq x_\ell \right\} \cup \{W_\ell^i \neq Z_\ell^i\}
\$
for any $x_\ell\in \RR$. 
Thus, we have 
\$
\PP\left(\text{($\tilde{\text{II}}$)} \geq x\right) & \leq \PP\left( \frac{1}{kN} \sum_{i\in[N], \text{$\ell$ is odd}} \frac{1}{s}\sum_{t\in J_\ell} f(\tilde X_t^i) - \EE_\stat[f(X)] \geq x\right)  \\
&  \qquad + \PP\left( \frac{1}{kN} \sum_{i\in[N], \text{$\ell$ is even}} \frac{1}{s}\sum_{t\in J_\ell} f(\tilde X_t^i) - \EE_\stat[f(X)] \geq x\right)  \\
&  \leq \PP\left( \frac{1}{kN} \sum_{i\in[N], \text{$\ell$ is odd}} \frac{1}{s}\sum_{t\in J_\ell} f(\tilde Y_t^i) - \EE_\stat[f(X)] \geq x\right)  \\
&  \qquad + \PP\left( \frac{1}{kN} \sum_{i\in[N], \text{$\ell$ is even}} \frac{1}{s}\sum_{t\in J_\ell} f(\tilde Y_t^i) - \EE_\stat[f(X)] \geq x\right) + \sum_{\ell\in [2k]} \PP(W_\ell^i \neq Z_\ell^i)  \\
&  \leq 2 \exp\left(-\frac{kNx^2}{2f_{\max}^2}\right) + 2k \beta(s), 
\$
where we use Hoeffding's inequality in the last line. 
Similarly, we have 
\$
\PP\left(\text{($\tilde{\text{II}}$)} \leq -x\right) \leq 2 \exp\left(-\frac{kNx^2}{2f_{\max}^2}\right) + 2k \beta(s). 
\$
Thus, we have 
\$
\PP\left(  \left| \text{(${\text{II}}$)} \right| \geq x\right) \leq 4 \exp\left(-\frac{kNx^2}{2f_{\max}^2}\right) + 4k \beta(s) + c\cdot NT \exp(-\kappa \tau). 
\$
Now, by taking $s = 3\log( NT) / \kappa$, it holds with probability at least $1 - \delta$ with $c / (NT)^2 \leq \delta \leq 1$ that 
\#\label{eq:hoeffding-2}
\left| \text{(${\text{II}}$)} \right| \leq f_{\max} \sqrt{\frac{24}{NT\kappa} \log\frac{4}{\delta}\log(NT)}. 
\#

\vskip5pt
\noindent\textbf{Upper Bounding Term (III).}
We observe that
\#\label{eq:hoeffding-3}
\left |\text{(III)} \right | \leq f_{\max} \cdot \sum_{t = \tau}^{T-1} c\cdot \exp(-\kappa t) \leq c\cdot \frac{f_{\max}}{N^2T^2}. 
\#

\vskip5pt
\noindent\textbf{Combining Everything.}
Now, by combining \eqref{eq:hoeffding-1}, \eqref{eq:hoeffding-2}, and \eqref{eq:hoeffding-3}, it holds with probability at least $1 - \delta$ with $c / (NT)^2 \leq \delta \leq 1$ that
\$
\frac{1}{NT} \sum_{i\in [N]}\sum_{t = 0}^{T-1} f(X_t^i) - \EE\left[ \frac{1}{T} \sum_{t = 0}^{T-1} f(X_t) \right] \leq c\cdot f_{\max} \sqrt{\frac{48}{NT \kappa} \log\frac{4}{\delta}\log(NT)}, 
\$
which concludes the proof of the theorem.
\end{proof}

\subsection{Proof of Theorem \ref{thm:bernstein-mixing}}\label{prf:thm:bernstein-mixing}
\begin{proof}
The proof follows from the proof of Theorem \ref{thm:hoeffding-mixing} in \S\ref{prf:thm:hoeffding-mixing}. For the completeness of the paper, we present it here. 
We take $\tau = \min\{T, 3/\kappa\cdot \log(NT)\}$, and denote by $\EE_\stat[\cdot]$ the expectation taken with respect to the stationary distribution of $\{X_t\}_{t\geq 0}$. 
We observe the following decomposition, 
\$
&  \frac{1}{NT} \sum_{i\in [N]}\sum_{t = 0}^{T-1} f(X_t^i) - \EE\left[ \frac{1}{T} \sum_{t = 0}^{T-1} f(X_t) \right]   \\
& \qquad = \frac{\tau}{T} \underbrace{ \left( \frac{1}{N\tau} \sum_{i\in [N]}\sum_{t = 0}^{\tau-1} f(X_t^i) - \EE\left[\frac{1}{\tau} \sum_{t = 0}^{\tau-1} f(X_t)\right] \right)}_{\text{(I)}} ] \\
& \qquad \qquad + \frac{T - \tau}{T} \underbrace{ \left( \frac{1}{N(T - \tau)} \sum_{i\in [N]}\sum_{t = \tau}^{T-1} f(X_t^i) - \EE_\stat\left[f(X)\right] \right) }_{\text{(II)}}  \\ 
& \qquad \qquad + \frac{T -\tau}{T} \underbrace{ \left( \EE_\stat\left[f(X)\right] - \EE\left[\frac{1}{T - \tau} \sum_{t = \tau}^{T-1} f(X_t)\right] \right)}_{\text{(III)}}. 
\$
In the follows, we upper bound terms (I), (II), and (III), respectively. 

\vskip5pt
\noindent\textbf{Upper Bounding Term (I).}
Since $\{\{X_t^i\}_{t=0}^{T-1}\}_{i\in [N]}$ are i.i.d. across each trajectory, by standard Bernstein's inequality, with probability at least $1 - \delta$, we have
\#\label{eq:bern-1}
\left|\text{(I)}\right| & \leq \frac{2f_{\max}}{3N}\log\frac{2}{\delta} + 4 \sqrt{ \frac{4}{N} \EE\left[\left(\frac{1}{\tau} \sum_{t = 0}^{\tau-1} f(X_t) \right)^2 \right] \log\frac{2}{\delta} }  \\
& = \frac{2f_{\max}}{3N}\log\frac{2}{\delta} + 4 \sqrt{ \frac{4}{N} \EE\left[ \frac{1}{\tau} \sum_{t = 0}^{\tau-1} f(X_t)^2  \right] \log\frac{2}{\delta} }.
\#

\vskip5pt
\noindent\textbf{Upper Bounding Term (II).}
We consider an auxiliary dataset $\{\{\tilde X_t^i\}_{t=0}^{T-1}\}_{i\in [N]}$, where the $i$-th trajectory $\{\tilde X_t^i\}_{t=0}^{T-1}$ is sampled such that $\tilde X_0^i\sim p_\stat$. Here $p_\stat$ is the stationary distribution of $\{X_t\}_{t\geq 0}$. Similarly, we define the following quantity, 
\$
\text{($\tilde{\text{II}}$)} = \frac{1}{N(T - \tau)} \sum_{i\in [N]}\sum_{t = \tau}^{T-1} f(\tilde X_t^i) - \EE_\stat\left[f(X)\right].
\$
Now, for any $x\geq 0$, we upper bound the difference $\PP( \text{(II)} \geq x) - \PP(\text{($\tilde{\text{II}}$)} \geq x)$ as follows,
\#\label{eq:hhhhh1bern}
\PP\left( \text{(II)}  \geq x \right) - \PP\left( \text{($\tilde{\text{II}}$)}  \geq x\right) \leq N \sum_{t = \tau}^{T-1} \EE \left[\| p_t(\cdot \given X_0) - p_\stat(\cdot ) \|_\tv \right] \leq c\cdot NT \exp(-\kappa \tau). 
\#
Thus, by \eqref{eq:hhhhh1bern}, to upper bound $\PP( \text{(II)} \geq x )$, it suffices to upper bound $\PP(\text{($\tilde{\text{II}}$)} \geq x)$. 

To upper bound $\PP( \text{($\tilde{\text{II}}$)} \geq x)$, we take $T -\tau = 2 k s$, where $k$ and $s$ are two positive integers for the simplicity of presentation. We partition the set $\{\tau, \tau+1, \ldots, T-1\}$ as follows, 
\$
& J_1 = \{\tau, \tau+1, \ldots, \tau+s-1\}, \quad J_2 = \{\tau+s, \tau+s+1, \ldots, \tau+2s-1\}, \quad \ldots,  \\
& J_{2k-1} = \{T-2s, T-2s+1, \ldots, T-s-1\}, \quad J_{2k} = \{T-s, T-s+1, \ldots, T-1\}. 
\$
Under such a partition, we see that $\cup_{\ell\in[2k]}J_\ell = \{\tau, \tau+1, \ldots, T-1\}$ and $J_\ell\cap J_{\ell'} = \emptyset$ for any $\ell\neq \ell'$. Also, for any $i\in [N]$, we define 
\$
Z_\ell^i = (\tilde X_t^i)_{t\in J_\ell}
\$
for any $\ell\in [2k]$. Now, for any $i\in [N]$, by Lemma \ref{lemma:berbee}, there exists a sequence $\{W_\ell^i\}_{\ell\in [2k]}$, where $W_\ell^i = (\tilde Y_t^i)_{t\in J_\ell}$ such that
\begin{enumerate}
    \item $\{W_\ell^i\}_{\ell\in[2k]}$ are independent;
    \item for any $\ell\in[2k]$,  $W_\ell^i$ and $Z_\ell^i$ have the same distribution; 
    \item for any $\ell\in[2k]$, $\PP(W_\ell^i \neq Z_\ell^i) = \beta(\sigma(\{Z_{\ell'}^i\}_{\ell'\in[\ell-1]}), \sigma(\{Z_\ell^i\}))$. 
\end{enumerate}
Note that the following inclusion relation holds, 
\$
\left\{ \frac{1}{s}\sum_{t\in J_\ell} f(\tilde X_t^i) - \EE_\stat[f(X)] \geq x_\ell \right\} \subseteq \left\{ \frac{1}{s}\sum_{t\in J_\ell} f(\tilde Y_t^i) - \EE_\stat[f(X)] \geq x_\ell \right\} \cup \{W_\ell^i \neq Z_\ell^i\}
\$
for any $x_\ell\in \RR$. 
Thus, we have 
\#\label{eq:bern1111}
\PP\left(\text{($\tilde{\text{II}}$)} \geq x\right) & \leq \PP\left( \frac{1}{kN} \sum_{i\in[N], \text{$\ell$ is odd}} \frac{1}{s}\sum_{t\in J_\ell} f(\tilde X_t^i) - \EE_\stat[f(X)] \geq x\right)  \\
&  \qquad + \PP\left( \frac{1}{kN} \sum_{i\in[N], \text{$\ell$ is even}} \frac{1}{s}\sum_{t\in J_\ell} f(\tilde X_t^i) - \EE_\stat[f(X)] \geq x\right)  \\
&  \leq \PP\left( \frac{1}{kN} \sum_{i\in[N], \text{$\ell$ is odd}} \frac{1}{s}\sum_{t\in J_\ell} f(\tilde Y_t^i) - \EE_\stat[f(X)] \geq x\right)  \\
&  \qquad + \PP\left( \frac{1}{kN} \sum_{i\in[N], \text{$\ell$ is even}} \frac{1}{s}\sum_{t\in J_\ell} f(\tilde Y_t^i) - \EE_\stat[f(X)] \geq x\right) + \sum_{\ell\in [2k]} \PP(W_\ell^i \neq Z_\ell^i)  \\
&  \leq \exp\left(-\frac{3 k^2 N^2 x^2 }{ 6 \sum_{i\in [N], \text{$\ell$ is odd}} \EE\left[ \left( \frac{1}{s}\sum_{t\in J_\ell} f(\tilde Y_t^i) \right)^2 \right] + 2f_{\max} Nk x }\right)  \\
& \qquad + \exp\left(-\frac{3 k^2 N^2 x^2 }{ 6 \sum_{i\in [N], \text{$\ell$ is even}} \EE\left[ \left( \frac{1}{s}\sum_{t\in J_\ell} f(\tilde Y_t^i) \right)^2 \right] + 2f_{\max} Nk x }\right) + 2k \beta(s), 
\#
where we use Bernstein's inequality in the last line. Note that for any $(i, \ell)\in [N]\times[2k]$, we have
\#\label{eq:bern1112}
\EE\left[ \left( \frac{1}{s}\sum_{t\in J_\ell} f(\tilde Y_t^i) \right)^2 \right] = \EE_\stat\left[f(X)^2\right] \leq \EE\left[ \frac{1}{T-\tau} \sum_{t = \tau}^{T-1} f(X_t)^2 \right] + f_{\max}^2 T c\cdot \exp(-\kappa \tau). 
\#
Combining \eqref{eq:bern1111} and \eqref{eq:bern1112}, we have 
\$
\PP\left(\text{($\tilde{\text{II}}$)} \geq x\right) & \leq 2 \exp\left(-\frac{3 k^2 N^2 x^2 }{ 6 kN \left( \EE\left[ \frac{1}{T-\tau} \sum_{t = \tau}^{T-1} f(X_t)^2 \right] + f_{\max}^2 \beta(\tau) \right) + 2f_{\max} Nk x }\right)  + 2k \beta(s). 
\$
Similarly, we have 
\$
\PP\left(\text{($\tilde{\text{II}}$)} \leq -x\right) \leq 2 \exp\left(-\frac{3 k^2 N^2 x^2 }{ 6 kN \left( \EE\left[ \frac{1}{T-\tau} \sum_{t = \tau}^{T-1} f(X_t)^2 \right] + f_{\max}^2 \beta(\tau) \right) + 2f_{\max} Nk x }\right) + 2k \beta(s). 
\$
Thus, we have 
\$
\PP\left(  \left| \text{(${\text{II}}$)} \right| \leq x\right) & \leq 4 \exp\left(-\frac{3 k^2 N^2 x^2 }{ 6 kN \left( \EE\left[ \frac{1}{T-\tau} \sum_{t = \tau}^{T-1} f(X_t)^2 \right] + f_{\max}^2 \beta(\tau) \right) + 2f_{\max} Nk x }\right) \\
& \qquad \qquad  + 4k \beta(s) + c\cdot NT \exp(-\kappa \tau). 
\$
Now, by taking $s = 3\log( NT) / \kappa$, it holds with probability at least $1 - \delta$ with $c / (NT)^2 \leq \delta \leq 1$ that 
\#\label{eq:bern-2}
\left| \text{(${\text{II}}$)} \right| \leq \frac{2 f_{\max}}{3NT \kappa} \log \frac{2}{\delta} + 4\sqrt{\frac{4}{NT \kappa} \EE\left[ \frac{1}{T-\tau} \sum_{t = \tau}^{T-1} f(X_t)^2 \right] \log\frac{2}{\delta} }. 
\#

\vskip5pt
\noindent\textbf{Upper Bounding Term (III).}
We observe that
\#\label{eq:bern-3}
\left |\text{(III)} \right | \leq f_{\max} \cdot \sum_{t = \tau}^{T-1} c\cdot \exp(-\kappa t) \leq c\cdot \frac{f_{\max}}{N^2T^2}. 
\#

\vskip5pt
\noindent\textbf{Combining Everything.}
Now, by combining \eqref{eq:bern-1}, \eqref{eq:bern-2}, and \eqref{eq:bern-3}, it holds with probability at least $1 - \delta$ with $c / (NT)^2 \leq \delta \leq 1$ that
\$
& \frac{1}{NT} \sum_{i\in [N]}\sum_{t = 0}^{T-1} f(X_t^i) - \EE\left[ \frac{1}{T} \sum_{t = 0}^{T-1} f(X_t) \right]  \\
& \qquad \leq \frac{24 f_{\max}}{NT\kappa} \log \frac{2}{\delta} \log (NT) + 48\sqrt{\frac{1}{NT\kappa} \EE\left[ \frac{1}{T} \sum_{t = 0}^{T-1} f(X_t)^2 \right] \log\frac{2}{\delta} \log(NT) }, 
\$
which concludes the proof of the theorem.
\iffalse
Now, by using the fact that
\$
\EE\left[ \frac{1}{T} \sum_{t = 0}^{T-1} f(X_t)^2 \right] \leq \EE_\stat\left[ f(X)^2 \right] + \frac{1}{T}\sum_{t = 0}^{T-1} c\cdot \exp(-\kappa t) \leq \EE_\stat\left[ f(X)^2 \right] + \frac{a}{(1-e^{-b})T}, 
\$
it holds with probability at least $1 - \delta$ for any $0 < \delta < 1 / (NT)^2$ that
\$
& \frac{1}{NT} \sum_{i\in [N]}\sum_{t = 0}^{T-1} f(X_t^i) - \EE\left[ \frac{1}{T} \sum_{t = 0}^{T-1} f(X_t) \right]  \\
& \qquad \leq \frac{24 f_{\max}}{NT\kappa} \log \frac{2}{\delta} \log (NT) + 48\sqrt{\frac{1}{NT\kappa} \left( \EE_\stat\left[ f(X)^2 \right] + \frac{a}{(1-e^{-b})T} \right) \log\frac{2}{\delta} \log(NT) }, 
\$
\fi
\end{proof}

\subsection{Proof of Lemma \ref{lemma:vi-vdg-lemma}}
\label{prf:lemma:vi-vdg-lemma}
\begin{proof}
The proof follows from the proofs of Lemmas 4.1 and 4.2 in \cite{geer2000empirical}.

\noindent\textbf{First Inequality.}
By the optimality of $\hat p$, we have
\$
\hat \EE\left[ \log \hat p(Y\given X) \right] \geq \hat \EE\left[ \log p^*(Y\given X) \right], 
\$
which implies that 
\$
\int \log \frac{\hat p}{p^*} \ud \tilde p^* \ud \tilde \mu \geq 0,
\$
where $\tilde p^*$ and $\tilde \mu$ are empirical counterparts of $p^*$ and $\mu$. Now, by the concavity of $\log(\cdot)$, we have
\$
\log \frac{\hat p + p^*}{2p^*} \geq \frac{1}{2} \log\frac{\hat p}{p^*} + \frac{1}{2} \log\frac{p^*}{p^*} = \frac{1}{2} \log\frac{\hat p}{p^*}. 
\$
By the above two inequalities, we have
\#\label{iv-vd-as1}
0 & \leq \frac{1}{4} \int \log \frac{\hat p}{p^*} \ud \tilde p^* \ud \tilde \mu \leq \frac{1}{2} \int \log \frac{\hat p + p^*}{2p^*} \ud \tilde p^* \ud \tilde \mu \\
& = \frac{1}{2} \int \log \frac{\hat p + p^*}{2p^*} \left (\ud \tilde p^* \ud \tilde \mu - \ud p^* \ud \mu \right) + \frac{1}{2} \int \log \frac{\hat p + p^*}{2p^*} \ud p^* \ud \mu \\
& = \left( \hat \EE - \EE \right)\left[g_{\hat p}\right] + \frac{1}{2} \int \log \frac{\hat p + p^*}{2p^*} \ud p^* \ud \mu. 
\#
In the meanwhile, by the fact that $\log z \leq 2(\sqrt{z}- 1)$ for any $z > 0$, we have
\#\label{iv-vd-as2}
\frac{1}{2} \int \log \frac{\hat p + p^*}{2p^*} \ud p^* \ud \mu & \leq \int \left ( \sqrt{\frac{\hat p + p^*}{2p^*}} -1 \right) \ud p^* \ud\mu  \\
& = \int \left ( \sqrt{\frac{\hat p + p^*}{2} \cdot p^*} - \frac{1}{2}p^* - \frac{1}{2} \cdot \frac{\hat p + p^*}{2}\right) \ud y \ud \mu  \\
& = -\int \frac{1}{2} \left( \sqrt{\frac{\hat p + p^*}{2}} - \sqrt{p^*} \right )^2 \ud y \ud \mu  \\
& = - H^2 \left( \frac{\hat p + p^*}{2}, p^* \right). 
\#
By combining \eqref{iv-vd-as1} and \eqref{iv-vd-as2}, we conclude the proof of the first inequality. 

\vskip5pt
\noindent\textbf{Second\&Third Inequality.}
We denote by $\overline{p} = (p + p^*) / 2$ for any $p$. We note the following two facts, 
\$
& \frac{p_1^{1/2} + p_2^{1/2}}{\overline p_1^{1/2} + \overline p_2^{1/2}} \leq \sqrt{2}, \\
& \left | \overline p_1^{1/2} - \overline p_2^{1/2} \right | \left ( \overline p_1^{1/2} + \overline p_2^{1/2} \right ) = \left | \overline p_1 - \overline p_2 \right | = \left | \frac{p_1 - p_2}{2}\right | = \frac{1}{2} \left | p_1^{1/2} - p_2^{1/2}\right | \left ( p_1^{1/2} + p_2^{1/2}\right ). 
\$
Thus, we have
\$
\left | \overline p_1 ^{1/2} - \overline p_2^{1/2} \right | = \frac{1}{2} \cdot \frac{p_1^{1/2} + p_2^{1/2}}{\overline p_1^{1/2} + \overline p_2^{1/2}} \cdot \left | p_1^{1/2} - p_2^{1/2} \right | \leq \frac{\sqrt{2}}{2} \cdot \left| p_1^{1/2} - p_2^{1/2} \right |, 
\$
which implies the second inequality. The third inequality can be proved in a similar way. 

\vskip5pt
\noindent\textbf{Forth Inequality.}
We note that 
\$
\left\|p_1(\cdot \given x) - p_2(\cdot \given x) \right\|_1 & = \int \left | p_1(y \given x) - p_2(y \given x) \right | \ud y  \\
& = \int \left ( p_1(y \given x)^{1/2} - p_2(y \given x)^{1/2} \right )\left ( p_1(y \given x)^{1/2} + p_2(y \given x)^{1/2} \right ) \ud y  \\
& = \sqrt{\int \left ( p_1(y \given x)^{1/2} - p_2(y \given x)^{1/2} \right )\ud y } \sqrt{\int \left ( p_1(y \given x)^{1/2} + p_2(y \given x)^{1/2} \right ) \ud y}  \\
& \leq 2 \sqrt{\int \left ( p_1(y \given x)^{1/2} - p_2(y \given x)^{1/2} \right )\ud y } = 2\sqrt{2} h\left( p_1(\cdot \given x), p_2(\cdot \given x) \right), 
\$
which concludes the proof. 
\end{proof}

\subsection{Proof of Theorem \ref{thm:iv-vdg-bound}} 
\label{prf:thm:iv-vdg-bound}
\begin{proof}
The proof follows from the proof of Theorem 7.4 in \cite{geer2000empirical}.
We define the events
\$
\cE = \left\{ \omega \in \Omega\colon H^2(\hat p, p^*) > \delta^2 \right\}. 
\$
Conditioning on $\cE$, we have
\#\label{eq:iv-388383}
\left(\hat \EE - \EE \right)[g_{\hat p}] \geq H^2(\overline{\hat p}, p^*) \geq \frac{1}{16} H^2(\hat p, p^*) > \frac{\delta^2}{16}, 
\#
where the first two inequalities come from Lemma \ref{lemma:vi-vdg-lemma}. 
We further define
\$
\cE^\dagger = \left\{ \omega \in \Omega\colon \sup_{p\in \cP\colon H^2(\overline{p},p^*) > \delta^2/16} \left(\hat \EE - \EE \right)[g_p] - H^2(\overline{p}, p^*) \geq 0 \right\}.
\$
By \eqref{eq:iv-388383} and the definitions of $\cE$ and $\cE^\dagger$, we observe that $\cE \subseteq \cE^\dagger$. Thus, we only need to upper bound $\PP(\cE^\dagger)$. To do so, we use a peeling argument as follows. Let $L = \min\{\ell\colon 2^{\ell+1}\delta^2 / 16 > 1\}$. We observe that
\#\label{eq:8943594}
\PP(\cE^\dagger) \leq \sum_{\ell=0}^L \PP(\cE^\dagger_\ell),
\#
where
\$
\cE^\dagger_\ell = \left\{ \omega\in \Omega\colon \sup_{p\in \cP_\ell } \left(\hat \EE[g_p] - \EE[g_p] \right) \geq 2^\ell \delta^2 / 16 \right\}. 
\$
Here $\cP_\ell = \{p\in \cP\colon H^2(\overline{p},p^*) \leq 2^{\ell+1}\delta^2/16\}$. 
To upper bound $\PP(\cE_\ell^\dagger)$, we introduce the following result. 

\begin{theorem}
\label{thm:511-adapt}
Given a Markov chain $\{Z_t\}_{t\geq 0}\subset \cZ$ satisfying Assumption \ref{ass:mc-ass}, and take 
\begin{align}
& v \leq C_1\sqrt{NT} R^2 /K, \label{eq:vd-cond1}\\
& v \leq 8\sqrt{NT} R, \label{eq:vd-cond2}\\
& v \geq C_0\cdot \max\left\{ \int_{v/(2^6\sqrt{NT})}^R \left( \mathcal{H}_{B,K}(u,\cG,P) \right)^{1/2} \ud u, R \right\}, \label{eq:vd-cond3} \\
& v \geq C_2 / (NT)^2, \label{eq:vd-cond5} \\
& C_0^2 \geq C^2 (C_1 + 1), \label{eq:vd-cond4}
\end{align}
where $\mathcal{H}_{B,K}(u,\cG,P)$ is the generalized entropy with bracketing. 
Then we have
\$
& \PP\left( \sup_{g\in \cG} \sqrt{NT} \left( \frac{1}{NT} \sum_{i\in [N]} \sum_{t = 0}^{T-1} g(Z_t^i) - \EE\left[ \frac{1}{T} \sum_{t = 0}^{T-1} g(Z_t) \right] \right) \geq v \right)  \\
& \qquad \leq \frac{4C}{\kappa}\exp\left(-\frac{v^2 \kappa}{18 C^2(C_1+1)R^2 \log(NT)}\right) + \frac{2}{N^2 T^2}, 
\$
where $\{Z_t^i\}_{t=0}^{T-1}$ is generated from the same distribution as $\{Z_t\}_{t\geq 0}$ for any $i\in [N]$. 
\end{theorem}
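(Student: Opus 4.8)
The plan is to reduce the dependent-data maximal inequality to its independent-data counterpart, namely Theorem~5.11 of \cite{geer2000empirical}, by the same blocking-plus-coupling device used in the proofs of Theorems~\ref{thm:hoeffding-mixing} and~\ref{thm:bernstein-mixing}. Concretely, I would first write $\frac{1}{NT}\sum_{i\in[N]}\sum_{t=0}^{T-1} g(Z_t^i) - \EE[\frac{1}{T}\sum_{t=0}^{T-1}g(Z_t)]$ as the sum of three pieces: a burn-in piece over the first $\tau = \min\{T, 3\kappa^{-1}\log(NT)\}$ indices, a centered main piece over $t\in\{\tau,\dots,T-1\}$ taken relative to the stationary mean $\EE_\stat[g]$, and a deterministic bias piece $\EE_\stat[g]-\EE[\frac{1}{T-\tau}\sum_{t=\tau}^{T-1}g(Z_t)]$. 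The last piece is uniformly $O(f_{\max}(NT)^{-2})$ by geometric ergodicity (Lemma~\ref{lemma:davydov-adapt}), so it is absorbed into condition~\eqref{eq:vd-cond5}; the burn-in piece is an average of $N$ independent per-trajectory functionals, to which the independent-data inequality applies directly without coupling, contributing a subdominant remainder thanks to its weight $\tau/T$. The bulk of the work is the main piece.

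For the main piece I would partition $\{\tau,\dots,T-1\}$ into $2k$ consecutive blocks of length $s$, collect the odd- and even-indexed blocks into two groups, and apply Berbee's coupling (Lemma~\ref{lemma:berbee}) within each trajectory to replace the block-sequence $\{Z_\ell^i\}$ by an independent sequence $\{W_\ell^i\}$ that agrees with it up to an event of probability $\beta(s)$. After coupling, the block averages $\frac{1}{s}\sum_{t\in J_\ell} g(\tilde Y_t^i)$ are independent across $(\ell,i)$ within each group (independent across blocks by Berbee, across trajectories by assumption), so that a union bound over the two groups together with the inclusion relation for the coupling event reduces the problem to a supremum of a centered average of $kN$ independent summands, at the cost of an additive $2k\,\beta(s)$ and the non-stationarity term handled above.

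To the coupled, independent configuration I would then invoke Theorem~5.11 of \cite{geer2000empirical}, treating each $g\in\cG$ as the block-averaged map $b\mapsto \frac{1}{s}\sum_{t\in J}g(b_t)$ on the block space. The key observation is that a bracketing of $\cG$ of width $u$ induces a bracketing of this block-averaged class of width at most $u$, since averaging is a contraction in the relevant $L_2$-type norm; hence the generalized entropy with bracketing of the block class is dominated by $\mathcal{H}_{B,K}(u,\cG,P)$, and conditions~\eqref{eq:vd-cond1}--\eqref{eq:vd-cond4} transfer once $NT$ is replaced by the effective count $kN\asymp NT\kappa/\log(NT)$ coming from the block length $s\asymp \log(NT)/\kappa$. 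Taking $s = 3\kappa^{-1}\log(NT)$ and using $\beta(s)\leq c\exp(-\kappa s)\leq c(NT)^{-3}$ from Lemma~\ref{lemma:davydov-adapt} forces the coupling and non-stationarity remainders below $2(NT)^{-2}$, while the factor $kN/(NT)=\Theta(\kappa/\log(NT))$ converts van de Geer's exponent $v^2/(C^2(C_1+1)R^2)$ into the claimed $v^2\kappa/(18 C^2(C_1+1)R^2\log(NT))$, with the $4C/\kappa$ prefactor accumulated from van de Geer's constant, the two groups, and the ergodicity bounds.

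The main obstacle I anticipate is bookkeeping rather than a single hard step: one must check that the entropy integral condition~\eqref{eq:vd-cond3}, whose lower limit $v/(2^6\sqrt{NT})$ and radius $R$ are stated at the original scale, remains consistent after passing to the reduced sample size $kN$ and the block-averaged class, and that all four conditions~\eqref{eq:vd-cond1}--\eqref{eq:vd-cond4} continue to hold simultaneously with the rescaled constants. A secondary delicate point is verifying that the contraction estimate for bracketing entropy of the block-averaged class is valid for the \emph{generalized} (weighted, $K$-truncated) bracketing entropy $\mathcal{H}_{B,K}$ appearing in \cite{geer2000empirical}, not merely for ordinary bracketing numbers; this is what makes the transfer of~\eqref{eq:vd-cond3} legitimate and ties the constants $C_0,C_1,C_2,K,R$ together through~\eqref{eq:vd-cond4}.
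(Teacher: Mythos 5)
Your proposal is correct in its overall architecture and matches the paper's proof in most respects: the same three-way decomposition with $\tau \asymp \kappa^{-1}\log(NT)$ (burn-in piece, main piece centered at the stationary mean, deterministic bias absorbed via condition \eqref{eq:vd-cond5}), the same direct application of the independent-data inequality (Lemma \ref{lemma:511-geer}) to the $N$ i.i.d.\ per-trajectory burn-in averages, the same auxiliary stationary-start chain compared in total variation, and the same final choice $s\asymp \kappa^{-1}\log(NT)$ with $\beta(s)\lesssim (NT)^{-3}$ so the coupling and non-stationarity remainders fall below $2(NT)^{-2}$. Where you genuinely diverge is the coupling step for the main piece: you use consecutive blocks of length $s$ with odd/even interleaving and Berbee's lemma (Lemma \ref{lemma:berbee}), which forces you to apply Lemma \ref{lemma:511-geer} to the class of \emph{block-averaged} maps $b\mapsto s^{-1}\sum_{t\in J}g(b_t)$ and hence to prove a contraction of the generalized bracketing entropy $\mathcal{H}_{B,K}$ under block averaging; this is doable (Jensen applied to the Bernstein norm, using that each within-block coordinate has stationary marginal), but it is the extra delicate step you yourself flag. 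The paper instead invokes Lemma \ref{lemma:211-general}, partitioning $\{\tau,\dots,T-1\}$ into $s$ \emph{interleaved arithmetic progressions} $J_j$ with gap $s$; after coupling, the individual observations within each $J_j$ are independent with stationary marginals, so Lemma \ref{lemma:511-geer} applies directly to the original class $\cG$ with the entropy $\mathcal{H}_{B,K}(u,\cG,P)$ and condition \eqref{eq:vd-cond3} exactly as stated, with no entropy transfer at all. The trade-off is that your route union-bounds over only $2$ groups (prefactor $\sim 2C$) while the paper's uses $s$ groups (prefactor $\sim sC$, absorbed into $4C/\kappa$), whereas the paper's route keeps the function class untouched; both routes share the same unaddressed bookkeeping that the effective threshold in Lemma \ref{lemma:511-geer} shrinks to order $v/\sqrt{s}$, so conditions \eqref{eq:vd-cond1}--\eqref{eq:vd-cond4} must be re-verified at that scale (the paper is equally loose here, relying on the slack in the constants chosen when Theorem \ref{thm:511-adapt} is invoked in the proof of Theorem \ref{thm:iv-vdg-bound}).
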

\begin{proof}
See \S\ref{prf:thm:511-adapt} for a detailed proof. 
\end{proof}

To invoke Theorem \ref{thm:511-adapt}, we take
\$
v = \sqrt{NT} \cdot 2^\ell \delta^2 / 16, \quad K = 1, \quad R = 2^{\ell/2}\delta, \quad C_1 = 15, \quad C = c/64, \quad C_0 = c/16, \quad C_2 = c. 
\$
It is easy to verify that \eqref{eq:vd-cond1}, \eqref{eq:vd-cond2}, \eqref{eq:vd-cond5}, and \eqref{eq:vd-cond4} hold. For \eqref{eq:vd-cond3}, since $\sqrt{NT}\delta_{NT}^2 \geq c \Psi(\delta_{NT})$, which implies that 
\$
\sqrt{NT} \geq c \cdot \frac{ \Psi(\delta_{NT})}{\delta_{NT}^2} \geq c \cdot \frac{\Psi(2^{\ell/2}\delta)}{2^\ell \delta^2},
\$
where we use the fact that $\Psi(\delta) / \delta^2$ is a non-increasing function of $\delta$. Thus, we have 
\$
16 a \geq c \cdot \max \left\{ \int_{v/(2^6\sqrt{NT})}^R \left(\mathcal{H}_{B,1}\left(u,\{g_p\colon p\in \cP_\ell \}, \mu_0 \right) \right)^{1/2} \ud u , R \right\}, 
\$
which justifies \eqref{eq:vd-cond3} by noting that $K = 1$. Here, we use the fact that 
\$
\mathcal{H}_{B,1}(u, \{g_p\colon p\in \cP_\ell \}, P) \leq H_B\left(\frac{u}{\sqrt{2}} , \{\overline{p}^{1/2}\colon p\in \cP_\ell\} \right). 
\$
Thus, by using Theorem \ref{thm:511-adapt}, we have
\$
\PP(\cE_\ell^\dagger) \leq \frac{c}{\kappa} \exp\left(-\frac{NT \kappa 2^\ell \delta^2}{c^2 \log(NT)}\right) + \frac{2}{N^2 T^2}. 
\$
Further, by combining \eqref{eq:8943594}, we have
\$
\PP(\cE^\dagger) \leq \frac{c}{\kappa} \exp\left(-\frac{NT \kappa \delta^2}{c^2}\right) + \frac{c}{N^2 T^2} \log\frac{4}{\delta},
\$
which concludes the proof of the theorem. 
\end{proof}

\subsection{Proof of Theorem \ref{thm:511-adapt}}\label{prf:thm:511-adapt}
\begin{proof}
We take $\tau = \min\{T, 3/\kappa \cdot \log(\cG_{\max} NT)\}$, where $\cG_{\max} = \max\{\max_{g\in \cG}\max_{z\in \cZ} g(z), 1\}$, and denote by $\EE_\stat[\cdot]$ the expectation taken with respect to the stationary distribution of $\{Z_t\}_{t\geq 0}$. We have the following decomposition, 
\#\label{eq:van-flee-1}
& \PP\left( \sup_{g\in \cG} \sqrt{NT} \left( \frac{1}{NT} \sum_{i\in [N]} \sum_{t = 0}^{T-1} g(Z_t^i) - \EE\left[ \frac{1}{T} \sum_{t = 0}^{T-1} g(Z_t) \right] \right) \geq v \right)  \\
& \qquad = \PP\Biggl( \sup_{g\in \cG} \frac{\tau}{T} \biggl( \frac{1}{N\tau} \sum_{i\in [N]} \sum_{t = 0}^{\tau-1} g(Z_t^i) - \EE\biggl[ \frac{1}{\tau} \sum_{t = 0}^{\tau-1} g(Z_t) \biggr] \biggr)   \\
& \qquad \qquad \qquad + \frac{T-\tau}{T} \biggl( \frac{1}{N(T-\tau)} \sum_{i\in [N]} \sum_{t = \tau}^{T-1} g(Z_t^i) - \EE_\stat[g(Z)] \biggr)  \\
& \qquad \qquad \qquad  + \frac{T-\tau}{T} \biggl( \EE_\stat[g(Z)] - \EE\biggl[ \frac{1}{T-\tau} \sum_{t = \tau}^{T-1} g(Z_t) \biggr] \biggr) \geq \frac{v}{\sqrt{NT}} \Biggr)  \\
& \qquad \leq \PP\left( \sup_{g\in \cG} \frac{\tau}{T} \left( \frac{1}{N\tau} \sum_{i\in [N]} \sum_{t = 0}^{\tau-1} g(Z_t^i) - \EE\left[ \frac{1}{\tau} \sum_{t = 0}^{\tau-1} g(Z_t) \right] \right) \geq \frac{v}{3\sqrt{NT}} \right)  \\
& \qquad \qquad + \PP\left( \sup_{g\in \cG} \frac{T-\tau}{T} \left( \frac{1}{N(T - \tau)} \sum_{i\in [N]} \sum_{t = \tau}^{T-1} g(Z_t^i) - \EE_\stat\left[ g(Z) \right] \right) \geq \frac{v}{3\sqrt{NT}} \right), 
\#
where the last inequality comes from the fact that 
\$
& \sup_{g\in \cG} \frac{T-\tau}{T} \left( \EE_\stat[g(Z)] - \EE\left[ \frac{1}{T-\tau} \sum_{t = \tau}^{T-1} g(Z_t) \right] \right)  \\
& \qquad \leq \sup_{g\in \cG} \frac{T-\tau}{T} \int g(z) \frac{1}{T-\tau} \sum_{t = \tau}^{T-1} \left( p_\stat(z) - \int p_t(z\given z_0) \ud \zeta(z_0) \right) \ud z  \\
& \qquad \leq \frac{1}{T} \cG_{\max} \sum_{t = \tau}^{T-1} c\cdot \exp(-\kappa t) \leq \cG_{\max} c \exp(-\kappa\tau) \leq \frac{v}{3\sqrt{NT}}, 
\$
where we use the fact that $v \geq C_2 / (NT)^2$ for some constant $C_2$. Thus, we only need to upper bound the two terms on the RHS of \eqref{eq:van-flee-1}. We first introduce the following supporting results. 

\begin{lemma}
\label{lemma:511-geer}
Take 
\$
& v \leq C_1\sqrt{n} R^2 /K,\\
& v \leq 8\sqrt{n} R, \\
& v \geq C_0\cdot \max\left\{ \int_{v/(2^6\sqrt{n})}^R \left( \log \mathcal N_{B,K}(u,\cG,P) \right)^{1/2} \ud u, R \right\}, \\
& C_0^2 \geq C^2 (C_1 + 1). 
\$
Then we have
\$
\PP\left( \sup_{g\in \cG} \left| \sqrt{n} \left( \frac{1}{n} \sum_{i\in [n]} g(Z_i) - \EE\left[g(Z)\right] \right) \right| \geq v \right) \leq C \exp\left( -\frac{v^2}{C^2(C_1 + 1)R^2} \right), 
\$
where $\{Z_i\}_{i\in [n]}$ are i.i.d. samples drawn the same distribution as $Z$. 
\end{lemma}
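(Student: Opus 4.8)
The plan is to prove this as a standard (i.i.d.) empirical-process bound following van de Geer's chaining machinery, since the samples $\{Z_i\}_{i\in[n]}$ are independent and no mixing structure is involved here; this is precisely the building block that the blocking/coupling argument (Lemma \ref{lemma:berbee}) later lifts to the geometrically ergodic setting in Theorem \ref{thm:511-adapt}. First I would reduce the two-sided supremum to a one-sided one at the cost of a factor of $2$, and then symmetrize: introducing a ghost sample $\{Z_i'\}_{i\in[n]}$ and Rademacher signs $\{\epsilon_i\}_{i\in[n]}$, I replace the centered process $n^{-1/2}\sum_i(g(Z_i)-\EE[g(Z)])$ by the symmetrized process $n^{-1/2}\sum_i \epsilon_i g(Z_i)$ via the standard symmetrization inequality. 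This reduces the problem, conditionally on the sample, to controlling a Rademacher process indexed by $\cG$.

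Next I would run a chaining argument against the generalized bracketing entropy $\mathcal{H}_{B,K}$. Conditioning on the sample, I construct bracketing nets of $\cG$ at geometrically decreasing scales $\delta_j = 2^{-j}R$ down to the innermost scale $\delta\approx v/(2^6\sqrt{n})$, where each bracket $[g^L,g^U]$ has generalized $L^2$-width (weighted and truncated at level $K$) at most $\delta_j$. Approximating each $g\in\cG$ by its bracket at every level produces a telescoping decomposition of the symmetrized process into increments indexed by finite nets. For each level, the increments are bounded in sup-norm by a multiple of $K$ and have variance controlled by $\delta_j^2\le R^2$, so a Bernstein-type maximal inequality over the net yields a tail that is sub-Gaussian with proxy of order $R^2$ precisely in the regime where the variance term dominates the sub-exponential correction; this is exactly where the hypotheses $v\le C_1\sqrt{n}R^2/K$ and $v\le 8\sqrt{n}R$ enter.

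Summing the per-level contributions, the total chaining bound is a constant multiple of $\max\{\int_{v/(2^6\sqrt{n})}^{R}(\mathcal{H}_{B,K}(u,\cG,P))^{1/2}\,\ud u,\ R\}$. The entropy-integral hypothesis $v\ge C_0\cdot\max\{\int\cdots\,\ud u,\ R\}$ then forces $v$ to exceed this sum, so the event on which the supremum exceeds $v$ has probability at most $C\exp(-v^2/(C^2(C_1+1)R^2))$ once the constants are reconciled through $C_0^2\ge C^2(C_1+1)$. Absorbing the desymmetrization factor into $C$ and combining the two symmetrized tails yields the claimed inequality.

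The main obstacle I anticipate is the careful bookkeeping in the chaining step with the \emph{generalized} bracketing entropy $\mathcal{H}_{B,K}$: at each scale one must simultaneously track a sub-Gaussian contribution (scaling with the variance proxy $R^2$) and a sub-exponential contribution (scaling with the truncation level $K$), and verify that the five structural conditions conspire so that the variance term dominates and the aggregate tail is genuinely sub-Gaussian. Lining up the absolute constants $C_0$, $C_1$, and $C$ so that the chaining sum stays dominated by $v$ while the exponent emerges as $v^2/(C^2(C_1+1)R^2)$ is the delicate part; by contrast, the symmetrization, the reduction to brackets, and the union bound over each finite net are routine.
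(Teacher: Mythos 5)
The paper does not actually prove this lemma: its ``proof'' is a one-line citation to Theorem 5.11 of \cite{geer2000empirical}, so the relevant benchmark is van de Geer's own argument --- a \emph{direct} chaining-with-brackets proof, in which Bernstein's inequality is applied to the finitely many bracket endpoints at each level of a multi-scale bracketing net of $\cG$, with the truncation level $K$ handled by van de Geer's adaptive-truncation device, and with no symmetrization anywhere. Your outline gets several ingredients right: the geometric scales from $R$ down to $v/(2^6\sqrt{n})$, Bernstein-type control at each level, the role of $v \leq C_1\sqrt{n}R^2/K$ and $v \leq 8\sqrt{n}R$ in keeping the tail in the sub-Gaussian regime, and the entropy-integral condition forcing $v$ to dominate the chaining sum.

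However, your symmetrization step introduces a genuine gap. After inserting Rademacher signs and conditioning on the sample, the process $g \mapsto n^{-1/2}\sum_i \epsilon_i g(Z_i)$ is sub-Gaussian with respect to the \emph{empirical} $L_2$ metric $\|f-g\|_{L_2(P_n)}$, so chaining it requires covering numbers under the random measure $P_n$ (uniform/random entropy). The quantity $\mathcal{H}_{B,K}(u,\cG,P)$ in the hypothesis measures bracket widths under the \emph{population} measure $P$: a bracket of $P$-width $\delta_j$ can have a much larger empirical width on the realized sample, so your per-level variance bound ``at most $\delta_j^2 \leq R^2$'' simply does not hold conditionally on the data, and the conditional union bound over the net breaks down. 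This mismatch is precisely why bracketing-entropy maximal inequalities (Ossiander's chaining, van de Geer's Theorem 5.11) are proved \emph{without} symmetrization: brackets provide deterministic pointwise envelopes $g^L \leq g \leq g^U$, which allow one to bound the original, un-symmetrized process directly, applying Bernstein's inequality under $P$ at each scale and absorbing remainders via the bracket widths. To repair your argument you would either have to drop symmetrization and chain directly as van de Geer does, or add a separate, entropy-dependent argument that the empirical bracket widths concentrate uniformly around their population values --- which is itself a maximal-inequality problem and defeats the purpose of the reduction.
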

\begin{proof}
See Theorem 5.11 in \cite{geer2000empirical} for a detailed proof. 
\end{proof}

\begin{lemma}
\label{lemma:211-general}
Given a $\beta$-mixing sequence $\{Z_t\}_{t\geq 0}\subset \cZ$ with coefficient $\beta(t)$ for any $t \geq 0$. There exists a sequence $\{Z_t^*\}_{t= 0}^{T-1}\subset \cZ$ and a set $\cJ$ such that
\begin{enumerate}
    \item $\cJ$ is a partition of $\{0,1,\ldots, T-1\}$, i.e., $\cup_{J\in \cJ} J = \{0, 1, \ldots, T-1\}$ and $J_1\cap J_2 = \emptyset$ for any $J_1,J_2 \in \cJ$; 
    \item for any $0\leq t \leq T-1$, $Z_t^*$ and $Z_t$ have the same distribution;
    \item for any $J\in \cJ$, $\{Z_t^*\}_{t\in J}$ is an independent sequence;
    \item it holds for any $u \in \RR$ that 
            \$
            & \PP\left(\sup_{g\in \cG} \frac{1}{T} \sum_{t=0}^{T-1} g(Z_t) - \EE\left[\frac{1}{T} \sum_{t=0}^{T-1} g(Z_t)\right] \geq u \right) \\
            & \qquad \leq \sum_{J\in \cJ} \PP\left(\sup_{g\in \cG} \frac{1}{|J|} \sum_{t\in J} g(Z_t^*) - \EE\left[\frac{1}{|J|} \sum_{t\in J} g(Z_t)\right] \geq u \right)  \\
            & \qquad \qquad + \sum_{J\in \cJ} |J| \cdot \beta\left(\min\{ |t_1-t_2|\colon t_1\neq t_2\in J \}\right).
            \$
\end{enumerate}
\end{lemma}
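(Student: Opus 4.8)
The plan is to prove the lemma by the classical separated-blocks coupling built on Berbee's lemma (Lemma \ref{lemma:berbee}), choosing the partition $\cJ$ so that every block consists of indices that are mutually far apart. This spacing is exactly what forces the residual term in item 4 to involve $\beta$ evaluated at a large lag (the minimal within-block gap) rather than at lag one, and it is also what makes the within-block coupled variables independent as required by item 3.

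First I would fix a spacing $s\geq 1$ and take $\cJ$ to be the residue-class partition $\cJ=\{J_0,\dots,J_{s-1}\}$ with $J_r=\{t\in\{0,\dots,T-1\}\colon t\equiv r\ (\mathrm{mod}\ s)\}$. This is manifestly a partition of $\{0,\dots,T-1\}$, which gives item 1, and each block $J_r$ lists the indices $r,r+s,r+2s,\dots$ whose minimal pairwise gap equals $s$. Next, for each $r$ I would apply Lemma \ref{lemma:berbee} to the strided subsequence $(Z_t)_{t\in J_r}$, ordered increasingly, treating the single variable $Z_{r+(\ell-1)s}$ as the $\ell$-th term of the input sequence. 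Berbee's lemma then returns a coupled sequence $(Z_t^*)_{t\in J_r}$ that is (i) independent, establishing item 3, and (ii) termwise marginally identical to $(Z_t)_{t\in J_r}$, establishing item 2. Performing this construction separately on each residue class (the different classes may be coupled arbitrarily, since items 2--4 only constrain the within-block structure and the termwise marginals) defines $Z_t^*$ for all $t\in\{0,\dots,T-1\}$.

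The key quantitative input is the coupling cost: Lemma \ref{lemma:berbee} gives $\PP(Z_{r+(\ell-1)s}^*\neq Z_{r+(\ell-1)s})$ equal to the $\beta$-coefficient between $\sigma(\{Z_{r+(\ell'-1)s}\}_{\ell'<\ell})$ and $\sigma(Z_{r+(\ell-1)s})$. Since the first sigma-algebra is contained in $\sigma_0^{r+(\ell-2)s}$ and the second in $\sigma_{r+(\ell-1)s}^\infty$, separated by a lag of exactly $s$, this cost is at most $\beta(s)$ by the supremum form of the mixing-coefficient definition together with the monotonicity obtained by coarsening both sigma-algebras. Summing over a block yields $\sum_{t\in J_r}\PP(Z_t^*\neq Z_t)\leq |J_r|\,\beta(s)$, and $s=\min\{|t_1-t_2|\colon t_1\neq t_2\in J_r\}$. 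For item 4 I would first write the centered whole-sequence average as the convex combination $\frac1T\sum_{t=0}^{T-1}(\cdot)=\sum_{J\in\cJ}\frac{|J|}{T}\,(\frac{1}{|J|}\sum_{t\in J}(\cdot))$, with weights $|J|/T$ summing to one. Because a weighted average of block deviations cannot exceed the largest block deviation, the event $\{\sup_g\frac1T\sum_t(\cdots)\geq u\}$ is contained in $\bigcup_{J\in\cJ}\{\sup_g\frac1{|J|}\sum_{t\in J}(\cdots)\geq u\}$, and a union bound converts the left-hand probability into a sum over $J$. On each block I then swap $Z_t$ for its coupled copy $Z_t^*$: the matching marginals keep the centering unchanged, and on the event $\{Z_t^*=Z_t\ \forall t\in J\}$ the two block averages coincide, so each block probability is bounded by the coupled-independent block probability plus $\sum_{t\in J}\PP(Z_t^*\neq Z_t)\leq |J|\,\beta(\min\{|t_1-t_2|\colon t_1\neq t_2\in J\})$. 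Assembling the blocks gives precisely the stated inequality.

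The main obstacle I anticipate is the third step: identifying the abstract coefficient $\beta(\sigma(\cdots),\sigma(\cdots))$ produced by Lemma \ref{lemma:berbee} with the lag-$s$ coefficient $\beta(s)$ of the paper's definition, which requires carefully invoking that coarsening the two sigma-algebras only decreases the coefficient and that the future sigma-algebra sits at separation $s$. A secondary, purely bookkeeping point is the edge case of singleton blocks (possible when $s$ does not divide $T$), for which the independence requirement in item 3 is vacuous and no coupling cost is incurred, so that the minimum over the empty set of index pairs simply contributes nothing to the bound.
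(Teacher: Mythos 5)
Your proof is correct. The paper does not actually prove this lemma itself---it only cites Theorem 2.11 of \cite{barrera2021generalization}---and your argument (arithmetic-progression blocking, block-wise Berbee coupling via Lemma \ref{lemma:berbee}, monotonicity of the $\beta$-coefficient under coarsening of the two $\sigma$-algebras to pass from the abstract coupling cost to $\beta(s)$, then the convex-combination/union-bound reduction and the per-block swap) is the standard proof of exactly this kind of statement, and it matches how the lemma is invoked in the proof of Theorem \ref{thm:511-adapt}, where precisely such a spaced partition is supplied.
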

\begin{proof}
    See Theorem 2.11 in \cite{barrera2021generalization} for a detailed proof. 
\end{proof}

We upper bound two terms on the RHS of \eqref{eq:van-flee-1} as follows. 

\vskip5pt
\noindent\textbf{Upper Bounding the First Term on the RHS of \eqref{eq:van-flee-1}.}
To upper bound the first term, we invoke Lemma \ref{lemma:511-geer}. Since the sequence
\$
\left\{ \frac{1}{\tau} \sum_{t = 0}^{\tau-1} g(Z_t^i) \right\}_{i\in [n]}
\$
is i.i.d., we have 
\#\label{eq:van-flee-res-1}
& \PP\left( \sup_{g\in \cG} \frac{\tau}{T} \left( \frac{1}{N\tau} \sum_{i\in [N]} \sum_{t = 0}^{\tau-1} g(Z_t^i) - \EE\left[ \frac{1}{\tau} \sum_{t = 0}^{\tau-1} g(Z_t) \right] \right) \geq \frac{v}{3\sqrt{NT}} \right)  \\
& \qquad \leq C \exp\left( -\frac{v^2T}{9\tau^2 C^2 (C_1 + 1)R^2} \right) \leq C \exp\left( -\frac{v^2}{C^2 (C_1 + 1)R^2} \right). 
\#

\vskip5pt
\noindent\textbf{Upper Bounding the Second Term on the RHS of \eqref{eq:van-flee-1}.}
To upper bound the second term, we note that
\$
& \PP\left( \sup_{g\in \cG} \frac{T-\tau}{T} \left( \frac{1}{N(T - \tau)} \sum_{i\in [N]} \sum_{t = \tau}^{T-1} g(Z_t^i) - \EE_\stat\left[ g(Z) \right] \right) \geq \frac{v}{3\sqrt{NT}} \right) = \text{(i)} + \text{(ii)}, 
\$
where 
\$
& \text{(i)} = \PP\left( \sup_{g\in \cG} \frac{T-\tau}{T} \left( \frac{1}{N(T - \tau)} \sum_{i\in [N]} \sum_{t = \tau}^{T-1} g(Z_t^i) - \EE_\stat\left[ g(Z) \right] \right) \geq \frac{v}{3\sqrt{NT}} \right)  \\
& \qquad \qquad - \PP\left( \sup_{g\in \cG} \frac{T-\tau}{T} \left( \frac{1}{N(T - \tau)} \sum_{i\in [N]} \sum_{t = \tau}^{T-1} g(\tilde Z_t^i) - \EE_\stat\left[ g(Z) \right] \right) \geq \frac{v}{3\sqrt{NT}} \right),  \\
& \text{(ii)} = \PP\left( \sup_{g\in \cG} \frac{T-\tau}{T} \left( \frac{1}{N(T - \tau)} \sum_{i\in [N]} \sum_{t = \tau}^{T-1} g(\tilde Z_t^i) - \EE_\stat\left[ g(Z) \right] \right) \geq \frac{v}{3\sqrt{NT}} \right). 
\$
Here $\{\tilde Z_t^i\}_{t = 0}^{T-1}$ are an auxiliary sequence for any $i\in [N]$, where $\tilde Z_0^i$ is sampled from the stationary distribution of the sequence $\{Z_t\}_{t\geq 0}$. To upper bound $\text{(i)}$, we note that 
\#\label{eq:van-flee-res-2-1}
\text{(i)} \leq \sum_{i\in [N]} \sum_{t=\tau}^{T-1} c\cdot \exp(-\kappa t) \leq NT c \exp(-\kappa \tau) \leq \frac{1}{N^2 T^2}. 
\#
To upper bound $\text{(ii)}$, we invoke Lemma \ref{lemma:211-general} by taking 
\$
& \cJ = \{J_1, J_2, \ldots, J_s\}, \qquad J_j = \{\tau+j-1, \tau+j+s-1, \ldots, T-s+j\} \text{ for any $j\in [s]$}. 
\$
Then there exists a sequence $\{\{\tilde Z_t^{i*}\}_{t = \tau}^{T-1}\}_{i\in [N]}$ such that $\tilde Z_t^{i*}$ and $\tilde Z_t^i$ have the same distribution for any $(i,t)$; $\{\{\tilde Z_t^*\}_{t\in J}\}_{i\in [N]}$ are independent; and it holds for any $u \in \RR$ that 
\$
\text{(ii)} & \leq \sum_{j = 1}^s \PP\left(\sup_{g\in \cG} \frac{s}{N(T-\tau)} \sum_{i\in[N]} \sum_{t\in J_j} g(\tilde Z_t^{i*}) - \EE_\stat\left[ g(Z) \right] \geq \frac{v}{3\sqrt{NT}} \frac{T}{T-\tau} \right) + (T-\tau) \cdot \beta(s)  \\
& \leq s\cdot C\exp\left(-\frac{v^2}{9s C^2(C_1+1)R^2}\right) + (T-\tau)\beta(s),
\$
where we use Lemma \ref{lemma:511-geer} in the last inequality. 
Now, by taking $s = \min\{T, 3/\kappa\cdot \log( NT)\}$, we have 
\#\label{eq:van-flee-res-2-2}
\text{(ii)} \leq \frac{3C}{\kappa} \exp\left(-\frac{v^2 \kappa}{18 C^2(C_1+1)R^2 \log(NT)}\right) + \frac{1}{N^2 T^2}, 
\#
where we use Lemma \ref{lemma:davydov-adapt} to upper bound $\beta(s)$. 
Now, by combining \eqref{eq:van-flee-res-2-1} and \eqref{eq:van-flee-res-2-2}, we have
\#\label{eq:van-flee-res-2}
& \PP\left( \sup_{g\in \cG} \frac{T-\tau}{T} \left( \frac{1}{N(T - \tau)} \sum_{i\in [N]} \sum_{t = \tau}^{T-1} g(Z_t^i) - \EE_\stat\left[ g(Z) \right] \right) \geq \frac{v}{3\sqrt{NT}} \right)  \\
& \qquad \leq \frac{3C}{\kappa}\exp\left(-\frac{v^2 \kappa}{18 C^2(C_1+1)R^2 \log(NT) }\right) + \frac{2}{N^2 T^2}. 
\#

\vskip5pt
\noindent\textbf{Combining Everything.}
By plugging \eqref{eq:van-flee-res-1} and \eqref{eq:van-flee-res-2} into \eqref{eq:van-flee-1}, we have
\$
& \PP\left( \sup_{g\in \cG} \sqrt{NT} \left( \frac{1}{NT} \sum_{i\in [N]} \sum_{t = 0}^{T-1} g(Z_t^i) - \EE\left[ \frac{1}{T} \sum_{t = 0}^{T-1} g(Z_t) \right] \right) \geq v \right)  \\
& \qquad \leq \frac{4C}{\kappa}\exp\left(-\frac{v^2 \kappa}{18 C^2(C_1+1)R^2 \log(NT)}\right) + \frac{2}{N^2 T^2}, 
\$
which concludes the proof of the theorem. 
\end{proof}

\section{Numerical Simulation}\label{sec:exppppp}

We demonstrate the usefulness of the proposed methods by conducting a simulation study, in which we simulate a synthetic dataset that  mimics a real-world electronic medical record dataset for kidney transplantation patients \citep{hua2020personalized}. Kidney transplantation is the primary treatment for patients with  chronic kidney disease or end-stage renal disease \citep{Arshad2019}. After the transplant surgery, patients are usually instructed to have regular clinical visits for their long-term care. At each visit, patients' creatinine levels, an important biomarker for measuring kidney function, are measured. Then based on patients' creatinine levels, physicians prescribe immunosuppressive drugs, such as tacrolimus, to keep their immune systems from rejecting the new kidney \citep{Kasiske2010}. Due to potential compliance and resistance issues, patients' whole blood tacrolimus concentrations are also measured so that their body responses to tacrolimus can be monitored. Our goal is estimate the optimal therapeutic strategies (i.e., the optimal tacrolimus concentrations) for patients after kidney transplantation from observational data collected in the electronic medical database.

Under such a context, for any $t \geq 0$, at the $t$-th clinical visit, we denote by $S_t$ the patient's creatinine level, $Z_t$ the assigned dosage of the immunosuppressive drug tacrolimus, $A_t$ the actual effective dosage level (i.e., whole blood tacrolimus concentration), which can be different from $Z_t$ due to compliance/resistance issues, $U_t$ the unobserved confounders, such as the quality of care, and $R_t$ the reward. We introduce the simulation setup in details as follows.

\vskip5pt
\noindent\textbf{Dynamics and Rewards.}  
For simplicity,  we consider $S_t\in \RR$ and $U_t\in \RR$ for any $t$. In the meanwhile, we assume that the IV and action spaces are ternary, i.e., $\cA = \cZ = \{0,1,2\}$, which represents low, medium, and high  dosage/concentration level, respectively. 
Specifically, given the current state $S_t$ and action $A_t$, we assume that the reward $R_t$ and next state $S_{t+1}$ satisfy the following equations, 
\$
R_t = - S_t^2 + (U_t-2)\cdot (A_t-1), \qquad S_{t+1} = S_t + 0.5 \cdot (A_t - 1) + 3\cdot \ind\{S_t> 0\} \cdot (U_t-2), 
\$
where $U_t \sim \mathcal N(2, 0.1)$ is the unobserved confounder at the $t$-th step. The term $- S_t^2$ in defining $R_t$ reflects our clinical knowledge that creatinine levels that are either very high or too low can be harmful for patients. 
Also, we take the initial state $S_0\sim \mathcal N(5, 0.1)$. 

\vskip5pt
\noindent\textbf{IV and Behavior Policy.} 
We assume that the IV $Z_t$ takes a value in $\cZ$ with certain probabilities. Specifically, given the current state $S_t$, the IV $Z_t$ is taken as follows. 
\begin{itemize}
    \item[(i)] if $S_t < -0.3$, we take $Z_t = z$ with probability $p_z$ for any $z\in \cZ$, where $(p_0, p_1, p_2) = (0.1,0.1,0.8)$; 
    \item[(ii)] if $S_t > 0.3$, we take $Z_t = z$ with probability $p_z$ for any $z\in \cZ$, where $(p_0, p_1, p_2) = (0.8,0.1,0.1)$; 
    \item[(iii)] if $-0.3\leq S_t \leq 0.3$, we take $Z_t = z$ with probability $p_z$ for any $z\in \cZ$, where $(p_0, p_1, p_2) = (0.1,0.8,0.1)$. 
\end{itemize}

As for the behavior policy, given $S_t$, $U_t$, and $Z_t$, the action $A_t$ is taken as follows. 
\begin{itemize}
    \item if $U_t > 2$: 
    \begin{itemize}
        \item[(i)] if $Z_t=0$, we take $A_t = a$ with probability $p_a$ for any $a\in \cA$, where $(p_0, p_1, p_2) = (0.8,0.1,0.1)$; 
        \item[(ii)] if $Z_t=1$, we take $A_t = a$ with probability $p_a$ for any $a\in \cA$, where $(p_0, p_1, p_2) = (0.1,0.8,0.1)$; 
        \item[(iii)] if  $Z_t=2$, we take $A_t = a$ with probability $p_a$ for any $a\in \cA$, where $(p_0, p_1, p_2) = (0.1,0.1,0.8)$; 
    \end{itemize}
    \item if $U_t \leq 2$: 
    \begin{itemize}
        \item[(iv)] if $Z_t=0$, we take $A_t = a$ with probability $p_a$ for any $a\in \cA$, where $(p_0, p_1, p_2) = (0.78, 0.11, 0.11)$; 
        \item[(v)] if $Z_t=1$, we take $A_t = a$ with probability $p_a$ for any $a\in \cA$, where $(p_0, p_1, p_2) = (0.05, 0.78, 0.17)$; 
        \item[(vi)] if $Z_t=2$, we take $A_t = a$ with probability $p_a$ for any $a\in \cA$, where $(p_0, p_1, p_2) = (0.11, 0.05, 0.84)$. 
    \end{itemize}
\end{itemize}

\vskip5pt 
\noindent\textbf{Simulation Setup.}
Throughout the experiment, we consider simplex encoding of $\cA$ and $\cZ$ as in \eqref{eq:iv-simplex-encoding}. 
Under the aforementioned setting, we generate $N=1000$ trajectories with a finite horizon $T=100$ following the behavior policy. We take the discount factor $\gamma = 0.9$. 
For the simplicity of simulation, we parameterize $\cV$, $\cW$, and $\Pi$ as follows, 
\#\label{eq:iv-linear-param}
& \cV = \left\{ v_{\omega_{\textsf{v}}}(\cdot) = \psi(\cdot)^\top \omega_{\textsf{v}} \colon \omega_{\textsf{v}} \in \RR^5 \right\},  \\
& \cW = \left\{ g_{\omega_{\textsf{g}}}(\cdot) = \psi(\cdot)^\top \omega_{\textsf{g}} \colon \omega_{\textsf{g}} \in \RR^5, \|\omega_{\textsf{g}}\|_\infty \leq 1 \right\},  \\ 
& \Pi = \left\{ \pi_{\omega_{\textsf{pi}}} \colon \pi_{\omega_{\textsf{pi}}}(a\given \cdot) \propto \exp(\psi(\cdot)^\top \omega_{a, \textsf{pi}}) \text{ where }  \omega_{a, \textsf{pi}} \in \RR^5 \text{ for any $a\in \cA$} \right\}, 
\#
where $\psi(s) = (1, s, s^2, s^3, s^4)^\top$ is the feature vector for any $s\in \RR$. For the simplicity of the simulation, we assume that there exists an oracle that gives $\Delta^*(s,a)$, $\Theta^*(s,z)$, and $\PP(A=a\given S=s)$ for any $(s,z,a)\in \cS\times \cZ\times \cA$. Such an oracle can be achieved by logistic regression. 
Under the linear parameterization in \eqref{eq:iv-linear-param}, we construct (pessimistic) estimators of the expected total reward using the following methods.

\begin{figure}
    \centering
    \includegraphics[width=0.7\textwidth]{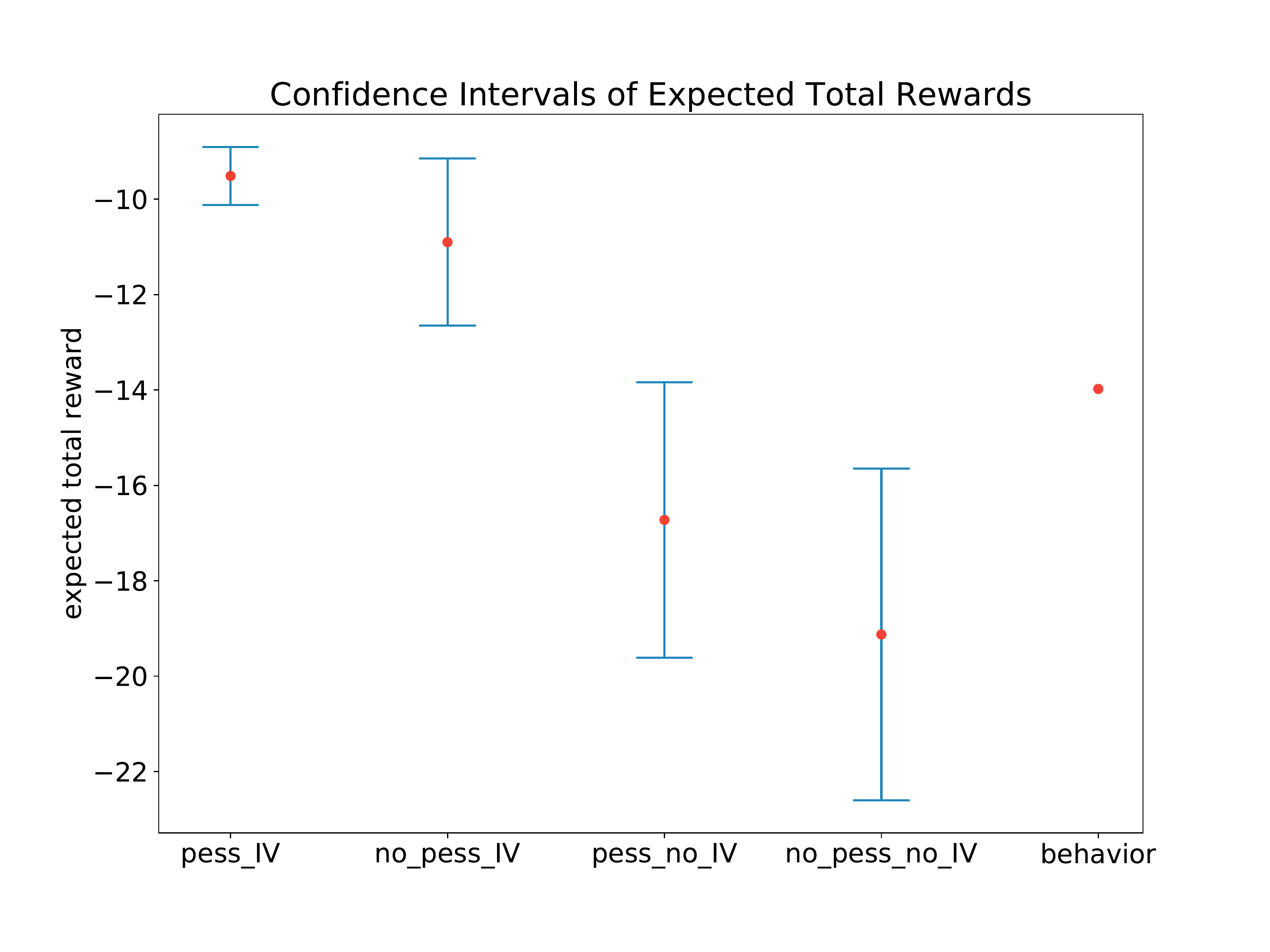}
    \caption{Expected total rewards by using \texttt{pess\_IV}, \texttt{no\_pess\_IV}, \texttt{pess\_no\_IV}, and \texttt{no\_pess\_no\_IV}. Here, the red dots represent the means, while the blue intervals represent the 95\% confidence intervals generated by 25 seeds. }
    \label{fig:exp-iv}
\end{figure}

\begin{itemize}
    \item \texttt{pess\_IV}. For any $\pi$, by adding random noise to the rewards, we formulate the following optimization problems, 
    \$
    \omega^j_{\textsf{v}} \in \argmin_{v\in \cV} \max_{g\in \cW} \frac{1}{NT} \sum_{i = 1}^N  \sum_{t = 0}^{T-1} g(S_t^i) \left ( \frac{Z_t^{i \top} A_t^i \pi(A_t^i\given S_t^i)}{\Delta^*(S_t^i,A_t^i) \Theta^*(S_t^i,Z_t^i)} \left(\tilde R_t^{i,j} + \gamma v(S_{t+1}^i)\right) - v(S_t^i) \right ), 
    \$
    where $\tilde R_t^{i,j} = R_t^i + \chi_t^{i,j}$ and $\chi_t^{i,j} \sim \mathcal N(0, 0.1)$ for any $(t,i,j)\in [T]\times [N]\times [10]$. Then we take $(1-\gamma)\cdot \EE_{S_0\sim \nu}[v_{\omega^{j^*}_{\textsf{v}}}(S_0)]$ as a pessimistic estimator of the total expected reward, where $j^*$ is taken such that $\EE_{S_0\sim \nu}[v_{\omega^{j^*}_{\textsf{v}}}(S_0)]$ achieves the minimum among all $j\in [10]$. 
    \item \texttt{no\_pess\_IV}. For any $\pi$, we solve the following optimization problem, 
    \$
    \omega_{\textsf{v}} \in \argmin_{v\in \cV} \max_{g\in \cW} \frac{1}{NT} \sum_{i = 1}^N  \sum_{t = 0}^{T-1} g(S_t^i) \left ( \frac{Z_t^{i \top} A_t^i \pi(A_t^i\given S_t^i)}{\Delta^*(S_t^i,A_t^i) \Theta^*(S_t^i,Z_t^i)} \left(R_t^i + \gamma v(S_{t+1}^i)\right) - v(S_t^i) \right ). 
    \$
    Then we take $(1-\gamma)\cdot \EE_{S_0\sim \nu}[v_{\omega_{\textsf{v}}}(S_0)]$ as an estimator of the total expected reward. 
    \item \texttt{pess\_no\_IV}. For any $\pi$, without using IVs, by adding random noise to the rewards, we formulate the following optimization problems, 
    \$
    \omega^j_{\textsf{v}} \in \argmin_{v\in \cV} \max_{g\in \cW} \frac{1}{NT} \sum_{i = 1}^N  \sum_{t = 0}^{T-1} g(S_t^i) \left ( \frac{\pi(A_t^i\given S_t^i)}{\PP(A_t^i\given S_t^i)} \cdot \left(\tilde R_t^{i,j} + \gamma v(S_{t+1}^i)\right) - v(S_t^i) \right ), 
    \$
    where $\tilde R_t^{i,j} = R_t^i + \chi_t^{i,j}$ and $\chi_t^{i,j} \sim \mathcal N(0, 0.1)$ for any $(t,i,j)\in [T]\times [N]\times [10]$. Then we take $(1-\gamma)\cdot \EE_{S_0\sim \nu}[v_{\omega^{j^*}_{\textsf{v}}}(S_0)]$ as a pessimistic estimator of the total expected reward, where $j^*$ is taken such that $\EE_{S_0\sim \nu}[v_{\omega^{j^*}_{\textsf{v}}}(S_0)]$ achieves the minimum among all $j\in [10]$. 
    \item \texttt{no\_pess\_no\_IV}. For any $\pi$, without using IVs, we solve the following optimization problem, 
    \$
    \omega_{\textsf{v}} \in \argmin_{v\in \cV} \max_{g\in \cW} \frac{1}{NT} \sum_{i = 1}^N  \sum_{t = 0}^{T-1} g(S_t^i) \left ( \frac{\pi(A_t^i\given S_t^i)}{\PP(A_t^i\given S_t^i)} \cdot \left(R_t^i + \gamma v(S_{t+1}^i)\right) - v(S_t^i) \right ). 
    \$
    Then we take $(1-\gamma)\cdot \EE_{S_0\sim \nu}[v_{\omega_{\textsf{v}}}(S_0)]$ as an estimator of the total expected reward. 
\end{itemize}

Finally, by zero-th order optimization, we update the policy to maximize the above estimators. We repeat the above procedure for 25 times, and plot the 95\% confidence intervals of the total expected rewards of the output policies in Figure \ref{fig:exp-iv}. According to the figure, we observe that with the presence of IVs, \texttt{pess\_IV} and \texttt{no\_pess\_IV} are capable of learning better policies than the behavior policy, with higher and more stable expected total rewards in \texttt{pess\_IV} due to the use of pessimism; while without IVs, \texttt{pess\_no\_IV} and \texttt{no\_pess\_no\_IV} even fail to beat the behavior policy.

\end{document}